\title{Factored Task and Motion Planning with Combined Optimization, Sampling and Learning}
\author{Joaquim Ortiz-Haro}
\date{November 2023}
\newlist{todolist}{itemize}{2}
\setlist[todolist]{label=$\square$}
\newcommand\gobble[1]{}%
\newif\ifcommentout
\definecolor{backcolour}{rgb}{0.95,0.95,0.92}
\lstdefinestyle{mystyle}{
    backgroundcolor=\color{backcolour},   
    basicstyle=\ttfamily\footnotesize,
    breakatwhitespace=false,         
    breaklines=true,                 
    captionpos=b,                    
    keepspaces=true,                 
    showspaces=false,                
    showstringspaces=false,
    showtabs=false,                  
    tabsize=2
}
\newcommand{\theAnd}{{and }}
\newcommand{\namePartLearning}{{Accelerated Task and Motion Planning with Learning Methods}}
\newcommand{\namePartLearningSentence}{{Accelerated Task and Motion Planning with learning methods}}
\newcommand{\namePartMetaSolver}{{Meta-Solvers: Adaptive Combination of Sampling and Optimization Methods}}
\newcommand{\namePartMetaSolverSentence}{{Meta-solvers: Adaptive combination of sampling and optimization methods}}
\newcommand{\namePartOne}{{Integrated Planning and Optimization for Task and Motion Planning}}
\newcommand{\namePartOneSentence}{{Integrated planning and optimization for Task and Motion Planning}}
\newcommand{\nameChapterOne}{{Diverse Task Planning for Solving Logic Geometric Programs}}
\newcommand{\nameChapterTwo}{{Conflict-Based Search in Factored Logic Geometric Programs}}
\newcommand{\nameChapterThree}{{Learning Optimal Sampling Sequences for Robotic Manipulation}}
\newcommand{\nameChapterFour}{{Towards Meta-Solvers for Task and Motion Planning}}
\newcommand{\nameChapterFive}{{Deep Generative Constraint Sampling}}
\newcommand{\nameChapterSix}{{Learning Feasibility of Factored Nonlinear Programs}}
\newcommand\sbullet[1][.5]{\mathbin{\vcenter{\hbox{\scalebox{#1}{$\bullet$}}}}}
\newcommand{\seqs}{\langle s_0, \ldots, s_K\rangle}
\newcommand{\seqa}{\langle a_1, \ldots, a_K\rangle}
\newcommand{\ptask}{\Pi}
\newcommand{\ptaskPar}[1]{\ptask^{#1} = \langle \vars^{#1},\ops^{#1},\initstate^{#1},\goal^{#1} \rangle}
\newcommand{\ptaskParSTD}{\ptaskPar{}}
\newcommand{\goal}{\ensuremath{g}}
\newcommand{\initstate}{\state_{0}}
\newcommand{\ops}{\ensuremath{\mathcal A}}
\newcommand{\vars}{\ensuremath{\mathcal V}}
\newcommand{\gray}[1]{\textcolor{gray}{#1}}
\newcommand{\xs}{x}
\newcommand{\RR}{\mathbb{R}}
\newcommand{\PP}{\mathbb{P}}
\renewcommand{\AA}{\mathcal{A}}
\newcommand{\norm}[1]{\left\lVert#1\right\rVert}
\newcommand{\seq}[2]{ \ensuremath{\langle #1, \ldots ,  #2 \rangle }}
\newcommand{\prefix}[2]{#1\vert_{#2}}
\newcommand{\q}[1]{``#1''}
\newcommand{\fx}{\tilde{x}}
\newcommand{\fX}{\tilde{X}}
\newcommand{\centered}[1]{\begin{tabular}{l} #1 \end{tabular}}
\newtheorem{proposition}{Proposition}[chapter]
\newtheorem{theorem}{Theorem}[chapter]
\theoremstyle{definition}
\newtheorem{property}{Property}[chapter]
\theoremstyle{definition}
\newtheorem{definition}{Definition}[chapter]
\definecolor{color1bg}{HTML}{458588}
\definecolor{color2bg}{HTML}{67afb2}
\tikzset{%
  >={Latex[width=2mm,length=2mm]},
            base/.style = {rectangle, rounded corners, draw=black,
                           minimum width=1cm, minimum height=1cm,
                           text centered, font=\small},
  activityStarts/.style = {base, fill=blue!30},
       startstop/.style = {base, fill=red!30},
    activityRuns/.style = {base, fill=green!30},
         inout/.style = {base, minimum width=6cm, minimum height=0cm,
                           font=\small},
         process/.style = {base, minimum width=1.5cm, 
                           font=\small},
         data/.style = {base,minimum width=1.5cm, 
                           font=\small},
         sec/.style = {base, minimum width=1.5cm, 
                           font=\color{gray}\small, draw=none,
                           fill=none },
         processX/.style = {base, minimum width=1.8cm, 
                           inner sep=5pt,
                           minimum height=.5cm,
                           font=\footnotesize},
         processR/.style = {processX, fill=red!30},
         processG/.style = {processX, fill=green!30},
         number/.style = { minimum width=1cm, minimum height=0cm,
                           text centered, font=\small},
        gnn_var/.style = {circle, draw=black,minimum height=.2cm,
           text centered, font=\small},
        gnn_con/.style = {rectangle, draw=black,minimum height=.3cm,
minimum width=.3cm,
                       text centered, font=\small, fill=color2bg},
        gnn_con2/.style = {rectangle, draw=black,minimum height=.3cm,
minimum width=.3cm,
                       text centered, font=\small, fill=color2bg},
}
\tikzstyle{decision} = [diamond, draw, fill=blue!20, text width=4.5em, 
\tikzstyle{block} = [rectangle, draw, fill=black!20, 
\tikzstyle{block2} = [rectangle, draw,  text centered, 
\tikzstyle{block_lazy} = [rectangle, draw, fill=white!20, text width=3em, 
\tikzstyle{line} = [draw, -latex'] \tikzstyle{cloud} = [draw, ellipse,
\renewcommand{\factoredge}[4][]{ %
  \foreach \f in {#3} { %
    \foreach \x in {#2} { %
      \path (\x) edge[-,#1] (\f) ; %
    } ;
    \foreach \y in {#4} { %
      \path (\f) edge[->, >={latex}, #1] (\y) ; %
    } ;
  } ;
}
\newcommand{\op}{a}
\newcommand{\init}{\ensuremath{s_0}}
\newcommand{\eff}{\ensuremath{\mathit{eff}}}
\newcommand{\pre}{\ensuremath{\mathit{pre}}}
\newcommand{\variables}[1]{{\mathcal V}(#1)}
\newcommand{\action}{\ensuremath{a}}
\newcommand{\state}{s}
\newcommand{\domain}{{\mathcal D}}
\newcommand{\var}{v}
\newcommand{\extrav}{\overline{\var}}
\newcommand{\noplanop}[1]{#1^e}
\newcommand{\planopdiscarded}[1]{#1^1}
\newcommand{\planopdiscard}[1]{#1^2}
\newcommand{\planopfollow}[1]{#1^3}
\newcommand{\tuple}[1]{\langle #1 \rangle}
\newcommand{\val}{\vartheta}
\DeclareMathOperator*{\E}{\mathbb{E}}
\newcommand{\R}{\mathbb{R}}
\newcommand{\M}{\mathcal{M}}
\newcommand{\feq}{{h}}
\newcommand{\f}{\phi}
\newcommand{\fineq}{{g}}
\definecolor{chaptergrey}{rgb}{0.0,0.0,0.0}
\newcommand{\outcomments}[1]{{\leavevmode}}
\newcommand{\qquote}[1]{{``#1''}}
\g@addto@macro\appendix{%

  \let\oldaddcontentsline\addcontentsline
  \newcommand\hackedaddcontentsline[3]{\oldaddcontentsline{#1}{#2}{\chapapp\nobreakspace#3}}
  \let\oldchapter\chapter
  \renewcommand*\chapter[1]{%
    \let\addcontentsline\hackedaddcontentsline%
    \oldchapter{#1}%
    \let\addcontentsline\oldaddcontentsline%
  }
}
\begin{document}

\begin{titlepage}
  \enlargethispage{2cm}
\begin{center}
\vspace*{.5cm}
\begin{huge}\bfseries
Factored Task and Motion Planning \\ with Combined Optimization,  \\ Sampling and Learning  \par
\end{huge}
\vspace{0.5in}
\begin{large}
by
\end{large}
\begin{Large}\bfseries
  Joaquim Ortiz-Haro, M.Sc. \par
\end{Large}
\vspace{0.6in}
\begin{Large}
A thesis submitted for the degree of  \par
Doctor of Natural Sciences (Dr. rer. nat.) \par
\end{Large}
\begin{Large}
  \textbf{TU Berlin} \par
\end{Large}
\vspace{0.5in}
\begin{Large}
Faculty IV - Electrical Engineering and Computer Science \par
Learning and Intelligent Systems \par
\vspace{.5in}
Berlin, November 2023
\end{Large}
\end{center}

\vfill
{
  \begin{large}
  \rule{\textwidth}{1pt}
\vspace{2pt}
\textbf{Ph.D. Advisor}: Prof. Dr. Marc Toussaint (TU Berlin)  \\
\textbf{Doctoral Committee}: \\
Chair: Prof. Dr. Marc Alexa (TU Berlin) \\
Reviewer: Prof. Dr. Marc Toussaint (TU Berlin) \\
Reviewer: Prof. Dr. Georg Martius (University of T{\"u}bingen) \\
Reviewer: Prof. Dr. Tom{\'a}s Lozano-P{\'e}rez (MIT) 
  \end{large}

}

\end{titlepage}

\frontmatter

\renewcommand*\chapterheadstartvskip{}%

\tableofcontents

\chapter{Acknowledgements}

I would like to express my deepest gratitude to my supervisor, Marc Toussaint.
Thank you for your guidance and encouragement throughout my Ph.D.
Your knowledge and passion for robotics research, mastering both theory and practice, have been an inspiration to me.

I am very grateful to the members of the doctoral committee, Prof.
Georg Martius and Prof.
Tomás Lozano, whose research I have followed closely and admired, for reviewing and evaluating my work.
I am confident that your feedback and evaluation will significantly contribute to the refinement of this thesis.
Additionally, I would like to express my gratitude to Prof.
Marc Alexa for serving as the chair of my doctoral committee.

A heartfelt thanks to Erez Karpas and Michael Katz; collaborating with you has been a great learning experience and a pleasure, and I really enjoyed the short research stay at Technion in Haifa.
My sincere appreciation goes to Prof.
Georg Martius and Prof.
David Remy for their advice on the Thesis Advisory Committee of the IMPRS-IS program.

I have been lucky to work in a group of great colleagues and friends.
Danny, Ingmar, Valentin, Svetlana, Pia, Akmaral, Khaled, Wolfgang, Ilaria, Oz, Andreas, and Jung-Su, I have learned a lot from each one of you.
Collaborating on research projects has been a pleasure, and you have created a pleasant and stimulating work environment.
I look forward to meeting you again at robotics conferences all around the world!

Because research often requires time to step away and enjoy other facets of life, I would like to thank all the wonderful friends I met in Berlin.
I had a great time here with all of you, and I look forward to keeping in touch and seeing you again.

And finally, but most importantly, I would like to thank all my family, whose support has been fundamental, especially when I needed it the most.
Your love, support, and belief in me have been my pillars of strength.
I dedicate this achievement to you, with immense love and gratitude.
Thank you for always being there.

\chapter{Abstract}
\vspace{-.8cm}
Modern robots excel at performing simple and repetitive tasks in controlled environments; however, future applications, such as robotic construction and assistance, will require long-term planning of physical interactions.

These problems can be formulated as Task and Motion Planning (TAMP). 
The goal is to find how the robot should move to solve complex tasks requiring multiple interactions with objects in the environment, such as building furniture or cleaning and organizing the kitchen. However, TAMP is notoriously difficult to solve because it involves a tight combination of task planning and motion planning, considering geometric and physical constraints.

In this thesis, we aim to improve the performance of TAMP algorithms from three complementary perspectives.
First, we investigate the integration of discrete task planning with continuous trajectory optimization.
Our main contribution is a conflict-based solver that automatically discovers why a task plan might fail when considering the constraints of the physical world.
This information is then fed back into the task planner, resulting in an efficient, bidirectional, and intuitive interface between task and motion, capable of solving TAMP problems with multiple objects, robots, and tight physical constraints.

Traditionally, there have been two competing approaches to solving TAMP problems: sample-based and optimization-based methods.
In the second part, we first illustrate that, given the wide range of tasks and environments within TAMP, neither sampling nor optimization is superior in all settings.
To combine the strengths of both approaches, we have designed meta-solvers for TAMP, adaptive solvers that automatically select which algorithms and computations to use and how to best decompose each problem to find a solution faster.

A third promising direction to improve TAMP algorithms is to learn from previous solutions to similar problems.
In the third part, we combine deep learning architectures with model-based reasoning to accelerate computations within our TAMP solver.
Specifically, we target infeasibility detection and nonlinear optimization, focusing on generalization, accuracy, compute time, and data efficiency.

At the core of our contributions is a refined, factored representation of the trajectory optimization problems inside TAMP.
This structure not only facilitates more efficient planning, encoding of geometric infeasibility, and meta-reasoning but also provides better generalization in neural architectures.

\chapter{Zusammenfassung}
\vspace{-.8cm}

Moderne Roboter sind hervorragend darin, einfache und wiederholte Aufgaben in kontrollierten Umgebungen auszuführen.
Zukünftige Anwendungen, wie die robotergestützte Roboterassistenz und das robotergestützte Bauen, werden jedoch eine langfristige Planung physischer Interaktionen erfordern.

Diese Probleme können als Aufgaben- und Bewegungsplanung (Task and Motion Planning, TAMP) formuliert werden.
Dabei ist das Ziel, lange Abfolgen von Roboteraktionen zu finden, um komplexe Aufgaben zu lösen, die mehrere Interaktionen mit der Umgebung erfordern und dabei geometrische und physische Beschränkungen berücksichtigen.
TAMP ist bekannterweise sehr schwer zu lösen, da es eine enge Kombination von Aufgabenplanung und Bewegungsplanung erfordert.

In dieser Arbeit zielen wir darauf ab, die Leistung von TAMP-Algorithmen aus drei komplementären Perspektiven zu verbessern.
Zuerst untersuchen wir, wie man Aufgabenplaner mit Trajektorienoptimierung integriert.
Unser Hauptbeitrag ist ein neues Framework, das automatisch entdeckt und kodiert, warum ein Aufgabenplan angesichts der Beschränkungen der physischen Welt scheitern könnte.
Dies führt zu einer effizienten und intuitiven Integration von Aufgaben- und Bewegungsplanung.

Traditionell gab es zwei konkurrierende Ansätze, um TAMP-Probleme zu lösen: Stichprobenbasierte und optimierungsbasierte Methoden.
Im zweiten Teil zeigen wir zuerst, dass angesichts der Vielzahl von Aufgaben und Umgebungen innerhalb der TAMP weder Stichproben noch Optimierung in allen Situationen überlegen sind.
Um die Stärken beider Ansätze zu kombinieren, haben wir Meta-Lösungsalgorithmen für TAMP entwickelt: adaptive Solver, die automatisch auswählen, welche Algorithmen und Berechnungen verwendet werden sollen und wie jedes Problem am besten zerlegt werden kann, um eine Lösung schneller zu finden.

Ein dritter vielversprechender Ansatz zur Verbesserung der TAMP-Algorithmen besteht darin, von früheren Lösungen ähnlicher Probleme zu lernen.
Im dritten Abschnitt schlagen wir zwei verschiedene neuronale Architekturen vor, um teure Berechnungen in unserem Lösungsalgorithmus, nämlich Unlösbarkeitserkennung und nichtlineare Optimierung, mit Hilfe von Deep-Learning-Methoden zu beschleunigen.

Im Kern unserer Beiträge steht eine verfeinerte, faktorisierte Darstellung der Trajektorienoptimierungsprobleme innerhalb von TAMP.
Diese Struktur erleichtert nicht nur eine effizientere Planung und Kodierung geometrische Unlösbarkeiten, sondern ermöglicht auch das Schlussfolgern über potenzielle Rechenoperationen und bietet eine bessere Generalisierung in neuronalen Architekturen.

\chapter{Resumen}
\vspace{-.8cm}

Los robots modernos sobresalen en la realización de tareas simples y repetitivas en entornos controlados; sin embargo, las aplicaciones futuras, como la construcción y la asistencia robótica, requerirán una planificación autónoma de diversas interacciones físicas.

Estos problemas se pueden formular como Planificación de Tareas y Movimientos (TAMP, por sus siglas en inglés).
El objetivo es encontrar cómo debe moverse el robot para resolver tareas complejas que requieren múltiples interacciones con los objetos del entorno, como por ejemplo, montar un mueble o limpiar y recoger la cocina.
Sin embargo, la resolución de TAMP es notoriamente difícil porque implica una combinación estrecha de planificación de tareas y planificación de movimientos, considerando restricciones geométricas y físicas.

En esta tesis, nuestro objetivo es mejorar el rendimiento de los algoritmos TAMP desde tres perspectivas complementarias.
Primero, investigamos la integración de la planificación de tareas discretas con la optimización continua de trayectorias.
Nuestra principal contribución es un algoritmo que descubre automáticamente por qué un plan de tareas podría fallar al considerar las restricciones del mundo físico.
Esta información retroalimenta al planificador de tareas, resultando en una interfaz bidireccional e intuitiva entre tareas y movimientos, capaz de resolver problemas con múltiples objetos, robots y restricciones físicas complejas.

Tradicionalmente, ha habido dos enfoques competitivos para resolver problemas TAMP: métodos basados en muestreo y métodos basados en optimización.
En la segunda parte, primero ilustramos que, dada la amplia variedad de tareas y entornos dentro de TAMP, ni el muestreo ni la optimización son superiores en todos los escenarios.
Para combinar las fortalezas de ambos enfoques, hemos diseñado meta-algoritmos para TAMP, algoritmos adaptativos que seleccionan automáticamente qué algoritmos y cálculos usar y cómo descomponer mejor cada problema para encontrar una solución más rápidamente.

Una tercera dirección prometedora para mejorar los algoritmos TAMP es aprender de soluciones previas a problemas similares.
En la tercera parte, combinamos arquitecturas de aprendizaje profundo con razonamiento basado en modelos para acelerar los cálculos dentro de nuestros algoritmos.
Específicamente, nos enfocamos en la detección de qué restricciones físicas son inviables y en la optimización no lineal, centrándonos en la generalización, precisión, tiempo de ejecución y la eficiencia de datos.

En el núcleo de nuestras contribuciones se encuentra una representación refinada, desglosada y fragmentada de los problemas de optimización de trayectorias dentro de TAMP.
Esta estructura no solo facilita una planificación más eficiente, análisis de restricciones geométricas y meta-algoritmos, sino que también proporciona una mejor generalización en arquitecturas neuronales.

\chapter{Resum}
\vspace{-.8cm}

Els robots moderns excel·leixen en la realització de tasques simples i repetitives en entorns controlats; tanmateix, les aplicacions futures, com la construcció i l'assistència robòtica, requeriran una planificació autònoma de diverses interaccions físiques.

Aquests problemes es poden formular com a Planificació de Tasques i Moviments (TAMP, per les seves sigles en anglès).
L'objectiu és trobar com s'ha de moure el robot per resoldre tasques complexes que requereixen múltiples interaccions amb els objectes de l'entorn, com ara muntar un moble o netejar i recollir la cuina.
La resolució de TAMP és notòriament difícil perquè requereix una combinació estreta de planificació de tasques i planificació de moviments, considerant restriccions geomètriques i físiques.

En aquesta tesi, el nostre objectiu és millorar el rendiment dels algoritmes TAMP des de tres perspectives complementàries.
Primer, investiguem la integració de la planificació de tasques discretes amb l'optimització contínua de trajectòries.
La nostra principal contribució és un algorisme que descobreix automàticament per què un pla de tasques podria fallar quan es consideren les restriccions del món físic.
Aquesta informació retroalimenta al planificador de tasques, resultant en una interfície bidireccional entre tasques i moviments, capaç de resoldre problemes amb múltiples objectes, robots i restriccions físiques.

Tradicionalment, hi ha hagut dos enfocaments competitius per resoldre problemes TAMP: mètodes basats en mostreig i mètodes basats en optimització.
A la segona part, primer il·lustrem que, donada l'àmplia varietat de tasques i entorns dins de TAMP, ni el mostreig ni l'optimització són superiors en tots els escenaris.
Per combinar les fortaleses de tots dos enfocaments, hem dissenyat meta-algorismes per a TAMP, algorismes adaptatius que seleccionen automàticament quins algorismes i càlculs utilitzar i com descompondre millor cada problema per trobar una solució més ràpidament.

Una tercera direcció prometedora per millorar els algoritmes TAMP és aprendre de solucions prèvies a problemes similars.
A la tercera part, combinem arquitectures d'aprenentatge profund amb raonament basat en models per accelerar els càlculs dins dels nostres algorismes.
Específicament, apliquem aquests models en la detecció de quines restriccions físiques són inviables, i en l'optimització no lineal, centrant-nos en la generalització, precisió, temps d'execució i eficiència de dades.

En el nucli de les nostres contribucions hi ha una representació refinada, desglossada i fragmentada dels problemes d'optimització de trajectòries dins de TAMP.
Aquesta estructura no només facilita una planificació més eficient, millor anàlisi de les restriccions geomètriques i meta-algorismes, sinó que també proporciona una millor generalització en arquitectures neuronals.

\newpage

\mainmatter

\KOMAoptions{headings=big}%

\let\raggedchapter\raggedleft

\chapter{Introduction}
\label{ch:intro}

\paragraph{Autonomy of robotic systems}

Nowadays, robots are ubiquitous in industrial settings such as factories and warehouses, where they perform repetitive tasks in controlled environments.
However, robotic systems still lack the robustness and autonomy needed to become useful companions in our daily lives, helping humans with construction sites, working in hazardous environments, household chores, and elderly care.

Traditionally, robots have operated in very controlled environments, such as factories for car manufacturing.
At the most basic level of autonomy, a human worker would manually provide a reference trajectory for the robot, and the robot's task would be to follow this trajectory in a repetitive manner.
At a second level of autonomy, the human operator defines a desired goal configuration for the robot, and the robot has to find a collision-free path to reach the goal.

Roughly, these two levels of autonomy correspond, respectively, to two distinct fields in robotics: optimal control
\cite{10.5555/1524151} and motion planning \cite{DBLP:books/cu/L2006}.
They both address fundamental problems in robotics, with open and interesting questions from both research and engineering perspectives.
However, optimal control and motion planning require precise task specifications and cannot handle complex and long-term tasks independently.

Thus, the degree of autonomy of these systems is insufficient for deploying robots in foreseeable future applications such as construction sites or household environments.
As human users, it is not convenient to specify short-term goals or reference trajectories.
Instead, we need robots to operate at a higher degree of autonomy, where we provide only a high-level goal such as ``clean the table'', ``stack all the blocks'' or ``build a chair'' and the robot solve the task.

These types of problems, which require long-term planning of a sequence of physical interactions with the environment, considering geometric and physical constraints, can be formalized as Task and Motion Planning (TAMP) \cite{garrett2021integrated}.
Together with other fundamental challenges in robotics, such as perception \cite{10.5555/1121596} and dexterous manipulation \cite{billard2019trends, bicchi2000robotic}, TAMP will play a central role in our quest for the autonomy of robotic systems.

Algorithmic improvements, better formulations, and a deeper understanding of Task and Motion Planning are fundamental for future robotics systems.
In this thesis, we aim to improve the performance of TAMP algorithms from three complementary perspectives: more efficient solvers that integrate classical planning with trajectory optimization, meta-solvers that automatically select how to solve and decompose the problems, and learning-based methods to accelerate expensive computations and reuse solutions to similar problems.

\paragraph{Task and Motion Planning}

In Task and Motion Planning (TAMP), robots must plan a sequence of feasible motions and interactions with other objects to achieve a desired state of the environment, considering the geometric and physical constraints of the world (e.g., \cite{toussaint2015logic,kaelbling2011hierarchical,dantam2018incremental}\footnote{A comprehensive discussion of related work in TAMP is provided later in \cref{ch:background}.
}).
It combines aspects of both task planning and motion planning and usually assumes a deterministic transition model and perfect state information.

TAMP problems can be solved by planning at two levels of abstraction: a high-level task plan (also known as an action skeleton or sequence of high-level actions) and low-level motion.
In the high-level task, the robots decide with which objects to interact, in which order, and what type of interactions, e.g., grasping, pushing, or throwing.
Such information can be encoded using discrete variables, resulting in a discrete planning problem \cite{fikes1971strips,bonet2001planning}.
Additionally, the robot must plan low-level motion that fulfills the geometric and physical constraints of the real world, such as collision avoidance, stability, reachability, and friction \cite{toussaint2018differentiable}.
Notably, in real-world applications, there are strong interdependencies between the high-level task and the low-level motion.
This prevents a naive decomposition of the problem using directly off-the-shelf task planners and motion planners, making TAMP problems notoriously difficult to solve.

TAMP includes a broad class of problems, including multimodal motion planning, sequential manipulation planning, rearrangement planning, and hybrid task planning.
Consider, for example, the problem of building a tower of blocks with two robots, as shown in \cref{fig:intro:example1}.
\cref{fig:intro:example1-sol} shows some key configurations of the solution to this TAMP problem.
Here, considering reachability and collision avoidance constraints is fundamental because two blocks are obstructing the placement of the tower, and each robot can only reach a subset of the blocks.

Similarly, in the TAMP problem in \cref{fig:intro:example2}, two robots have to put a ball on top of a tower of blocks.
The ball is out of reach, but a stick can be used to push it to the center of the table.
\cref{fig:intro:example2-sol} shows some key configurations of the solution to this TAMP problem.
Remarkably, in both examples, the human user has only provided the abstract goal, and all the motions and interactions with the environment are computed autonomously by the robots.

\begin{figure}
	\centering
	\begin{tabular}{cc}
		Initial state & Goal                      \\

		\centered{
			\includegraphics[width=.32\linewidth]{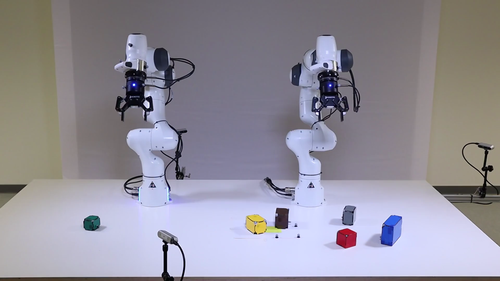}
		}             &

		\centered{ \texttt{Build tower of blocks} \\ \texttt{blue-gray-red-green} \\ \texttt{in the center of the table.
		} }
	\end{tabular}
	\caption{Task and Motion Planning -- Example \num{1}.
		\vspace{.5cm}}
	\label{fig:intro:example1}
\end{figure}
\begin{figure}
	\centering
	\setlength{\tabcolsep}{2pt} %
	\begin{tabular}{ccc}

		\includegraphics[width=.25\linewidth]{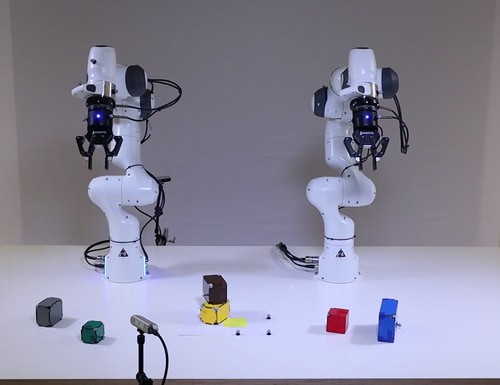} &
		\includegraphics[width=.25\linewidth]{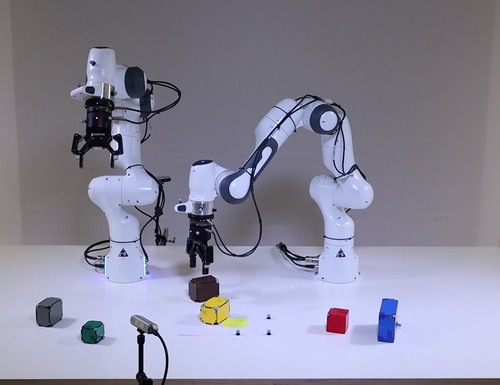} &
		\includegraphics[width=.25\linewidth]{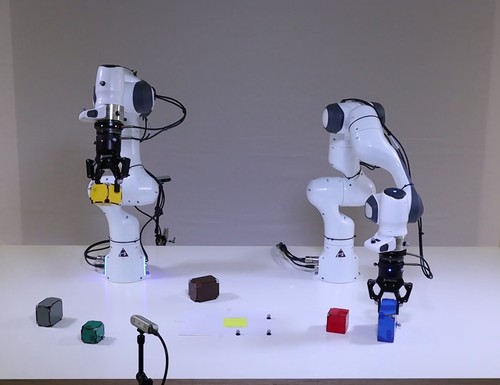}   \\

		\includegraphics[width=.25\linewidth]{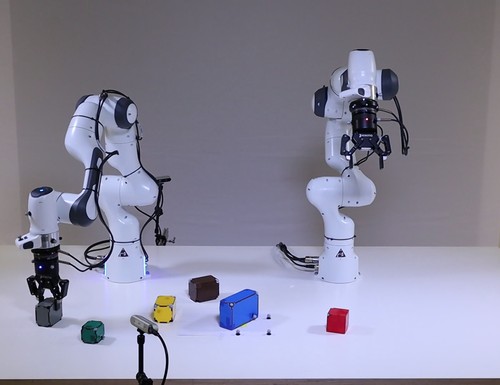} &
		\includegraphics[width=.25\linewidth]{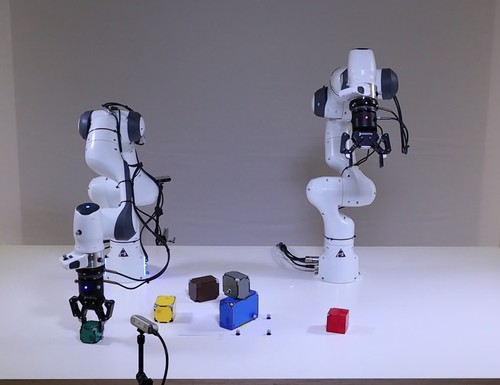} &
		\includegraphics[width=.25\linewidth]{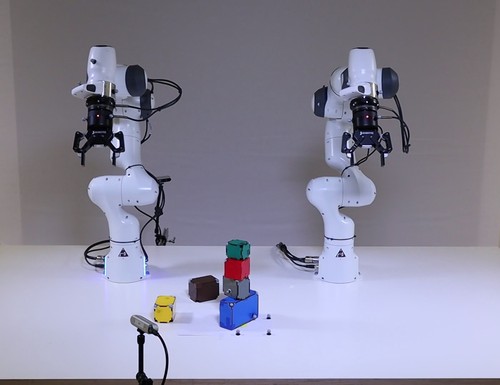}

	\end{tabular}
	\caption{
		A solution to the Task and Motion Planning problem of \cref{fig:intro:example1}.
	}
	\label{fig:intro:example1-sol}
\end{figure}

\begin{figure}[t]
	\centering
	\begin{tabular}{cc}
		Initial state & Goal           \\
		\centered{
			\includegraphics[width=.32\linewidth]{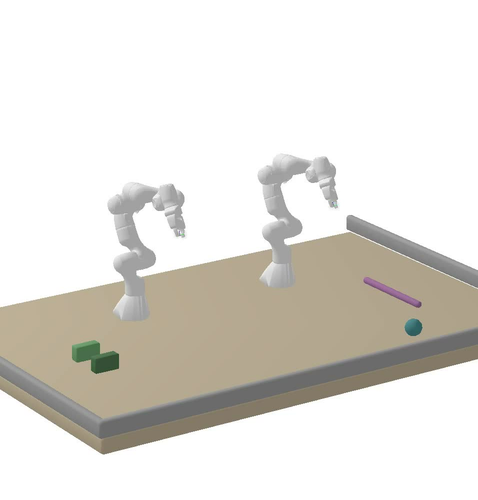}
		}             &
		\centered{
		\texttt{Build tower of blocks} \\ \texttt{light green - dark green} \\ \texttt{with blue ball on top.
		}
		}
	\end{tabular}
	\caption{Task and Motion Planning -- Example \num{2}.
		\vspace{.5cm}}
	\label{fig:intro:example2}
\end{figure}

\begin{figure}
	\centering
	\setlength{\tabcolsep}{2pt} %
	\begin{tabular}{ccc}
		\includegraphics[width=.25\linewidth]{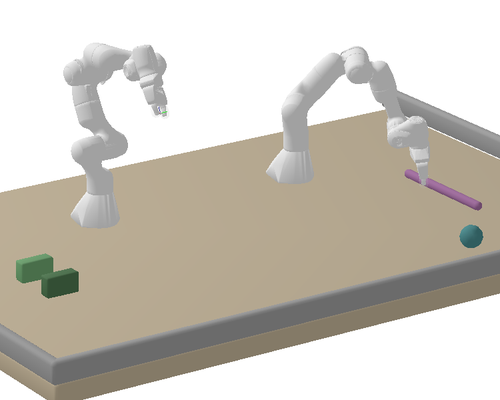} &
		\includegraphics[width=.25\linewidth]{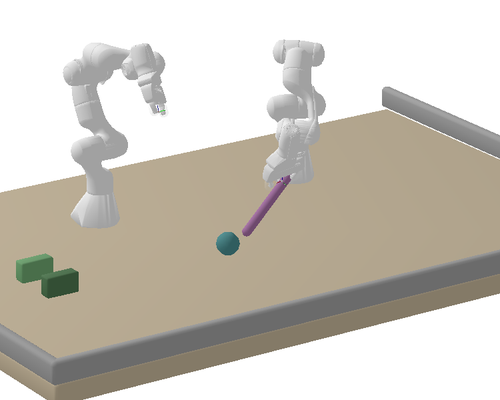} &
		\includegraphics[width=.25\linewidth]{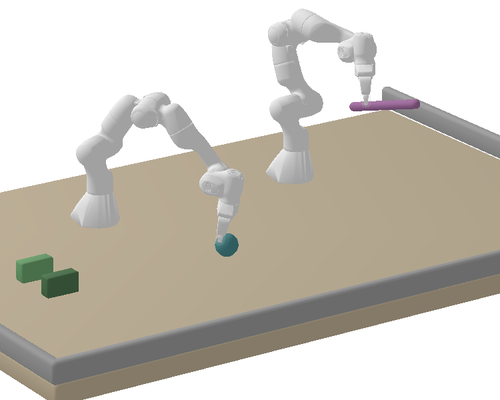}   \\
		\includegraphics[width=.25\linewidth]{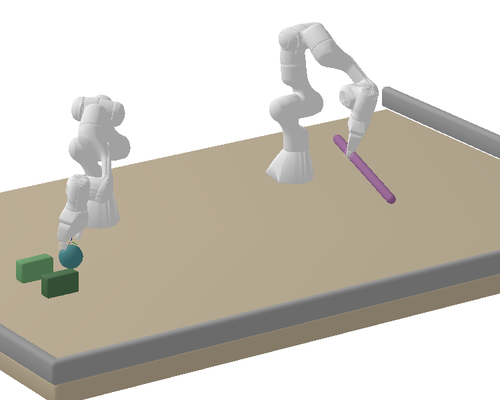} &
		\includegraphics[width=.25\linewidth]{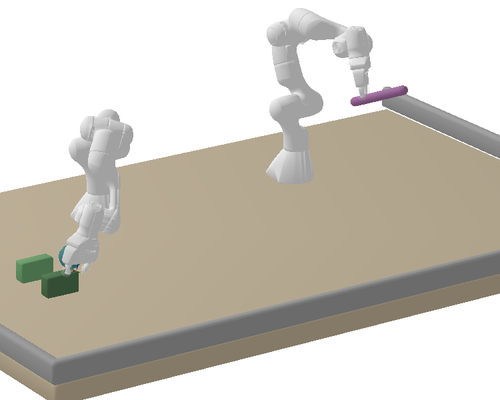} &
		\includegraphics[width=.25\linewidth]{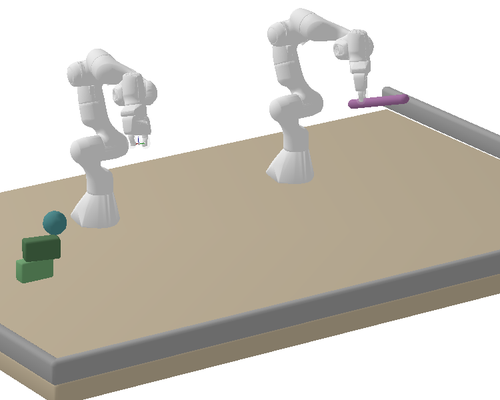}
	\end{tabular}
	\caption{
		A solution to the Task and Motion Planning problem in \cref{fig:intro:example2}.
	}
	\label{fig:intro:example2-sol}
\end{figure}

\emph{Why are TAMP problems difficult to solve?}
First, the complexity of the problem grows exponentially with the number of objects and robots in the environment, often denoted with the term ``curse of dimensionality''.
Second, generating collision-free and feasible robot trajectories is a challenging problem on its own because it requires motion planning and optimization in continuous spaces with geometric and physical nonlinear constraints, such as collision avoidance, grasping, and stability.

Finally, TAMP requires joint reasoning about motion planning and task planning, often with strong dependencies and interactions between the two levels.
Geometric constraints and physical constraints have an influence on whether a task plan is valid, and how the robots execute each intermediate step of the plan is relevant for achieving the final high-level goal.

Beyond its potential applications, Task and Motion Planning is a very interesting research field because it requires planning at two levels of abstraction while accounting for strong dependencies and without easy decompositions.
Such problems are usually easy for humans to solve, as humans excel at reasoning at different levels of abstraction and combining long-term planning with motion planning.
However, they are very challenging to formalize mathematically and to solve with computers.

From a robotics perspective, designing TAMP algorithms requires understanding and combining state-of-the-art planning algorithms, such as classical planning, trajectory optimization, and motion planning.
Using off-the-shelf solvers is usually not sufficient, as solving TAMP problems requires new interfaces and functionality to achieve a tighter integration.

Throughout this thesis, we assume perfect knowledge of the state of the environment where robots operate, as is typically done in the TAMP literature.
Thus, our robots will know the positions and the geometry of the objects they want to manipulate.
While this is a general limitation for deploying robots in the real world today--because we cannot assume this perfect information in uncontrolled environments--TAMP with perfect state knowledge is still an open and unsolved research problem.

\section{Sampling and Optimization Methods for Task and Motion Planning}

Algorithms for TAMP problems are complex systems that combine and tightly integrate different algorithmic components from task planning and motion planning.
Traditionally, there have been two alternative and competing approaches to solving these problems, which differ in the tools used to compute the motion of the robots and in how the discrete and continuous search are interleaved.

Sample-based approaches to TAMP (e.g., \cite{garrett2021integrated,srivastava2014combined}s) incrementally discretize the continuous search space and attempt to compute a solution incrementally, step by step, using constraint sampling methods and sample-based motion planning \cite{kavraki1996probabilistic,lavalle2001rapidly}.
For example, to generate the robot motion that picks up and moves one object, these methods would first generate a valid grasp and a valid placement of the object, then a robot configuration for the pick and for the place using inverse kinematics, and finally a trajectory.
Because the motion is computed incrementally in several steps using conditionally constrained sampling, these methods cannot efficiently account for joint constraints in the motion, e.g., when very precise grasps or placements are required to enable subsequent actions.

On the other hand, optimization-based approaches to TAMP first compute candidate task plans \cite{toussaint2015logic,toussaint2018differentiable} and then attempt to find a motion plan for the full task plan using trajectory optimization \cite{bertsekas1997nonlinear,betts1998survey,NoceWrig06}.
This accounts for all the constraints of the motion jointly, instead of sequentially, as in the case of sample-based approaches.
There is often a huge number of potential task plans that fail because of the geometric constraints.
Thus, a significant challenge in these approaches is to design a good interface between the task and the motion so that candidate plans are informed about geometric infeasibility.

Our work builds on Logic Geometric Programming (LGP) \cite{toussaint2015logic}, a prominent optimization-based formulation of TAMP, which represents the problem as a joint optimization over discrete and continuous variables, where different high-level task plans imply different nonlinear constraints for the robot motion.

In the first section of this thesis, we present two new optimization-based solvers for TAMP that combine and refine tools and techniques from trajectory optimization, conflict-based search, and classical planning.
Our key contributions are two new bidirectional interfaces between the discrete and continuous levels, which allow informing task planners about geometric infeasibility, resulting in very efficient TAMP solvers.

Our first solver, \textit{Diverse Planning for LGP}, interfaces trajectory optimization with state-of-the-art classical planning by detecting and encoding infeasible prefixes of the task plan (\cref{ch:diverse_planning}).

However, encoding prefix infeasibility is not enough to solve problems with multiple robots and objects, as the number of candidate high-level plans to evaluate grows exponentially fast.

In our second solver, \textit{Factored-NLP Planner}, we combine classical planning and optimization in a more precise and effective way, leveraging a novel factored representation of LGP problems.
Our framework automatically detects which nonlinear constraints fail and encodes this information back into the task planner.
This results in a very efficient interface, solving TAMP with several robots, objects, and intricate geometric constraints in just a few seconds (\cref{ch:bid}).

\paragraph{Meta-solvers: An adaptive combination of sampling and optimization}
\label{sec:intro:meta}

TAMP encompasses a wide range of problem settings, including varying numbers of robots and objects, as well as different goals, geometries, and physical constraints.
Given this diversity and complexity, it is unrealistic to expect that a single algorithm can efficiently solve all TAMP problems.

Despite recent advances in both sample-based and optimization-based approaches to TAMP, the performance of each solver is limited by the capabilities of the underlying methods used to generate the motion.
When a problem can be easily decomposed into simpler, independent subproblems -- where the motion of the robot can be computed independently -- sample-based approaches are very efficient.
In contrast, when the motion of the robot is highly constrained, and there are long-term dependencies between the actions, optimization-based approaches are more efficient.

Minor variations in the task or the environment can make a TAMP problem more suitable for one type of solver or the other.
Therefore, designing a TAMP solver that can efficiently solve all problems while strictly adhering to either the optimization or sample-based paradigm is impossible.

The second part of this thesis investigates how to design meta-solvers for Task and Motion Planning.
Intuitively, a meta-solver is a solver that can choose which type of solver to use based on the problem at hand \cite{Russell91Principles}.
In our project, the meta-solver will actively reason about how to best decompose the problem and which algorithm and strategy to use to compute the robot motions, two vital questions that are fixed by design in current TAMP solvers.

We first investigate the problem of finding the keyframe configurations for a fixed high-level task plan (\cref{ch:mcts}).
This is a fundamental subproblem of TAMP that already exposes the trade-off between choosing either sampling or joint nonlinear optimization.
Our algorithm will learn how to best decompose the problem to maximize the number of solutions found in a fixed computational time.
The discovered optimal strategies are adaptive hybrid combinations of optimization and sampling, which outperform user-predefined decompositions as well as full joint optimization.

In a subsequent project, we tackle the comprehensive TAMP problem.
To bridge the gap between optimization and sample-based approaches, we first define a novel computational state that extends the traditional notion of discrete-continuous state with free states subject to constraints.
Based on this formulation, we introduce a TAMP meta-solver, a hybrid solver that automatically uses a flexible combination of optimization and sampling methods to solve for the high-level task plan and the low-level motion in a TAMP problem (\cref{ch:meta-solver}).

\section{Accelerating Model-Based Solvers with Deep Learning}

The traditional approach to solving a Task and Motion Planning (TAMP) problem in the real world can be described in three stages:
First, we create a model of the world using our knowledge of physics and geometry.
Second, we use this model, together with a TAMP solver, to compute the best robot actions.
Third, we apply these actions in the real-world system.

Triggered by the success of Deep Learning \cite{Goodfellow-et-al-2016} in other fields, such as computer vision \cite{krizhevsky2012imagenet, mildenhall2021nerf} and natural language processing \cite{hochreiter1997long, vaswani2017attention}, there has been an explosion of data-driven approaches that leverage data rather than first-principles physics models to solve robotic problems.

However, in the context of TAMP, where long-term planning is required, abstract and physics models are very valuable as they enable scaling and generalization to different types of problems and scenarios.
Instead of replacing world models and planning algorithms, we investigate how to combine data-driven approaches with model-based approaches to improve the efficiency of TAMP solvers.
Specifically, we are interested in using a dataset of solutions to similar problems to accelerate planning algorithms, making expensive computations more tractable and enabling real-time solutions to combinatorial and large-scale optimization problems.

The combination of learning and model-based approaches has received a lot of attention in recent years, as it can potentially combine the best of both worlds, but it presents several challenges.
A fundamental open research question is how to represent the problem and the solutions to achieve strong generalization and accuracy.
Useful neural models should be applicable in a wide range of new, unseen problems, instead of just memorizing the training data as if it were a table of problems and solutions.

In this thesis, we investigate how to use deep learning to accelerate TAMP solvers, focusing on two computationally expensive operations in our TAMP solvers: solving nonlinear optimization programs and determining which constraints are infeasible in an overconstrained optimization problem.

First, we propose deep generative models \cite{goodfellow2014generative, kingma2013auto} to provide a good initial warm start for nonlinear optimization in very challenging optimization problems (\cref{ch:gans}).
In the second project, we train a graph-based classifier \cite{kipf2016semi} using the structure of the nonlinear program to directly predict which constraints are infeasible (\cref{ch:learn-feas}).
In both projects, we present a unique contribution toward understanding how to leverage structural knowledge of the TAMP problem for creating more accurate and generalizable neural models.

\section{The Factored Structure of Task and Motion Planning}

A deep study and analysis of the factored structure of Task and Motion Planning problems are at the core of all our contributions.

Using the underlying factored structure of large problems has been key in other robotics fields.
For example, in Simultaneous Localization and Mapping (SLAM), the factored structure is used to solve sparse optimization problems faster to estimate the position of a robot while creating a map of the environment \cite{dellaert2017factor}.
In classical planning, the factorization of the state space using a set of variables is key to designing domain-independent heuristics that guide heuristic search algorithms \cite{bonet2001planning}.

Similarly, a correct and precise formulation of TAMP problems can unveil significant structure that can be exploited to solve the problem more efficiently.
First, we have different objects and robots in the scene, which implies a natural factorization of the continuous configuration space and the high-level discrete representation.
Second, there is a temporal structure that arises from the temporal dimension of a planning problem.
Furthermore, even when the required motions are long and involve multiple robots and objects, each constraint and physical interaction usually involves a few objects at a time.

In this thesis, we use a refined, factored representation of the trajectory optimization problems within TAMP, which exposes the local dependencies between variables and constraints.
For instance, when a robot picks up an object, the new interaction constrains the motion of the object with respect to the gripper but does not create additional constraints between other objects in the environment.
Additionally, we show how the nonlinear constraints of the motion are defined by the high-level task plan.
For instance, a pushing interaction will imply very different constraints on the robot's and object's motion than a pick, place, or handover action.

A factored representation of TAMP is required to identify which are the building blocks of all the TAMP problems, understand how they are connected, and unveil how these combinations of local structures can compose a very rich set of TAMP problems.
For instance, consider again the two TAMP problems of \cref{fig:intro:example1,fig:intro:example2}, with solutions shown in \cref{fig:intro:example1-sol,fig:intro:example2-sol}.
They both share the same local structure, with multiple objects and robots, pick and place interactions (with additional pushing interactions in the second example).
What information can we reuse from one problem to the other?
In what sense can we use the solution of one problem to inform the other?
How can we decompose the problem?
How can we share information about different high-level task plans in the same scenario, or even across problems?

During this thesis, we will incrementally tackle all these questions by using a factored representation of the trajectory optimization problems within TAMP.
In our new TAMP solvers, we introduce a refined factored Logic Geometric Program formulation, which, combined with technical algorithmic contributions, can be used to detect why task plans fail and share this information back into the high-level task planner (\cref{ch:bid}).

In the second part of the thesis, the factored structure is used to reason about good problem decompositions and sequences of small sampling operations inside a meta-solver (\cref{ch:mcts}).

Finally, the factored structure is also fundamental in our learning-based approaches (\cref{ch:gans,ch:learn-feas}).
Together with suitable neural architectures, the factored structure introduces a strong inductive and relational bias in our models, reducing the sample complexity during training, improving accuracy, and enabling generalization to new and diverse problems and environments.

\section{Reading Guide and Statement of Contributions}

\cref{ch:background} introduces the main concepts and tools used as a foundation throughout the thesis: nonlinear programming, classical planning, and the Logic Geometric Programming formulation of Task and Motion Planning, together with a literature review of solvers and problem formulations for TAMP.

\cref{sec:bg:structure} provides an intuitive explanation of the factored structure inherent in trajectory optimization problems within TAMP, illustrated with multiple examples.

The core of the thesis is divided into six chapters, organized into three parts: \textup{1 - \namePartOne}, \textup{2 - \namePartMetaSolver}, and \textup{3 - \namePartLearning}.
In the conclusion (\cref{ch:conclusion}), we summarize the contributions and discuss open challenges in the field of TAMP.
\begin{figure}
	\centering
	\small
	\begin{tikzpicture}[node distance = 7em, auto]
		\node [block2, minimum width=7cm,text width=7cm,align=left] (n1) {\textbf{Ch. 4} - \nameChapterOne ~ \textcolor{gray}{(3)}};
		\node [block2, below of=n1,  minimum width=7cm,text width=7cm,align=left] (n2) {\textbf{Ch. 5} - \nameChapterTwo ~ \textcolor{gray}{(4)}};
		\node [block2, below of=n2,minimum width=7cm,text width=7cm,align=left] (n3) {\textbf{Ch. 6} - \nameChapterThree ~  \textcolor{gray}{(1)}};
		\node [block2, below of=n3,minimum width=7cm,text width=7cm,align=left] (n4) {\textbf{Ch. 7} - \nameChapterFour ~ \textcolor{gray}{(6)}};
		\node [block2, below of=n4,minimum width=7cm,text width=7cm,align=left] (n5) {\textbf{Ch. 8} - \nameChapterFive ~ \textcolor{gray}{(2)}};
		\node [block2, below of=n5,minimum width=7cm,text width=7cm,align=left] (n6) {\textbf{Ch. 9} - \nameChapterSix ~ \textcolor{gray}{(5)}};
		\node [block, fill=SkyBlue,above left of=n2,yshift =-.5cm, align=left,text width=7cm,xshift=1cm] (p1) {\textbf{Part I} - { \footnotesize \namePartOne }};
		\node [block,fill=SkyBlue, above left of=n4,yshift =-.5cm,align=left,text width=7cm,xshift=1cm] (p2) {\textbf{Part II} - {\footnotesize \namePartMetaSolver}};
		\node [block,fill=SkyBlue, above left of=n6,yshift =-.5cm,align=left,text width=7cm,xshift=1cm] (p3) {\textbf{Part III} - {\footnotesize \namePartLearning}};

		\node [block, right of=n3, xshift=5cm, minimum width=3cm, text width=3cm, fill=Salmon] (f1) {Subsets of Infeasible Constraints};
		\node [block, below of=f1, minimum width=3cm,  text width=3cm,fill=SeaGreen] (f2) {Factored Structure};
		\node [block, below of=f2, minimum width=3cm,text width=3cm, fill=GreenYellow]  (f3) {Sampling Keyframes};

		\begin{pgfonlayer}{background}

			\path[color=GreenYellow,line width=.8pt]
			(f3.west) edge (n5.east)
			(f3.west) edge (n3.east);

			\draw[color=Salmon,line width=.8pt]
			(f1.west) edge (n2.east)
			(f1.west) edge (n6.east);

			\draw[color=SeaGreen,line width=.8pt]
			(f2.west) edge (n2.east)
			(f2.west) edge (n3.east)
			(f2.west) edge (n5.east)
			(f2.west) edge (n6.east);

			\draw[color=SkyBlue,line width=.8pt]
			(p1) edge (n1)
			(p1) edge (n2)

			(p2) edge (n3)
			(p2) edge (n4)

			(p3) edge (n5)
			(p3) edge (n6);

		\end{pgfonlayer}
	\end{tikzpicture}
	\caption{Reading Guide -- A graphical overview.}
	\label{fig:reading-guide}
\end{figure}
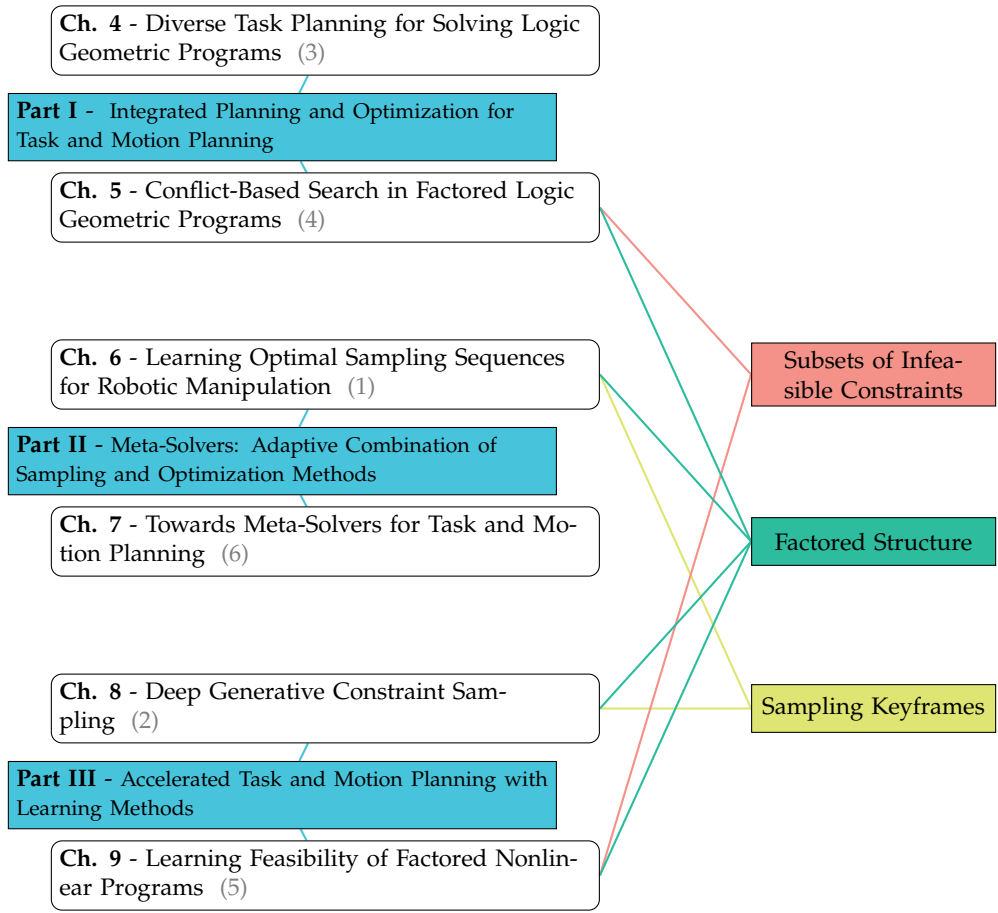

Despite this clear organization, several interconnections and synergies emerge between different chapters, offering a comprehensive view of TAMP from multiple perspectives.
A graphical overview is provided in \cref{fig:reading-guide}.
Chapters are represented with white rectangles and are connected with other chapters based on the shared contributions, methodologies, or problem settings they address, depicted with distinct color-coded boxes.
For instance, the analysis of the factored structure of the trajectory optimization problems, displayed in red, is a central theme in four out of the six chapters.

Each chapter essentially reflects a different research project.
The chronological order of these projects, marked with a gray number, illuminates the progression of my research journey.
It aims to show how diverse perspectives and fields can converge on novel ideas and valuable contributions, often requiring one to revisit similar methodologies and problem settings with new tools and insights.

\textit{Part I - \namePartOne}

\begin{itemize}
	\item In \cref{ch:diverse_planning}, \textit{\nameChapterOne}, we present a systematic interface between discrete task planning and trajectory optimization for solving TAMP.
	      Our solver detects geometric conflicts in the form of prefixes of task plans that are infeasible and blocks these prefixes in the task planner.
	      This chapter is based on the publication \cite{ortiz22conflict},
	      \begin{itemize}
		      \item \underline{Ortiz-Haro, J.
		            }, Karpas, E., Toussaint, M., \theAnd Katz, M. (2022).
		            Conflict-Directed Diverse Planning for Logic-Geometric Programming.
		            In Proceedings of the International Conference on Automated Planning and Scheduling (Vol.
		            32, pp. 279-287).
	      \end{itemize}

	\item In \cref{ch:bid}, \textit{\nameChapterTwo}, we present a new factored formulation of TAMP and a second TAMP solver that uses this factored representation as a bidirectional interface between task and motion.
	      Here, the solver can detect and encode infeasible subsets of nonlinear constraints, resulting in a more efficient interface.
	      This chapter is based on the publication \cite{ortiz2022conflictinterface},
	      \begin{itemize}
		      \item \underline{Ortiz-Haro, J.
		            }, Karpas, E., Katz, M., \theAnd Toussaint, M. (2022).
		            A Conflict-Driven Interface Between Symbolic Planning and Nonlinear Constraint Solving.
		            IEEE Robotics and Automation Letters, 7(4), (pp. 10518-10525).
	      \end{itemize}
\end{itemize}

\textit{Part II - \namePartMetaSolver}

\begin{itemize}
	\item In \cref{ch:mcts}, \textit{\nameChapterThree}, we present a meta-algorithm to solve a key subproblem of TAMP: finding the keyframe configurations for a fixed task plan.
	      The meta-algorithm combines sampling and optimization to minimize the computational time required to generate diverse solutions.
	      This work is based on the publication \cite{ortiz2021learning},
	      \begin{itemize}
		      \item \underline{Ortiz-Haro, J.
		            }, Hartmann, V.
		            N., Oguz, O.
		            S., \theAnd Toussaint, M.
		            (2021).
		            Learning Efficient Constraint Graph Sampling for Robotic Sequential Manipulation.
		            IEEE International Conference on Robotics and Automation (ICRA) (pp.
		            4606-4612).
	      \end{itemize}
	\item In \cref{ch:meta-solver}, \textit{\nameChapterFour}, we present a meta-solver\footnote{
		      We plan to extend and submit the content of this chapter to a robotics or planning conference, for instance, IROS, ICRA, or ICAPS.
		      This research has been conducted in collaboration with Erez Karpas and Marc Toussaint.
	      },
	      for the comprehensive TAMP problem.
	      This solver combines search on the task and motion levels and determines the best way to decompose the TAMP problem, deciding automatically whether it is better to use constrained sampling or joint nonlinear optimization.
\end{itemize}

\textit{Part III - \namePartLearning}

\begin{itemize}
	\item In \cref{ch:gans}, \textit{\nameChapterFive}, we present a new method to generate samples on constraint manifolds using a combination of deep generative models and nonlinear optimization.
	      We apply our framework to compute faster keyframe configurations of a fixed task plan.
	      This work is based on the publication \cite{ortiz2022structured},
	      \begin{itemize}
		      \item \underline{Ortiz-Haro, J.
		            }, Ha, J.
		            S., Driess, D., \theAnd Toussaint, M.
		            (2022).
		            Structured Deep Generative Models for Sampling on Constraint Manifolds in Sequential Manipulation.
		            In Conference on Robot Learning (pp.
		            213-223).
		            PMLR.
	      \end{itemize}

	\item In \cref{ch:learn-feas}, \textit{\nameChapterSix}, we present a graph-neural model that predicts which constraints of a factored nonlinear program are infeasible.
	      In the context of TAMP, our model can generalize to different scenes, longer manipulation sequences, more robots and objects than the example data seen during training.
	      This chapter is based on the publication \cite{ortiz2023learning},
	      \begin{itemize}
		      \item \underline{Ortiz-Haro, J.
		            }, Ha, J.
		            S., Driess, D., Karpas, E., \theAnd Toussaint, M.
		            (2023).
		            Learning Feasibility of Factored Nonlinear Programs in Robotic Manipulation Planning.
		            IEEE International Conference on Robotics and Automation (ICRA) (pp.
		            3729-3735).
	      \end{itemize}
\end{itemize}

Several of my publications as a Ph.D.
student at TU Berlin have been excluded from this thesis to maintain a focus on the core contributions to Task and Motion Planning.
The complete list of publications during my Ph.D.
is shown in \cref{app:publications}.

\chapter{Background}
\label{ch:background}

This chapter introduces the background material necessary to understand the scientific contributions that will be presented later.
Our work builds on Logic Geometric Programming, an optimization-based formulation of Task and Motion Planning (TAMP), which combines nonlinear programming and classical planning.

Thus, we start with a brief presentation of nonlinear programs, including a short discussion on how to solve them and examples in the context of robotics (\cref{sec:bg:nlp}).
Then, we present classical planning, introducing the factored formulation and solvers that we will use later in the thesis.
We also present the Blocksworld domain, which provides a high-level abstraction of the TAMP problem, ignoring the geometric constraints (\cref{bg:sec:planning}).

In \cref{sec:bg:lgp}, we present the Logic Geometric Programming formulation together with the state-of-the-art solver for this formulation.
We conclude the background chapter with a literature review on Task and Motion Planning (\cref{sec:bg:related-work}).

\section{Nonlinear Programs in Robotics}
\label{sec:bg:nlp}

\paragraph{Nonlinear programs}
A nonlinear program (NLP) is an optimization problem of the form:
\begin{subequations}\label{eq:nlp-opt}
	\begin{align}
		\min_{x \in \RR^n} \quad & f(x)\,,         \\
		\text{s.t.
		} \quad                  & h(x) = 0 \,,    \\
		                         & g(x) \leq 0 \,,
	\end{align}
\end{subequations}
where \(x \in \RR^n \) is an $n$-dimensional continuous vector variable, \(f: \RR^n \to \RR\) is the cost function, \(h : \RR^n \to \RR^{l}\) are the equality constraints, and \(g : \RR^n \to \RR^m\) are the inequality constraints.
All functions \(f\), \(h\), and \(g\) are smooth and differentiable.
The abbreviation $\text{s.t.
	}$ stands for ``subject to''.

The optimal solution minimizes the objective function while fulfilling the equality and inequality constraints.
It must fulfill the first-order necessary conditions of optimality, known as the Karush–Kuhn–Tucker (KKT) conditions.
KKT conditions state that if a point \(x^* \in \mathbb{R}^n\) is a local minimum, then there exist vectors \(\lambda^* \in \RR^{l}\) and \(\mu^* \in \RR^{m}\), called Lagrange multipliers or dual variables, such that the following conditions hold:
\begin{subequations}
	\label{eq:kkt}
	\begin{align}
		\nabla f(x^*) + D h(x^*)^T \lambda^{*} + D g (x^*)^T \mu^* = 0 \, , \\
		h(x^*) = 0 \, ,                                                     \\
		g(x^*) \leq 0 \, ,                                                  \\
		\mu^* \geq 0 \, ,                                                   \\
		g(x^*)^T \mu = 0 \,,
	\end{align}
\end{subequations}
where \(\nabla f(x^*)\) is the gradient of \(f\), and \(D h(x^*)\) and \(D g(x^*)\) are the Jacobians of \(h\) and \(g\) evaluated at $x^*$.
The KKT conditions are sufficient conditions for optimality if the NLP is a convex optimization problem, with \(f\) convex, \(h\) being linear, and \(g\) being convex (under some regularity conditions; we refer to \cite{NoceWrig06} for more technical and precise definitions and proofs).

If a vector \(x_{\text{feas}} \in \mathbb{R}^n\) fulfills the constraints, i.e., \(h(x_{\text{feas}}) = 0\), \(g(x_{\text{feas}}) \leq 0\), it is called a feasible solution.
The set of feasible solutions of \eqref{eq:nlp-opt} defines a nonlinear manifold
\begin{equation}
	\mathcal{M} = \{ x \in \mathbb{R}^n \mid h(x) = 0, \, g(x) \leq 0 \} \,.
\end{equation}

An NLP is infeasible if there are no feasible solutions.
Throughout this thesis, we often omit the cost term and focus on the feasibility problem,
\begin{subequations}
	\label{eq:nlp-feas}
	\begin{align}
		\text{find} & ~ x \in \mathbb{R}^n, \\
		\text{s.t.
		}           & ~ h(x) = 0 \,,        \\
		            & ~ g(x) \leq 0 \,.
	\end{align}
\end{subequations}
For simplicity, we will refer to both \eqref{eq:nlp-opt} and \eqref{eq:nlp-feas} as NLPs.
In fact, a practical and robust way to generate one feasible solution in \eqref{eq:nlp-feas}
is to choose a reference value \(x_{\text{ref}} \in \mathbb{R}^n\) and solve the NLP:
\begin{subequations}
	\label{eq:nlp-proj}
	\begin{align}
		\min_{x \in \mathbb{R}^n} & ~ || x - x_{\text{ref}}||^2 \\
		\text{s.t.
		}                         & ~ h(x) = 0 \,,              \\
		                          & ~ g(x) \leq 0 \,.
	\end{align}
\end{subequations}

\newpage

\paragraph{Methods for solving a nonlinear program}

Nonlinear programs do not have a closed-form solution, and they are usually solved with iterative methods.
An extensive review is available in \cite{NoceWrig06}.

\textit{Unconstrained Optimization:}
First, let's consider the unconstrained optimization problem,
\begin{align}
	\min_{x \in \mathbb{R}^n} \quad & f(x) \,.
\end{align}
We can optimize this function with a broad family of local iterative algorithms that generate a sequence \( x_{k+1} = x_k + \alpha_k d_k \), starting from an initial value \( x_0 \).
The step direction \( d_k \) can be computed using the gradient \( \nabla f(x_k) \) and the Hessian \( \nabla^2 f(x_k) \) (second-order derivatives of the function \( f \) at the current point \( x_k \)).

For instance, in gradient descent, \( d_k \) is given by \( -\nabla f(x_k) \).
Using second-order information, \( d_k \) is given by \( -(\nabla^2 f(x_k))^{-1} \nabla f(x_k) \) in the Newton method, or \( -(\nabla^2 f(x_k) + \beta_k I_d)^{-1} \nabla f(x_k) \), with \( \beta_k>0 \), for a regularized Newton method.
Quasi-Newton methods use \( -B^{-1} \nabla f(x_k) \), where \( B \) approximates the Hessian.

The step size \( \alpha_k \) is either fixed or chosen adaptively to ensure that the function decreases enough at each iteration, using a line search algorithm, e.g., a backtracking line search that finds the step size that fulfills the Armijo or Wolfe Conditions \cite{NoceWrig06}.

In general, the sequence \( x_0,x_1,\ldots,x_k \) converges to a stationary point, i.e., a point where the gradient is zero.
Note that some precautions must be taken when computing the step direction and step size, see \cite{NoceWrig06}.

However, the point of convergence can be a local minimum, instead of a global minimum \( x^* = \min_x f(x) \).
When the function \( f \) is convex, a local minimum is guaranteed to be the global minimum, but otherwise, this does not hold in the general non-convex case.
Thus, the choice of the initial guess \( x_0 \) is very important, as it can lead to different local minima.

The convergence rate is linear for gradient descent and quadratic for the Newton method.
The computational complexity of each iteration, with respect to the size of the vector variable problem (i.e., \( n \)), is linear in gradient descent and cubic in the Newton method, which requires solving a linear system of dimension \( n \) in each iteration.

In robotics problems, Newton and Quasi-Newton methods converge much faster than gradient descent methods and are often preferred.
Gradient descent performs poorly when the cost function has different curvatures in different directions (a very large disparity between the largest and smallest eigenvalues of the Hessian matrix).

\textit{Constrained Optimization:}
The most popular algorithms to solve \eqref{eq:nlp-opt} are the Augmented Lagrangian algorithm, sequential quadratic programming, interior point methods, and penalty methods.
Similar to the unconstrained case, they are iterative methods that generate a sequence of points \( x_k \) that converges to a stationary point.

All these methods try to find a point that fulfills the first-order necessary conditions for optimality \eqref{eq:kkt}.
Similar to the unconstrained case,
these methods perform only local optimization.
In general, convergence to the optimum is guaranteed only if the NLP is a convex optimization problem with both a convex objective function and a convex feasible set.

In practice, in the nonlinear case, these methods might converge to a feasible local optimal point or to an infeasible point that does not fulfill the constraints, depending on the initial guess and the nonlinearity of the constraints.

Throughout this thesis, we solve NLPs using the Augmented Lagrangian algorithm.
The Augmented Lagrangian algorithm solves \eqref{eq:nlp-opt} by solving a sequence of unconstrained optimizations.
Starting from an initial guess \( (x_0, \lambda_0 , \mu_0) \), we update the primal and the dual variables in an iterative two-step process.
At iteration \( k \), primal variables are updated by solving the unconstrained optimization problem (starting from the initial guess \( x_{k-1} \))
\begin{equation}
	\min_x \mathcal{L}_A(x, \lambda_{k-1} , \mu_{k-1}, \rho ),
\end{equation}
where \( \mathcal{L}_A \) is the Augmented Lagrangian, defined as:
\begin{equation}
	\begin{split}
		\mathcal{L}_A(x, \lambda , \mu, \rho ) = f(x) + &\lambda^T h(x) + \mu^T g(x) + \\ &\frac{\rho}{2} \sum_{j=1}^{l} h_j(x)^2 + \frac{\rho}{2} \sum_{j=1}^m [g_j(x) \ge 0 \lor \mu_j > 0 ] g_j(x)^2 \,,
	\end{split}
\end{equation}
with the penalty parameter \( \rho > 0 \) and where the subscript $j$, e.g., $h_j(x)$, indicates the $j$-th component of a vector.
Using this formulation (slight variations of the term for the inequalities are also possible), an inequality constraint acts as an equality constraint if it is not fulfilled or its dual variable is strictly positive.

After the primal variables $x_k$ have been updated, the dual variables are updated with:
\begin{equation}
	\lambda_{k} \gets \lambda_{k-1} + \rho h(x_{k})\,, \quad
	\mu_{k} \gets \text{max}( 0, \mu_{k-1} + \rho g(x_{k})).
\end{equation}
Additionally, it is often convenient to increase the penalty parameter \( \rho \) if the constraints are not fulfilled to a desired amount.
Detailed analyses of convergence and practical implementations are provided, e.g., in \cite{NoceWrig06, andreani2008augmented, conn2013lancelot}.

\newpage

\paragraph{Examples of nonlinear programs in robotics}

Nonlinear programs are used in robotics for a wide range of applications, including motion planning, optimal control, and inverse kinematics.

The variables in an NLP represent the configuration of the robot.
For instance, the joint angles of a 7-DOF manipulator can be represented with a vector variable \(x \in \mathbb{R}^7\).
A finite-dimensional vector can also represent a trajectory.
For example, the trajectory of a manipulator \(q: [0,1] \to \mathbb{R}^7\) can be represented by a cubic polynomial \(q(t) = a + bt + ct^2 + dt^3\), resulting in the variable in the NLP \(x = [a, b, c, d] \in \mathbb{R}^{4 \cdot 7}\).

The nonlinear constraints in the NLP can model various constraints, including collision avoidance, kinematic, grasping, and contact constraints.
The cost function can represent objectives such as energy, time, and control effort.
We now present two examples of NLPs in robotics relevant to this thesis.

\emph{Single Keyframe Optimization:}
Suppose we have a 7-DOF manipulator, and our objective is to generate a configuration that picks up a box using a top grasp along the x-direction (assuming, for simplicity, that the box is aligned with the world axis).
This problem can be formulated as the following NLP, where the variable \(x \in \mathbb{R}^7\) represents the joint angles of the robot.
\begin{subequations}
	\label{eq:top-grasp}
	\begin{align}
		\min_{x \in \mathbb{R}^7} \quad & || x - x_\text{ref} ||^2 \,,                                       \\
		\text{s.t.} \quad               & p_z(x) - b_z  = b/2  \,,                                           \\
		                                & a/2 \leq p_x(x) - b_x  \leq a/2  \,,                               \\
		                                & R(x) = I_d  \,,                                                    \\
		                                & p_y(x) - b_y  = 0  \,,                                             \\
		                                & q_{\text{lb}} \leq x \leq q_{\text{ub}}  \,,                       \\
		                                & \text{sdf}(P_j(x), \text{env} ) \geq 0 ~     &  & j  = 1, \ldots,J \\
		                                & \text{sdf}(P_j(x), \text{block} ) \geq 0 ~   &  & j = 1, \ldots, J
	\end{align}
\end{subequations}
where \( p(x)  : \mathbb{R}^7 \to \mathbb{R}^3 \) is the position of the end-effector as a function of the joint values, and \( R(x) \) is the rotation matrix representing the orientation of the end-effector.
Here, we assume that a successful grasp requires the end-effector to have the same orientation as the box, thus \(R(x) = I_d\), but more complex grasp constraints are used later throughout the thesis.
Vectors \( q_{\text{lb}} \) and \( q_{\text{ub}} \) are the lower and upper bounds of the joint angles.
The position of the block is given by \( (b_x, b_y, b_z) \), with \( a \) and \( b \) denoting its length and height.
\( x_{\text{ref}} \) is a reference configuration for the robot.

\( P_j(x) \) represents the collision shape of part \(j\) of the robot, the position and orientation of which depend on the joint values \( x \).
The robot has \( J \) collision parts (often, one per link).
The signed-distance function, denoted with \textup{sdf}, is a function that returns the minimum distance between the collision shape and either the environment (env) or the block (block).
If the objects are in collision, \textup{sdf} returns a negative value corresponding to the penetration distance.
A solution to this NLP is depicted in \cref{fig:top-grasp}.

\begin{figure}
	\centering
	\includegraphics[width=.2\textwidth]{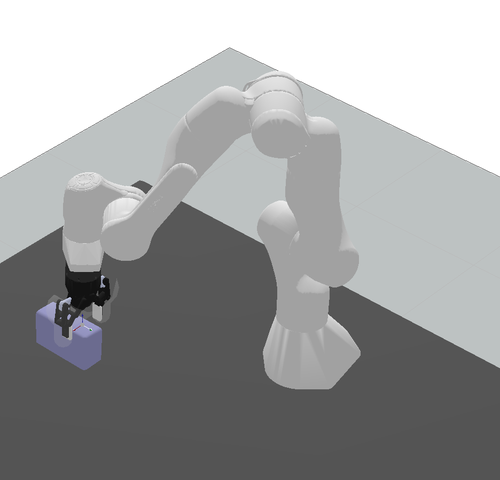}
	\caption{Solution to the NLP in \cref{eq:top-grasp}.}
	\label{fig:top-grasp}
\end{figure}

\emph{Trajectory Optimization:}
In the second example, our goal is to generate a trajectory for grasping the object starting from the configuration \( q_0 \).
We choose to parameterize the trajectory using \( N \) waypoints.
Our variable is now \( x = [q_1 , \ldots , q_N ] \in \mathbb{R}^{7 \cdot N} \), where \( q_i \in \mathbb{R}^7 \) is the joint values at waypoint \( i \).
The optimization problem is:
\begin{subequations}
	\label{eq:top-grasp-trajectory}
	\begin{align}
		\min_{ [q_1 , \ldots , q_N ] \in \mathbb{R}^{7\cdot N}} \quad &
		|| q_1 - q_{0} ||^2 + \sum_{i=2}^N || q_i - 2 q_{i-1} + q_{i-2} ||^2 \,,                                                                                 \\
		\text{s.t.
		} \quad                                                       & p_z(q_N) - b_z = b/2 \,,                                                                 \\
		                                                              & a/2 \leq p_x(q_N) - b_x \leq a/2 \,,                                                     \\
		                                                              & R(q_N) = Id \,,                                                                          \\
		                                                              & p_y(q_N) - b_y = 0 \,,                                                                   \\
		                                                              & q_{\text{lb}} \leq q_i \leq q_{\text{ub}} \,, & i = 1,\ldots, N                          \\
		                                                              & \text{sdf}(P_j(q_i), \text{env} ) \geq 0 \,,  & i = 1,\ldots, N &  & j = 1 , \ldots, J   \\
		                                                              & \text{sdf}(P_j(q_i), \text{block}) \geq 0 \,. & i = 1,\ldots, N &  & j = 1 , \ldots , J
	\end{align}
\end{subequations}
The chosen cost function minimizes the sum of squared accelerations, computed using second-order backward finite differences.
The constraints on the last configuration \( q_N \) are the same as in the previous NLP \eqref{eq:top-grasp}.
Joint limits and collisions are assessed at each waypoint.
In this example, we only evaluate collisions at each waypoint, but collisions could also be assessed at intermediate trajectory points, which can be determined through linear interpolation.
The solution is shown in \cref{fig:top-grasp-trajectory}.

\begin{figure}
	\centering
	\includegraphics[width=.19\textwidth]{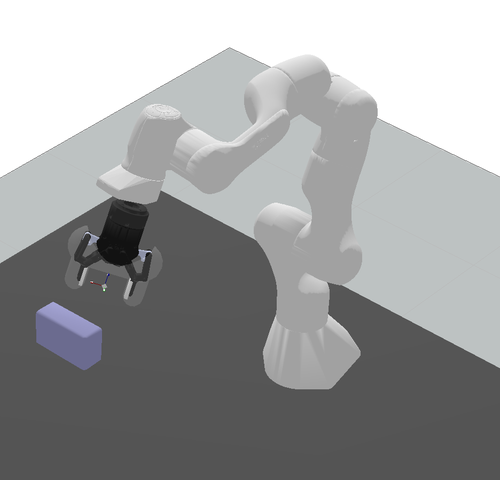}
	\includegraphics[width=.19\textwidth]{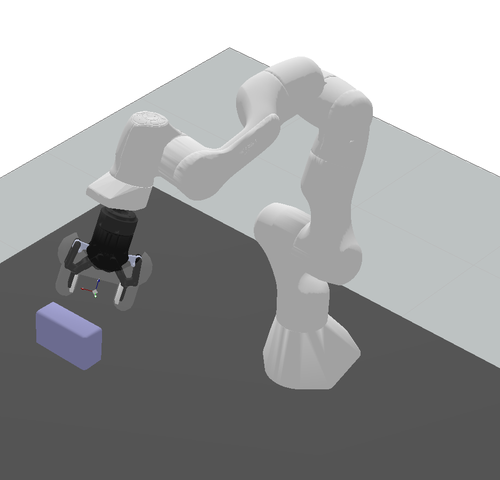}
	\includegraphics[width=.19\textwidth]{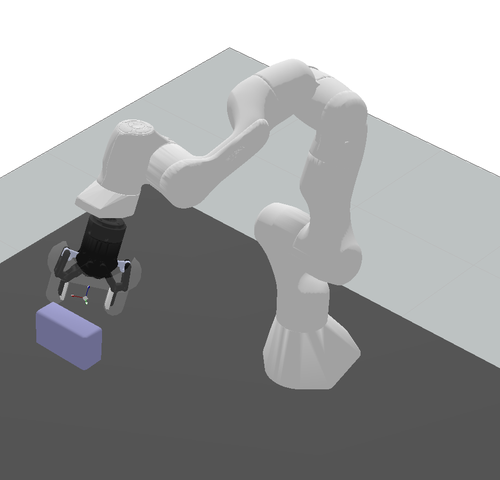}
	\includegraphics[width=.19\textwidth]{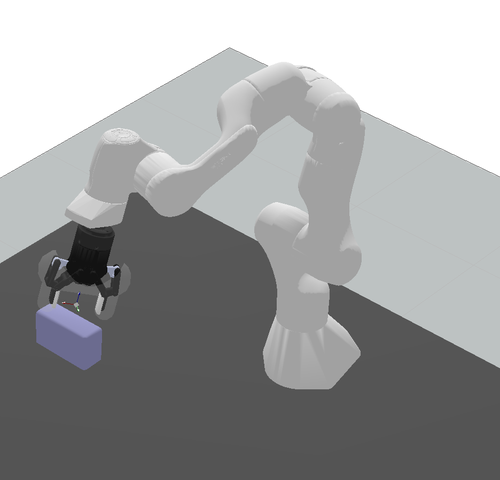}
	\includegraphics[width=.19\textwidth]{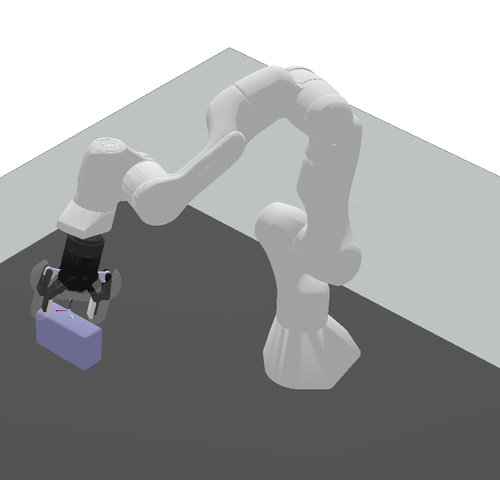}
	\includegraphics[width=.19\textwidth]{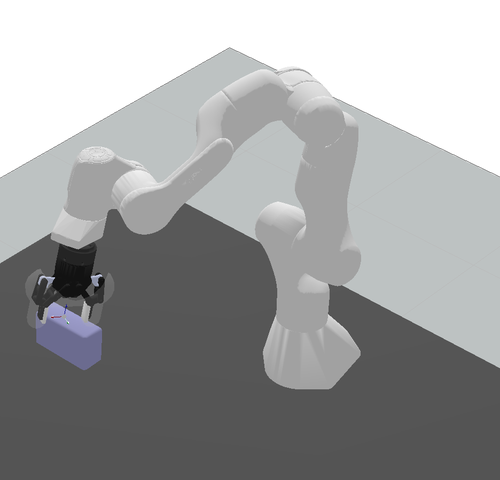}
	\includegraphics[width=.19\textwidth]{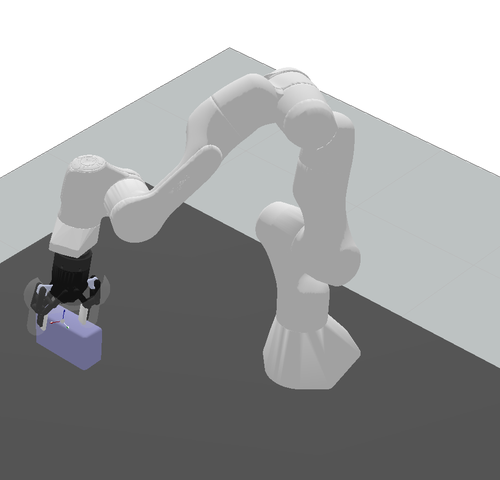}
	\includegraphics[width=.19\textwidth]{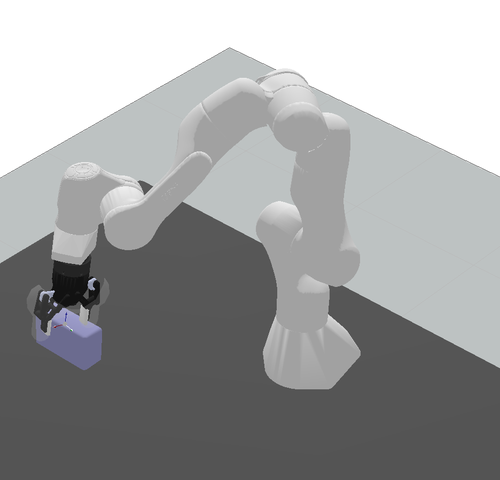}
	\includegraphics[width=.19\textwidth]{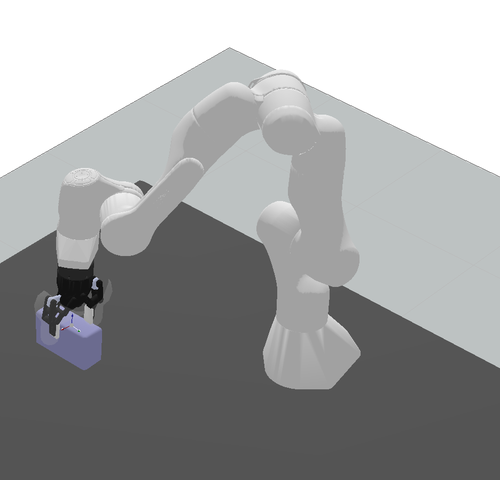}
	\includegraphics[width=.19\textwidth]{figs_old_computer/background/0021.png}
	\caption{Solution to the NLP in \cref{eq:top-grasp-trajectory} using \num{10} waypoints (\( N=10 \)).
		Each image shows a waypoint of the trajectory (from top left to bottom right).
	}
	\label{fig:top-grasp-trajectory}
\end{figure}

\newpage
\section{Classical Planning}
\label{bg:sec:planning}

Classical planning involves finding a sequence of actions to achieve a goal from an initial state.
It assumes that states and actions are discrete and finite, the state is fully observable, and actions have known deterministic effects.

A classical planning model \( \Pi = \left\langle \mathcal{S}, s_0, \mathcal{S}_G, \mathcal{A} \right\rangle \) comprises:
\begin{itemize}
	\item A finite and discrete set of states \( \mathcal{S} \), representing the state space.
	\item An initial state \( s_0 \in \mathcal{S} \).
	\item A set of goal states \( \mathcal{S}_G \subseteq \mathcal{S} \).
	\item A set of actions \( \mathcal{A} \).
	      The subset of actions applicable in state \( s \) is given by \( \mathcal{A}(s) \subseteq \mathcal{A} \), and executing action \( a \) in state \( s \) results in the successor state \( s' = \text{succ}(s,a) \).
\end{itemize}

A feasible solution to a planning problem is a sequence \( s_0, a_1, s_1, \ldots, a_K, s_K \) that transforms the initial state \( s_0 \) into a goal state \( s_K \in \mathcal{S}_G \), where \( a_k \in \mathcal{A}(s_{k-1}) \) and \( s_k = \text{succ}(s_{k-1}, a_k) \).
The optimal solution minimizes the number of actions (assuming each action has a uniform cost).

In large problems, explicitly enumerating the state space is not feasible.
In such cases, we use factored representations, in which states are factored into variables.
A state is now represented as a complete value assignment to a set of variables with finite and discrete domains.
The set of applicable actions \(\mathcal{A}(s)\) and the successor function \(\text{succ}(s,a)\) are now defined in terms of conditions and effects on these variables.

In this thesis, we use the SAS+ encoding of the classical planning problem, as referenced by \cite{backstrom-nebel-compint1995}.
This provides a more compact and intuitive representation for TAMP problems than the original STRIPS formulation \cite{fikes1971strips}.
A notable difference is that in SAS+, variables have a discrete, finite domain, whereas in STRIPS, they are boolean.

A factored classical planning task is a tuple \( \ptaskParSTD \) where:

\begin{itemize}
	\item \( \vars \) is a set of state variables.
	      Each state variable \( \var \in \vars \) has a finite domain \( \domain(\var) \).
	      A fact is a pair \(\langle \var, \val \rangle \) of a variable \( \var \in \vars \) and its value \( \val \in \domain(\var) \).
	\item An assignment to all the variables in \(\mathcal{V}\) is called a state \(s\).
	      The set of all such states is denoted as \( \mathcal{S} \).
	      A partial state \( p \) is a value assignment to only a subset of the variables in \( \vars \).
	      We view a partial state \( p \) as a set of facts (i.e., a set of variable-value pairs) and use \( p[\var] \) to denote the value of variable \( \var \) in \( p \) (i.e., \( p[\var] = \val \) if and only if \( \langle \var, \val \rangle \in p \)).
	      For any partial state \( p \), \( \variables{p} \subseteq \vars \) indicates the state variables instantiated by \( p \).
	      A partial state \( p \) is consistent with a state \( s \) if \( p \subseteq s \).
	\item \( \ops \) is a set of actions.
	      Each action \( \action \) is a pair of partial states called preconditions \( \pre(\action) \) and effects \( \eff(\action) \).
	      An action \( \action \) is valid in a state \( \state \) if \( \pre(\action) \subseteq \state \).
	      The set of all applicable actions in \( \state \) is given by \( \AA(s) \).
	      Applying \( \action \) on state \( s \) results in the next state \( \state' = \text{succ}(\state, \action) \), where the value of variables \( \var \in \variables{\eff(\action)} \) has changed to \( \eff(\action)[\var] \).
	\item An initial state \( s_0 \).
	\item A partial state \( g \) that defines the goal.
\end{itemize}

Similarly to the unstructured case, an action sequence
\(\pi = \langle a_1, \ldots, a_K \rangle\) is a valid plan if each action is applicable in the previous state (\(s_k = \text{succ}(s_{k-1}, a_k)\),
starting from \(\init\)),
and the final state satisfies the goal, that is \(\goal \subseteq s_K\).

The Planning Domain Definition Language (PDDL) \cite{mcdermott1998pddl} has become a standard for defining planning problems, as it is the language for the International Planning Competitions (IPC)
(e.g., \cite{long20033rd}, \cite{coles2012survey}).
In PDDL, a planning problem is defined as a pair comprising a planning domain and a problem instance.
The domain defines a class of problems by specifying the set of valid action schemas and predicates in this domain.
The problem instance defines the objects, on which predicates are evaluated, the initial state, and the goal state.
For solving a PDDL problem, classical planners first transform the action schemas and objects into a propositional representation like STRIPS or SAS+.

The classical planning community has developed domain-independent planners that leverage the factored representation of the planning problem.
A prominent approach is heuristic search, where the distance to the goal is estimated by solving a relaxed (simplified) version of the original problem.
The relaxed problems, often based on the so-called delete relaxation, can be solved efficiently in polynomial time on the number of variables (instead of exponential) \cite{bonet2001planning, helmert2006fast, hoffmann2001ff}.

Alternative approaches for solving classical planning problems include reductions to other formalisms such as Boolean Satisfiability (SAT) \cite{biere2009handbook} and Constraint Satisfaction Problems (CSP) \cite{rossi2006handbook}.

\paragraph{Blocksworld}
\label{sec:bg:pddl}

\begin{figure}[t]
	\centering
	\includegraphics[width=0.5\textwidth]{./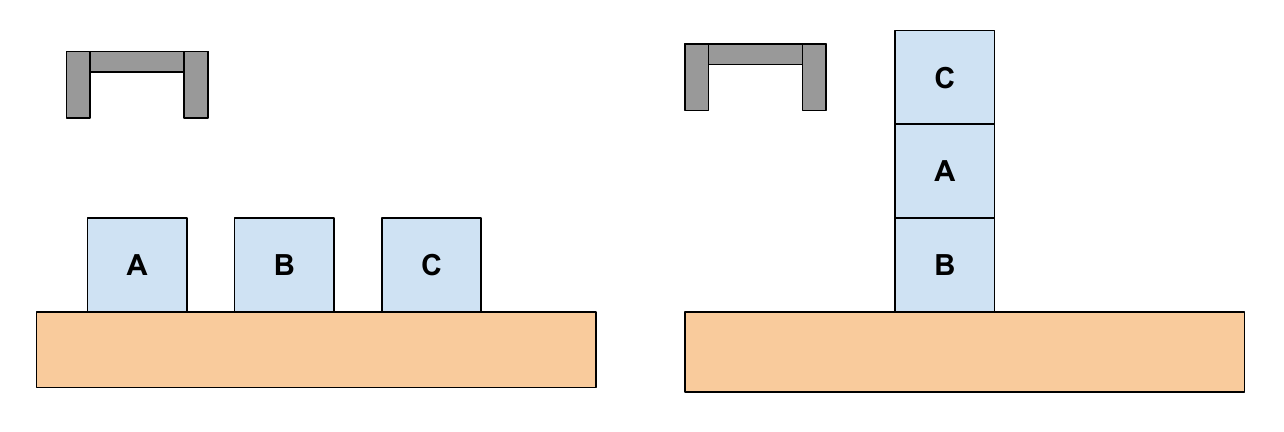}
	\caption{Start and goal states in the Blocksworld problem.
		PDDL files are shown in \cref{fig:blocks-world-all}.
		\vspace{.5cm}}
	\label{fig:blocks-world-goal}
\end{figure}

\begin{figure}[t]
	\centering
	\includegraphics[width=0.18\textwidth]{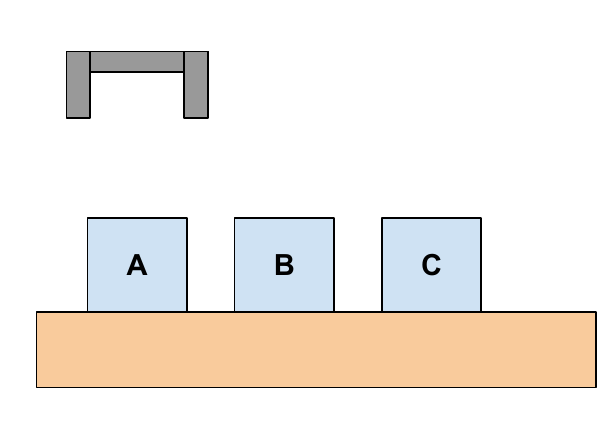}
	\includegraphics[width=0.18\textwidth]{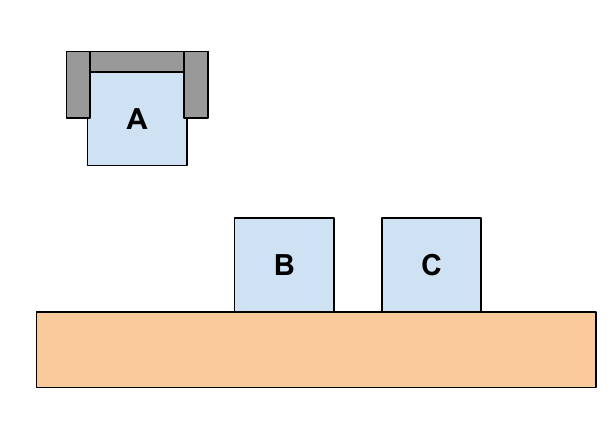}
	\includegraphics[width=0.18\textwidth]{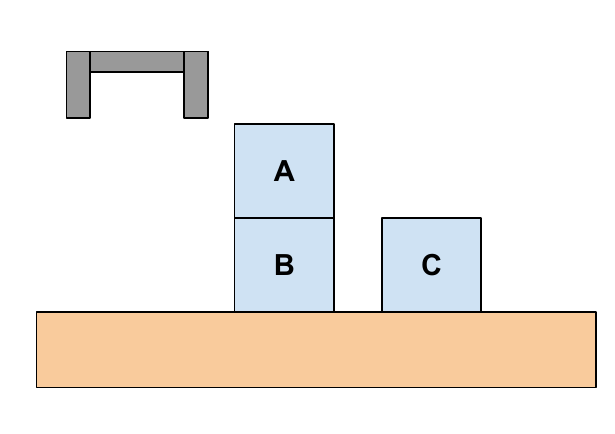}
	\includegraphics[width=0.18\textwidth]{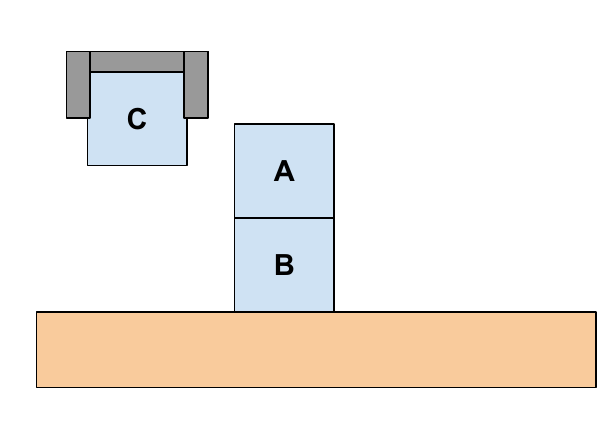}
	\includegraphics[width=0.18\textwidth]{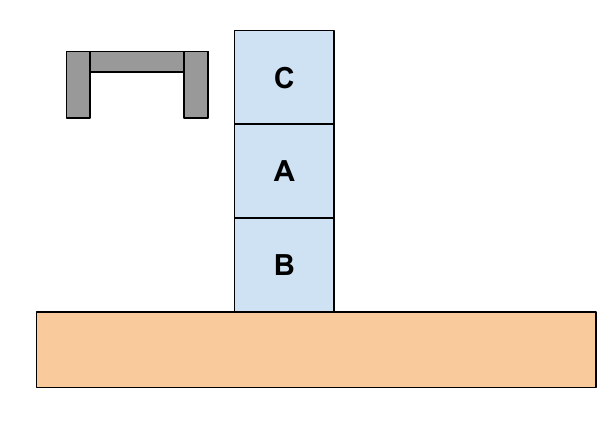}
	\caption{Solution to the Blocksworld problem in \cref{fig:blocks-world-goal}.}
	\label{fig:blocks-world-solution}
\end{figure}

Blocksworld is a classical planning problem that involves a set of blocks on a table, where the goal is to build a particular stack of blocks.

The Blocksworld PDDL domain file is shown in \cref{fig:blocks-world-domain}.
An example problem instance is shown in \cref{fig:blocks-world-problem} (file) and \cref{fig:blocks-world-goal} (graphical visualization).
The solution to this problem is shown in \cref{fig:blocks-world-solution}.

Blocksworld is a high-level planning problem that only deals with logical relationships between blocks.
On the other hand, Task and Motion Planning (TAMP) in robotics goes beyond abstract reasoning to include physical constraints.
TAMP considers both the high-level discrete actions (like picking up an object) and the low-level continuous motions required to perform those tasks (such as path planning, reachability, and collision avoidance).
This makes TAMP much more complex and connected to the real physical world, whereas Blocksworld remains a more abstract and simplified problem.

The success of classical planning in solving large-scale problems stems from the analysis of the factored structure inherent in most planning problems.
This factorization, exemplified by the Blocksworld domain, is also readily available in Task and Motion Planning (TAMP) problems and will be exploited throughout this thesis (see \cref{sec:bg:structure} for a first introduction).

\begin{figure}[!ht]
	\centering
	\begin{subfigure}[b]{.9\textwidth}
		\begin{lstlisting}
(define (domain blocksworld)
  (:predicates (on ?x ?y) (ontable ?x) (clear ?x)
               (handempty) (holding ?x))
  (:action pick-up
   :parameters (?x)
   :precondition (and (clear ?x) (ontable ?x) (handempty))
   :effect (and (not (ontable ?x)) (not (clear ?x))
                (not (handempty))  (holding ?x)))
  (:action put-down
   :parameters (?x)
   :precondition (holding ?x)
   :effect (and (not (holding ?x)) (clear ?x)
                (handempty) (ontable ?x)))
  (:action stack
   :parameters (?x ?y)
   :precondition (and (holding ?x) (clear ?y) (not (= ?x ?y)))
   :effect (and (not (holding ?x)) (not (clear ?y)) (clear ?x)
                (handempty) (on ?x ?y)))
  (:action unstack
   :parameters (?x ?y)
   :precondition (and (on ?x ?y) (clear ?x) (handempty) (not (= ?x ?y)))
   :effect (and (holding ?x) (clear ?y) (not (clear ?x))
                (not (handempty)) (not (on ?x ?y))))
)
\end{lstlisting}
		\caption{Domain.}
		\label{fig:blocks-world-domain}
	\end{subfigure}
	\vspace{1cm}

	\begin{subfigure}[b]{.9\textwidth}
		\begin{lstlisting}
(define (problem blocksworld-problem)
  (:domain blocksworld)
  (:objects a b c )
  (:init (handempty) (ontable a) (ontable b) (ontable c)
         (clear a) (clear b) (clear c))
  (:goal (and (clear c) (ontable b) (on c a) (on a b)))
)
\end{lstlisting}
		\caption{Example of a problem instance.}
		\label{fig:blocks-world-problem}
	\end{subfigure}
	\caption{Blocksworld in PDDL.}
	\label{fig:blocks-world-all}
\end{figure}

\clearpage

\section{Logic Geometric Programming}
\label{sec:bg:lgp}

\textit{Can we formulate TAMP as a continuous optimization problem?}
Logic Geometric Programming (LGP) is an optimization-based formulation of Task and Motion Planning (TAMP).
To motivate the LGP formulation, we first show that Task and Motion Planning can be written as a single continuous-time optimization program, which can later be discretized into a nonlinear program.
However, proceeding without introducing a discrete abstraction renders the problem unsolvable.

Let $\mathcal{X} = \mathbb{R}^n \times SE(3)^m$ be the configuration space of an $n$-dimensional robot and $m$ rigid objects, initially at pose $x_0 \in \mathcal{X}$.
The trajectory of the robot and the movable objects can be represented with a continuous function $x(t) : [0,T] \to \mathcal{X}$, with $T$ being the terminal time.
A Task and Motion Planning problem is then formulated as a continuous trajectory optimization with,
\begin{subequations}
	\label{eq:only-low-level}
	\begin{align}
		\min_{x(t), T} & \int_0^T f_{\text {path }}(\bar{x}(t)) d t \,,                                                      \\
		\text { s.t.
		}              & x(0)=x_0,                                                                                           \\ & h_{\text {goal }}(x(T))=0, ~ g_{\text {goal }}(x(T)) \leq 0,                       \\
		               & h_{\text {path}} ~ (\bar{x}(t))=0,~ g_{\text {path }}(\bar{x}(t)) \leq 0, \quad \forall t \in[0, T]
	\end{align}
\end{subequations}
where $\bar{x}(t) = (x(t), \dot{x}(t) , \ddot{x}(t))$ includes the position, velocity, and acceleration of the trajectory.
Path constraints $h_{\text{path}}$ and $g_{\text{path}}$ represent the physical constraints of the physical world.
For instance, objects can only move when grasped or pushed by the robot, and objects and robots should not collide with the environment.
Goal constraints $\{h,g\}_{\text{goal}}$ impose constraints on the last state and are used to represent the desired final state, such as stacking objects in a tower.
The cost function $f_{\text{path}}$ typically represents control effort, smoothness, or energy.

Converting the continuous-time optimization to an NLP requires a finite-dimensional representation of the trajectory.
For instance, this could be achieved using a finite sequence of waypoints, and checking the path constraints on a finite set of points (e.g., the waypoints themselves), similarly to the example in \cref{eq:top-grasp-trajectory}.

Unfortunately, this NLP formulation is unsolvable, even for short-term horizons and few objects in the scene.
The two main challenges are as follows:
\begin{enumerate}
	\item The non-convexity of the constraints used to model all possible physics interactions (e.g., using complementarity constraints \cite{posa2014direct}) leads to disconnected and non-convex feasible sets.
	\item The lack of meaningful gradients, which also stems from the generic nature of the constraints.
	      If the initial guess for the trajectory does not interact with movable objects, the derivatives of the path and goal constraints either lack relevant information
	      for long-term planning, or only greedily move the robot towards every movable object.
\end{enumerate}
Therefore, a local optimization method will get trapped in an infeasible local optimum and fail to find a feasible solution to \eqref{eq:only-low-level}.
In fact, we observe that general complex behavior that requires reasoning about interacting with multiple objects cannot be computed using only local optimization methods.

\begin{figure}[!t]
	\centering
	\begin{subfigure}[b]{.14\textwidth}
		\includegraphics[width=\textwidth]{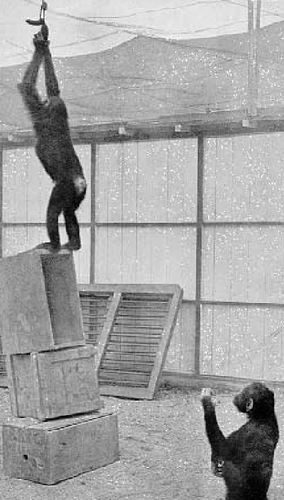}
		\caption{}
	\end{subfigure}
	\begin{subfigure}[b]{.26\textwidth}
		\includegraphics[width=\textwidth]{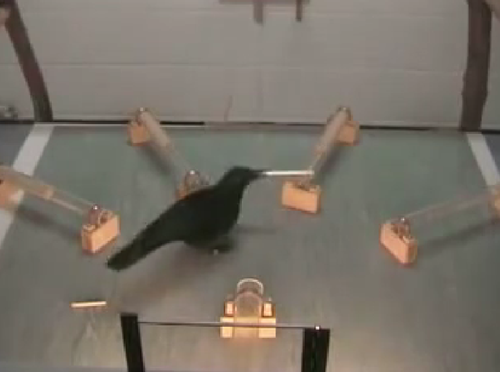}
		\caption{}
	\end{subfigure}
	\begin{subfigure}[b]{.26\textwidth}
		\includegraphics[width=\textwidth]{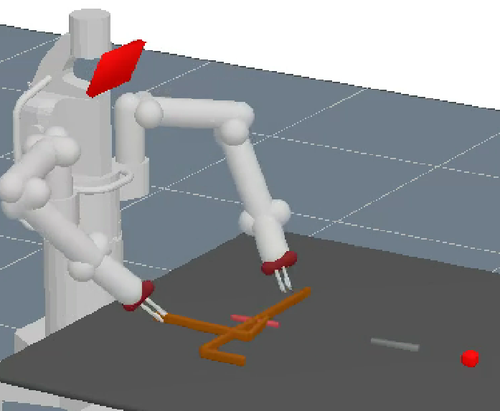}
		\caption{}
	\end{subfigure}
	\begin{subfigure}[b]{.26\textwidth}
		\includegraphics[width=\textwidth]{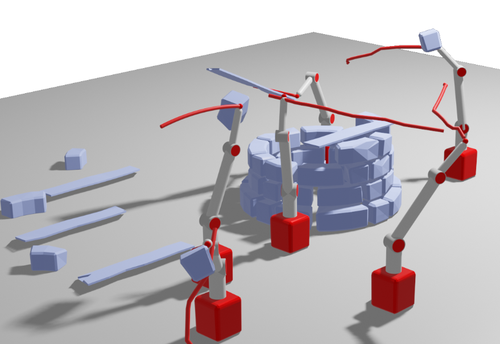}
		\caption{}
	\end{subfigure}
	\caption{Logic Geometric Programming. \textit{(a,b)}
		Apes and crows solving physical puzzles that require precise manipulation and long-term planning of physical interactions.
		\textit{(c)}
		In robotics, LGP can be used to model tool-use, and diverse physical interactions, such as grasping, pushing, and throwing.
		\textit{(d)}
		The LGP framework and the original Multi-Bound Tree Search algorithm have been extended to solve problems in the construction domain \cite{hartmann2022long}.
		Images are reproduced from \cite{toussaint2018differentiable, hartmann2022long}.
	}
	\label{fig:lgp:showcase}
\end{figure}
\emph{High-level and low-level abstractions in TAMP}:
Introducing a high-level abstraction that defines the sequence of interactions between the robot and the objects is essential to solving the TAMP problem.

In fact, the problem can be formulated with two levels of abstraction: low-level motion planning and high-level task planning.
The high-level task planning can be formulated as a classical planning task, introducing a set of discrete states and actions.
But now, in addition to planning in a discrete domain, choices on the discrete level impose nonlinear constraints on the continuous trajectory, with a different sequence of high-level actions implying a different set of nonlinear constraints on the trajectory.

Therefore, instead of a single universal physics constraint, we now operate with a set of more approachable nonlinear constraints based on each discrete state and action.
Though these constraints remain nonlinear, they tend to be smooth and informative, and typically guide the optimizer towards feasible solutions, as demonstrated by the diverse set of TAMP problems solved using the LGP formulation \cite{toussaint2017multi, toussaint2018differentiable} (\cref{fig:lgp:showcase}).

\paragraph{Logic Geometric Program}
Formally, consider a continuous configuration space $\mathcal{X}$ (e.g., $\mathbb{R}^n \times SE(3)^m$ for an $n$-dimensional robot and $m$ rigid objects), a finite set of discrete states $\mathcal{S}$, and a finite set of discrete actions $\mathcal{A}$.
Let $x_0 \in \mathcal{X}$ be the initial configuration, $s_0 \in \mathcal{S}$ be the initial discrete state, and $\mathcal{S}_g \subseteq \mathcal{S}$ be the set of discrete goal states.
A Logic Geometric Program is a combined optimization problem over the continuous trajectory $x(t):[0,KT]\to \mathcal{X}$ and the sequence of discrete states and actions $s_0, a_1, s_1, \ldots, a_K, s_K$,
\begin{subequations}\label{eq:lgp}
	\begin{align}
		\min _{x(t),a_{1:K},s_{1:K}} & \sum_k \int_{t_k}^{t_{k+1}} f_{\text {path }}(\bar{x}(t), s_k) \, dt\,,                                   \\
		\text {s.t.
		} ~                          & x(0)=x_0 \,,                                                                                              \\
		                             & h_{\text {path }}\left(\bar{x}(t), s_{k}\right)=0\,, \quad t\in[t_k, t_{k+1}],      & k = 0, \ldots , K-1 \\
		                             & g_{\text {path }}\left(\bar{x}(t), s_{k}\right) \leq 0\,,\quad t\in[t_k, t_{k+1} ],
		                             & k = 0, \ldots , K-1                                                                                       \\
		                             & h_{\text {switch }}\left(\bar{x}\left( t_k \right), s_{k-1}, s_k \right)=0\,,       & k = 1,\ldots, K     \\
		                             & g_{\text {switch }}\left(\bar{x}\left( t_k \right), s_{k-1}, s_k \right) \leq 0\,,  & k = 1,\ldots, K     \\
		                             & s_k = \operatorname{succ}\left(s_{k-1}, a_k\right)\,,                               & k = 1,\ldots,K      \\
		                             & s_k \in \mathcal{S}, \quad a_k \in \mathcal{A},                                     & k = 1,\ldots,K      \\
		                             & s_K \in \mathcal{S}_g \,.
	\end{align}
\end{subequations}
Here, $\bar{x}(t) = \left(x(t), \dot{x}(t), \ddot{x}(t)\right)$, $t_k = Tk$ is the start time of step $k$, and $s_{1:K}, a_{1:K}$ are short notations for $\langle s_1,\ldots, s_K \rangle$ and $\seqa$.
The functions $(h, g)_{\text{path}}$, $(h, g)_{\text{switch}}$, and $f_{\text{path}}$ are smooth and differentiable in the continuous configuration when conditioned on the discrete states.
The discrete function \text{succ}$(s, a)$ indicates the successor discrete state after applying action $a$ to state $s$.
The length of the sequences $K \in \mathbb{N}$ is also subject to optimization.
We remark that the discrete component of an LGP corresponds to a classical planning problem \cref{bg:sec:planning}.

The sequence $a_{1:K}$ is called a task plan, action-skeleton, or sequence of high-level actions in the TAMP literature.
Throughout this thesis, we use the term \emph{task plan}.
The continuous motion $x(t)$ is now divided into $K$ phases of duration $T \in \mathbb{R}$ (for simplicity, we assume $T$ is fixed, but it could also be optimized), with different nonlinear constraints on the configuration for each phase, based on $s_k$ (or the pair $s_{k-1},s_k$ for switch constraints).

The key difference from the unstructured problem \eqref{eq:only-low-level} is that the nonlinear functions $(f,g,h)$ are now conditioned on the discrete state $s_k$ and provide informative gradients for local optimization, as opposed to using a unique universal constraint based on complementarity or differentiable contact models.
This means that the constraints $h_{\text{switch}}$ and $g_{\text{switch}}$ will be different functions of $\bar{x}$ depending on which object is grasped, pushed, or placed, as indicated by the discrete states.

\paragraph{Continuous space and discrete space in LGP}

Before delving into solving the Logic Geometric Program \eqref{eq:lgp}, we will clarify the meanings of the continuous space, discrete space, and the nonlinear constraints in the context of TAMP.

\emph{Discrete Level}: The discrete states $s \in \mathcal{S}$ in LGP are used to encode the structure of the kinematic tree, which models which objects are in contact with each other.
For instance, an object can be attached either to the gripper or to the table.
This information is discrete because it only contains a parent's name or identifier, but it does not consider the continuous relative pose between them.
Apart from special discrete symbols to represent the initial position of the objects, LGP does not introduce additional discrete symbols to represent the continuous state of the world, such as concrete values of the positions of the objects, grasps, or robot configurations, as done in other sample-based TAMP formulations (e.g., \cite{garrett2020pddlstream}).
Equivalently, we can think of the discrete states as modeling the discrete contact states of the world, specifying which objects are in contact with others.
From the perspective of multimodal motion planning, discrete states in an LGP represent the different motion modes.

Discrete actions $a \in \mathcal{A}$ are the high-level actions of the task plan, such as pick, place, push, which change the kinematic structure or contact status.
Discrete actions do not include any continuous parametrization.
For instance, a discrete action \textit{pick object A with Q from table} does not model the continuous grasp, e.g., the relative transformation between the gripper and the object.

\emph{Continuous Level}: The continuous configuration $\mathcal{X} = \mathbb{R}^n \times SE(3)^m$ represents the configuration of the robot (joint values) and the objects (position and orientation).
Collision avoidance, reachability, grasping, pushing, and placement constraints are modeled using the nonlinear constraints inside $(h,g)_{(\text{path}, \text{switch})}$, which vary in each motion step depending on the discrete decisions.

For instance, concerning grasping constraints, we use geometric constraints, such as aligning the end-effector with a particular axis of a box or ensuring that a point near the end-effector's palm touches the object's surface while maintaining the correct orientation.
In practice, for boxes, balls, and sticks, this typically implies the existence of a stable grasp, which is then represented as a constant relative transformation until placement.
For pushing constraints, we introduce additional decision variables for the forces and point-of-attack between the two interacting objects.
Motion and forces are then constrained by physics equations as outlined in \cite{20-toussaint-RAL}.

\newpage
\paragraph{How to solve a Logic Geometric Program?}
If we fix the task plan $\seqa$ in a LGP, the resulting subproblem, denoted as $\textup{Trajectory-NLP}(\langle a_1,\ldots,a_K \rangle)$, is a continuous optimization problem:
\begin{subequations}\label{eq:lgp-nlp}
	\begin{align}
		\min_{x(t)} & \sum_k \int_{t_k}^{t_{k+1}} f_{\text{path}}(\bar{x}(t), s_k) dt \,,                                        \\
		            & \text{s.t.
		} \nonumber                                                                                                              \\
		            & x(0)=x_0,                                                                                                  \\
                & h_{\text{path}}\left(\bar{x}(t), s_{k}\right)=0, \quad t \in[t_k, t_{k+1}], & & k = 0, \ldots , K-1,     \\
                & g_{\text{path}}\left(\bar{x}(t),  s_k\right) \leq 0, \quad  t \in[t_k, t_{k+1}], & & k = 0, \ldots , K-1, \\
                & h_{\text{switch}}\left(\bar{x}\left(t_k\right), s_{k-1},s_k \right)=0, & & k = 1,\ldots,K,               \\
                & g_{\text{switch}}\left(\bar{x}\left(t_k\right), s_{k-1}, s_k \right) \leq 0, & & k = 1,\ldots,K.
	\end{align}
\end{subequations}
where $\seqs$ is uniquely defined by $\seqa$ and the fixed $s_0$ (using the discrete successor function).
This optimization can be converted directly into an NLP by using a finite-dimensional representation of the trajectory, and thus, we refer to $\eqref{eq:lgp-nlp}$ directly as an NLP.
Throughout this thesis, we represent the trajectory with a finite set of waypoints, similar to the example NLP in \cref{eq:top-grasp-trajectory} of \cref{sec:bg:nlp}.

A task plan $\seqa$ is said to be geometrically infeasible when \eqref{eq:lgp-nlp}
is infeasible, i.e., it has no feasible solution.
This frequently occurs in TAMP, where many candidate high-level plans define constraints for the motion that can never be satisfied.
For instance, the task plan $\langle\textit{pick object A with robot Q from A\_init}\rangle$ fails if the object is too far away or if there is an obstacle blocking the grasp.
A task plan $\langle$\textit{pick object A with robot Q from A\_init}, \textit{place object A with robot Q on red table}$\rangle$ can fail if the table is out of reach or other objects are already on the table.
These two examples of motion failures are shown in \cref{fig:examples-fail}.

\begin{figure}[t]
	\centering
	\begin{subfigure}{.9\textwidth}
		\centering
		\includegraphics[width=.8\textwidth]{./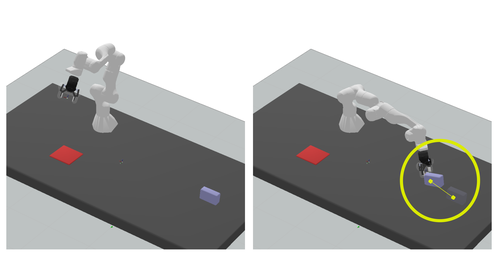}
		\caption{Trajectory-NLP for the task plan $\langle\textit{pick object A with robot Q from A\_init}\rangle$ is infeasible.}
	\end{subfigure}\\
	\begin{subfigure}{.9\textwidth}
		\centering
		\includegraphics[width=.9\textwidth]{./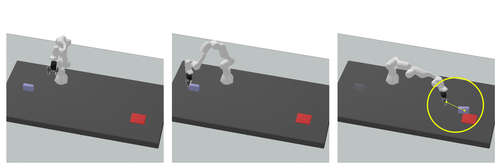}
		\caption{Trajectory-NLP for the task plan $\langle$\textit{pick object A with robot Q from A\_init}, \textit{place object A with robot Q on red table}$\rangle$ is infeasible.}
	\end{subfigure}
	\caption{Two examples of infeasible Trajectory-NLPs for two task plans in two example environments.
		The yellow circle highlights why the motion is infeasible.
		In \emph{(a)}, the block suddenly ``jumps'' from the start configuration to the gripper.
		In \emph{(b)}, the block ``jumps'' from the gripper to the red table.
	}
	\label{fig:examples-fail}
\end{figure}

We can define relaxed versions of $\textup{Trajectory-NLP}(\seqa)$ that can be used to quickly test the feasibility while being computationally simpler.
Relaxed problems remove constraints from the original problem and act as a lower bound.
Thus, if the relaxed problem is infeasible, the original problem is also infeasible.

The \textup{Keyframes-NLP} (\textit{keyframes or sequence bound}) optimizes only a single configuration per phase instead of a continuous trajectory.

The optimization variables are $\{x_k \equiv x(t_k) \mid k=1,\ldots,K \}$, which are optimized jointly, accounting for their interdependencies but without considering the continuous path between them.
That is, we only evaluate the constraints at the beginning and end of each phase.
This sequence of discrete configurations is called \textit{keyframes} (the term we use in this thesis) or mode-switches and is very informative in manipulation planning, as keyframes are usually the more constrained configurations and capture with high accuracy whether a high-level task plan is geometrically feasible.
The $\textup{Keyframes-NLP}(\seqa)$ is:
\begin{subequations}\label{eq:keyframesbound}
	\begin{align}
		\min_{ x_1,\ldots,x_K } & \sum_{k} \tilde{f}_0( x_k ) + \tilde{f}_1(x_{k-1}, x_k)\,,                          \\
		\text{s.t.
		}                       & \tilde{h}_{\text{path}} ( x_k , s_k )  = 0,                   & k = 1, \ldots, K    \\
		                        & \tilde{g}_{\text{path}} ( x_k , s_k ) \leq 0,                 & k = 1, \ldots, K    \\
		                        & \tilde{h}_{\text{switch}} (x_k, x_{k+1},s_k, s_{k+1}) = 0,    & k =  0, \ldots, K-1 \\
		                        & \tilde{g}_{\text{switch}} (x_k, x_{k+1},s_k, s_{k+1}) \leq 0, & k =  0, \ldots, K-1
	\end{align}
\end{subequations}
where $(\tilde{h}, \tilde{g})_{(\text{path},\text{switch})}$ model the path and switch constraints but are evaluated only on the keyframe configurations instead of the full trajectory.

Importantly, in the Keyframes-NLP, we optimize the full manipulation sequence jointly.
This means that we can discover, for instance, grasp locations that are good for both the pick and place keyframe, picking in places that allow for a handover, or placing an object close to the other robot for later manipulation.

In practice, before solving \eqref{eq:lgp-nlp}, it is recommended to always solve first the Keyframes-NLP \eqref{eq:keyframesbound}, and use its solution to warm-start the trajectory in \eqref{eq:lgp-nlp}.

A looser relaxation is to optimize a single keyframe for each phase independently, without considering the interdependencies between them.
The
$\textup{Pose-NLP}(\seqa)$ (\textit{pose bound}) is a set of $k=1,\ldots,K$ independent optimization problems, one for each keyframe $x_k$,
\begin{subequations} \label{eq:posebound}
	\begin{align}
		k=1,\ldots,K \quad & \min_{x_k} \tilde{f}_0( x_k , s_k)  \,,        \\
		                   & \text{s.t.} ~  x(0) = x_0\,,                   \\
		                   & \tilde{h}_{\text{path}} ( x_k, s_k ) = 0\,,    \\
		                   & \tilde{g}_{\text{path}} ( x_k, s_k ) \leq 0\,.
	\end{align}
\end{subequations}
This is the most computationally efficient relaxation to test the feasibility of a sequence of actions but is also the least informative, as it does not consider the interdependencies between the different motion steps.

\subsection{Multi-Bound Tree Search}
\label{sec:multibound}

Multi-Bound Tree Search (MBTS) \cite{toussaint2017multi} is a state-of-the-art approach to solve an LGP \eqref{eq:lgp}.
The discrete state and action space of the LGP formulation is explored with a search tree starting from $s_0$, where each branch represents a different sequence of discrete actions.
The leaf nodes $s_g \in S_g$ are solutions to the discrete planning problem and are potential candidates for a solution to the LGP problem.
Each node can be tested for feasibility by solving the continuous optimization
problem induced by the sequence of actions from the root to the current node.
Therefore, to find a solution to the LGP problem, we have to identify a leaf node $s_g \in S_g$
and find a feasible solution to its corresponding $\textup{Trajectory-NLP}(a_{1:K})$ (which might be infeasible for most candidate plans).

However, trying to solve the trajectory optimization is expensive, and the number of candidate NLPs is generally too high.
To alleviate this issue, MBTS solves relaxed versions of \eqref{eq:lgp} incrementally.
The feasibility of each relaxed problem is a necessary condition for the feasibility of the Trajectory-NLP, i.e., these act as lower bounds while being computationally faster.
Specifically, the two bounds are the \textit{keyframes bound} (Keyframes-NLP \eqref{eq:keyframesbound}) and the \textit{pose bound} (Pose-NLP \eqref{eq:posebound}), which consider only a subset of variables and constraints of the full trajectory optimization problem.

The search is organized around four queues using a simple round-robin policy to process the next element of each queue.
There is a queue for intermediate discrete states (explored in a breadth-first search order), two queues for computing the relaxations using the Keyframes-NLP and the Pose-NLP, and a queue for solving the Trajectory-NLP.

If a node in the tree reaches the goal, it is promoted to the relaxation queues.
If a node fulfills the two relaxations, it is moved to the trajectory optimization queue.
Additionally, we can also use the relaxations to prune the search tree.
If the pose, keyframes, or trajectory NLP of a candidate or intermediate node fails, we can identify the prefix that is infeasible and utilize it to prune sub-branches in the search tree.

\section{Related Work in Task and Motion Planning}
\label{sec:bg:related-work}

Solvers for Task and Motion Planning can be categorized based on two complementary criteria \cite{garrett2021integrated}.
First, based on how motion planning and task planning are combined, we distinguish between \emph{1) continuous-first} (where partial motions are first computed, then combined into a full TAMP solution), \emph{2) interleaved} (with simultaneous search at both the continuous and discrete levels), and \emph{3) discrete-first} (where candidate high-level task plans are computed first).

The second criterion focuses on the methods used to compute the motions.
These are primarily \emph{1) predefined discretization}, \emph{2) sampling}, or \emph{3) optimization methods}.
In practice, the two criteria are closely interconnected; solvers using \textit{optimization} typically employ a \textit{discrete-first} search, while most \textit{sampling} methods use \emph{continuous-first} or \emph{interleaved} search.

Prominent examples of sample-based methods are PDDLStream \cite{garrett2020pddlstream} and TAMP in the Now \cite{kaelbling2011hierarchical}.
TAMP in the Now adopts a hierarchical and interleaved search approach between the discrete and continuous levels, while
PDDLStream integrates constrained samplers for generating the continuous motion (e.g., grasps, collision-free paths, and inverse kinematic solutions) within PDDL-like planning.
On the other hand, the study in \cite{ferrer2017combined} showcases a pre-discretized continuous-first approach, where a set of valid configurations is integrated into task planning through precompilation.

Some sampling-based TAMP solvers reason explicitly about geometric conflicts.
For example, a set of predefined predicates such as ``is reachable'' is used in \cite{srivastava2014combined} to combine a black-box task planner with a motion planner.
The constraint-based approach in \cite{dantam2016incremental} incorporates information about geometric infeasibility by blocking the full task plan or, in special cases, a pair of a discrete (partial) state and an action.
Geometric infeasibility can also be evaluated efficiently using linear constraint propagation \cite{lagriffoul2014efficiently}.

In this work, we focus on optimization-based formulations of TAMP, where Logic Geometric Programming stands as a leading general formulation \cite{toussaint2015logic,toussaint2018differentiable}.
Optimization-based solvers leverage nonlinear optimization to compute motions that satisfy all geometric and physical constraints, while taking into account the interdependencies in the motion.
A state-of-the-art general solver for LGP is the Multi-bound Tree Search \cite{toussaint2017multi}, which combines discrete search with relaxed optimization problems to efficiently evaluate geometric feasibility.
Other optimization-based methods, e.g., \cite{migimatsu2020object,zhao2021sydebo,zimmermann2020multi,hadfield2016sequential}, address more specific TAMP settings (e.g., rearrangement) or a TAMP subproblem (e.g., only motion planning).

Task and Motion Planning can also be formulated as multimodal motion planning.
Indeed, the difference between TAMP and multimodal motion planning is mainly a naming convention used by different authors.
In practice, the concepts of \textit{motion-modes} and \textit{mode-transitions} in multimodal motion planning correspond to the high-level abstraction in TAMP problems.
The naming convention traditionally highlights a slight difference in target applications, with TAMP focusing more on planning long manipulation sequences with multiple objects, while multimodal planning emphasizes more on problems with shorter sequences with more challenging motion planning.

Multi-Modal-PRM, proposed in the seminal work \cite{hauser2010multi}, builds a probabilistic roadmap (PRM, \cite{kavraki1996probabilistic}) in different motion modes and connects these modes by sampling configurations belonging to two modes simultaneously, known as mode-switch configurations.
The original multimodal framework can be extended to problems with an infinite number of modes \cite{Hauser2011}, asymptotic optimal planning \cite{Vega2020}, and heuristics to bias the search towards useful mode transitions \cite{Kingston2020a}.

Problems similar to TAMP or multimodal motion planning also appear under a third distinct name in robotics literature: manipulation planning, which typically assumes contact modes using only a stable grasp (i.e., prehensile manipulation).

Similar to multimodal motion planning, most methods extend the tools of sample-based motion planning to manipulation problems.
For instance, manipulation planning can be formulated as a search over a sequence of transit paths (where the robot moves freely) and transfer paths (where the robot moves while holding an object), using probabilistic roadmaps \cite{Simeon2004}.
More recently, an asymptotically optimal manipulation planner has been proposed in \cite{schmitt2017optimal}, and the Manipulation-RRT \cite{lamiraux2021prehensile} extends the classical RRT algorithm (\cite{lavalle2001rapidly}) to plan across different contact and manipulation modes.
Some algorithms focus only on specific settings within manipulation planning, such as rearrangement planning \cite{Ota2004, krontiris2016efficiently, huang2019large}, or navigation among movable obstacles \cite{stilman2007manipulation}.

The multimodal nature of Task and Motion Planning (TAMP) arises from creating and breaking contacts with the environment.
Such problems also appear in legged locomotion, where mixed-integer formulations can be used to optimize foot placement, gait, and joint movement \cite{deits2014footstep}.
To avoid the explicit combinatorial search, an alternative approach is to use local optimization methods, which, in turn, raise concerns about local optima and feasibility.
Differentiable contact models \cite{todorov2011convex}, contact invariant optimization \cite{mordatch2012discovery}, nonlinear programming \cite{posa2014direct}, and convex relaxations \cite{song2021solving} have been used to optimize trajectories and contacts simultaneously for locomotion.

From a different perspective, and within a different research community, Task and Motion Planning can be formulated as a classical planning problem with additional numerical variables.
Classical AI planners have been extended to support planning with numerical constraints on action preconditions (e.g., Metric-FF \cite{koehler1998planning}), and recent versions of the Planning Domain Definition Language (PDDL) include temporal planning with numerical variables \cite{fox2006modelling, piotrowski2016heuristic, scala2016interval}.

For instance, the COLIN planner \cite{DBLP:journals/jair/ColesCFL12} includes continuous linear changes of numerical variables (e.g., fixed velocities) and encodes the temporal and state evolution constraints implied by a sequence of actions as a linear program.
The Scotty planner \cite{fernandez2018scottyactivity} adds support for general convex constraints, combining discrete search with convex optimization, and the planner in
\cite{haslum2018extending} extends classical planning with general state constraints.
However, these general planning formulations have not been used to tackle general TAMP problems, where the dimensionality and complexity of the continuous space pose significant challenges and often require tools from motion planning and nonlinear trajectory optimization.

\chapter{Factored Structure of Task and Motion Planning}
\label{sec:bg:structure}

In this chapter, we analyze the factored structure that appears in Task and Motion Planning (TAMP).
Specifically, we study the factorization of the nonlinear trajectory optimization problem for a fixed task plan, denoted as \(\textup{Trajectory-NLP}(\seqa)\) in \cref{sec:bg:lgp} (\cref{eq:lgp-nlp}).

A similar analysis of the factorization of the TAMP problem is foundational in modern sample-based TAMP solvers, where it is used to define effective conditional sampling operations \cite{garrett2018sampling, garrett2021integrated} and to propagate information about feasibility \cite{lagriffoul2014efficiently}.
In contrast, we analyze the factored structure from the perspective of optimization-based solvers for TAMP, providing complementary insights and ideas.

One of the contributions of this thesis is the formalization of TAMP problems using our novel planning formulation, \q{Planning with Nonlinear Transition Constraints}, which offers a factored view on LGPs.
While the formal definition, details, and comprehensive analysis will be presented later in \cref{ch:bid}, this chapter provides an approachable, intuitive view of the inherent structure that appears naturally in trajectory optimization for TAMP problems.

The chapter is organized as follows: First, we introduce the factored nonlinear program formulation.
Second, we discuss the factored optimization problem for the Pick and Place task plan, which will later be used as a building block in more sophisticated manipulation tasks.
Third, we examine more complex examples that showcase the main benefits of our factored representation, namely, the temporal structure, sparse factorization, and composition.

\newpage
\section{Factored-NLP -- Definition and Properties}
\label{sec:bg:factored-nlp}

A factored nonlinear program (Factored-NLP) is a nonlinear program \eqref{eq:nlp-opt} where the vector variable is factored into a set of smaller vector variables, and
the cost function, the equality, and inequality constraints are also decomposed into a set of smaller cost terms and constraint functions, each depending on a small subset of the variables.

Given a set of $N$ vector variables \( X = \{x_1,\ldots, x_N \} \) with \( x_i \in \mathbb{R}^{n_i} \), a set of nonlinear cost functions \( F = \{ f_1, \ldots, f_B \} \) with \( f_b: \mathbb{R}^{m_b} \to \mathbb{R} \), and a set of vector-valued constraint functions \( \Phi = \{\phi_1,\ldots,\phi_A\} \) with \( \phi_a: \mathbb{R}^{m_a} \to \mathbb{R}^{m_a'} \) that include both equality and inequality constraints, a Factored-NLP is the optimization problem,
\begin{subequations}
  \label{eq:factored-nlp}
  \begin{align}
    \min_{x_1, \ldots, x_N} & \sum_{f_b \in F} f_b{(X_b)} \,, &  &                          \\
    \text{s.t.
    } ~
               & x_i \in \mathbb{R}^{n_i} \,,    &  & i  = 1, \ldots, N        \\
    ~          & \phi_a(X_a) ~ \{\leq,=\} ~ 0\,, &  & \forall \phi_a \in \Phi
  \end{align}
\end{subequations}
where each cost function \( f_b \) and constraint \( \phi_a \) may depend on different subsets of variables \( X_b \subseteq X, \, X_a \subseteq X \) (e.g., \( X_a = \{x_1, x_3\} \) for some \( a \)).
The notation \( \{\leq,=\} \) indicates that a constraint can be either an equality or an inequality constraint, which are handled differently by nonlinear solvers.

In this thesis, we often use Factored-NLPs to reason about the infeasibility of optimization problems and to generate one or more feasible solutions that fulfill the constraints.
Thus, we consider Factored-NLPs without a cost term, resulting in the feasibility problem,
\begin{subequations}
  \label{fac:feas}
  \begin{align}
    \text{find} & ~ x_i \in \mathbb{R}^{n_i},   &  & i=1,\ldots,N             \\
    \text{s.t.
    } ~         & \phi_a(X_a) ~ \{\leq,=\} ~ 0, &  & \forall \phi_a \in \Phi.
  \end{align}
\end{subequations}

A Factored-NLP is a structured representation similar to factor graphs
\cite{frey1997factor,dellaert2017factor}, constraint systems \cite{rossi2006handbook}, and graphical models \cite{koller2009probabilistic}.
It can be represented with a bipartite graph with two types of vertices, variables and constraints, where edges represent the dependency relations between them.

We can represent a Factored-NLP with variables \( X_G = \{ x_1, \ldots ,x_N\} \) and constraints \(\Phi_G= \) \(\{ \phi_1 ,\ldots, \phi_A\} \) as a graph \( G =(V_G,E_G) \) with vertices \( V_G \) and edges \( E_G \),
\begin{subequations}
  \label{eq:factpred-nlp-graph}
  \begin{align}
     & V_G=X_G \cup \Phi_G \,,                                                                                                        \\
     & E_G = \{ ( x_i, \phi_a ) \mid ~ \text{constraint} ~ \phi_a \in \Phi_G ~ \text{depends on} ~ \text{variable} ~ x_i \in X_G \} .
  \end{align}
\end{subequations}

Subproblems of Factored-NLPs are defined as \emph{subgraphs} of the original graph \( G \).
Namely, a subset of variables and constraints is a subgraph \( M \subseteq G \) of the original Factored-NLP.
Likewise, a superset of variables and constraints is a supergraph \( \tilde{G} \supseteq G \).

A Factored-NLP \( G \) is feasible, \( \mathcal{F}(G) = 1 \), if the optimization problem \eqref{eq:factored-nlp} has a solution (i.e., if there exists a value assignment for all variables that fulfills all constraints),
\begin{equation}
  \label{eq:fac:feas}
  \mathcal{F}(G) = 1 \quad \textup{iff} \quad \exists ~ x_i ~\in \mathbb{R}^{n_i} ~ i=1,\ldots,N, ~ \textup{such that} ~ \phi_a(X_a) ~ \{\leq,=\} ~ 0, \quad \forall \phi_a \in \Phi \,.
\end{equation}
Otherwise, it is infeasible, with \( \mathcal{F}(G) = 0 \).
Note that if a Factored-NLP \( G \) is feasible, then all its subgraphs \( M \subseteq G \) are feasible.
Conversely, if a Factored-NLP is infeasible, then all its supergraphs \( \tilde{G} \supseteq G \) are infeasible.

Similar to unstructured NLPs (\cref{sec:bg:nlp}), one can solve a Factored-NLP using
joint nonlinear optimization of all variables and constraints.
If desired, the structured representation can be used for faster matrix factorization and multiplication using sparse matrices.
In fact, leveraging the temporal structure of motion planning, we can reduce the computational complexity of second-order methods (Newton/Gauss-Newton) from cubic to linear in the temporal dimension.

Additionally, since we now have a set of variables and constraints, we can solve for any subset of them, ignoring the other variables and constraints. In particular, we can attempt to solve the complete problem by computing subsets of variables sequentially in any order, conditioned on the variables that have already been determined, as later discussed in \cref{ch:mcts}.

\newpage

\begin{figure}[!t]
  \centering
  \begin{subfigure}[t]{.4\textwidth}
    \centering
    \includegraphics[width=.6\textwidth]{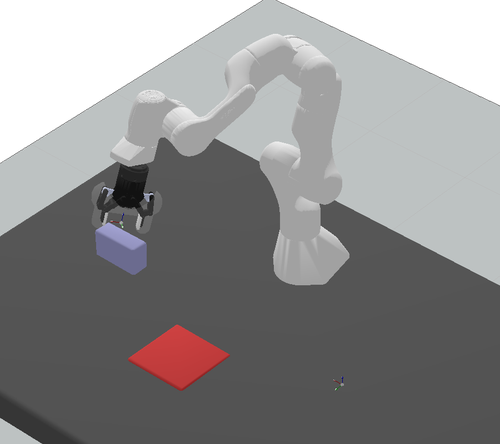}
    \caption{Environment with a robot $Q$, an object $B$, and a red table. }
  \end{subfigure}
  \begin{subfigure}[t]{.55\textwidth}
    \centering
    \raisebox{2cm}{
      \centered{
        \texttt{\small pick object B with robot Q from B\_init}, \\
        \texttt{\small place object B with robot Q on red table}.
      }
    }
    \caption{Task plan.}
  \end{subfigure} \\

  \vspace{.5cm}

  \begin{subfigure}[t]{.9\textwidth}
    \centering
    \includegraphics[width=.8\textwidth]{./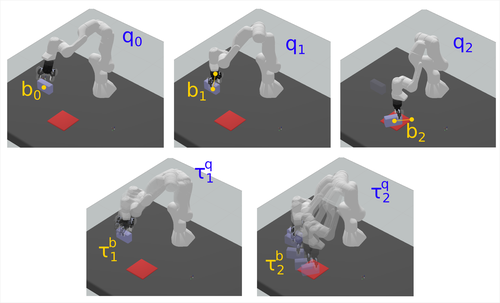}

    \caption{Trajectory optimization problem.
      The top row illustrates the keyframes, and the bottom row shows the trajectory between keyframes.
      The initial configuration is \( q_0 \) for the robot and \( b_0 \) for the object.
      Variables \( q_1 \) and \( b_1 \) are the robot and object configurations in the pick keyframe, and \( q_2 \) and \( b_2 \) are the configurations in the place keyframe.
      Variables \( \tau_{\{1,2\}}^q \) and \( \tau_{\{1,2\}}^b \) indicate the trajectories of the robot and the object between keyframes.
      The images show a valid solution.
    }
  \end{subfigure}
  \caption{Pick and Place -- Environment, task plan, and trajectory optimization problem.}
  \label{fig:build_block}
\end{figure}

\section{Pick and Place -- The Basic Building Block}
\label{sec:bg:pick-and-place}

We analyze the trajectory optimization problem for the Pick and Place task plan, which is the basic building block in manipulation planning.
The problem is shown in \cref{fig:build_block}.
(a) Shows the environment with the robot and the object, (b) shows the chosen Pick and Place task plan, and (c) represents the trajectory optimization problem that we have to solve to compute a robot motion for this task plan.

The graphical representation of the Factored-NLP is shown in \cref{fig:factor_graph_example}.
Circles represent variables, squares represent constraints, and edges indicate the dependencies between variables and constraints.
In the following, we explain the meaning of the variables and constraints that appear in the Factored-NLP in \cref{fig:factor_graph_example}.

\paragraph{Variables}
Variables in the optimization problem represent the robot and object configurations in each keyframe (\(q, b\)) and the trajectories between the keyframes (\(\tau^b, \tau^q\)).
The subscript indicates the time step.
Variable \(q_0 \in \mathbb{R}^7\) is the initial configuration of the robot, \(q_1 \in \mathbb{R}^7\) is the configuration in the pick keyframe, and \(q_2 \in \mathbb{R}^7\) is the configuration in the place keyframe.

The variable \(\tau_1^q \in \mathbb{R}^{7 \cdot 20}\) is the trajectory from the start configuration to the pick keyframe, and \(\tau_2^q \in \mathbb{R}^{7 \cdot 20}\) is the trajectory from the pick keyframe to the place keyframe.
The trajectories are represented using a finite set of waypoints, e.g., \num{20}, thus \(\tau_1^q, \tau_2^q \in \mathbb{R}^{7 \cdot 20}\).

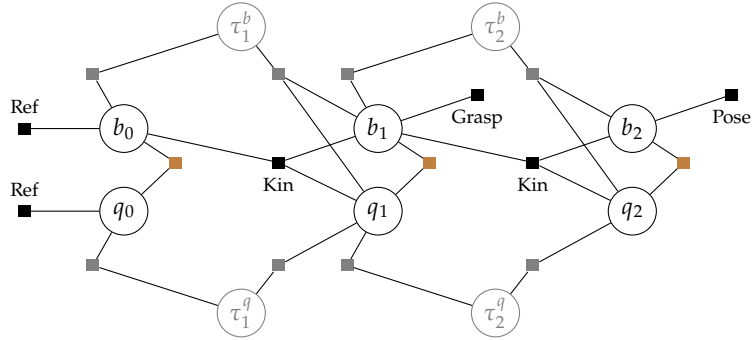
\begin{figure}[!t]
  \centering
  		\begin{tikzpicture}[scale=0.9,every node/.style={transform shape}]

			\node[latent] (b0) {$b_0$} ;
			\node[latent,below=.5  of b0] (q0) {$q_0$} ;
			\node[latent,right=1  of q0,yshift=-1.5cm,draw=gray] (tauq0) {\gray{ $\tau^q_1$}} ;
			\node[latent,right=1 of b0,yshift=1.5cm,draw=gray] (taub0) {\gray{ $\tau^b_1$}} ;

			\node[latent,right=3 of b0] (b1) {$b_1$} ;
			\node[latent,below=.5 of b1] (q1) {$q_1$} ;

			\node[latent,right=3 of b1] (b2) {$b_2$} ;
			\node[latent,below=.5 of b2] (q2) {$q_2$} ;

			\node[latent,right=1 of q1,yshift=-1.5cm,draw=gray] (tauq1) {\gray{ $\tau^q_2$}} ;
			\node[latent,right=1 of b1,yshift=1.5cm,draw=gray] (taub1) {\gray{ $\tau^b_2$}} ;

			\factor[left=1 of q0] {trajp0} { Ref } {q0} {};

			\factor[left=1 of b0] {trajp0} { Ref } {b0} {};

			\factor[right=1 of b1, yshift=0.5cm] {trajp0} {below:Grasp} {b1} {};

			\factor[right=1 of b2, yshift=0.5cm] {} {below:Pose} {b2} {};

			\factor[left=1 of b1, yshift=-.5cm] {trajp0} {below:Kin} {b0, q1,b1} {};
			\factor[left=1 of b2, yshift=-.5cm] {} {below:Kin} {b1, q2,b2} {};

			\factor[left=0 of q0,  yshift=-0.8cm,color=gray] {} {} {q0,tauq0} {};
			\factor[left=1 of q1, yshift=-0.8cm,color=gray] {} {} {tauq0,q1} {};
			\factor[left=1 of q2, yshift=-0.8cm,color=gray] {} {} {tauq1,q2} {};

			\factor[left=1 of b1,yshift=.8cm,color=gray] {} {} {taub0,b1,q1} {};
			\factor[left=0 of b0,  yshift=+0.8cm,color=gray] {} {} {b0,taub0} {};

			\factor[left=0 of q1,  yshift=-0.8cm,color=gray] {} {} {q1,tauq1} {};
			\factor[left=1 of b2,yshift=.8cm,color=gray] {} {} {taub1,b2,q2} {};

			\factor[left=0 of b1,yshift=.8cm,color=gray] {} {} {taub1,b1} {};

			\factor[right=.3 of b0, yshift=-0.5cm,color=brown] {} {} {b0,q0} {};
			\factor[right=.3 of b1, yshift=-0.5cm,color=brown] {} {} {b1,q1} {};
			\factor[right=.3 of b2, yshift=-0.5cm,color=brown] {} {} {b2,q2} {};

		\end{tikzpicture}

  \caption{Factored-NLP of a Pick and Place task plan (see \cref{fig:build_block}).
    Circles represent variables and squares represent constraints.
    The vertical slices represent the initial state, the two keyframes of the manipulation plan, and the trajectories between them.
    Variables \( q \), \( b \), \( \tau_q \), and \( \tau_b \) are, respectively, the robot configuration, the object pose, the robot trajectory, and the object trajectory.
    The subindices denote the time step.
    The meaning of each constraint (\textit{Ref}, \textit{Kin}, \textit{Grasp}, \textit{Pose}) is explained in the main text.
    Brown squares represent collision avoidance constraints between the object, robot, and environment.
    Gray squares indicate the boundary value constraints between trajectories and keyframes.
    For clarity, joint limits on \( q \) and additional constraints on the trajectories, such as zero velocity or collision avoidance within trajectories, are not drawn, but they are detailed in the main text.
  }
  \label{fig:factor_graph_example}
\end{figure}

The variables \( b \) for the object pose represent the relative transformation of the object with respect to its parent frame, as defined by the task plan.
Variable \( b_0 \in SE(3) \) is the initial pose with respect to the world frame, \( b_1 \in SE(3) \) is the object pose with respect to the gripper in the pick keyframe, and \( b_2 \in SE(3) \) is the object pose with respect to the table in the place keyframe.
We also introduce two variables for the trajectories of the object, \( \tau_1^b \) and \( \tau_2^b \), relative to their parent frames.
Note that in the case of prehensile manipulation, such trajectories will be constrained to have zero velocity.

\paragraph{Constraints}
The constraints of the optimization problem are:
\begin{itemize}
  \item \(\textit{Kin}(b_0, b_1, q_1)\): when the robot picks up an object, the object pose with respect to the gripper \(b_1\) is determined by the position of the end-effector \(p(q_1)\) and the absolute position of the object \(b_0\).
        This is implemented with: \( b_0 = p(q_1) \oplus b_1, \)
        where \(p(q) : \mathbb{R}^7 \to SE(3)\) is the pose of the end-effector with respect to the world frame, as a function of the robot configuration, and \(\oplus\) is the addition operator in \(SE(3)\).

  \item \(\textit{Kin}(b_1, b_2, q_2)\): similarly, when the robot places the object, the pose with respect to the table \(b_2\) is determined by the position of the end-effector \(p(q_2)\) and the relative grasp \(b_1\).

  \item  \(\textit{Grasp}(b_1)\): To ensure a stable grasp, the pose of the object with respect to the gripper must be such that the object does not fall.
        This can be represented with a nonlinear function based on different grasp models (e.g., we can constrain it to be a top grasp, a grasp that encapsulates the object between the two fingers, or a simpler touch-grasp).

  \item  \(\textit{Pose}(b_2)\): Similarly, the placement of the object in the place keyframe is constrained to be stable.
        This can be represented with a nonlinear function based on different placement models (e.g., a top placement of a block on a surface).

  \item \(\textit{Ref}(q_0)\): The robot configuration in the initial state is fixed to the given value.

  \item  \(\textit{Ref}(p_0)\): The object pose in the initial state is fixed to the given value.

  \item Collision constraints (brown squares in \cref{fig:factor_graph_example}) are evaluated by calculating the distance between the collision shapes of the robot, movable object, and static environment.
        Collision avoidance constraints are also added to the trajectory variables, applying the constraint to each waypoint.

  \item Joint limits on the robot configurations \(q_1,q_2\) and the robot trajectories \(\tau_1^q, \tau_2^q\) are also added to the optimization problem (not shown in \cref{fig:factor_graph_example}).

  \item Boundary value constraints (gray squares in \cref{fig:factor_graph_example}) tie the keyframes and the trajectories together.

  \item Zero Velocity: the object pose is always stable with respect to its parent frame, either the world frame when it lies in the start position, or the gripper when held by the robot.
        This is represented with the equality constraints \(\textit{Vel0}(\tau_1^b), \textit{Vel0}(\tau_2^b)\) (not shown in \cref{fig:factor_graph_example}).
\end{itemize}

Adding constraints between keyframes directly, e.g., \(\textit{Kin}(b_0,b_1,q_1)\), is essential for efficient TAMP, as it results in very informative relaxations of the Factored-NLP when removing the trajectory variables.
In the case of a stable grasp, we can add equality constraints (\textit{Kin}) directly between the variables in the keyframes, which do not depend on the intermediate trajectory.
For pushing interactions, we can add inequality constraints between keyframes that over-approximate the reachability and affordances of a pushing motion.

\paragraph{Cost}
The Factored-NLP in \cref{fig:factor_graph_example} does not show any cost term, but we can add convex quadratic costs with two purposes: to favor solutions with short and smooth trajectories, and to act as a regularization in the optimization process, improving the convergence behavior and success of the optimizer.
For instance,

\begin{itemize}
  \item Squared acceleration cost on the robot trajectories.
        Because trajectories are represented by a finite set of waypoints, we can
        compute the acceleration at each waypoint using finite differences.
  \item Distance between robot configurations in the keyframes, e.g., \( \| q_1 - q_2 \|^2 \).
  \item Regularization on robot configurations and object poses.
        For instance,
        \( \| q_1 - q_{\text{ref}} \|^2 \) with a reference configuration \( q_{\text{ref}} \), that avoids singular configurations.
\end{itemize}

\section{Complex Manipulation Sequences}
\label{sec:bg:example}

In this section, we analyze the Factored-NLP for longer manipulation sequences in environments with more objects and robots.
Our goal is to illustrate the scalability and unique properties of this representation, which contribute to the design of efficient planning and learning algorithms throughout the thesis.
The three essential properties of our Factored-NLP formulation are:

\begin{itemize}
  \item \emph{Temporal Markov structure:} Constraints in the graph only connect variables from consecutive time steps, maintaining the sequential temporal structure of a planning problem.

  \item \emph{Sparse factorization:}
        Each constraint depends only on a small subset of variables.
        This becomes particularly clear in scenarios involving multiple robots and objects.
        For instance, when a robot picks up an object, additional constraints are added between that specific robot and object, but without affecting the other objects or robots in the scene.

  \item \emph{Repeatable local structure:}
        The optimization problem for a new manipulation task contains several small building blocks that are repeated across different task plans.
        For instance, when comparing the new Factored-NLPs in \cref{fig:bf:example2:plan1,fig:graph-nlp:plan3} to the Pick and Place example, similar structures with small variations are used to model handovers or the placement of one object on top of another.
\end{itemize}

To achieve these three properties in the Factored-NLP, it is fundamental to use a \emph{non-minimal} representation of the optimization problem.
Here, non-minimal means that, given a task plan, there often exists a smaller Factored-NLP equivalent to our formulation, using fewer variables and constraints.
For instance, note that we have added the fixed initial state as a variable (along with constraints) to more clearly expose the temporal structure of the problem.
Furthermore, when an object does not move, we also add new variables, which are constrained to be equal.
While our non-minimal formulation is superior for planning and learning algorithms, solving the Factored-NLP with a non-minimal formulation incurs a small runtime penalty.
If required, before employing a nonlinear solver to solve for the full Factored-NLP or a subset, one can remove and merge fixed or equal variables using a preprocessing step.

\begin{figure}
  \centering
  \includegraphics[width=.22\textwidth]{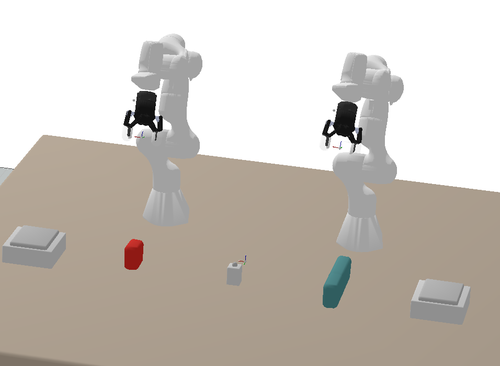}
  \includegraphics[width=.22\textwidth]{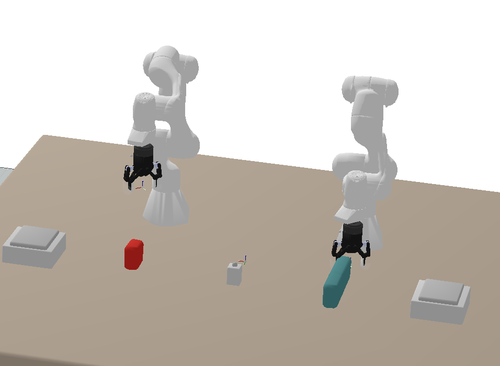}
  \includegraphics[width=.22\textwidth]{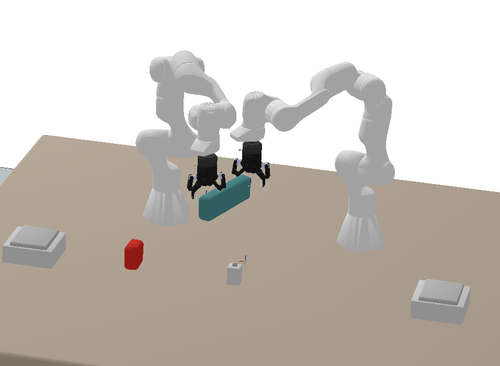}
  \includegraphics[width=.22\textwidth]{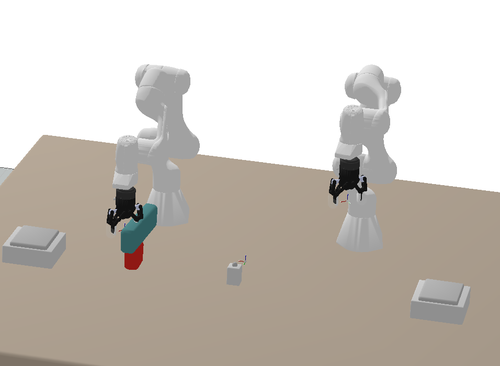}\\
  \caption{Example domain with two objects and two robots.}
  \label{fig:example-domain-background}
\end{figure}

\paragraph{A domain with two robots and two objects}
\label{sec:bg:example2}

In our second example, we consider an environment with two movable objects, \textit{A} and \textit{B}, initially at \textit{A\_init} and \textit{B\_init}, and two robot manipulators, \textit{Q} and \textit{W}.
The high-level goal is to stack \textit{A} on top of \textit{B}.

We discuss the structure of the Factored-NLP for two candidate task plans that can potentially achieve the high-level goal.
\cref{fig:bf:example2:plan1} shows the Factored-NLP for the task plan \( \langle\textit{pick~B~with~Q~from~B\_init},~\textit{pick~B~with~W~from~Q},~\textit{place~B~with~W~on~A}\rangle \).

In each vertical slice of the Factored-NLP, we have continuous variables \( \{ a , b , q , w\} \) for keyframe configurations,
and \( \{ \tau^a , \tau^b , \tau^q , \tau^w \} \) for trajectories.
Variables
\( a,b \) are the poses of the two objects with respect to the parent frame in the kinematic chain, and \( q,w \) are the robot joint configurations.
Variables \( \tau^a , \tau^b , \tau^q , \tau^w \in \RR^{20 \cdot 7} \) are the corresponding trajectories during each motion phase (represented with 20 waypoints).
In comparison to the Pick and Place example, we now have four keyframe variables per vertical slice (instead of 2), and 4 vertical slices (instead of 3) because the task plan is longer.
A feasible solution of the Factored-NLP is shown in \cref{fig:example-domain-background}.

Interestingly, most of the constraints are the same as in the Pick and Place example.
For instance, the collision avoidance, grasping, and positioning constraints are the same.
However, this problem contains some unique, but related structures,

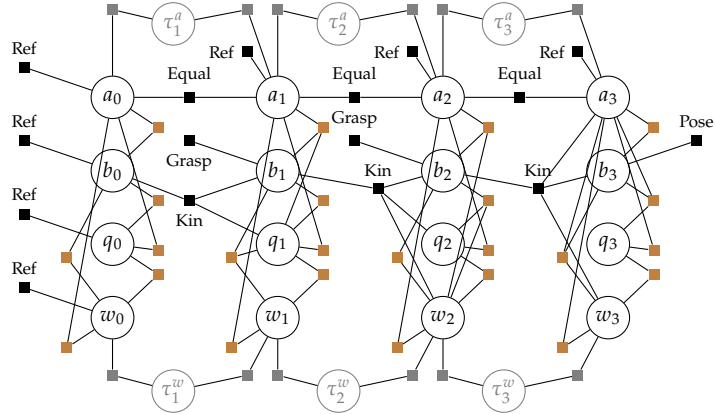
\begin{figure}
  \begin{tikzpicture}[scale=0.8,every node/.style={transform shape}]

        \node[latent] (a0) {$a_0$} ;

        \node[latent,above=.5  of a0, xshift=1cm,draw=gray] (taua0) {
            \gray{$\tau^a_1$}} ;

        \node[latent,below=.5 of a0  ] (b0) {$b_0$} ;
        \node[latent,below=.5  of b0] (q0) {$q_0$} ;
        \node[latent,below=.5  of q0] (w0) {$w_0$} ;

        \node[latent,below=.5  of w0,xshift=1cm,draw=gray] (tauw0) {

          \gray{ $\tau^w_1$}} ;

        \node[latent,right=2 of a0  ] (a1) {$a_1$} ;
        \node[latent,below=.5 of a1  ] (b1) {$b_1$} ;
        \node[latent,below=.5 of b1] (q1) {$q_1$} ;
        \node[latent,below=.5 of q1] (w1) {$w_1$} ;
        \node[latent,above=.5  of a1,xshift=1cm,draw=gray] (taua1) {\gray{$\tau^a_2$}} ;

        \node[latent,below=.5  of w1,xshift=1cm,draw=gray] (tauw1) {\gray{$\tau^w_2$}} ;

        \node[latent,right=2 of a1  ] (a2) {$a_2$} ;
        \node[latent,below=.5 of a2  ] (b2) {$b_2$} ;
        \node[latent,below=.5 of b2] (q2) {$q_2$} ;
        \node[latent,below=.5 of q2] (w2) {$w_2$} ;

        \node[latent,above=.5  of a2,xshift=1cm,draw=gray] (taua2) {\gray{$\tau^a_3$}} ;

        \node[latent,below=.5  of w2,xshift=1cm,draw=gray] (tauw2) {\gray{$\tau^w_3$}} ;

        \node[latent,right=2 of a2  ] (a3) {$a_3$} ;
        \node[latent,below=.5 of a3  ] (b3) {$b_3$} ;
        \node[latent,below=.5 of b3] (q3) {$q_3$} ;
        \node[latent,below=.5 of q3] (w3) {$w_3$} ;

      \factor[left=1 of w0, yshift=0.5cm] {trajp0} { Ref } {w0} {};
      \factor[left=1 of q0, yshift=0.5cm] {trajp0} { Ref } {q0} {};
      \factor[left=1 of a0, yshift=0.5cm] {trajp0} { Ref } {a0} {};
      \factor[above=.3 of a1,xshift=-.5cm] {trajp0} { left:Ref } {a1} {};
      \factor[above=.3 of a2, xshift=-.5cm] {trajp0} { left:Ref } {a2} {};

      \factor[left=1 of b0, yshift=0.5cm] {trajp0} { Ref } {b0} {};
      \factor[left=1 of b1, yshift=0.5cm] {trajp0} { below:Grasp } {b1} {};
      \factor[left=1 of b2, yshift=0.5cm] {trajp0} { Grasp } {b2} {};
      \factor[right=1 of b3, yshift=0.5cm] {trajp0} { Pose } {b3} {};
      \factor[above=.3 of a3,xshift=-.5cm] {trajp0} {left:Ref} {a3} {};

      \factor[left=1 of b1, yshift=-.5cm] {trajp0} {below:Kin} {b0, q1,b1} {};

      \factor[right=1.2 of b1, yshift=-.3cm] {trajp0} {Kin} {b1, q2,b2,w2} {};

      \factor[left=.7 of b3, yshift=-.3cm] {trajp0} {Kin} {w3,b3,a3,b2} {};

      \factor[left=1 of a3] {trajp0} {Equal} {a2,a3} {};

      \factor[left=1 of a1] {trajp0} {Equal} { a0, a1} {};

      \factor[left=1 of a2] {trajp0} {Equal} { a1, a2} {};

      \factor[right=.3 of a0, yshift=-0.5cm,color=brown] {} {} {a0,b0} {};
      \factor[right=.3 of b0, yshift=-0.5cm,color=brown] {} {} {b0,q0} {};
      \factor[right=.3 of q0, yshift=-0.1cm,color=brown] {} {} {a0,q0} {};

      \factor[right=.3 of q0, yshift=-0.5cm,color=brown] {} {} {q0,w0} {};
      \factor[left=.3 of w0, yshift=-0.5cm,color=brown] {} {} {a0,w0} {};
      \factor[left=.3 of w0, yshift=+1cm,color=brown] {} {} {w0,b0} {};

      \factor[right=.3 of a1, yshift=-0.5cm,color=brown] {} {} {a1,b1,q1} {};
      \factor[right=.3 of b1, yshift=-0.5cm,color=brown] {} {} {b1,q1} {};
      \factor[right=.3 of q1, yshift=-0.1cm,color=brown] {} {} {a1,q1} {};

      \factor[right=.3 of q1, yshift=-0.5cm,color=brown] {} {} {q1,w1} {};
      \factor[left=.3 of w1, yshift=-0.5cm,color=brown] {} {} {a1,w1} {};
      \factor[left=.3 of w1, yshift=+1cm,color=brown] {} {} {w1,b1,q1} {};

      \factor[right=.3 of a2, yshift=-0.5cm,color=brown] {} {} {a2,b2,w2} {};
      \factor[right=.3 of b2, yshift=-0.5cm,color=brown] {} {} {b2,q2,w2} {};
      \factor[right=.3 of q2, yshift=-0.1cm,color=brown] {} {} {a2,q2} {};

      \factor[right=.3 of q2, yshift=-0.5cm,color=brown] {} {} {q2,w2} {};
      \factor[left=.3 of w2, yshift=-0.5cm,color=brown] {} {} {a2,w2} {};
      \factor[left=.3 of w2, yshift=+1cm,color=brown] {} {} {w2,b2} {};

      \factor[right=.3 of a3, yshift=-0.5cm,color=brown] {} {} {a3,b3} {};
      \factor[right=.3 of b3, yshift=-0.5cm,color=brown] {} {} {b3,q3,a3} {};
      \factor[right=.3 of q3, yshift=-0.1cm,color=brown] {} {} {a3,q3} {};

      \factor[right=.3 of q3, yshift=-0.5cm,color=brown] {} {} {q3,w3} {};
      \factor[left=.3 of w3, yshift=-0.5cm,color=brown] {} {} {a3,w3} {};
      \factor[left=.3 of w3, yshift=+1cm,color=brown] {} {} {w3,b3,a3} {};

      \factor[below=.5 of w0, color=gray] {} {} {w0,tauw0} {};
      \factor[below=.5 of w1,xshift=-.5cm, color=gray] {} {} {tauw0,w1} {};

      \factor[below=.5 of w1,color=gray] {} {} {w1,tauw1} {};
      \factor[below=.5 of w2, xshift=-0.5cm,color=gray] {} {} {tauw1,w2} {};

      \factor[below=.5 of w2,color=gray] {} {} {w2,tauw2} {};
      \factor[below=.5 of w3, xshift=-0.5cm,color=gray] {} {} {tauw2,w3} {};

      \factor[above=1 of a0,color=gray] {} {} {a0,taua0} {};
      \factor[above=1 of a1, xshift=-0.5cm,color=gray] {} {} {taua0,a1} {};

      \factor[above=1 of a1,color=gray] {} {} {a1,taua1} {};
      \factor[above=1 of a2, xshift=-0.5cm,color=gray] {} {} {taua1,a2} {};

      \factor[above=1 of a2, color=gray] {} {} {a2,taua2} {};
      \factor[above=1 of a3, xshift=-0.5cm,color=gray] {} {} {taua2,a3} {};

  \end{tikzpicture}
  \centering
  \caption{Factored-NLP for the task plan
    \( \langle \)\textit{pick~B~with~Q~from~B\_init}, ~\textit{pick~B~with~W~from~Q}, ~\textit{place~B~with~W~on~A}\(\rangle \).
    We display all variables for the keyframe configurations \((a,b,q,w)\), and the trajectory variables only for \((\tau^a,\tau^w)\).
    We omit
    the variables \( \tau^b,\tau^q \), and the constraints for zero-velocity and collisions between trajectories to keep the illustration cleaner.
    Brown squares represent collision avoidance constraints.
    Gray squares represent boundary constraints between trajectories and keyframes.
  }
  \label{fig:bf:example2:plan1}
\end{figure}

\begin{itemize}
  \item In the second step, robots perform a handover.
        We use a \textit{Kin} constraint, which is essentially similar to the \textit{Kin} constraint in the Pick and Place example, but now it connects the two robots with object \( B \) because the robot \textit{W} picks the object from the other robot \textit{Q}.
  \item The \textit{Kin} constraint when placing the object is also different because now the object is placed on top of another object.
  \item Finally, there is a novel \textit{Equal} constraint between consecutive variables for objects that are not manipulated (in this case, object \( A \)), to ensure they remain still.
\end{itemize}

In \cref{fig:graph-nlp:plan3}, we show the Factored-NLP for an alternative plan:
\( \langle \)\textit{pick~B~with~Q~from~B\_init}, ~\textit{place~B~with~Q~on~A}\(\rangle \).
Note how the Factored-NLP has three columns instead of four, and a different global structure, while retaining many common local structures and relationships between variables and constraints.

Our novel planning formulation, \textit{Planning with Nonlinear Constraints}, dictates which constraints appear in the graph, depending on the high-level task plan.
A formal definition and analysis will be provided later in \cref{ch:bid}.

We emphasize that a single Factored-NLP cannot represent the full TAMP problem because the Factored-NLP is conditioned on the task plan, which is a priori unknown and should also be optimized within a TAMP problem.
In difficult TAMP problems, most Factored-NLPs are infeasible since the candidate task plan fails when considering collisions and physics constraints.
This is illustrated in our example: from the two Factored-NLPs shown in \cref{fig:bf:example2:plan1,fig:graph-nlp:plan3}, only the first one is feasible in the environment shown in \cref{fig:example-domain-background}.
The second plan would fail because the robot \( Q \) cannot place object \( B \) correctly on top of object \( A \) because \( A \) is too far.

\begin{figure}
  \centering
  \begin{tikzpicture}[scale=0.8,every node/.style={transform shape}]

        \node[latent] (a0) {$a_0$} ;

        \node[latent,above=.5  of a0, xshift=1cm,draw=gray] (taua0) {
            \gray{$\tau^a_1$}} ;

        \node[latent,below=.5 of a0  ] (b0) {$b_0$} ;
        \node[latent,below=.5  of b0] (q0) {$q_0$} ;
        \node[latent,below=.5  of q0] (w0) {$w_0$} ;

        \node[latent,below=.5  of w0,xshift=1cm,draw=gray] (tauw0) {

          \gray{ $\tau^w_1$}} ;

        \node[latent,right=2 of a0  ] (a1) {$a_1$} ;
        \node[latent,below=.5 of a1  ] (b1) {$b_1$} ;
        \node[latent,below=.5 of b1] (q1) {$q_1$} ;
        \node[latent,below=.5 of q1] (w1) {$w_1$} ;
        \node[latent,above=.5  of a1,xshift=1cm,draw=gray] (taua1) {\gray{$\tau^a_2$}} ;

        \node[latent,below=.5  of w1,xshift=1cm,draw=gray] (tauw1) {\gray{$\tau^w_2$}} ;

        \node[latent,right=2 of a1  ] (a2) {$a_2$} ;
        \node[latent,below=.5 of a2  ] (b2) {$b_2$} ;
        \node[latent,below=.5 of b2] (q2) {$q_2$} ;
        \node[latent,below=.5 of q2] (w2) {$w_2$} ;

      \factor[left=1 of w0, yshift=0.5cm] {trajp0} { Ref } {w0} {};
      \factor[left=1 of q0, yshift=0.5cm] {trajp0} { Ref } {q0} {};
      \factor[left=1 of a0, yshift=0.5cm] {trajp0} { Ref } {a0} {};
      \factor[above=.3 of a1,xshift=-.5cm] {trajp0} { left:Ref } {a1} {};
      \factor[above=.3 of a2, xshift=-.5cm] {trajp0} { left:Ref } {a2} {};

      \factor[left=1 of b0, yshift=0.5cm] {trajp0} { Ref } {b0} {};
      \factor[left=1 of b1, yshift=0.5cm] {trajp0} { below:Grasp } {b1} {};
      \factor[left=1 of b2, yshift=0.5cm] {trajp0} { Pose } {b2} {};

      \factor[left=1 of b1, yshift=-.5cm] {trajp0} {below:Kin} {b0, q1,b1} {};

      \factor[right=1.2 of b1, yshift=-.3cm] {trajp0} {Kin} {b1, q2,b2,a2} {};

      \factor[left=1 of a1] {trajp0} {Equal} { a0, a1} {};

      \factor[left=1 of a2] {trajp0} {Equal} { a1, a2} {};

      \factor[right=.3 of a0, yshift=-0.5cm,color=brown] {} {} {a0,b0} {};
      \factor[right=.3 of b0, yshift=-0.5cm,color=brown] {} {} {b0,q0} {};
      \factor[right=.3 of q0, yshift=-0.1cm,color=brown] {} {} {a0,q0} {};

      \factor[right=.3 of q0, yshift=-0.5cm,color=brown] {} {} {q0,w0} {};
      \factor[left=.3 of w0, yshift=-0.5cm,color=brown] {} {} {a0,w0} {};
      \factor[left=.3 of w0, yshift=+1cm,color=brown] {} {} {w0,b0} {};

      \factor[right=.3 of a1, yshift=-0.5cm,color=brown] {} {} {a1,b1,q1} {};
      \factor[right=.3 of b1, yshift=-0.5cm,color=brown] {} {} {b1,q1} {};
      \factor[right=.3 of q1, yshift=-0.1cm,color=brown] {} {} {a1,q1} {};

      \factor[right=.3 of q1, yshift=-0.5cm,color=brown] {} {} {q1,w1} {};
      \factor[left=.3 of w1, yshift=-0.5cm,color=brown] {} {} {a1,w1} {};
      \factor[left=.3 of w1, yshift=+1cm,color=brown] {} {} {w1,b1,q1} {};

      \factor[right=.3 of a2, yshift=-0.5cm,color=brown] {} {} {a2,b2,w2} {};
      \factor[right=.3 of b2, yshift=-0.5cm,color=brown] {} {} {b2,q2,w2} {};
      \factor[right=.3 of q2, yshift=-0.1cm,color=brown] {} {} {a2,q2} {};

      \factor[right=.3 of q2, yshift=-0.5cm,color=brown] {} {} {q2,w2} {};
      \factor[left=.3 of w2, yshift=-0.5cm,color=brown] {} {} {a2,w2} {};
      \factor[left=.3 of w2, yshift=+1cm,color=brown] {} {} {w2,b2} {};

      \factor[below=.5 of w0, color=gray] {} {} {w0,tauw0} {};
      \factor[below=.5 of w1,xshift=-.5cm, color=gray] {} {} {tauw0,w1} {};

      \factor[below=.5 of w1,color=gray] {} {} {w1,tauw1} {};
      \factor[below=.5 of w2, xshift=-0.5cm,color=gray] {} {} {tauw1,w2} {};

      \factor[above=1 of a0,color=gray] {} {} {a0,taua0} {};
      \factor[above=1 of a1, xshift=-0.5cm,color=gray] {} {} {taua0,a1} {};

      \factor[above=1 of a1,color=gray] {} {} {a1,taua1} {};
      \factor[above=1 of a2, xshift=-0.5cm,color=gray] {} {} {taua1,a2} {};

  \end{tikzpicture}
  \caption{Factored-NLP for the task plan \( \langle\)\textit{pick~B~with~Q~from~B\_init}, ~\textit{place~B~with~Q~on~A}\(\rangle \) in the domain referenced in \cref{fig:example-domain-background}.
    See main text and caption of \cref{fig:bf:example2:plan1}.
  }
  \label{fig:graph-nlp:plan3}
\end{figure}
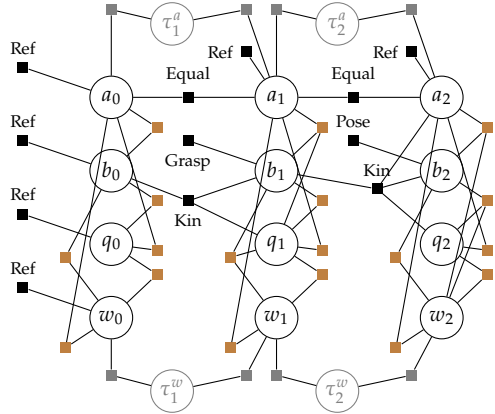

\part{\namePartOne}

\chapter{\nameChapterOne}

\label{ch:diverse_planning}

\section{Introduction}

In this chapter\footnote{This chapter is based on the paper: Ortiz-Haro, J., Karpas, E., Toussaint, M., and Katz, M.
  (2022).
  Conflict-directed Diverse Planning for Logic-Geometric Programming.
  In Proceedings of the International Conference on Automated Planning and Scheduling (Vol.
  32, pp. 279-287).},
we combine state-of-the-art diverse classical planning with trajectory optimization within the Logic Geometric Program (LGP) formulation.

A central challenge in Task and Motion Planning (TAMP) is the integration of information and tools from both the discrete and continuous domains.
Efficient solvers for either the discrete or the continuous levels of TAMP problems are readily available, such as trajectory optimization using constrained optimization or classical planning with heuristic search.
However, these tools are not directly applicable to the full problem, and their integration is neither trivial nor direct.

This limitation has led to the creation of custom TAMP solvers that can better reason about the dependencies between logic and geometry.
However, these solutions often fall short in performance compared to mature, state-of-the-art solvers for each of the subproblems.
The Multi-Bound Tree Search (MBTS) serves as a prominent example, being a leading solver for TAMP problems.
It employs a custom tree search algorithm that explores the search space in a breadth-first manner, combining information from both geometry and logic with a multi-bound strategy.

Our new solver, \emph{Diverse Planning for LGP}, interfaces high-level task planning with low-level trajectory optimization by identifying geometric conflicts in the form of infeasible plan prefixes, and employs a new multi-prefix forbidding compilation to transmit this information back into the task planner.
Additionally, we leverage diverse planning with a new novelty criterion for selecting candidate plans based on prefix novelty, and a metareasoning approach which attempts to extract only useful conflicts by leveraging the information gathered in the course of solving the given problem.

While Multi-Bound Tree Search can use a similar notion of geometric prefix-conflicts internally in the custom tree search, our approach enables the combination of off-the-shelf state-of-the-art PDDL planning with trajectory optimization.
This combination allows us to solve problems that require complex reasoning at both the continuous and discrete levels more quickly.
For instance, the manipulation task shown in \cref{fig:push} requires reasoning about tool-use, pushing, and pick and place actions with two robot manipulators.

The enhancement in the discrete search makes the solver faster, facilitating the generation of multiple candidate plans and allowing for the incorporation of new ideas and techniques from the planning community such as diverse planning and metareasoning.

We demonstrate that our combination not only accelerates the discrete search but also reduces the number of optimization problems solved in the continuous layer, resulting in faster solution times.

\begin{figure}[t]
  \centering
  \includegraphics[width=.2\linewidth]{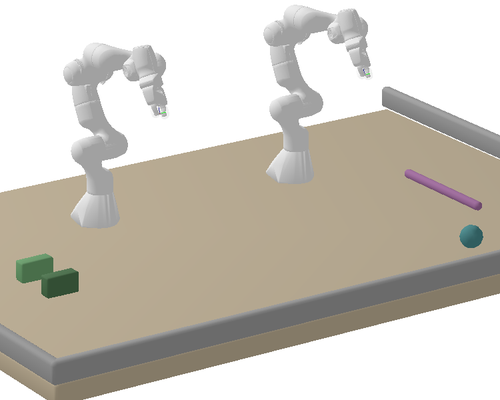}
  \includegraphics[width=.2\linewidth]{pics_icaps21/png/crop2/0010.ppm.png.c.png}
  \includegraphics[width=.2\linewidth]{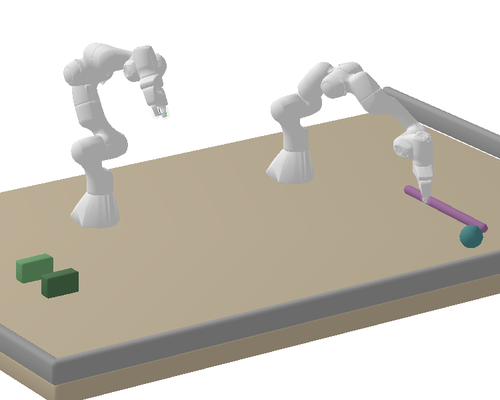} \\

  \includegraphics[width=.2\linewidth]{pics_icaps21/png/crop2/0030.ppm.png.c.png}
  \includegraphics[width=.2\linewidth]{pics_icaps21/png/crop2/0040.ppm.png.c.png}
  \includegraphics[width=.2\linewidth]{pics_icaps21/png/crop2/0050.ppm.png.c.png}  \\

  \includegraphics[width=.2\linewidth]{pics_icaps21/png/crop2/0060.ppm.png.c.png}
  \includegraphics[width=.2\linewidth]{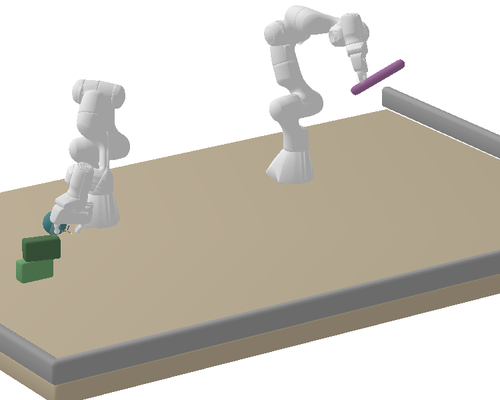}
  \includegraphics[width=.2\linewidth]{pics_icaps21/png/crop2/0074.ppm.png.c.png}
  \caption{
    Example of a TAMP problem solved by our algorithm.
    The objective is to compute the task plan and the continuous motion that achieves the high-level goal \textit{(ball on block A) and (block A on block B)} (bottom right) from the initial configuration (top left).
    This requires combined logic and geometric reasoning about tool-use, pushing, and pick and place actions with two robot manipulators.
  }
  \label{fig:push}
\end{figure}

\section{Related Work}

A comprehensive review of related work in Task and Motion Planning is available in \cref{sec:bg:related-work}.
The closest approaches to ours are sample-based TAMP solvers that attempt to identify geometric conflicts and encode this information back into the discrete descriptions, either using a set of predefined predicates \cite{srivastava2014combined} or by adding additional constraints to an incremental constraint satisfaction solver \cite{dantam2018incremental}.

We follow the LGP formulation of Task and Motion Planning and provide an alternative to the Multi-Bound Tree Search \cite{toussaint2017multi} algorithm.
The interface between the high-level task plan and the low-level motion, focusing on prefixes of task plans that are infeasible, is consistent across both solvers.
However, the encoding techniques and tools used in the discrete search are more advanced in our approach, leading to a more efficient TAMP solver, as our experiments demonstrate.

Our solver employs tools from classical planning and conflict-based search to solve an LGP.
These ideas and tools are further refined in the \textit{Factored-NLP Planner} (\cref{ch:bid}), resulting in a more efficient bidirectional interface that exploits the factorization of TAMP problems.

\section{Factorization of the Discrete State Space}
\label{sec:factorization-discrete}

\textit{Diverse Planning for LGP} builds on top of Logic Geometric Programming (\cref{sec:bg:lgp}) and classical planning (\cref{bg:sec:planning}).
We refer to \cref{sec:bg:lgp} for an extensive introduction to the LGP formulation and notation, which we adopt in this chapter.

In order to use off-the-shelf discrete planners, we first need a factorized representation of the discrete state space in terms of discrete variables.
As exemplified in the \emph{Blocksworld} domain (\cref{bg:sec:planning}), this factorization is readily available if we consider an object- and robot-centric representation.

To transform the discrete search component of the LGP into a classical planning problem using the
SAS+ encoding (\cref{sec:bg:pddl}), we introduce a discrete variable for each movable object and robot.
Variables for objects, e.g., \( \texttt{parent\_A} \) for block \( A \), indicate where the object is in a discrete sense, e.g., on the table, on top of another object, or held by the gripper; but without defining the continuous relative transformation. Specifically, these variables indicate the ``parent'' in the kinematic tree and (implicitly) the type of interaction (e.g., stable pick by a gripper or push by a stick). Consequently, we use the term ``parent'', e.g., the discrete variable for object \( A \) is called \( \texttt{parent\_A} \).
Variables for robots indicate whether the robots are interacting with any object (e.g., the gripper can be full or empty).

Consider the scenario in \cref{fig:push} with two robots (\( Q \) and \( W \)), two blocks (\( A \) and \( B \)), a ball, and a stick, where robots can pick up the blocks and use the stick to push the ball.
We can represent the discrete state using six discrete variables:
\{\texttt{parent\_A},
\texttt{parent\_B},
\texttt{parent\_Stick},
\texttt{robot\_Q},
\texttt{robot\_W},
\texttt{parent\_Ball}\}.
The sets of possible values for each variable are:
\begin{itemize}
  \item[--] \texttt{parent\_A} $\in$ \{ \texttt{table},~\texttt{block\_B}, ~\texttt{robot\_Q}, ~\texttt{robot\_W}, ~\texttt{A\_init} \},
  \item[--]
    \texttt{parent\_B} $\in$ \{ \texttt{table}, ~\texttt{block\_A}, ~\texttt{robot\_Q}, ~\texttt{robot\_W}, ~\texttt{B\_init} \},
  \item[--] \texttt{parent\_Stick} $\in$ \{ ~\texttt{table}, ~\texttt{robot\_Q}, ~\texttt{robot\_W}, ~\texttt{Stick\_init} \},
  \item[--] \texttt{robot\_Q} $\in$ \{ ~\texttt{free}, ~\texttt{full} \},
  \item[--] \texttt{robot\_W} $\in$ \{ ~\texttt{free}, ~\texttt{full} \},
  \item[--] \texttt{parent\_Ball} $\in$ \{ ~\texttt{table}, ~\texttt{block\_A}, ~\texttt{block\_B} , ~\texttt{robot\_Q}, ~\texttt{robot\_W}, \\ ~\texttt{Ball\_init}, ~\texttt{Stick} \}.
\end{itemize}

A discrete state $s \in \mathcal{S}$ is a value assignment to all the variables.
For instance, the initial state $s_0$ in \cref{fig:push} is:

\begin{center}
  \begin{tabular}{ll}
    \texttt{parent\_A = A\_init},         & \texttt{parent\_B = B\_init},       \\
    \texttt{parent\_Stick = Stick\_init}, & \texttt{robot\_Q = free},           \\
    \texttt{robot\_W = free},             & \texttt{parent\_Ball = Ball\_init}.
  \end{tabular}
\end{center}

In a more concise way, we can also represent the initial state as a set of predicates that are true: 

\begin{center}
  (\texttt{A on A\_init}),
  (\texttt{B on B\_init}), 
  (\texttt{Stick on Stick\_init}), \\
  (\texttt{Q free}), (\texttt{W free}), (\texttt{Ball on Ball\_init}),
\end{center}

where, for example,
\((\texttt{A on A\_init})\) means that \texttt{parent\_A = A\_init}.
Similarly, we can represent every discrete action \(a \in \mathcal{A}\) in the LGP formulation as a pair of conditions and effects on the discrete variables.
For instance:

\begin{itemize}
  \item[--] Action: ~\texttt{pick block B with robot Q from B\_init} \\
    Conditions:
    ~\texttt{parent\_B = B\_init},
    ~\texttt{robot\_Q = free}.
    \\
    Effects:
    ~\texttt{robot\_Q = full},
    ~\texttt{parent\_B = robot\_Q}.
  \item[--]
    Action:
    \texttt{place block B with robot Q on block A} \\
    Conditions:
    ~\texttt{parent\_B = robot\_Q}. 
    ~\texttt{robot\_Q = full}. \\
    Effects:
    ~\texttt{parent\_B = block\_A},
    ~\texttt{robot\_Q = free}.
\end{itemize}

\section{Diverse Task Planning for LGP}

\begin{figure}
  \centering
  \begin{tikzpicture}[node distance=70pt,
      every node/.style={fill=white, font=\small}, align=center]
    \node (planner)     [process]          {Task Planner \\ + Diversity};
    \node (plan)     [right of=planner, xshift=-.1cm ]          {Task \\ Plan};
    \draw[-]             (planner) -- (plan);
    \node (motionplanner)   [process, right of=planner, xshift=3cm]          {Motion Planner \\ + Conflict Extraction};
    \draw[->]             (plan) -- (motionplanner);
    \node (solution)   [ right of=motionplanner, xshift=1cm]          {Solution};
    \node (ptask)   [below of=planner, yshift = .5cm]          {Discrete Planning \\ Task};
    \node (reformulation)   [process, below of=planner,yshift=-1cm]          {Reformulation};
    \node (infeas)   [ right of=reformulation,xshift=3cm]          {Infeasible Prefix};
    \draw[->]             (infeas) -- (reformulation);
    \draw[->]             (motionplanner) -- (solution);
    \draw[->]             (motionplanner) -- (infeas);
    \draw[-]             (reformulation) -- (ptask);
    \draw[->]             (ptask) -- (planner);
  \end{tikzpicture}
  \caption{Overview of our approach \emph{Diverse Planning for LGP}.
    We combine a discrete task planner to generate task plans with a motion planner to compute the continuous trajectory.
    If a plan is not geometrically feasible, we extract a conflict, namely a prefix of discrete actions of the task plan, and reformulate the discrete planning task to block this prefix.
  }
  \label{fig:div:flowchart}
\end{figure}
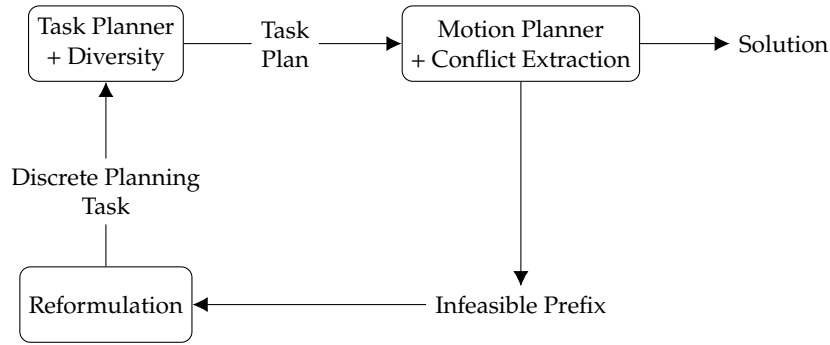

The fundamental contribution of this chapter is the application of diverse planning at the task level of a Logic Geometric Programming (LGP) problem, yielding a varied sequence of task plans.
\textit{Diverse Planning for LGP} is an iterative Task and Motion Planning (TAMP) solver, where candidate task plans are checked for geometric feasibility by solving a trajectory optimization problem.
Crucially, if a task plan is not geometrically feasible, we extract a conflict and use this conflict to reformulate the planning task.
Specifically, we build upon and extend the iterative plan forbidding approach \cite{katz2020reshaping} to generate diverse and conflict-free candidate plans.

In this chapter, we address conflicts in the form of task plan \textit{prefixes} -- that is, a sequence of discrete actions \( \pi = \langle a_1, \ldots, a_K \rangle \) which is applicable from the initial state at the discrete level but lacks a feasible geometric trajectory.
To exploit such conflicts, in a manner akin to conflict-directed clause learning \cite{SilvaS99Search} or conflict-directed A* \cite{WilliamsR07Conflict}, two steps are required.
First, we must be able to efficiently extract a conflict from a task plan that is not geometrically feasible.
Second, we must prevent our task planner from generating plans that contain the identified conflict as a prefix.
Both of these will be elaborated on later.

The flowchart in \cref{fig:div:flowchart} provides a graphical description, and \cref{alg:pseudocode}, which appears later, shows the pseudocode for our approach.
Fig.
\ref{fig:example} provides an example of the execution of our solver in a simplified setting.

\subsection{Prefixes as Conflicts}

To begin the detailed discussion of our approach, we discuss why we choose to identify prefixes as conflicts, and not a more general restriction on plans.

\begin{definition}
  Given a sequence of discrete actions \( \pi = \langle a_1, \ldots, a_K \rangle \), the prefix of length \( k \leq | \pi | = K \) is denoted as,
  \begin{equation}
    \pi|_k = \langle a_1, \ldots, a_k \rangle\,.
  \end{equation}
\end{definition}

\begin{theorem}
  \label{thm:prefix}
  Let \( \pi = \langle a_1, \ldots, a_K \rangle \) be a sequence of discrete actions, such that \( \pi \) is not geometrically feasible from the initial state.
  Then any sequence of actions \( \pi' \) which contains \( \pi \) as a prefix is not geometrically feasible from the initial state.
\end{theorem}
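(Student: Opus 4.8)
The plan is to unfold what "geometrically feasible" means and then reduce the claim to the supergraph-monotonicity of feasibility for Factored-NLPs established in \cref{sec:bg:factored-nlp}. Recall from \cref{sec:bg:lgp} that a task plan $\pi = \langle a_1, \ldots, a_K\rangle$ is geometrically feasible from the initial state precisely when $\textup{Trajectory-NLP}(\pi)$ (the NLP \eqref{eq:lgp-nlp}) admits a feasible solution, and that the discrete state sequence $\langle s_0, \ldots, s_K\rangle$ that parametrizes the constraints of that NLP is produced deterministically from the fixed $s_0$ via $s_k = \operatorname{succ}(s_{k-1}, a_k)$.

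First I would record the key structural observation. Write $\pi' = \langle a_1, \ldots, a_K, a_{K+1}, \ldots, a_{K'}\rangle$, so that $\pi = \pi'|_K$. Since $\operatorname{succ}$ is a deterministic function of a state and an action and both $\pi$ and $\pi'$ start from $s_0$, the state sequence induced by $\pi'$ extends the one induced by $\pi$: the first $K{+}1$ states $s_0', \ldots, s_K'$ of $\pi'$ coincide with $s_0, \ldots, s_K$. (If the continuation $a_{K+1}, \ldots, a_{K'}$ is not discretely applicable, then $\pi'$ is not a valid task plan and is trivially not geometrically feasible, so we may assume it is applicable.) Consequently, every path constraint $h_{\textup{path}}(\bar x(t), s_k)=0$, $g_{\textup{path}}(\bar x(t), s_k)\le 0$ for $k = 0,\ldots,K-1$ and every switch constraint $h_{\textup{switch}}(\bar x(t_k), s_{k-1}, s_k)=0$, $g_{\textup{switch}}(\bar x(t_k), s_{k-1}, s_k)\le 0$ for $k = 1,\ldots,K$ appearing in $\textup{Trajectory-NLP}(\pi)$ is, as a function on the trajectory over $[0, KT]$, literally one of the constraints of $\textup{Trajectory-NLP}(\pi')$. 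In Factored-NLP language (\cref{sec:bg:factored-nlp}), $\textup{Trajectory-NLP}(\pi')$ is a supergraph of $\textup{Trajectory-NLP}(\pi)$: it contains all the variables of the latter (the trajectory on $[0,KT]$) together with extra variables (the phases over $[KT, K'T]$), and all of its constraints together with extra ones.

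The proof then concludes with the restriction argument, or equivalently by directly quoting the Factored-NLP fact from \cref{sec:bg:factored-nlp} that all supergraphs of an infeasible Factored-NLP are infeasible. Explicitly: suppose for contradiction that $\pi'$ is geometrically feasible and let $x'(t)$ on $[0, K'T]$ be a feasible solution of $\textup{Trajectory-NLP}(\pi')$. Its restriction to $[0, KT]$ satisfies exactly those constraints of $\textup{Trajectory-NLP}(\pi')$ supported on $[0, KT]$, which by the previous paragraph are precisely the constraints of $\textup{Trajectory-NLP}(\pi)$; hence $\textup{Trajectory-NLP}(\pi)$ is feasible and $\pi$ is geometrically feasible, contradicting the hypothesis.

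The only point that genuinely needs care — and hence the main (mild) obstacle — is the index bookkeeping that makes the phrase "$\textup{Trajectory-NLP}(\pi')$ restricted to the first $K$ phases is $\textup{Trajectory-NLP}(\pi)$" literally correct: one must check against the construction of \eqref{eq:lgp-nlp} that the $K$ phases and the switches at $t_1,\ldots,t_K$ of $\pi$ reappear verbatim in $\pi'$ (the trailing switch of $\pi$, at step $K$, becomes an interior switch of $\pi'$; no path constraint of $\pi$ is lost), and that moving from the continuous formulation to its finite-dimensional waypoint discretization preserves this inclusion (restriction becomes "keep the waypoints of the first $K$ phases"). None of this is deep; it is purely a matter of being explicit about how \eqref{eq:lgp-nlp} is assembled from a task plan, after which the result follows from determinism of $\operatorname{succ}$ plus monotonicity of feasibility under adding variables and constraints.
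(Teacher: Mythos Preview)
Your proof is correct and takes essentially the same approach as the paper: both argue by contradiction, taking a feasible trajectory for $\pi'$ and restricting it to $[0,KT]$ to obtain a feasible solution for $\textup{Trajectory-NLP}(\pi)$, using that the variables and constraints on the first $K$ phases coincide. You are more explicit than the paper about the determinism of $\operatorname{succ}$, the Factored-NLP supergraph language, and the discretization bookkeeping, but the core argument is identical.
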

\begin{proof}
  Recall that a sequence \( \pi = \langle a_1, \ldots, a_K \rangle \) of \( K \) actions is not geometrically feasible if the nonlinear optimization \( \textup{Trajectory-NLP}(\pi) \) (\cref{eq:lgp-nlp}) is infeasible.

  For the sake of contradiction, assume that, given an infeasible prefix \( \pi \), there exists a longer task plan \( \pi' = \langle \pi, a_{1:J}' \rangle \) that is feasible.
  If \( \pi' \) is feasible, then there exists a geometric path \( x(t), \; t \in [0,(K+J)T] \) that is feasible for \( \textup{Trajectory-NLP}(\langle \pi, a_{1:J}' \rangle) \).
  This would imply that \( x(t), \; t \in [0,KT] \) is also a feasible solution for \( \textup{Trajectory-NLP}(\pi) \), because variables and constraints in the time interval \( t \in [0, KT] \) are the same in \( \textup{Trajectory-NLP}(\langle \pi, a_{1:J}' \rangle) \) and \( \textup{Trajectory-NLP}(\pi) \).
  This is a contradiction because \( \pi \) is said to be infeasible.
  Therefore, \( \pi' = \langle \pi, a_{1:J}' \rangle \) cannot be feasible.
\end{proof}

For example, consider an LGP task in which a single robot \( Q \) can pick and place objects \( A \) and \( B \) on a cluttered table.
Suppose the starting sequence:
\begin{center}
  \( \langle \texttt{pick B with Q from B\_init}, \texttt{place B with Q on table} \rangle \),
\end{center}
is infeasible from the initial discrete state \([ \texttt{parent\_A=A\_init}, \texttt{parent\_B=B\_init}, \texttt{robot\_Q}=\texttt{free} ]\).
Then it is safe to infer that any action sequence beginning with this sequence is also infeasible, e.g., \( \langle \) \texttt{pick B with Q from B\_init}, \texttt{place B with Q on table}, \texttt{pick A with Q from A\_init} \(\rangle \).

However, it is \textit{not} safe to infer that \( \langle \texttt{pick B with Q from B\_init}, \texttt{place B with Q on table} \rangle \)
can never be applied.
For example, it is possible that object \( A \) in the initial position is obstructing the placement of object \( B \).
Thus, the sequence of actions:
\begin{center}
  \( \langle \) \texttt{pick A with Q from A\_init}, \texttt{place A with Q on table}, \texttt{pick B with Q from B\_init}, \texttt{place B with Q on table} \( \rangle \)
\end{center}
might be geometrically feasible.
In this example, \( \texttt{pick A} \) and \( \texttt{place B} \) form a causal link \cite{Tate77Generating}, as \( \texttt{pick A} \) supports \( \texttt{place B} \).
However, this causal link does not appear at the discrete level but only at the geometric level.
It is not possible to infer a stronger conflict than prefixes without a deeper analysis of geometric feasibility (\cref{ch:bid}).

Prefix forbidding is a general and sound way to encode information from the geometric level back into the discrete level.
It does not rely on hand-crafted additional predicates or checks and is therefore applicable to any sequence of actions, independently of the underlying physics or geometry model.

\subsection{Forbidding Plans by Prefixes}

\label{sec:forbid-plan-prefixes}

This section describes how to prevent a discrete planner from returning task plans which begin with a given set of prefixes, found to be geometrically infeasible in a previous iteration.

Our approach builds upon previous work \cite{katz18novel}, which has suggested a \textit{plan forbidding reformulation}, a method of constructing a planning task with a set of valid plans being reduced by precisely the given plan.
The suggested construction follows the execution of the given task plan \( \langle a_1 ,\ldots, a_K \rangle \) and allows one to achieve the (modified) goal only once an action different from \( a_k \) is applied at step \( k \).

We modify this construction in two ways.
First, instead of forbidding \( \langle a_1 ,\ldots ,a_K \rangle \) as a plan, we forbid it as a prefix.
Thus, applying the starting sequence \( \langle a_1,\ldots,a_K \rangle \) in the reformulated task leads to a dead end.
Consequently, there are no plans for the reformulation with the prefix \( \langle a_1,\ldots,a_K \rangle \).

Second, we simultaneously forbid multiple prefixes.
While this effect can be achieved by sequentially forbidding a single prefix, the simultaneous forbidding approach yields a much more compact compilation.

The key to the simultaneous forbidding approach is building a \textit{prefix tree} that contains all (non-dominated) prefixes.
A prefix \( \tilde{\pi} \) is dominated by prefix \( \pi \) if \( \pi \) is a prefix of \( \tilde{\pi} \); in this case, it is sufficient to forbid \( \pi \) and not \( \tilde{\pi} \).
We construct a tree \( T=(N,E) \) where each node corresponds to a prefix, and there is an edge from node \( \pi \) to node \( \pi' \) if we can add one action to \( \pi \) to yield \( \pi' \).
Given a set of prefixes, this tree can be efficiently constructed by adding the nodes from each prefix iteratively.
Given a prefix tree, Definition \ref{def:forbid} shows how to construct a planning task that forbids exactly these prefixes.

\begin{definition}
  \label{def:forbid}
  Let \( \ptaskParSTD \) be a planning task using the SAS+ encoding (\cref{sec:bg:pddl}), \( T=(N,E) \) be a prefix tree with \( L\subseteq N \) being the leaf nodes, and \( \ops(T) \) be the set of discrete actions that appear on the prefixes in \( T \).
  The task \( \ptask^{-}_{T} =
  \langle\vars',\ops',\initstate',\goal'\rangle \) is defined as follows:
  \begin{itemize}
    \item $\vars' = \vars \cup \{\extrav\} \cup \{\extrav_s \mid s\in N\}$, with
          $\extrav_s$ being binary variables and $\extrav$ being a ternary variable,
    \item $\ops' = \noplanop{\ops} \cup \planopdiscarded{\ops} \cup \planopdiscard{\ops} \cup \planopfollow{\ops}$, where \\
          $\noplanop{\ops} = \{ \noplanop{\op} \mid \op \in\ops\setminus\ops(T)\}$,
          $\planopdiscarded{\ops} = \{\planopdiscarded{\op} \mid\op\in\ops\}$,
          $\planopdiscard{\ops} = \{\planopdiscard{\op} \mid\op\in\ops(T)\}$, and
          $\planopfollow{\ops} = \{\planopfollow{\op_{(s,t)}} \mid (s,t)\in E\}$ with
          \[
            \begin{split}
              \noplanop{\op} &= \tuple{\pre(\op)\cup\{\tuple{\extrav,1}\}, \eff(\op)\cup \{\tuple{\extrav, 0} \}}\\
              \planopdiscarded{\op} &=
              \tuple{\pre(\op)\cup\{\tuple{\extrav,0}\},\eff(\op)}\\
              \planopdiscard{\op} &= \tuple{\pre(\op)\cup \{\tuple{\extrav,1}\} \cup \{
                \tuple{\extrav_{s},0} \mid (s,t)\in E^{\op} \},\\
                & \hspace*{3cm} \eff(\op)\cup \{\tuple{\extrav,0} \}}\\
              \planopfollow{\op_{(s,t)}} &= \tuple{\pre(\op_{(s,t)})\cup \{\tuple{\extrav,1},
                \tuple{\extrav_{s},1} \}, \\
                & \hspace*{1.5cm}\eff(\op_{(s,t)})\cup \{
                \tuple{\extrav_{s},0}, \tuple{\extrav_t, 1} \}} \mbox{ if } t\not\in L, \\
              \planopfollow{\op_{(s,t)}} &= \tuple{\pre(\op_{(s,t)})\cup \{\tuple{\extrav,1},
                \tuple{\extrav_{s},1} \}, \{\tuple{\extrav, 2} \}} \mbox{ if } t\in L, %
            \end{split}
          \]
    \item $\initstate'[\var] = \initstate[\var]$ for all
          $\var\in\vars$, $\initstate'[\extrav] = 1$, $\initstate'[\extrav_{\initstate}] = 1$,
          and $\initstate'[\extrav_s] = 0$ for all $s\in N\setminus \{\initstate\}$, and
    \item $\goal'[\var]\!
            =\!\goal[\var]$ for all $\var\!\in\!\vars$ s.t. $\goal[\var]$ defined, and $\goal'[\extrav]\! =\! 0$.
  \end{itemize}
\end{definition}

The proof of the correctness of the compilation is similar to the proof of Theorem 6 in \cite{katz18novel}.
The main difference between the two reformulations is in the application of \( \planopfollow{\op_{(s,t)}} \) for \( t \in L \), which leads to a dead-end state.

Finally, we remark that the conflicts we extract can be encoded as PDDL 3 trajectory constraints \cite{gerevini2009deterministic}.
These can be compiled away \cite{baier2006planning}, and the above-mentioned compilation is a special case of such a compilation.

\subsection{Feasibility Checking}
\label{sec:feas_check}

If a candidate task plan \( \pi = \langle a_1,\ldots,a_K \rangle \) is found to be geometrically feasible, then we have found a solution to our LGP task and can terminate.
Otherwise, we can return the full plan \( \langle a_1,\ldots,a_K \rangle \) as a conflict.
We will refer to doing this as \textit{lazy} conflict extraction.

Alternatively, we can also search for a stronger conflict, in the form of a shorter prefix of \( \pi \) that is not geometrically feasible, which we refer to as \textit{eager} conflict extraction.
\textit{Eager} conflict extraction searches for the strongest possible conflict we can extract from \( \pi \), that is, the shortest prefix \( \prefix{\pi}{k} \) which is geometrically infeasible, i.e.
\begin{equation}
  \min ~ k ~ \text{s.t.} ~ \text{Feas}(\prefix{\pi}{k}) = 0 ,
\end{equation}
where \( \text{Feas}(\pi) \) is a binary function that returns 1 if \( \text{Trajectory-NLP}(\pi) \) is feasible or 0 otherwise.
By Theorem \ref{thm:prefix}, \( \text{Feas}(\prefix{\pi}{k}) \ge \text{Feas}(\prefix{\pi}{k+1}) \).
Therefore, we can find the strongest conflict with a binary search for the length \( k \) of this prefix.

Initially, the lower bound \( l \) is initialized to \num{0}, and the upper bound \( u \) is initialized to \( K \).
The evaluation of \( \text{Feas}(\prefix{\pi}{m}) \), for the midpoint \( m = \lfloor \frac{l + u}{2} \rfloor \), corresponds to checking with the motion planner whether the prefix up to \( m \) is geometrically feasible.

As geometric feasibility checking is the most expensive computational action we perform, we cache every prefix we check and whether it is feasible or not.
This cache is helpful in speeding up feasibility checking, as different discrete plans might still share a common prefix.
Additionally, this cache serves as a dataset that captures the history of computational actions performed so far, which will be useful for (a) metareasoning about feasibility checking, and (b)
guiding our choice of which plan to check for feasibility next when we use diverse planning.
These are described in the subsequent sections.
Additionally, we can use the \textit{pose} and \textit{keyframes} bounds (\cref{eq:keyframesbound,eq:posebound}) to accelerate conflict extraction (see also \cref{sec:diverse:exp}).

\section{Metareasoning for Conflict Extraction}
\label{sec:metareasoning}

A middle-ground approach between \textit{lazy} conflict extraction (which does not perform any reasoning to extract conflicts) and \textit{eager} (which attempts to find the minimal conflict) is to use metareasoning.
Metareasoning \cite{Russell91Principles} can be used to balance the cost (the computational effort spent on extracting a conflict) and the reward (the benefits from having a stronger conflict).
As the most expensive operation in our algorithm is geometric feasibility checking, we measure both the reward and the cost in terms of the number of geometric feasibility checks -- either required to extract the conflict or saved by having the conflict.

We now describe the metareasoning problem we face in deciding when to stop looking for a conflict, and the overall utility we can expect to obtain.
Let \( \tau = \langle a_1,\ldots,a_k \rangle \) be some sequence of actions.
We will denote by \( r(\tau) \) the reward from adding the conflict \( \tau \).
Of course, this is an unknown quantity, and we will describe ways to estimate it later.
Recall that during the binary search for a minimal conflict, we have a discrete plan \( \pi \), and a range \( [l,u] \) such that \( \prefix{\pi}{u} \) is not geometrically feasible, while \( \prefix{\pi}{l} \) is.
Thus, we can define the metareasoning problem for a given plan \( \pi \) of length \( |\pi| \) as a Markov Decision Process (MDP) \cite{Bellman:DynamicProgramming} with states \( S_\pi = \{ \langle l, u \rangle \mid l \leq u = 0,\ldots,|\pi| \} \) -- that is, each state describes the current range of the search.

The terminal states are those where the search has converged, that is, \( \{ \langle l, l \rangle \mid l = 0,\ldots,|\pi| \} \).
The reward in state \( \langle l, l \rangle \) is the reward from adding the conflict \( \prefix{\pi}{l} \), that is, \( r( \prefix{\pi}{l} ) \).
The reward from all other states is 0.
As we are sure to reach a terminal state, there is no need to introduce a discount factor (that is, \( \gamma=1 \)).

The possible actions at state \( \langle l, u \rangle \) are either to stop searching or continue searching.
The decision to stop searching yields a deterministic transition to the state \( \langle u, u \rangle \) -- that is, we terminate and add the conflict \( \prefix{\pi}{u} \), obtaining reward \( r(\prefix{\pi}{u}) \).

Although the binary search always continues the search by checking the middle node (\( \lfloor (l+u)/2 \rfloor \)), using the metareasoning MDP allows us to consider any of the nodes between \( l \) and \( u \) as the next node to check.
Thus, we have \( u-l+1 \) possible actions -- one for each node in the range.

Let us denote the probability of a sequence of actions \( \tau \) being geometrically feasible by \( p_f(\tau) \).
Then by continuing the search to node \( m \) (representing \( \prefix{\pi}{m} \)), we reach the state \( \langle m, u \rangle \) with probability \( p_f(\prefix{\pi}{m}) \), and state \( \langle l, m \rangle \) with probability \( 1-p_f(\prefix{\pi}{m}) \).

Due to the structure of this MDP, which lacks any loops, we can compute the optimal values using simple dynamic programming, starting with the terminal states, and computing the optimal value function for states with an increasing gap between the lower and upper bounds -- that is, we compute the value for states \( \{ \langle l, l+1 \rangle \mid l = 1,\ldots,|\pi|-1 \} \), then for \( \{ \langle l, l+2 \rangle \mid l = 1 ,\ldots, |\pi|-2 \} \), and so on.

Of course,
we still do not know the exact rewards or transition probabilities.
In the following, we describe a data-driven method to estimate these, which allows us to compute an optimal policy for an approximate MDP.

\paragraph{Data-driven estimation}

Although computing \( r(\tau) \) exactly is not tractable, it should be commensurate with the number of discrete plans that would be pruned by introducing the conflict \( \tau \).
While we do not know this number, we can estimate the fraction of plans that would be pruned by conflict \( \tau \) by the fraction of plans we have discovered that have \( \tau \) as a prefix.
Thus, we can define the estimator,
\begin{equation}
  \hat{r}(\tau) := \frac{|\{ \pi' \mid \pi' \in C \text{~and~} \tau \text{ is a prefix of } \pi' \}|}{|C|},
\end{equation}
where \( C \) is the set of prefixes in our cache.
As the number of matching prefixes in the numerator might be 0 (especially early on in the process), we actually add 1 to both the numerator and the denominator.

The probability of a prefix being feasible or not can also be estimated from the history of previous feasibility checks.
Recall that we cache every prefix we check for feasibility.
We use this cache to estimate \( p_f \).
We follow a type system approach \cite{Lelis13Active} and define a set of simple features for each prefix.
Specifically, we use the length of the prefix as its only feature, and
keep track of how many feasibility checks were performed for each prefix length, and how many of these turned out to be feasible -- the ratio between these is our estimate of \( p_f \), denoted \( \hat{p_f} \).
Combining \( \hat{r} \) and \( \hat{p_f} \), we can define our MDP.
As our empirical results will show, this approach results in a reduction of the runtime of our solver.

\section{Diversity Criteria and Complete Algorithm}
\label{sec:diverse}

\begin{figure*}[!t]
  \centering
  \begin{minipage}{.8\textwidth}
    \centering
    \begin{algorithm}[H]
      \caption{Pseudocode for \textit{Diverse Planning for LGP.}}\label{alg:pseudocode}
      \begin{algorithmic}[1]
        \State Input: LGP problem $\Pi_{\text{LGP}}$
        \State Parameters: $N$ \Comment{{\color{gray} \small Number of plans to generate at each iteration.}}

        \State $\Pi := $ discrete component of $\Pi_{\text{LGP}}$ \Comment{{\color{gray} \small Discrete component of an LGP, encoded in SAS+}}
        \State $T:=\emptyset$ \Comment{{\color{gray} \small Set of tried plans}}
        \State $LP:=\emptyset$ \Comment{{\color{gray} \small Set of found discrete plans}}
        \State $MC:=\emptyset$ \Comment{{\color{gray} \small Set of found conflicts}}

        \While{not solved}
        \State $\Pi^f := \mbox{FORBID}(\Pi, LP \cup MC)$ \Comment{{\color{gray} \small Forbid found plans and conflicts (\cref{sec:forbid-plan-prefixes})}}
        \State $LP := LP \cup \mbox{Diverse-Plan}(\Pi^f, N)$ \Comment{{\color{gray} \small Call a diverse planner to find $N$ new plans}}
        \State $\pi := \mbox{SELECT}(LP, T)$ \Comment{{\color{gray}\small Select a plan to try (\cref{sec:diverse})}}
        \State $\mbox{feasible}, \mbox{traj}:= \mbox{MOTION-Feasible?
          }(\Pi_{\text{LGP}}, \pi)$ \Comment{{\color{gray} \small Check task plan $\pi$ for geometric feasibility}}
        \If{\mbox{feasible}}
        \Return $\pi,\mbox{traj}$ \Comment{{\color{gray} \small Return trajectory and discrete plan}}
        \Else
        \State $T := T \cup \{\pi\}$
        \State $\mbox{conflict} := \mbox{FIND-CONFLICT}(\pi)$  \Comment{{\color{gray} \small Find a prefix of $\pi$ that is infeasible (\cref{sec:feas_check,sec:metareasoning})}}
        \State $MC := MC \cup \{ \mbox{conflict} \}$
        \EndIf
        \EndWhile
      \end{algorithmic}
    \end{algorithm}
  \end{minipage}
\end{figure*}

To explore candidate task plans more rapidly, we use a diverse planning approach \cite{katz2018novel}.
The key idea here is to generate multiple plans at each iteration, and then choose one of them for geometric feasibility checking.

Generating a set of plans is done by applying the forbidding compilation (Definition \ref{def:forbid}) iteratively, as done in previous diverse planning approaches \cite{katz2018novel}.

The main question here is how to choose which plan to test next.
Our approach is driven by prefixes, so it makes sense to choose a plan that has the longest novel prefix, as even if that plan fails, there is a higher chance that we will extract a short conflict from it.
Furthermore, choosing a plan with a novel prefix encourages our approach to explore the space of discrete plans, thereby covering diverse high-level approaches to the task that imply different nonlinear programs for the continuous trajectory.

Thus, we define the novelty of a plan \( \pi \) with respect to a set of plans \( LP \) as,
\begin{equation}
  np(\pi, LP) := - \min \{ k \in \mathbb{N} \mid \forall \pi' \in LP, \prefix{\pi'}{k} \neq \prefix{\pi}{k} \}.
\end{equation}

We then choose to test the plan \( \pi \) which maximizes the novelty with respect to the set of plans that were already tested for geometric feasibility, breaking ties randomly.
We remark that this notion of novelty is different from previous ones, e.g.,
\cite{Lipovetzky21Width,
  tuisov2021fewer}, and serves as a greedy selection criterion for choosing the next plan.

To summarize, \cref{alg:pseudocode} describes our complete solver in pseudocode, and \cref{fig:example} shows an illustrative example of the execution of our algorithm in a simplified setting.
We can now state the theorem proving that our approach is sound and complete.

\begin{figure}[t]
  \centering
  \includegraphics[width=.3\linewidth]{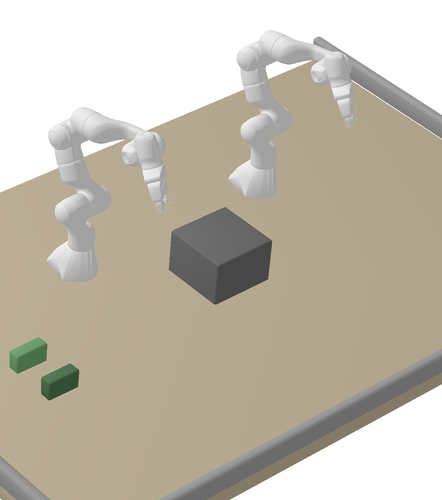} \\[20pt]
  \begin{tikzpicture}[node distance=40pt,
      every node/.style={fill=white, font=\small}, align=center]
    \node (s0)     [processX]          { (A \textbf{on}
      A\_init) (B \textbf{on} B\_init) (\textbf{Free} Q) (\textbf{Free} W)};
    \node (a11) [processR, below of=s0, xshift=-3.5cm] { \textbf{pick} A W A\_init};
    \node (a12) [processX, below of=a11] { \textbf{place} A W T};
    \node (a13) [processX, below of=a12] { \textbf{pick} B W B\_init};
    \node (a14) [processX, below of=a13] { \textbf{place} B W A };
    \node (o1) [number, below of=a14, yshift=.2cm] {1};
    \node (a21) [processR, right of=a11, xshift=1.5cm] { \textbf{pick} B W B\_init};
    \node (a22) [processX, below of=a21] { \textbf{place} B W A};
    \node (a23) [processX, below of=a22] { \textbf{pick} A W A\_init };
    \node (a24) [processX, below of=a23] { \textbf{place} A W T };
    \node (o2) [number, below of=a24, yshift=.2cm] {2};
    \node (a31) [processX, right of=a21, xshift=1.5cm] { \textbf{pick} A Q A\_init};
    \node (a32) [processX, below of=a31] { \textbf{place} A Q T};
    \node (a33) [processR, below of=a32] { \textbf{pick} B W B\_init };
    \node (a34) [processX, below of=a33] { \textbf{place} B W A };
    \node (o3) [number, below of=a34,yshift=.2cm] {3};
    \node (a43) [processX, right of=a33, xshift=1.5cm ] { \textbf{pick} B Q B\_init };
    \node (a44) [processG, below of=a43] { \textbf{place} B Q A };
    \node (o4) [number, below of=a44,yshift=.2cm] {4};
    \draw[->] (s0) -- (a11);
    \draw[->] (a11) -- (a12);
    \draw[->] (a12) -- (a13);
    \draw[->] (a13) -- (a14);
    \draw[->] (s0) -- (a21);
    \draw[->] (a21) -- (a22);
    \draw[->] (a22) -- (a23);
    \draw[->] (a23) -- (a24);
    \draw[->] (s0) -- (a31);
    \draw[->] (a31) -- (a32);
    \draw[->] (a32) -- (a33);
    \draw[->] (a33) -- (a34);
    \draw[->] (a43) -- (a44);
    \draw[->] (a32) -- (a43);
    \draw[->] (a43) -- (a44);
  \end{tikzpicture}
  \caption{Illustrative example of the execution of our algorithm (with \( N=1 \) and \textit{eager} conflict extraction).
    The scene contains two movable objects, \textit{A} and \textit{B}, a table, \textit{T}, and two robots, \textit{Q} and \textit{W}, that can \texttt{pick} and \texttt{place} the objects.
    The goal is to stack the blocks on the table: \texttt{(A on T)} and \texttt{(B on A)}.
    In each iteration, the task planner has produced a task plan (\num{1}, \num{2}, \num{3}, and \num{4} in this order) that has been tested for feasibility.
    The motion planner returned the minimal prefix of infeasible discrete actions (highlighted in red), which is used to reformulate the planning task for the subsequent iterations.
    Plan number \num{4} (in green) is geometrically feasible.
  }
  \label{fig:example}
\end{figure}

\begin{theorem}
  If the underlying task planner is sound and complete, and the motion planner always finds a feasible trajectory for problems \eqref{eq:lgp-nlp} if such a trajectory exists, then Algorithm \ref{alg:pseudocode} is sound and complete.
\end{theorem}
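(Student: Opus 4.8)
The plan is to prove soundness and completeness separately, with soundness being essentially immediate and completeness being the part that needs care. For \emph{soundness}, observe that Algorithm~\ref{alg:pseudocode} returns a pair $(\pi,\mathrm{traj})$ only on line where \textsc{Motion-Feasible?} has just certified that $\pi$ is a valid discrete plan (it was produced by a sound task planner applied to a reformulation $\Pi^f$ whose plans are a subset of the plans of $\Pi$) and that $\mathrm{traj}$ is a feasible solution of $\textup{Trajectory-NLP}(\pi)$. By the definition of a solution to an LGP (\cref{eq:lgp}), a discrete plan reaching a goal state together with a feasible trajectory for the induced $\textup{Trajectory-NLP}$ is exactly a solution, so any returned answer is correct. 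I would also note that the forbidding reformulation of Definition~\ref{def:forbid} only ever \emph{removes} plans (those with a forbidden prefix) and never adds spurious ones — this is the correctness of the compilation, stated to follow as in Theorem~6 of \cite{katz18novel} — so every $\pi$ the planner hands back is genuinely a discrete plan of the original task.

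For \emph{completeness}, suppose the LGP has a solution; then there is at least one geometrically feasible discrete plan. I would argue the algorithm cannot loop forever without returning one. The key invariants maintained across iterations are: (i) every conflict placed into $MC$ is, by \cref{thm:prefix}, a prefix that is infeasible, hence forbidding it removes only plans that are themselves infeasible; (ii) every full plan placed into $LP$ and added to the forbidden set is one that has already been tested and found infeasible (or, once a feasible one appears, we have already returned). Therefore no geometrically feasible discrete plan is ever forbidden, so on every iteration the reformulated task $\Pi^f$ still admits all feasible plans of $\Pi$. Since the underlying diverse planner is sound and complete, as long as $\Pi^f$ has any plan it will produce $N$ of them, and \textsc{Select} picks one to test. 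The remaining point is a termination/progress argument: each iteration either returns a solution, or adds at least one new infeasible plan to $LP$ (and possibly a new conflict to $MC$), strictly shrinking the set of discrete plans of $\Pi^f$ that have not yet been forbidden. Because a classical planning task has only finitely many plans of any bounded length, and — crucially — a solution of bounded length $K^\ast$ exists, after finitely many iterations the only plans left unforbidden of length $\le K^\ast$ must include a feasible one (or the planner would be forced to return a still-untested feasible plan, which the motion planner, assumed to find a feasible trajectory whenever one exists, then certifies).

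The main obstacle I anticipate is making the termination argument airtight when the planning task has infinitely many plans (arbitrarily long ones, e.g.\ via loops of pick/place actions). Forbidding a prefix kills infinitely many plans at once, which is what saves us, but one has to phrase the progress measure correctly: I would restrict attention to the finite set of discrete plans of length at most $K^\ast$, where $K^\ast$ is the length of some fixed LGP solution, and show that (a) this finite set is never entirely exhausted by forbidding, since a feasible plan in it is never forbidden, and (b) each iteration that does not return either forbids a new plan from this set or is irrelevant to it — and the "irrelevant" case cannot recur indefinitely because the diverse planner, being complete, will eventually enumerate a plan of length $\le K^\ast$ among its $N$ outputs once all shorter distinct plan-families have been pruned. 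A clean way to close this is to invoke that iterative plan-forbidding with a complete planner is itself a complete enumeration procedure over the (finite, up to prefix-domination) set of distinct plans, a property inherited from \cite{katz2018novel}; then soundness of conflict extraction (\cref{thm:prefix}) guarantees this enumeration never skips the feasible witness. A secondary, more routine obstacle is checking that the \textsc{Find-Conflict} step (binary search, lazy, eager, or metareasoning variants) always returns a genuine prefix of $\pi$ that is infeasible — this is immediate from \cref{thm:prefix} and the monotonicity $\mathrm{Feas}(\prefix{\pi}{k}) \ge \mathrm{Feas}(\prefix{\pi}{k+1})$, so it contributes nothing deep to the argument.
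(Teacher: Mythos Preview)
Your proposal is correct and follows essentially the same approach as the paper, which gives only a two-line proof citing \cref{thm:prefix} (only genuinely infeasible prefixes are ever forbidden) and the correctness of the forbidding compilation in Definition~\ref{def:forbid}. You unpack these ingredients into explicit soundness and completeness arguments and, commendably, flag the termination subtlety in the presence of infinitely many discrete plans---a point the paper's proof leaves entirely implicit.
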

\begin{proof}
  The proof follows from the fact that we only identify prefixes which cannot appear at the beginning of geometrically feasible plans (\cref{thm:prefix}), and from the correctness of the forbidding compilation (\cref{def:forbid}).
\end{proof}

An important technical point is that some planners perform a relevance analysis and discard actions or state variables which they consider to be useless or redundant.
For example, two actions might have the same discrete effects, and thus the planner might decide to keep only one of them.
However, these actions might lead to different geometric constraints, and it may be the case that one of them is feasible while the other is not.
Thus, such preprocessing techniques must be disabled when solving the discrete planning task.

\clearpage

\section{Empirical Evaluation}
\label{sec:diverse:exp}

\begin{figure}[t]
  \centering
  \includegraphics[width=.3\linewidth]{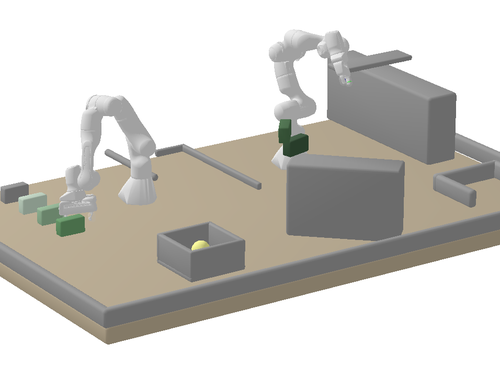}
  \includegraphics[width=.3\linewidth]{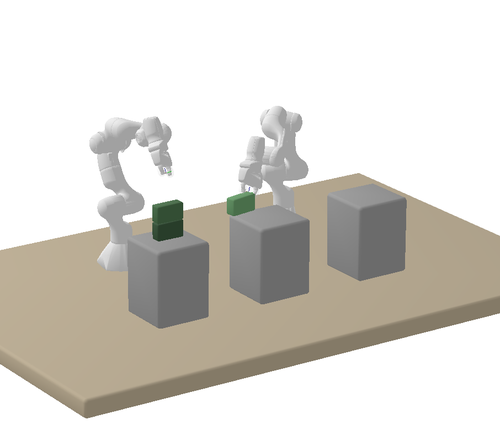}
  \includegraphics[width=.28\linewidth]{pics_icaps21/png/crop2/0020.ppm.png.c.png}
  \caption{Three domains used to evaluate our algorithm.
    From left to right: \textit{Blocks}, \textit{Hanoi} and \textit{Push}.
  }
  \label{fig:three-scenarios}
\end{figure}

\subsection{Benchmarks}

We use three different domains, all with two 7-DOF robotic arms (\cref{fig:three-scenarios}).
\begin{description}
  \item[Blocks] The robots can execute pick and place actions to construct a specified tower of blocks, similarly to the classical Blocksworld domain -- except that the planner must also come up with motion plans.
    Robots can hold a stack of blocks, move the boxes, and place several objects on top of other objects.
    See \cref{fig:blocksAndHanoi}.
  \item[Hanoi] The robots can execute pick and place actions to solve a Tower of Hanoi problem with objects of equal size and three tables.
    Only the top object of each tower can be picked, and at most one tower is allowed on each table.
    From a logical point of view, this is more challenging than \textit{Blocks} and requires longer action sequences, but the instances we use have fewer movable objects in the scene.
    See \cref{fig:blocksAndHanoi}.
  \item[Push] The robots can pick and place blocks and balls, pick up sticks, and use them as tools to push balls.
    The goal is to move balls and blocks to a desired discrete state, for example, stacking blocks and placing the ball on top.
    See \cref{fig:push}.
\end{description}
For each domain, we generate different problems (e.g., \textit{Blocks-\{0,1,2,3,4,5\}}) by modifying the goal and the number of objects, increasing complexity at both the discrete and geometric levels.
Our benchmark \footnote{Project website: \url{https://quimortiz.github.io/ConflictPlanningLGP/}} contains six problems in the domain \textit{Blocks}, three in \textit{Hanoi}, and eleven in \textit{Push}.

\subsection{Baselines}

We compare our new approach, \textit{Diverse Planning for LGP}, against three variations of Multi-Bound Tree Search: \textit{MBTS-\{0,1,2\}} (See \cref{sec:multibound}).

\textit{MBTS-0} does not perform geometric checks on intermediate discrete nodes; that is, it waits until a full candidate task plan is found before solving all bounds and the complete trajectory optimization problem. \textit{MBTS-1} and \textit{MBTS-2} check the \emph{pose} and \emph{keyframes} bounds (\cref{eq:keyframesbound,eq:posebound}) respectively, before expanding a discrete node in the breadth-first search.

Geometric checks during node expansion in \textit{MBTS-1} and \textit{MBTS-2} prune partial plans that are infeasible.
This reduces the branching factor of the search and subsequent node expansions but increases the computational time spent on solving NLPs for action sequences that do not lead to the goal.

\begin{sidewaystable}[ph!]
  \small
  \centering
  \begin{tabular}{lrrrrrrrrrrrr}
    \toprule
    {}       & \multicolumn{3}{c}{MBTS-0}                   & \multicolumn{3}{c}{N=1 \textit{Eager}} & \multicolumn{3}{c}{N=4 \textit{Eager}} & \multicolumn{3}{c}{N=4 \textit{Meta}}                                                                                                                                                                                                                                                                                                                                    \\
    \cmidrule(lr){2-4}
    \cmidrule(lr){5-7}
    \cmidrule(lr){8-10}
    \cmidrule(lr){11-13}
    {}       & {time}                                       & {pose}                                 & {key}                                  & {time}                                       & {pose}                             & {key}                              & {time}                                       & {pose}                             & {key}                              & {time}                                        & {pose}                             & {key}                             \\
    \midrule
    Blocks-0 & \bfseries 19.4{\scriptsize \color{gray} 1.0} & 12.0{\scriptsize \color{gray} 0.0}     & 3.0{\scriptsize \color{gray} 0.0}      & 43.7{\scriptsize \color{gray} 1.8}           & 19.9{\scriptsize \color{gray} 0.9} & 6.5{\scriptsize \color{gray} 0.5}  & 41.8{\scriptsize \color{gray} 4.3}           & 17.5{\scriptsize \color{gray} 1.9} & 6.5{\scriptsize \color{gray} 0.9}  & 41.3{\scriptsize \color{gray} 4.4}            & 17.5{\scriptsize \color{gray} 1.9} & 2.8{\scriptsize \color{gray} 0.6} \\
    Blocks-1 & -                                            & -                                      & -                                      & 44.8{\scriptsize \color{gray} 1.3}           & 18.0{\scriptsize \color{gray} 0.0} & 5.0{\scriptsize \color{gray} 0.0}  & \bfseries 44.0{\scriptsize \color{gray} 5.6} & 17.1{\scriptsize \color{gray} 2.3} & 4.8{\scriptsize \color{gray} 0.9}  & 46.5{\scriptsize \color{gray} 6.4}            & 17.1{\scriptsize \color{gray} 2.3} & 1.7{\scriptsize \color{gray} 0.3} \\
    Blocks-2 & -                                            & -                                      & -                                      & 82.6{\scriptsize \color{gray} 10.5}          & 17.0{\scriptsize \color{gray} 0.0} & 4.0{\scriptsize \color{gray} 0.0}  & \bfseries 60.4{\scriptsize \color{gray} 8.6} & 12.6{\scriptsize \color{gray} 1.1} & 2.4{\scriptsize \color{gray} 0.5}  & 70.9{\scriptsize \color{gray} 11.8}           & 12.6{\scriptsize \color{gray} 1.1} & 1.4{\scriptsize \color{gray} 0.2} \\
    Blocks-3 & -                                            & -                                      & -                                      & 111{\scriptsize \color{gray} 5.8}            & 17.0{\scriptsize \color{gray} 1.0} & 3.4{\scriptsize \color{gray} 0.4}  & 104{\scriptsize \color{gray} 19.7}           & 21.7{\scriptsize \color{gray} 2.8} & 5.4{\scriptsize \color{gray} 1.1}  & \bfseries 80.2{\scriptsize \color{gray} 12.0} & 20.8{\scriptsize \color{gray} 3.1} & 2.1{\scriptsize \color{gray} 0.4} \\
    Blocks-4 & -                                            & -                                      & -                                      & 200{\scriptsize \color{gray} 33.1}           & 27.1{\scriptsize \color{gray} 8.2} & 6.1{\scriptsize \color{gray} 2.8}  & 160{\scriptsize \color{gray} 17.0}           & 19.3{\scriptsize \color{gray} 3.1} & 3.3{\scriptsize \color{gray} 1.1}  & \bfseries 139{\scriptsize \color{gray} 22.6}  & 17.8{\scriptsize \color{gray} 2.7} & 1.3{\scriptsize \color{gray} 0.2} \\
    Hanoi-0  & 10.4{\scriptsize \color{gray} 0.4}           & 13.0{\scriptsize \color{gray} 0.0}     & 4.0{\scriptsize \color{gray} 0.0}      & \bfseries 7.0{\scriptsize \color{gray} 0.2}  & 7.0{\scriptsize \color{gray} 0.0}  & 3.0{\scriptsize \color{gray} 0.0}  & 10.0{\scriptsize \color{gray} 1.9}           & 8.0{\scriptsize \color{gray} 0.9}  & 3.7{\scriptsize \color{gray} 0.8}  & 9.1{\scriptsize \color{gray} 1.8}             & 8.6{\scriptsize \color{gray} 1.1}  & 2.9{\scriptsize \color{gray} 0.5} \\
    Hanoi-1  & 34.7{\scriptsize \color{gray} 0.7}           & 34.0{\scriptsize \color{gray} 0.0}     & 6.0{\scriptsize \color{gray} 0.0}      & 27.0{\scriptsize \color{gray} 0.6}           & 17.0{\scriptsize \color{gray} 0.0} & 8.0{\scriptsize \color{gray} 0.0}  & 18.7{\scriptsize \color{gray} 3.1}           & 13.5{\scriptsize \color{gray} 1.0} & 5.1{\scriptsize \color{gray} 0.6}  & \bfseries 13.8{\scriptsize \color{gray} 2.2}  & 14.0{\scriptsize \color{gray} 1.0} & 3.4{\scriptsize \color{gray} 0.4} \\
    Push-1   & 41.9{\scriptsize \color{gray} 0.8}           & 55.8{\scriptsize \color{gray} 0.2}     & 1.0{\scriptsize \color{gray} 0.0}      & \bfseries 17.1{\scriptsize \color{gray} 0.4} & 14.0{\scriptsize \color{gray} 0.0} & 4.0{\scriptsize \color{gray} 0.0}  & 24.4{\scriptsize \color{gray} 1.4}           & 17.3{\scriptsize \color{gray} 1.1} & 5.3{\scriptsize \color{gray} 0.5}  & 24.9{\scriptsize \color{gray} 1.7}            & 18.7{\scriptsize \color{gray} 1.2} & 3.8{\scriptsize \color{gray} 0.4} \\
    Push-2   & 50.0{\scriptsize \color{gray} 1.0}           & 64.0{\scriptsize \color{gray} 0.0}     & 1.0{\scriptsize \color{gray} 0.0}      & 49.5{\scriptsize \color{gray} 0.9}           & 37.0{\scriptsize \color{gray} 0.0} & 13.2{\scriptsize \color{gray} 0.1} & 37.1{\scriptsize \color{gray} 1.1}           & 23.2{\scriptsize \color{gray} 0.9} & 7.2{\scriptsize \color{gray} 0.4}  & \bfseries 34.3{\scriptsize \color{gray} 1.6}  & 24.3{\scriptsize \color{gray} 0.8} & 3.2{\scriptsize \color{gray} 0.2} \\
    Push-3   & 27.7{\scriptsize \color{gray} 0.9}           & 38.0{\scriptsize \color{gray} 0.0}     & 1.0{\scriptsize \color{gray} 0.0}      & \bfseries 14.4{\scriptsize \color{gray} 0.2} & 11.0{\scriptsize \color{gray} 0.0} & 3.0{\scriptsize \color{gray} 0.0}  & 26.1{\scriptsize \color{gray} 3.2}           & 17.9{\scriptsize \color{gray} 2.0} & 5.8{\scriptsize \color{gray} 0.9}  & 21.1{\scriptsize \color{gray} 1.8}            & 17.3{\scriptsize \color{gray} 1.8} & 2.9{\scriptsize \color{gray} 0.3} \\
    Push-4   & 75.9{\scriptsize \color{gray} 1.5}           & 104{\scriptsize \color{gray} 0.0}      & 2.0{\scriptsize \color{gray} 0.0}      & 71.3{\scriptsize \color{gray} 7.8}           & 41.2{\scriptsize \color{gray} 2.8} & 15.2{\scriptsize \color{gray} 1.5} & 32.6{\scriptsize \color{gray} 4.1}           & 20.3{\scriptsize \color{gray} 2.2} & 5.9{\scriptsize \color{gray} 1.0}  & \bfseries 30.6{\scriptsize \color{gray} 2.7}  & 21.3{\scriptsize \color{gray} 2.4} & 3.1{\scriptsize \color{gray} 0.4} \\
    Push-5   & 111{\scriptsize \color{gray} 1.6}            & 144{\scriptsize \color{gray} 0.1}      & 1.0{\scriptsize \color{gray} 0.0}      & \bfseries 20.4{\scriptsize \color{gray} 0.3} & 17.0{\scriptsize \color{gray} 0.0} & 5.0{\scriptsize \color{gray} 0.0}  & 30.8{\scriptsize \color{gray} 2.5}           & 23.7{\scriptsize \color{gray} 2.2} & 7.4{\scriptsize \color{gray} 0.9}  & 29.4{\scriptsize \color{gray} 2.4}            & 24.4{\scriptsize \color{gray} 2.3} & 3.2{\scriptsize \color{gray} 0.4} \\
    Push-6   & 117{\scriptsize \color{gray} 1.5}            & 142{\scriptsize \color{gray} 0.0}      & 1.0{\scriptsize \color{gray} 0.0}      & 64.5{\scriptsize \color{gray} 1.2}           & 50.0{\scriptsize \color{gray} 0.0} & 17.1{\scriptsize \color{gray} 0.1} & 45.4{\scriptsize \color{gray} 1.0}           & 29.2{\scriptsize \color{gray} 0.7} & 9.2{\scriptsize \color{gray} 0.4}  & \bfseries 45.1{\scriptsize \color{gray} 1.2}  & 31.8{\scriptsize \color{gray} 1.2} & 4.6{\scriptsize \color{gray} 0.3} \\
    Push-7   & -                                            & -                                      & -                                      & \bfseries 68.6{\scriptsize \color{gray} 4.6} & 51.3{\scriptsize \color{gray} 3.5} & 19.0{\scriptsize \color{gray} 1.6} & 79.1{\scriptsize \color{gray} 5.4}           & 52.6{\scriptsize \color{gray} 4.0} & 18.0{\scriptsize \color{gray} 1.6} & 70.4{\scriptsize \color{gray} 2.6}            & 53.0{\scriptsize \color{gray} 2.7} & 7.4{\scriptsize \color{gray} 0.5} \\
    Push-8   & 78.3{\scriptsize \color{gray} 1.2}           & 92.0{\scriptsize \color{gray} 0.0}     & 1.0{\scriptsize \color{gray} 0.0}      & \bfseries 17.0{\scriptsize \color{gray} 0.4} & 13.0{\scriptsize \color{gray} 0.0} & 3.0{\scriptsize \color{gray} 0.0}  & 32.4{\scriptsize \color{gray} 3.6}           & 26.0{\scriptsize \color{gray} 3.0} & 7.8{\scriptsize \color{gray} 1.1}  & 32.8{\scriptsize \color{gray} 3.7}            & 28.2{\scriptsize \color{gray} 3.6} & 4.1{\scriptsize \color{gray} 0.6} \\
    Push-9   & 248{\scriptsize \color{gray} 40.1}           & 423{\scriptsize \color{gray} 67.2}     & 2.5{\scriptsize \color{gray} 0.5}      & 63.3{\scriptsize \color{gray} 1.3}           & 45.0{\scriptsize \color{gray} 0.0} & 16.0{\scriptsize \color{gray} 0.0} & \bfseries 46.2{\scriptsize \color{gray} 6.2} & 32.5{\scriptsize \color{gray} 3.7} & 10.4{\scriptsize \color{gray} 1.4} & 49.8{\scriptsize \color{gray} 11.9}           & 39.7{\scriptsize \color{gray} 9.5} & 5.3{\scriptsize \color{gray} 1.7} \\
    Push-10  & \bfseries 12.7{\scriptsize \color{gray} 0.5} & 16.0{\scriptsize \color{gray} 0.0}     & 1.0{\scriptsize \color{gray} 0.0}      & 12.8{\scriptsize \color{gray} 0.5}           & 9.0{\scriptsize \color{gray} 0.0}  & 3.0{\scriptsize \color{gray} 0.0}  & 13.8{\scriptsize \color{gray} 1.6}           & 10.4{\scriptsize \color{gray} 1.3} & 3.3{\scriptsize \color{gray} 0.6}  & 13.2{\scriptsize \color{gray} 1.4}            & 10.5{\scriptsize \color{gray} 1.3} & 1.6{\scriptsize \color{gray} 0.2} \\
    Push-11  & -                                            & -                                      & -                                      & 61.1{\scriptsize \color{gray} 9.3}           & 25.5{\scriptsize \color{gray} 2.3} & 10.7{\scriptsize \color{gray} 1.4} & \bfseries 26.0{\scriptsize \color{gray} 2.0} & 13.4{\scriptsize \color{gray} 0.5} & 3.8{\scriptsize \color{gray} 0.4}  & 30.5{\scriptsize \color{gray} 5.4}            & 16.7{\scriptsize \color{gray} 2.5} & 2.6{\scriptsize \color{gray} 0.6} \\
    \midrule
    Total    & 827                                          & 1138                                   & 24.5                                   & 976                                          & 437                                & 145                                & 833                                          & 376                                & 115                                & 783                                           & 394                                & 57.4                              \\
    \bottomrule
  \end{tabular}
  \caption{Summary of the experimental results.
    We report the computational time in seconds (\textit{time}), and the number of calls to the motion planner for checking the pose bound (\textit{pose}) and the keyframes bound (\textit{key}),
    with the mean over \num{10} randomized runs in black and the standard deviation of the mean estimator in grey.
    ``Total'' is the sum of the columns (note that the sum for \textit{MBTS-0} is over fewer problems).
    A dash ``–'' denotes that the problem was not solved in all \num{10} runs.
  }
  \label{tab:theTable}
\end{sidewaystable}

\begin{figure}[!t]
  \centering

  \includegraphics[width=.18\linewidth]{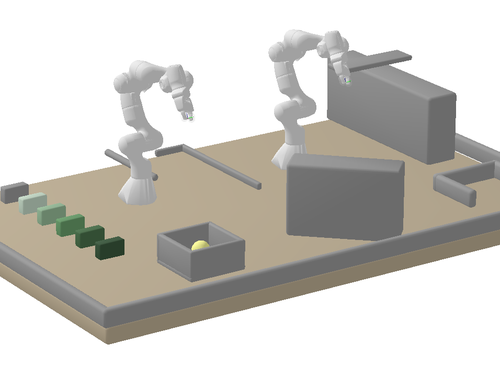}
  \includegraphics[width=.18\linewidth]{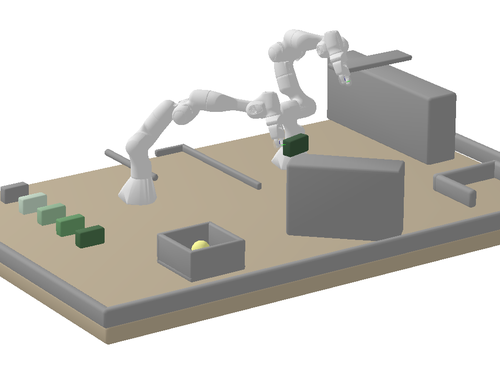}
  \includegraphics[width=.18\linewidth]{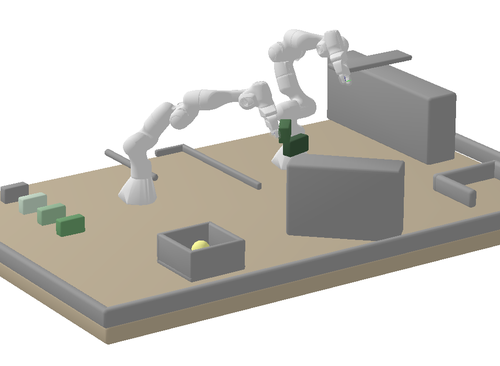}
  \includegraphics[width=.18\linewidth]{pics_icaps21/4png/crop/0050.ppm.png}
  \includegraphics[width=.18\linewidth]{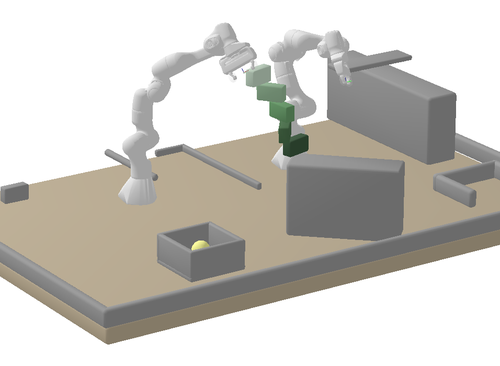}

  \vspace{1em}

  \includegraphics[width=.18\linewidth]{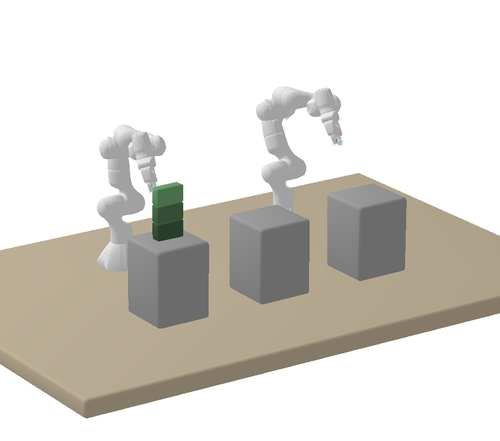}
  \includegraphics[width=.18\linewidth]{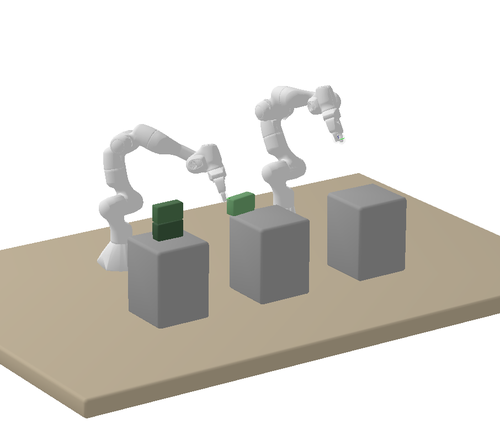}
  \includegraphics[width=.18\linewidth]{pics_pddl_divers_hanoiV/0030.crop.png}
  \includegraphics[width=.18\linewidth]{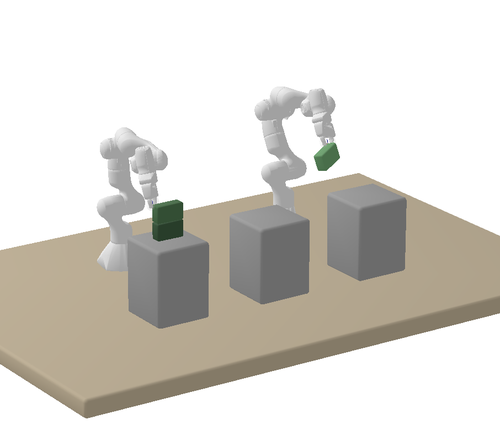}
  \includegraphics[width=.18\linewidth]{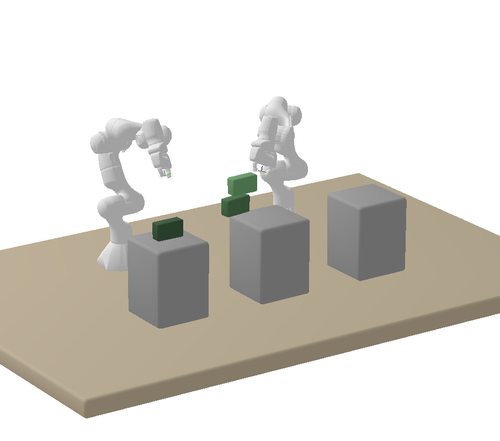}  \\

  \caption{Examples of solutions to the problems in the \textit{Blocks} (top) and \textit{Hanoi} (bottom) domains.
    A solution in the \textit{Push} domain is shown in \cref{fig:push}.
  }
  \label{fig:blocksAndHanoi}
\end{figure}

\subsection{Results}

We use the first iteration of LAMA \cite{richter2010lama} as our underlying task planner.
We ran a set of experiments comparing several versions of our approach to the baselines -- all experiments were run on an AMD Ryzen 9 5980HS CPU with a 600-second time limit per run.
Results are shown in Table \ref{tab:theTable}.
We omit problems \textit{Hanoi-2} and \textit{Blocks-5}, which were not solved by any algorithm or baseline.

\paragraph{Comparison to baselines}
Hypothesis: \textit{“Our basic novel approach (N=1, Eager conflict extraction) will be faster and solve more problems than any of the MBTS baselines”}.
Our method with “\(N=1\), \textit{Eager}” solves more problems (\num{18} vs.
\num{12} out of \num{20}) and is faster (\num{16} vs. \num{2}) than all the baselines \textit{MBTS-\{0,1,2\}}.
In Table \ref{tab:theTable}, we only report \textit{MBTS-0}, which shows better performance than the other baselines.

MBTS-0 does not solve problems that require long action sequences or where the branching factor of the tree is very high (for example, the domain \textit{Blocks} contains \num{12} movable objects).
Due to the uninformed behavior of Breadth-First Search, it only finds a few task plans (sometimes none), none of which are geometrically feasible.
Instead, our method leverages state-of-the-art task planning to compute action sequences efficiently even in large discrete spaces, and geometric information is encoded incrementally in the planning task through our prefix forbidding reformulation.

\paragraph{Analysis of diverse planning}
Hypothesis: \textit{“Diverse planning with a novelty measure will improve over incremental plan generation”}.
We compare “\textit{N=1, Eager}” (the planner produces a single plan, which is evaluated by the motion planner) and
“\textit{N=4, Eager}” (the planner produces four plans in each iteration, which are stored in a buffer; the motion planner evaluates the plan in the buffer that maximizes our novelty criteria).

\(N=4\) reduces both the overall computational time and the number of tested plans.
Choosing a plan from a set of candidates with our criteria is beneficial, as it enforces novelty-based exploration in the space of candidate discrete plans.
The role of prefixes and orderings in an LGP is captured accurately by our novelty measure, which outperforms alternative plan similarity metrics like action set similarity,
which is inaccurate in the context of LGP, where action ordering and precedence cannot be neglected.

\paragraph{Analysis of conflict extraction}
Hypothesis: \textit{“Metareasoning is faster than Eager and Lazy conflict extraction”}.
For \textit{N=4}, we compare three different methods for extracting prefix conflicts:
\textit{Eager} (finds the minimal prefix using the \textit{keyframes} bound (\eqref{eq:keyframesbound})),
\textit{Lazy-pose} (an enhancement of \textit{Lazy} that checks only the \textit{pose} bound \eqref{eq:posebound} to try to extract a conflict), and
\textit{Meta} (a metareasoning approach for conflict extraction).

Our metareasoning approach delivers a speedup across problems (\textit{Meta} is better in 12 vs.
\textit{Eager} 6).
The \textit{Lazy-pose} sometimes provides small infeasible prefixes with the \textit{pose} bound \cref{eq:posebound}, but is slower than \textit{Meta} and \textit{Eager}.
Finally, note that relaxation bounds of feasible NLPs are very fast to compute.
This explains why, in some problems, \textit{Eager} is faster than \textit{Meta}, even if it performs more geometric checks in total.

\section{Limitations}
\label{sec:diverse:limitations}

\textit{Diverse Planning for LGP} shares the main limitations of the underlying LGP formulation and the previous MBTS solver, namely, the local convergence of nonlinear optimization methods.
Optimization methods converge only to local optima, which might prevent finding a solution even if a problem is feasible.
One way to mitigate convergence to bad local optima is to use random restarts, as solving the same problem with different initializations can improve the success rate.

To integrate random restarts into our conflict-based formulation, we can use a soft-conflict formulation.
Instead of blocking prefixes in the task planner, we can penalize task plans that contain plan prefixes where the optimizer failed to find a solution.
Another possible practical implementation is to use a probabilistic hard-conflict formulation, where, in each call to the task planner, we block a prefix with a probability proportional to the number of times the optimizer failed to solve the corresponding optimization problem.

Further limitations for deploying the algorithms in the real world, as often encountered in the TAMP literature, are the assumptions of accurate world information (i.e., geometry and position of the objects), a perfect forward model used for planning, and the simple geometric shapes of the objects.

\section{Conclusions}

In this chapter, we propose the first systematic interface between state-of-the-art task planners and nonlinear constrained path optimization methods to solve Logic Geometric Programs.
A key idea of our approach is to efficiently identify geometric conflicts in the form of minimal infeasible action prefixes and incorporate this information back into the task planner through a multi-prefix forbidding compilation.
Based on this general interface, we further develop a metareasoning strategy to minimize the number of calls to the motion planner and a new novelty criterion for selecting plans from a set of candidates.
Our approach systematically outperforms the baseline LGP solver, solving more problems and faster, especially when the solution requires long action sequences.

This work lays the foundations for the more efficient Factored-NLP Planner, presented in \cref{ch:bid}, which also combines a discrete planner with trajectory optimization through an interface based on detecting and encoding geometric conflicts.
However, instead of relying on infeasible prefixes, the Factored-NLP Planner uses a more powerful interface based on detecting and blocking subsets of infeasible nonlinear constraints in the optimization problems, which results in an order-of-magnitude improvement with respect to the Multi-Bound Tree Search.

Our results suggest that incorporating a PDDL planner into Task and Motion Planning (TAMP) solvers is crucial for enhanced performance and scalability.
Besides the Factored-NLP Planner in \cref{ch:bid}, our TAMP meta-solver in \cref{ch:bid} also employs a PDDL solver to compute a lower bound on the number of discrete actions required to reach the goal.

\chapter{\nameChapterTwo}
\label{ch:bid}

\section{Introduction}

Despite recent advances in Task and Motion Planning (TAMP) solvers, current algorithms struggle with high-dimensional configuration spaces (e.g., multiple robots), long-horizon planning, and constrained environments that require joint optimization.
A promising approach to planning in such challenging settings is to efficiently interface state-of-the-art solvers on both sides, particularly incorporating information about infeasibility from continuous solvers back to the task level.

In \cref{ch:diverse_planning}, we illustrate how geometric conflicts in the form of task plan prefixes could be encoded back into a discrete planner.
Building on this foundation, we now aim to find smaller conflicts to create a more efficient interface between the task level and the motion level.
To this end, in this chapter\footnote{
	This chapter is based on the publication: Ortiz-Haro, J., Karpas, E., Katz, M., and Toussaint, M.
	(2022).
	A Conflict-Driven Interface Between Symbolic Planning and Nonlinear Constraint Solving.
	IEEE Robotics and Automation Letters, 7(4), (pp. 10518-10525).
}, we present a second iterative, conflict-based TAMP solver that combines discrete planning with nonlinear optimization with a novel bidirectional interface.

Our approach is based on identifying minimal subsets of nonlinear constraints that guarantee the infeasibility of the continuous trajectory optimization problems.
This information is encoded back into the high-level task planner with a special blocking reformulation.
This new interface provides a powerful enhancement, as now one conflict can directly block multiple different candidate plans--specifically, those that would generate a trajectory optimization problem containing the infeasible constraints.
In contrast, our previous prefix-forbidding solver,
Diverse Planning for LGP, only blocked plans with a matching task plan prefix
(\cref{ch:diverse_planning}).
Given that the number of candidate high-level task plans grows exponentially with the number of robots and objects, an efficient interface is vital for success and scalability, as demonstrated in our evaluation.

The design of our new solver, called the Factored-NLP Planner, has required the introduction of several innovative techniques and contributions: an efficient conflict detection algorithm, a conflict-blocking reformulation of the discrete planning problem, and a precise formulation of the TAMP problem.

As a foundation for this algorithm, we first introduce an abstract Planning with Nonlinear Transition Constraints (PNTC)
formulation, where a discrete task plan implies a factored nonlinear program as a sub-problem, and logical predicates of the task plan can be related to factors of the NLP.
This formulation clarifies the concepts and exact assumptions our algorithm builds on and formally defines
the explicit bidirectional relation between the discrete and the continuous components of the problem, which is
exploited in our solver.

The PNTC formulation ensures both the correct relationship between the discrete and continuous levels, and the appropriate structure in the trajectory optimization problem.
In the context of TAMP, it can be viewed as a factored variant of a Logic Geometric Program (LGP) (\cref{sec:bg:lgp}).
However, in comparison to LGP, PNTC explicitly defines a factored structure of the implied NLP and a bidirectional mapping between symbols and constraint factors in the resulting NLP.
This is exactly the structure we need
to better inform the discrete search and is naturally available in TAMP, making our solver directly applicable to solve TAMP and LGP problems.

We evaluate our method on three robotic TAMP scenarios that present complex intrinsic logic-geometric dependencies requiring long action sequences.
Generating solutions within seconds, our approach clearly outperforms previous optimization-based solvers for TAMP.
We further validate the framework through real-world experiments, demonstrating that the solver computes full task and motion plans in a few seconds.

\begin{figure}[ht]

	\centering
	\begingroup
	\begin{tabular}{cc}

		\includegraphics[width=.27\linewidth]{./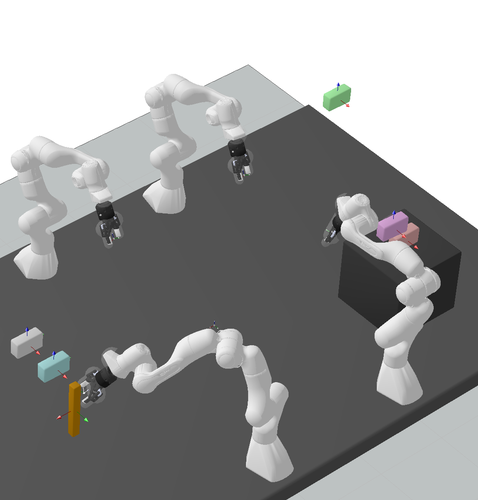}   &
		\includegraphics[width=.27\linewidth]{./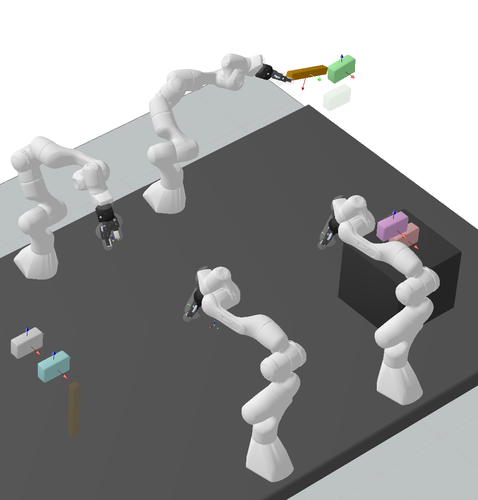}     \\[10pt]

		\includegraphics[width=.27\linewidth]{./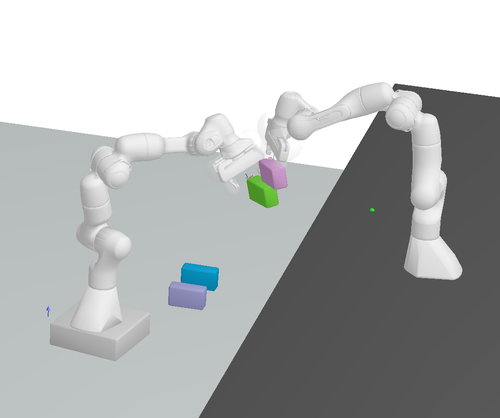}   &
		\includegraphics[width=.27\linewidth]{./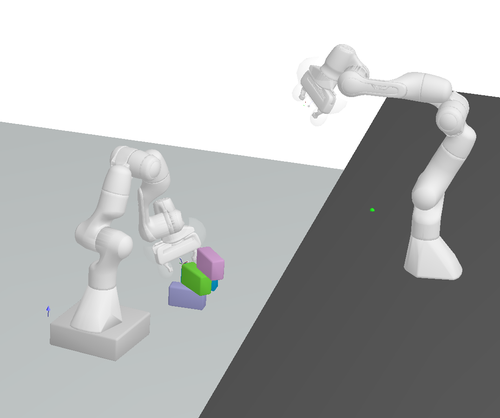}     \\[10pt]

		\includegraphics[width=.27\linewidth]{./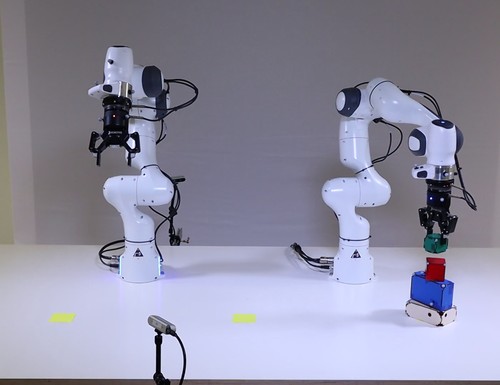} &
		\includegraphics[width=.27\linewidth]{./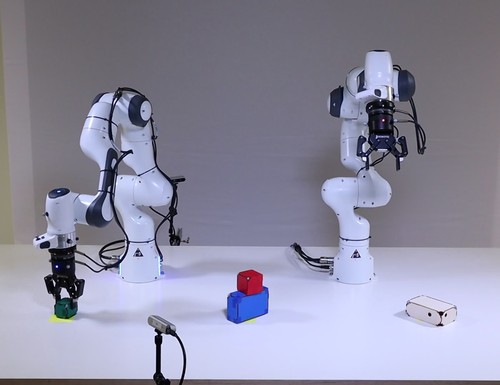}   \\

	\end{tabular}
	\endgroup
	\caption{Task and Motion Planning problems solved by our framework. \textit{Top row}: Four robot manipulators use a stick as a tool to reach a distant block.
		\textit{Middle row}: A heterogeneous team of robots builds a tower. \textit{Bottom row}: Two real 7-DOF manipulators solve the Tower of Hanoi puzzle.}
	\label{fig:showcase}
\end{figure}

\section{Related Work}

A comprehensive review of related work on Task and Motion Planning is provided in \cref{sec:bg:related-work}.
We now briefly position our work with respect to the most closely related approaches.

From the perspective of classical planning, the most closely related work includes \cite{haslum2018extending}, which extends classical planning with general state constraints, and \cite{fernandez2018scottyactivity}, which combines discrete search with convex optimization.
In contrast, the constraints of PNTC are nonlinear, defined by a sequence of discrete states and evaluated on consecutive continuous variables.
This implies a nonlinear program for the whole sequence of continuous variables, which can model the complex continuous constraints in the TAMP problem efficiently.

In comparison to TAMP solvers, our method is related to conflict-based solvers, such as \cite{srivastava2014combined, dantam2016incremental}.
These methods either use a set of predefined predicates such ``is reachable'' \cite{srivastava2014combined} to incorporate information about geometry, or block full plans or pairs of state-actions \cite{dantam2016incremental}.
Alternatively, our framework can encode any type of continuous infeasibilities that potentially involve several motion phases.
In fact, instead of enumerating possible geometric failure cases, we define nonlinear constraints to model the motion and geometry and let the solver detect which intrinsic subset is jointly infeasible.

Compared to optimization-based solvers such as the Multi-Bound tree search \cite{toussaint2017multi} and \textit{Diverse planning for LGP} (\cref{ch:diverse_planning}), we design a much more efficient interface between task planning and motion planning, as demonstrated in our experiments.

\section{Problem Formulation}
\label{sec:planner:formulation}

In this section, we introduce \textit{Planning with Nonlinear Transition Constraints} (PNTC).
PNTC is similar to the Logic Geometric Program formulation for TAMP; the main difference lies in that it provides an explicit modeling of the factorization at the discrete level and within the trajectory optimization problem.
For a comprehensive introduction to the Logic Geometric Program formulation, we direct the reader to \cref{sec:bg:lgp}.

\cref{sec:bg:structure} presents an intuitive and insightful explanation of the natural factorization of trajectory optimization that occurs in Task and Motion Planning.
PNTC will now formalize the required interface between logic and geometry to generate such structured representations.
This will expose three key properties--time structure, local composition, and sparse factorization--that will be used in our solver.

\paragraph{Planning with Nonlinear Transition Constraints}

A \textit{Planning with Nonlinear Transition Constraints (PNTC)} problem is a 7-tuple
$\langle \mathcal{V}, \mathcal{A},s_0,g, \Pi, \mathcal{H}, X_v \rangle$ that includes a discrete component $\langle \mathcal{V}, \mathcal{A},s_0,g\rangle$ and a continuous component $\langle \mathcal{H}, X_v \rangle$, coupled through an interface $\Pi$.

\begin{itemize}

	\item \textit{Discrete Component}:
	      The discrete component $\langle \mathcal{V}, \mathcal{A}, s_0, g\rangle$ corresponds to a classical planning problem encoded in SAS+ (\cref{sec:bg:lgp}).

	\item \textit{Continuous Component}:
	      \(X_v\) is a finite set of \(n\) continuous variables \(\{x^1, \ldots, x^n\}\).
	      Each variable takes a value in a continuous space \(\text{dom}(x^i) = \mathcal{X}^i\) (e.g., \(\mathcal{X}^i = \RR^{n_i}\)).
	      A continuous state \(x \in \mathcal{X}^1 \times \ldots \times \mathcal{X}^n = \mathcal{X}\) is a value assignment to all variables.
	      In the planning problem, we use the notation \(x_k\) to denote the state at step \(k\), and \(x_k^i\) to denote the variable \(i\) at step \(k\).
	      \(\mathcal{H}\) is a finite set of nonlinear, piece-wise differentiable constraint functions that are evaluated on pairs of subsets of continuous variables, \(\mathcal{H} = \{  \phi_b :  \mathcal{X}^{b_{0}} \times \mathcal{X}^{b_{1}}   \to \RR^{n_b}    \}\).
	      The index sets \(b_0, b_1 \subseteq \{1, \ldots, n\}\) indicate on which subsets of variables the function \(\phi_b\) depends.
	      These functions define nonlinear constraints \(\phi_b(x^{b_0}, \tilde{x}^{b_1}) \, \{\le, =\} \, 0\) on a pair of subsets of continuous variables \((x^{b_0} , \tilde{x}^{b_1})\) (e.g., \(x^{b_0} = \{x^1, x^2\}, \tilde{x}^{b_1} = \{\tilde{x}^3\}\)).

	\item \textit{Interface}:
	      The discrete and continuous components of a \textit{PNTC} are coupled through the mapping \(\Pi\).
	      Let \(\mathcal{P}\) be the set of all possible partial discrete states.
	      The mapping \(\Pi: \mathcal{P} \times \mathcal{P} \to \mathcal{H} \cup \emptyset\) with \(\langle p, \tilde{p}\rangle \mapsto \phi_b(x^{b_0}, \tilde{x}^{b_1})\), maps a pair of discrete partial states \(\langle p, \tilde{p} \rangle\) to a nonlinear constraint function \(\phi_b\) that is evaluated on subsets of continuous variables \(x^{b_0}, \tilde{x}^{b_1}\).
	      The empty set \(\emptyset\) indicates that some pairs \(\langle p, \tilde{p} \rangle\) do not generate constraints.
	      This formulation also includes constraints acting on a single state \(\Pi(p) \to \phi_b(x^{b_0})\).

\end{itemize}

A solution to a PNTC is a sequence of discrete and continuous states \( \langle (s_0, x_0), \ldots, (s_K, x_K) \rangle \) and discrete actions \( \langle a_1, \ldots, a_K \rangle \) (starting from the fixed \( s_0 \) and \( x_0 \)), such that,
\begin{subequations}
	\begin{align}
		 & s_k \in \mathcal{S},                                 &  & k=0,\ldots,K                                                                                                            \\
		 & a_k \in \mathcal{A}(s_{k-1}),                        &  & k=1,\ldots,K                                                                                                            \\
		 & s_k = \textup{succ}(s_{k-1},a_k),                    &  & k=1,\ldots,K                                                                                                            \\
		 & g \subseteq s_K,                                     &  &                                                                                                                         \\
     &    x_k^i \in \mathcal{X}^i, &                                 & k= 0, \ldots, K,~ i=1, \ldots, n \\
		 & \phi_b(x_k^{b_0}, x_{k+1}^{b_1}) \, \{\le, =\} \, 0, &  & \phi_b \equiv \Pi(p, \tilde{p})\,, \forall p \subseteq s_k \,, \forall \tilde{p} \subseteq s_{k+1}\,, k=0, \ldots, K-1.
	\end{align}
\end{subequations}

Given a fixed task plan \(\langle a_1, \ldots, a_K \rangle\), the sequence of discrete states is \(\langle s_0, \ldots, s_K \rangle\).
The continuous states can be computed by solving the continuous feasibility program,
\begin{subequations}
	\label{eq:nlp}
	\begin{align}
		                                                          & \text{find} \, x_k^i \in \mathcal{X}^i, &                                                                                                                       & k= 0, \ldots, K,~ i=1, \ldots, n \\
		                                                          & \text{s.t.
		} \, \phi_b(x_k^{b_0}, x_{k+1}^{b_1}) \, \{\le, =\} \, 0, &                                         & \phi_b \equiv \Pi(p, \tilde{p}) \,,\forall p \subseteq s_k, \forall \tilde{p} \subseteq s_{k+1}, \, k=0, \ldots, K-1.
	\end{align}
\end{subequations}
Therefore, a valid task plan is only a necessary condition for the existence of a full discrete and continuous solution and, in practice, valid discrete plans often fail at the continuous level.

\paragraph{PNTC and LGP for Task and Motion Planning}

When comparing PNTC and LGP, we observe that PNTC is a factored formulation.
The discrete state space is now factored: instead of an unstructured discrete state space \(\mathcal{S}\) as in LGP \(\eqref{eq:lgp}\), we now have a set of discrete variables \(\mathcal{V}\).
The continuous space in PNTC is also factorized, resulting in a Factored-NLP formulation of the trajectory optimization problems to compute the motion of robots and objects.
As highlighted in \cref{sec:bg:pddl,sec:factorization-discrete}, this factorization is naturally available in TAMP problems that involve multiple objects and robots.

A technical difference with respect to LGP is that here we introduce special variables to represent trajectories between keyframes within the continuous state.
Thus, a continuous state in PNTC includes both the keyframe configuration (i.e., the configuration exactly at the phase of the transition) and the trajectory from the last keyframe, while LGP uses the original configuration space as the continuous space (e.g., the joint values of the robot or the object pose for a single configuration).
This modification allows us to define pairs of discrete and continuous states and generates a beneficial structure in the Factored-NLP for conflict-based planning, which can be shown to be equivalent to the LGP formulation.
The Factored-NLP in PNTC can also be viewed as a combination of the full trajectory optimization problem and the keyframes bound of LGP within a single more structured optimization problem.

Notably, PNTC decomposes the nonlinear constraints that appear in the LGP problem
into a set of small constraints and introduces an explicit mapping \(\Pi\) that defines which parts of the discrete plan generate which constraints.

Constraints of the form \(h_{\text{path}}(x,s)\), \(h_{\text{switch}}(x;s,s')\), and
\(\tilde{h}_{\text{switch}}(x,x';s,s')\) in LGP (\cref{eq:lgp,eq:keyframesbound}) are decomposed into a set of smaller constraints \( \{ \phi_b \mid \phi_b \equiv \Pi(p,\tilde{p}), ~ \forall p \subseteq s, \tilde{p} \subseteq s' \}\) in PNTC \eqref{eq:nlp}.

\section{Factored-NLP: a Bidirectional Interface Between Task and Motion}
\label{sec:graphnlp}

Given a fixed sequence of discrete states \( \langle s_0 ,\ldots ,s_K \rangle \), we represent the optimization problem over the sequence of continuous variables \eqref{eq:nlp} as a Factored-NLP, denoted by \( G(\langle s_0, \ldots ,s_K \rangle) \).

\cref{sec:bg:structure} provides the basic definitions of Factored-NLPs, a detailed example for a Pick and Place task plan, and a discussion on scalability, generalization, and properties of this representation.
All Factored-NLPs previously shown in \cref{sec:bg:structure} have been generated using the PNTC formulation that we have formally defined here.

The set of variables and constraints of the Factored-NLP \( G(\langle s_0, \ldots ,s_K \rangle) \) is:
\begin{subequations}
	\begin{align}
		 & X_G = \{ x_k^i \mid ~ k = 0,\ldots, K, ~ i = 1 ,\ldots, n \},                                                                            \\
		 & \Phi_G = \{ \phi_b \mid \phi_b \equiv \Pi(p,\tilde{p}), ~ \forall p \subseteq s_k, \tilde{p} \subseteq s_{k+1}, ~ k = 0, \ldots ,K-1 \}.
	\end{align}
\end{subequations}
The factored structure of this Factored-NLP stems from the object and time factorization of the variable set \( X_G \) and the structured dependency between variables and constraints.

Factored-NLPs from the PNTC formulation have the following properties:

\begin{property}
	\label{property:localtime}
	(Local time connectivity)
	A variable vertex \( x_k^i \) is connected to constraints that are evaluated on
	variables with time index \( k \), \( k-1 \), or \( k+1 \).
\end{property}

\begin{property}
	\label{prop:subgraph-intro}
	(Factor time invariance)
	A sequence of partial states \(\seq{p_0}{p_L}\) induces a subgraph \( M(\seq{p_0}{p_L}) = (X_M \cup \Phi_M, E_M) \) with:
	\begin{subequations}
		\begin{align}
			 & \Phi_M = \{ \phi_b \mid \phi_b \equiv \Pi(p,\tilde{p}), ~ \forall p \subseteq p_l , \tilde{p} \subseteq p_{l+1}, ~ l = 0,\ldots, L-1 \},              \\
			 & X_M = \{ x^i_l \, \mid \exists \phi_b \in \Phi_M  ~\text{such that} ~ \phi_b ~\text{depends on}~ x^i_l, ~ l  = 0, \ldots, L ,~  i = 1 ,\ldots, I \}.
		\end{align}
	\end{subequations}
	Because the mapping from partial states to nonlinear constraints does not depend explicitly on the time index, if a sequence of states \(\seq{s_0}{s_K}\) contains a sequence of partial states \(\seq{p_0}{p_L}\) starting at any time index, that is,
	\begin{equation}
		\exists k \in \{0, \ldots, K-L\} ~\text{such that}~ p_l \subseteq s_{k+l}, ~ l=0,\ldots, L,
	\end{equation}
	then the Factored-NLP for the sequence of partial states
	\(M(\seq{p_0}{p_L})\) is a subgraph of the Factored-NLP of the full sequence of states \(G(\seq{s_0}{s_K})\),
	\begin{equation}
		M(\seq{p_0}{p_L}) \subseteq G(\seq{s_0}{s_K}).
	\end{equation}
\end{property}

\begin{definition}
	An infeasible subgraph of a Factored-NLP (i.e., a subset of variables and constraints) is minimal if, when removing any variables or constraints, the resulting optimization problem is feasible.
\end{definition}
\begin{property}
	\label{propo:minimal-connected}
	The minimal infeasible subgraph is connected.
	If the Factored-NLP \( G \) is not connected, the NLP associated with each connected component \( G_j \) can be solved independently, and \( \text{Feas}(G) = \bigwedge_j \text{Feas}(G_j) \).
\end{property}

\cref{property:localtime} and \cref{propo:minimal-connected} are used later for detecting minimal infeasible sets of constraints efficiently. \cref{prop:subgraph-intro} is essential in our conflict-based algorithm, as it ensures the correctness of the task reformulation, by allowing us to prune multiple candidate task plans with a single conflict.

\paragraph{Example domain}
\label{sec:example}

\begin{figure}
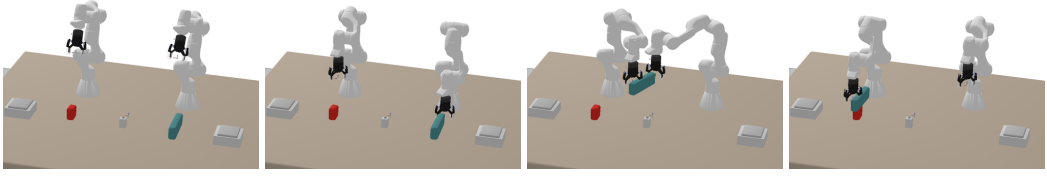

	\centering
	\includegraphics[width=.24\textwidth]{pics_nyu_inria/pics/crop/my_path0001.crop.png}
	\includegraphics[width=.24\textwidth]{pics_nyu_inria/pics/crop/my_path0002.crop.png}
	\includegraphics[width=.24\textwidth]{pics_nyu_inria/pics/crop/my_path0003.crop.png}
	\includegraphics[width=.24\textwidth]{pics_nyu_inria/pics/crop/my_path0004.crop.png}\\
	\caption{Example domain with two objects and two robots.}
	\label{fig:domain:graphnlpplanner}
\end{figure}

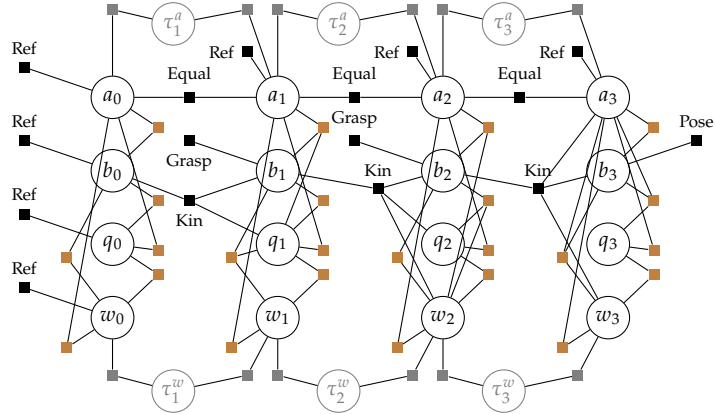
\begin{figure}
	\centering
	\begin{tikzpicture}[scale=0.8,every node/.style={transform shape}]

        \node[latent] (a0) {$a_0$} ;

        \node[latent,above=.5  of a0, xshift=1cm,draw=gray] (taua0) {
            \gray{$\tau^a_1$}} ;

        \node[latent,below=.5 of a0  ] (b0) {$b_0$} ;
        \node[latent,below=.5  of b0] (q0) {$q_0$} ;
        \node[latent,below=.5  of q0] (w0) {$w_0$} ;

        \node[latent,below=.5  of w0,xshift=1cm,draw=gray] (tauw0) {

          \gray{ $\tau^w_1$}} ;

        \node[latent,right=2 of a0  ] (a1) {$a_1$} ;
        \node[latent,below=.5 of a1  ] (b1) {$b_1$} ;
        \node[latent,below=.5 of b1] (q1) {$q_1$} ;
        \node[latent,below=.5 of q1] (w1) {$w_1$} ;
        \node[latent,above=.5  of a1,xshift=1cm,draw=gray] (taua1) {\gray{$\tau^a_2$}} ;

        \node[latent,below=.5  of w1,xshift=1cm,draw=gray] (tauw1) {\gray{$\tau^w_2$}} ;

        \node[latent,right=2 of a1  ] (a2) {$a_2$} ;
        \node[latent,below=.5 of a2  ] (b2) {$b_2$} ;
        \node[latent,below=.5 of b2] (q2) {$q_2$} ;
        \node[latent,below=.5 of q2] (w2) {$w_2$} ;

        \node[latent,above=.5  of a2,xshift=1cm,draw=gray] (taua2) {\gray{$\tau^a_3$}} ;

        \node[latent,below=.5  of w2,xshift=1cm,draw=gray] (tauw2) {\gray{$\tau^w_3$}} ;

        \node[latent,right=2 of a2  ] (a3) {$a_3$} ;
        \node[latent,below=.5 of a3  ] (b3) {$b_3$} ;
        \node[latent,below=.5 of b3] (q3) {$q_3$} ;
        \node[latent,below=.5 of q3] (w3) {$w_3$} ;

      \factor[left=1 of w0, yshift=0.5cm] {trajp0} { Ref } {w0} {};
      \factor[left=1 of q0, yshift=0.5cm] {trajp0} { Ref } {q0} {};
      \factor[left=1 of a0, yshift=0.5cm] {trajp0} { Ref } {a0} {};
      \factor[above=.3 of a1,xshift=-.5cm] {trajp0} { left:Ref } {a1} {};
      \factor[above=.3 of a2, xshift=-.5cm] {trajp0} { left:Ref } {a2} {};

      \factor[left=1 of b0, yshift=0.5cm] {trajp0} { Ref } {b0} {};
      \factor[left=1 of b1, yshift=0.5cm] {trajp0} { below:Grasp } {b1} {};
      \factor[left=1 of b2, yshift=0.5cm] {trajp0} { Grasp } {b2} {};
      \factor[right=1 of b3, yshift=0.5cm] {trajp0} { Pose } {b3} {};
      \factor[above=.3 of a3,xshift=-.5cm] {trajp0} {left:Ref} {a3} {};

      \factor[left=1 of b1, yshift=-.5cm] {trajp0} {below:Kin} {b0, q1,b1} {};

      \factor[right=1.2 of b1, yshift=-.3cm] {trajp0} {Kin} {b1, q2,b2,w2} {};

      \factor[left=.7 of b3, yshift=-.3cm] {trajp0} {Kin} {w3,b3,a3,b2} {};

      \factor[left=1 of a3] {trajp0} {Equal} {a2,a3} {};

      \factor[left=1 of a1] {trajp0} {Equal} { a0, a1} {};

      \factor[left=1 of a2] {trajp0} {Equal} { a1, a2} {};

      \factor[right=.3 of a0, yshift=-0.5cm,color=brown] {} {} {a0,b0} {};
      \factor[right=.3 of b0, yshift=-0.5cm,color=brown] {} {} {b0,q0} {};
      \factor[right=.3 of q0, yshift=-0.1cm,color=brown] {} {} {a0,q0} {};

      \factor[right=.3 of q0, yshift=-0.5cm,color=brown] {} {} {q0,w0} {};
      \factor[left=.3 of w0, yshift=-0.5cm,color=brown] {} {} {a0,w0} {};
      \factor[left=.3 of w0, yshift=+1cm,color=brown] {} {} {w0,b0} {};

      \factor[right=.3 of a1, yshift=-0.5cm,color=brown] {} {} {a1,b1,q1} {};
      \factor[right=.3 of b1, yshift=-0.5cm,color=brown] {} {} {b1,q1} {};
      \factor[right=.3 of q1, yshift=-0.1cm,color=brown] {} {} {a1,q1} {};

      \factor[right=.3 of q1, yshift=-0.5cm,color=brown] {} {} {q1,w1} {};
      \factor[left=.3 of w1, yshift=-0.5cm,color=brown] {} {} {a1,w1} {};
      \factor[left=.3 of w1, yshift=+1cm,color=brown] {} {} {w1,b1,q1} {};

      \factor[right=.3 of a2, yshift=-0.5cm,color=brown] {} {} {a2,b2,w2} {};
      \factor[right=.3 of b2, yshift=-0.5cm,color=brown] {} {} {b2,q2,w2} {};
      \factor[right=.3 of q2, yshift=-0.1cm,color=brown] {} {} {a2,q2} {};

      \factor[right=.3 of q2, yshift=-0.5cm,color=brown] {} {} {q2,w2} {};
      \factor[left=.3 of w2, yshift=-0.5cm,color=brown] {} {} {a2,w2} {};
      \factor[left=.3 of w2, yshift=+1cm,color=brown] {} {} {w2,b2} {};

      \factor[right=.3 of a3, yshift=-0.5cm,color=brown] {} {} {a3,b3} {};
      \factor[right=.3 of b3, yshift=-0.5cm,color=brown] {} {} {b3,q3,a3} {};
      \factor[right=.3 of q3, yshift=-0.1cm,color=brown] {} {} {a3,q3} {};

      \factor[right=.3 of q3, yshift=-0.5cm,color=brown] {} {} {q3,w3} {};
      \factor[left=.3 of w3, yshift=-0.5cm,color=brown] {} {} {a3,w3} {};
      \factor[left=.3 of w3, yshift=+1cm,color=brown] {} {} {w3,b3,a3} {};

      \factor[below=.5 of w0, color=gray] {} {} {w0,tauw0} {};
      \factor[below=.5 of w1,xshift=-.5cm, color=gray] {} {} {tauw0,w1} {};

      \factor[below=.5 of w1,color=gray] {} {} {w1,tauw1} {};
      \factor[below=.5 of w2, xshift=-0.5cm,color=gray] {} {} {tauw1,w2} {};

      \factor[below=.5 of w2,color=gray] {} {} {w2,tauw2} {};
      \factor[below=.5 of w3, xshift=-0.5cm,color=gray] {} {} {tauw2,w3} {};

      \factor[above=1 of a0,color=gray] {} {} {a0,taua0} {};
      \factor[above=1 of a1, xshift=-0.5cm,color=gray] {} {} {taua0,a1} {};

      \factor[above=1 of a1,color=gray] {} {} {a1,taua1} {};
      \factor[above=1 of a2, xshift=-0.5cm,color=gray] {} {} {taua1,a2} {};

      \factor[above=1 of a2, color=gray] {} {} {a2,taua2} {};
      \factor[above=1 of a3, xshift=-0.5cm,color=gray] {} {} {taua2,a3} {};

  \end{tikzpicture}

	\caption{Factored-NLP of the example domain in~\cref{fig:domain:graphnlpplanner}.
		Circles represent variables and squares represent constraints.
		We display the variables for all the keyframe configurations \((a,b,q,w)\), and the trajectories \((\tau^a,\tau^w)\) (omitting
		\(\tau^b,\tau^q\) and factors that represent collisions between trajectories to keep the illustration cleaner).
		Brown squares are collision avoidance constraints.
		Gray squares are boundary constraints between trajectories and keyframes.
	}
	\label{fig:graph-in-planner-q}
\end{figure}

In this section, we revisit the example domain shown in \cref{sec:bg:example2}, but now provide a more formal and detailed description of the task plan, the constraints that appear in the Factored-NLP, and the relationships between them,
as defined by the PNTC formulation.

Thus, we consider again a domain with a \textit{Table} and two movable objects, \textit{A} and \textit{B}, initially on \textit{A\_init} and \textit{B\_init}, and two robot manipulators, \textit{Q} and \textit{W}, with the goal of stacking \textit{A} on top of \textit{B}.

The discrete state space is factorized into four variables: \texttt{parent\_A}, \texttt{parent\_B}, \texttt{robot\_Q}, \texttt{robot\_W}.
The set of possible actions is defined by two action operators: \texttt{pick} and \texttt{place}.
The initial discrete state is \( s_0 = \)
[\texttt{parent\_A = A\_init},
		\texttt{parent\_B = B\_init},
		\texttt{robot\_Q = free},
		\texttt{robot\_W = free}] (see \cref{ch:diverse_planning}).
A Factored-NLP is defined by a sequence of discrete states.
In this example, we choose the sequence of actions,
\def \aaONE {\texttt{Pick B with Q from B\_init}}
\def \aaTWO {\texttt{Pick B with W from Q}}
\def \aaTHREE {\texttt{Place B with W on A}}
\begin{itemize}
	\item[--] \( a_1 \): \aaONE,
	\item[--] \( a_2 \): \aaTWO,
	\item[--] \( a_3 \): \aaTHREE.
\end{itemize}
Applying these actions from the initial discrete state \( s_0 \) results in the state sequence,
\begin{itemize}

	\item[--]
		$s_0$:
		[\texttt{parent\_A = A\_init},
		\texttt{parent\_B = B\_init},
		\texttt{robot\_Q = free},
		\texttt{robot\_W = free}],

	\item[--]
		$s_1$:
		[\texttt{parent\_A = A\_init},
		\texttt{parent\_B = Q},
		\texttt{robot\_Q = full},
		\texttt{robot\_W = free}],

	\item[--]
		$s_2$:
		[\texttt{parent\_A = A\_init},
		\texttt{parent\_B = W},
		\texttt{robot\_Q = free},
		\texttt{robot\_W = full}],

	\item[--]
		$s_3$:
		[\texttt{parent\_A = A\_init},
		\texttt{parent\_B = A},
		\texttt{robot\_Q = free},
		\texttt{robot\_W = free}].

\end{itemize}

The Factored-NLP is shown again in \cref{fig:graph-in-planner-q}.
We refer to \cref{sec:bg:pick-and-place,sec:bg:example} for an explanation of variables and constraints, and we discuss here which sequences of partial discrete states imply which constraints.
Constraints operate on pairs of consecutive continuous variables, and the constraints that are applied depend on the values of the discrete variables.

First, note that the variables for the continuous initial state \( x_0 \) (i.e., \( a_0, b_0, q_0, w_0 \)) are also added in the Factored-NLP, together with constraints \textit{Ref} that fix their value.
The mapping \( \Pi: (p,\tilde{p}) \mapsto \phi \) in PNTC can be implemented as a set of rules that, given a fixed task plan, analyzes all pairs of partial states and generates the constraints in the Factored-NLP.
For instance:

\begin{itemize}
	\item
  \texttt{Parent\_B = B\_init and Parent\_B' = Q} \(\to\) \textit{Kin}\((b,b',q')\) 
	      This transition occurs in \( s_0 \to s_1 \) and generates the constraint
	      \textit{Kin} between variables \( b_0, b_1, q_1 \) in the Factored-NLP.

    \item \texttt{Parent\_A = A\_init} \(\to\) \textit{Ref}\((a)\)
	      In all discrete states \( s_0, s_1, s_2, s_3 \), the variable \texttt{Parent\_A} has the value \texttt{A\_init}; the constraint \textit{Ref} is applied to variable \( a \) in all time steps: \( a_0, a_1, a_2 \) and \( a_3 \).

    \item \texttt{Parent\_A = A\_init  and Parent\_A' = A\_init} \(\to\) \textit{Equal}\((a,a')\).
	      In this example, object A is always in the start position, and thus we add the constraints \textit{Equal}\((a_0,a_1)\), \textit{Equal}\((a_1,a_2)\), and \textit{Equal}\((a_2,a_3)\).
	      Note that in this case, such constraints are redundant with \textit{Ref}\((a_1)\), \textit{Ref}\((a_2)\), \textit{Ref}\((a_3)\).
	      However, \textit{Equal} constraints are necessary, e.g., when placing an object on the table, to ensure it remains still, or when holding an object for multiple time steps.

    \item \texttt{Parent\_B = Q } \(\to\)  \textit{Grasp}\((b)\).
	      In \( s_1 \), \texttt{Parent\_B = Q}.
	      Therefore, we add the constraint \textit{Grasp}\((b_1)\).

	\item  Collision constraints (brown squares) are added at all time steps.
	      They account for the different structures in the kinematic chain (e.g., whether the object is held by the robot or is on the table) -- resulting in a slight variation of dependencies in each vertical slice of the grasp.

	\item  Boundary value constraints (gray squares) tie the keyframes and the trajectories at all time steps.
	      For instance, \( \tau_1^q \) is constrained to start at \( q_0 \), and end at \( q_1 \).

\end{itemize}

\section{Overview: Factored-NLP Planner}
\label{sec:overview}

\cref{fig:flowchart} provides an overview of the Factored-NLP Planner for solving a PNTC, which we will introduce in the subsequent sections.
To simplify the presentation, we briefly outline the steps of the algorithm, which are run iteratively:

\begin{enumerate}

	\item
	      We leverage
	      a state-of-the-art discrete PDDL planner to find a sequence of discrete states that are valid for the current discrete planning task.

	\item
	      We generate the Factored-NLP that represents the continuous optimization problem for the continuous variables and the nonlinear constraints associated with the chosen candidate task plan.

	\item
	      An NLP solver attempts to solve the Factored-NLP.
	      If this NLP is feasible, the algorithm terminates, and the output is a solution containing a sequence of discrete and continuous states.
	      Otherwise, a minimal conflict in the form of a minimal infeasible subgraph (i.e., a subset of the Factored-NLP) is extracted, and all evaluated subgraphs are stored in a database as either feasible or infeasible subgraphs.

	\item
	      Finally, we reformulate the planning task to forbid all plans that would generate a Factored-NLP containing any subgraph previously determined to be infeasible.

\end{enumerate}

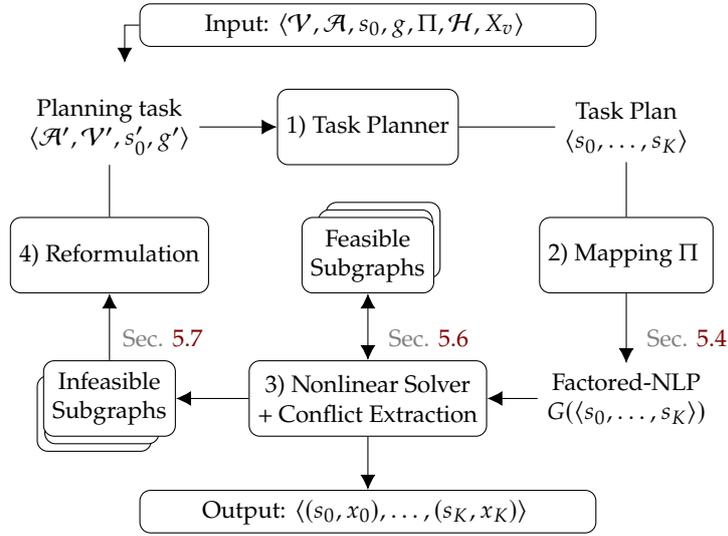
\begin{figure}
	\centering
	\begin{tikzpicture}[node distance=1.9cm,
			every node/.style={fill=white, font=\small}, align=center]
		\node (input)     [inout]          {Input: $\langle \mathcal{V}, \mathcal{A},s_0,g, \Pi, \mathcal{H},  X_v \rangle$ };
		\node (planner)     [process,below of=input,yshift=.55cm]          {1) Task Planner};

		\node (task)   [left of=planner,xshift=-1.5cm]          {Planning task \\
			$\langle \mathcal{A}', \mathcal{V}', s_0', g' \rangle$

		};

		\node (plan)     [right of=planner,xshift=1.5cm]          {Task Plan\\ $\langle s_0,\ldots, s_K \rangle$};

		\node (formulation)     [process,below of=plan,yshift=.2cm]          {2) Mapping $\Pi$   };

		\node (graphlgp)     [below of=formulation]          {Factored-NLP \\ $ G( \langle s_0 ,\ldots, s_K \rangle)$ };

		\node (buu)   [data, below of=planner,yshift=+0.4cm, xshift=.2cm]          {};
		\node (buu)   [data, below of=planner,yshift=+0.3cm, xshift=.1cm]          {};

		\node (dataFeas)   [process,below of=planner, yshift=.2cm]          {Feasible  \\Subgraphs};

		\node (motionplanner)   [process, below of=dataFeas]          {3) Nonlinear Solver\\ + Conflict Extraction };

		\node (reformulation)   [process, below of=task, yshift=.2cm ]          {4) Reformulation};

		\node (ref_feas)   [sec, below of=dataFeas, yshift=+.8cm, xshift=.8cm]          {Sec. \ref{sec:memory}};

		\node (ref_form)   [sec, below of=formulation, yshift=+.8cm, xshift=.8cm]          {Sec. \ref{sec:graphnlp}};

		\node (output)   [inout, below of=motionplanner,yshift=+.4cm]   {Output: $\langle  (s_0,\xs_0), \ldots , (s_K,\xs_K)  \rangle $ };

		\draw[-]             (planner) -- (plan);

		\node (buu)   [data, below of=reformulation,yshift=-0.2cm, xshift=-.2cm]          {};
		\node (buu)   [data, below of=reformulation,yshift=-0.1cm, xshift=-.1cm]          {};

		\node (dataInfeas)   [process,below of=reformulation]          {Infeasible \\ Subgraphs };

		\draw[->]             (motionplanner) -- (output);

		\draw[<->]             (motionplanner) -- (dataFeas);
		\draw[->]             (motionplanner) -- (dataInfeas);
		\draw[->]             (input) -- ++(-3.2,0) -- ++(0,-.5)  ;
		\draw[-]             (plan) -- (formulation);

		\draw[->]             (formulation) -- (graphlgp);
		\draw[->]             (graphlgp) -- (motionplanner);

		\node (ref_sec)   [sec, below of= reformulation, yshift=+.8cm, xshift=.7cm]          {Sec. \ref{sec:reformulate}};

		\draw[->]              (dataInfeas) -- (reformulation);

		\draw[-]             (reformulation) -- (task);
		\draw[->]             (task) -- (planner);
	\end{tikzpicture}
	\caption{Overview of the Factored-NLP Planner for solving a PNTC problem  $\langle \mathcal{V}, \mathcal{A},s_0,g, \Pi,  \mathcal{H} ,
			X_v   \rangle$.
		The solution is a sequence of discrete and continuous states $\langle (s_0,\xs_0), \ldots , (s_K,\xs_K) \rangle $.
	}
	\label{fig:flowchart}
\end{figure}

\section{Finding Small Infeasible Subgraphs}
\label{sec:find-minimal}

In this section, we discuss how to detect a minimal subset of infeasible constraints from a Factored-NLP (Step \num{3} of the Factored-NLP Planner, \cref{fig:flowchart}).
In the worst case, finding an infeasible subgraph of \emph{minimum cardinality} requires solving an NLP for each subset of constraints, \( O(2^{|\Phi_G|}) \) \cite{shoukry2018smc}.
Conversely, a \emph{minimal} infeasible subgraph can be found by solving a linear number of problems \cite{amaldi1999some}.
This search can be accelerated with a divide-and-conquer strategy, with complexity \( O(\log |\Phi_G|) \) \cite{junker2004preferred}.
Recently, \cite{shoukry2018smc} presented a technique for finding an approximately minimal subgraph in a convex optimization problem by solving one convex program with slack variables.

Inspired by these works, we propose an algorithm for finding small minimal infeasible subgraphs that exploits the particular structure of the Factored-NLP in our setting, namely the time structure and the semantic information contained within them, as well as the convergence point of the nonlinear optimizer.

\paragraph{Double binary search on the time index}

The first key insight is to exploit the time connectivity of our Factored-NLP (\cref{property:localtime}).
Given an infeasible Factored-NLP \( G(\langle s_0, \ldots, s_K \rangle) \), we can find a minimal temporal sequence \( \langle s_f, \ldots, s_l \rangle, ~ 0 \le f \le l \le K \) such that \( G(\langle s_f, \ldots, s_l \rangle) \) is infeasible with a double binary search that executes \( O(\log K) \) calls to a nonlinear optimizer.
Specifically, we first compute the minimum upper index \( l \) such that \( G(\langle s_0, \ldots, s_l \rangle) \) is infeasible.
After fixing \( l \), we compute the maximum lower index \( f \) such that \( G(\langle s_f, \ldots, s_l \rangle) \) is infeasible.

\paragraph{Relaxations}
\label{sec:relax}

Binary search on time exploits the local connectivity in the temporal dimension but does not detect the infeasible factors within an infeasible temporal sequence.
To address this issue, we propose solving a set of relaxations of the Factored-NLP that evaluate only a subset of variables and constraints.
Each relaxation corresponds to a subgraph of the Factored-NLP and is, therefore, a necessary condition for feasibility.
The algorithm stores the infeasible relaxations as candidates for the minimal subgraph.

The relaxations depend on the semantic information of the variables and constraints and are problem-independent but domain-specific.
Intuitively, we are looking for relaxations that make the graph sparser, smaller, and potentially disconnected, while keeping those constraints that define the infeasible subgraph.
\cref{sec:relax_tamp} presents informative relaxations in the context of Task and Motion Planning.

\paragraph{Leveraging the convergence point of the optimizer}

A powerful heuristic to discover a smaller infeasible subset of variables and constraints is to check the convergence point of the optimizer in an infeasible graph.

Typical optimization methods also converge for infeasible Factored-NLP \( G \), and we can use the convergence point as a heuristic guess to find a subgraph of \( G \) that is infeasible.
Specifically, we test the subgraph spanned by the constraints violated at the convergence point, i.e., \( M' = ( X' \cup \Phi' , E' ) \) where \( \Phi' \subseteq \Phi_G \) is the set of constraints not fulfilled, and \( X' = \{ x_i^k \in X_G \mid \exists \phi_b \in \text{Neigh}(x_i^k) ~ \text{s.t.
} ~ \phi_b \in \Phi' \} \).
If \( M' \) is also infeasible, we consider only \( M' \) as a candidate for the minimal infeasible subgraph.

\paragraph{The complete algorithm}

We combine these three ideas into one algorithm to find an infeasible subgraph.
In this algorithm, we alternate between applying relaxations (each relaxation considers only a subset of variables and constraints) that potentially break the full problem into disconnected components, and computing the minimal infeasible time slice inside each connected component (with a double binary search).
The convergence point of the optimizer is used to reduce the size of the output infeasible subgraph.
The algorithm will return the first infeasible subgraph it finds, and therefore it is best to try the relaxations in a \textit{loose} to \textit{tight} order, as this will likely result in a smaller infeasible subgraph (see Alg.
\num{1} of Appendix A in our paper for the implementation details).

Deciding whether a relaxation should be applied before or after the binary search on the time index is rather arbitrary.
To this end, a relevant observation is that solving a small NLP that is feasible is usually an order of magnitude faster than checking that a larger NLP is infeasible.
Thus, we try to solve numerous small and feasible problems first.

\paragraph{Database of feasible subgraphs}

\label{sec:memory}

The graph structure of the Factored-NLP is a suitable representation to share information about feasibility between different sequences of discrete states.
Factored-NLPs of different task plans contain common subgraphs, which correspond to sequences of partial states that appear in both plans (potentially at different time indices).

During the execution of the Factored-NLP Planner (see \cref{fig:flowchart}), all solved subgraphs are stored either in a feasible or an infeasible database.
Before solving a new nonlinear program, we check if it corresponds to a subgraph of any graph in the feasible database.
This check requires a graph isomorphism test \cite{cordella2004sub}, based on the adjacency structure and semantic information of the variable-vertices (the variable index \( i=1, \ldots, n \) and the name of the constraint \( \phi \in \mathcal{H} \), without considering the time index).
Given the available semantic information, the test is fast in practice (with complexity closer to \( O\left(\left(nK\right)^2\right) \) instead of the worst-case exponential).

\paragraph{Infeasible subgraphs in TAMP}
To conclude the section, \cref{fig:subgraphs} provides two examples of possible infeasible subgraphs of the Factored-NLP of the example domain (\cref{fig:cg_example2}), together with an intuitive explanation of the underlying reason for the continuous infeasibility.

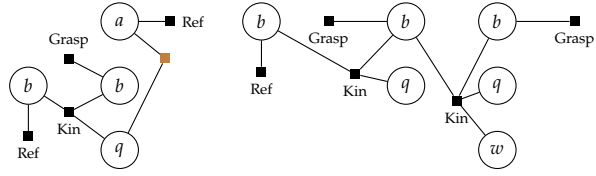
\begin{figure}
	\centering

	\begin{tabular}{cc}

		\begin{tikzpicture}[scale=0.7,every node/.style={transform shape}]
			\node[latent] (b0) {$b$} ;
			\node[latent,right=1 of b0  ] (b1) {$b$} ;
			\node[latent,above=.5 of b1] (a1) {$a$} ;
			\node[latent,below=.5 of b1] (q1) {$q$} ;
			\factor[left=.5 of b1, yshift=0.5cm] {trajp0} {above:Grasp} {b1} {};
			\factor[left=.5 of b1, yshift=-.5cm] {trajp0} {below:Kin} {b0, q1,b1} {};
			\factor[below=1 of b0, yshift=0.5cm] {trajp0} {below:Ref} {b0} {};
			\factor[right=.4 of a1,yshift=-.7cm,color=brown] {} {} {a1,q1} {};
			\factor[right=.5 of a1] {trajp0} { right:Ref } {a1} {};
		\end{tikzpicture} &
		\begin{tikzpicture}[scale=0.7,every node/.style={transform shape}]
			\node[latent] (b0) {$b$} ;

			\node[latent,right=2 of b0] (b1) {$b$} ;
			\node[latent,below=.5 of b1] (q1) {$q$} ;

			\node[latent,right=1 of b1] (b2) {$b$} ;
			\node[latent,below=.5 of b2] (q2) {$q$} ;
			\node[latent,below=.5 of q2] (w2) {$w$} ;

			\factor[below=1 of b0, yshift=0.5cm] {trajp0} {below:Ref} {b0} {};
			\factor[left=1 of b1 ] {trajp0} {below:Grasp} {b1} {};
			\factor[right=1 of b2] {trajp0} {below:Grasp} {b2} {};

			\factor[left=.5 of b1, yshift=-1cm] {trajp0} {below:Kin} {b0, q1,b1} {};
			\factor[right=.5 of b1, yshift=-1.5cm] {trajp0} {below:Kin} {b1, q2,b2,w2} {};
		\end{tikzpicture}

	\end{tabular}

	\caption{
		Two examples of possible infeasible subgraphs of the Factored-NLP shown in \cref{fig:graph-in-planner-q}.
		\textit{Left}: the robot \textit{Q} cannot pick up object \textit{B} from its initial position if \textit{A} is also in the initial position, i.e., \textit{A} blocks the grasp of \textit{B}.
    \textit{Right}: It is not possible to pick up object \textit{B} with the robot \textit{Q} and then do a handover to robot \textit{W}, e.g., due to kinematic constraints, robot \textit{Q} can only pick up the object in a certain way that prevents a handover later.
	}
	\label{fig:subgraphs}
\end{figure}

\section{Reformulation of the Discrete Planning Task}

\label{sec:reformulate}

In this section, we discuss how to reformulate the planning task with information about the continuous infeasibility (Step 4 of the Factored-NLP Planner, \cref{fig:flowchart}).
Specifically,
given an infeasible subgraph \( M \), we modify the discrete planning task \( \langle \mathcal{V}, \mathcal{A}, s_0, g \rangle \) to ensure that the discrete planner will never generate plans whose Factored-NLP contains \( M \).
The mapping is achieved through a two-step process:

First, we translate the infeasible subgraph \( M = (X_M \cup \Phi_M, E_M) \) into a sequence of discrete partial states \( \langle p_0, \ldots, p_L \rangle \).
Recall that each constraint \( \phi \in \Phi_M \) was generated by the mapping \( \Pi: (p, \tilde{p}) \mapsto \phi \).
We now trace this mapping back to obtain \( (p, \tilde{p}) \) which generated \( \phi \), maintaining the relative order of the partial states.
Importantly, we use \( p_0 \) to denote the first partial state that appears in the conflict, which could correspond to any step \( k=1,\ldots,K \) of the task plan.

Given an infeasible sequence of partial states \seq{p_0}{p_L}, we introduce a compilation that eliminates plans containing \seq{p_0}{p_L} starting at any time index, similarly to the plan forbidding compilation \cite{katz2018novel}.
Our compilation introduces binary variables \( l=0, \ldots, L \) to indicate whether the path from \( s_0 \) to \( s_K \) contains the infeasible sequence of partial states.
Specifically, \( b_l = 1 \) indicates that the current path contains the first \( l+1 \) elements of the infeasible sequence.

Given a planning task \( \langle \mathcal{V}, \mathcal{A}, s_0, g \rangle \) and an infeasible sequence \seq{p_0}{p_L},
the new discrete planning task is \( \langle \mathcal{V'}, \mathcal{A'}, s_0', g' \rangle \), where:
\begin{itemize}
	\item \( \mathcal{V'} = \mathcal{V} \cup \{ b_0, \ldots, b_L \} \),
	\item \( s_0' = s_0 \cup \{ b_l = 0 \mid l=1, \ldots, L \} \cup \{ b_0 = 1 ~ \text{if} ~ p_0 \subseteq s_0 ; ~ b_0 = 0 ~ \text{otherwise} \} \),
	\item \( g' = g \cup \{ b_L = 0 \} \),
	\item \( \mathcal{A'} = \{ a' = \text{mod}(a), ~ a \in \mathcal{A} \} \),
\end{itemize}
where \( a' = \text{mod}(a) \) modifies action \( a \) by adding conditional effects to ensure that if action \( a \) is executed when \( b_{l-1} = 1 \), and executing \( a \) makes \( p_l \) true, then \( a \) sets \( b_l = 1 \) and \( b_{l-1} = 0 \).
Alternatively, if \( a \) is executed when \( b_{l-1} = 1 \), and it does not make \( p_l \) true, then \( a \) sets \( b_{l-1} = 0 \).
The last binary variable \( b_L \) cannot transition from \( 1 \) to \( 0 \) (i.e., \( b_L = 1 \) is a dead end).
The formal reformulation \( a' = \text{mod}(a) \) is shown in Appendix B of our paper.

We can now state the proposition which shows that this compilation eliminates exactly all solutions which satisfy \seq{p_0}{p_L}.
\begin{proposition}
	\label{thm:forbid}
	Let \( T =
	\langle \mathcal{V}, \mathcal{A}, s_0, g \rangle \) be a SAS+ planning task, \seq{p_0}{p_L} be some infeasible sequence, and \( T' = \langle \mathcal{V'}, \mathcal{A'}, s_0', g' \rangle \) be the reformulation described above.
	A plan \( \pi \) is a solution of \( T' \) if and only if \( \pi \) is a solution of \( T \) and the states along \( \pi \) do not contain any sequence of states \( \langle s'_k, \ldots, s'_{k+L} \rangle \) such that \( p_l \subseteq s'_{k+l} \) for \( l = 0, \ldots, L \), starting at any \( k \).
\end{proposition}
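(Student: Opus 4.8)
The plan is to reduce the claim to a structural invariant on the auxiliary variables $\{b_0,\ldots,b_L\}$, using the obvious one-to-one correspondence between plans of $T$ and plans of $T'$. First I would note that $\textup{mod}$ only adds \emph{conditional effects} writing the new variables and leaves the preconditions of the original actions untouched. Hence $\pi = \seq{a_1}{a_K} \mapsto \pi' = \seq{\textup{mod}(a_1)}{\textup{mod}(a_K)}$ is a bijection on action sequences which preserves applicability from $s_0$ (resp.\ $s_0'$), and along the induced executions the projections onto $\mathcal{V}$ agree. Since $g' = g \cup \{b_L = 0\}$ and $\mathcal{V}' \supseteq \mathcal{V}$, this gives immediately: $\pi'$ is a solution of $T'$ if and only if $\pi$ is a solution of $T$ \emph{and} $b_L = 0$ in the final state reached by $\pi'$. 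Everything then reduces to characterizing the value of $b_L$ at the end of an execution.

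The core step is the following invariant, proved by induction on the number $k$ of executed actions. Let $\seq{s_0}{s_k}$ be the discrete states of $T$ produced by $\seq{a_1}{a_k}$. Then, in the state reached by $\seq{\textup{mod}(a_1)}{\textup{mod}(a_k)}$: for each $l < L$, $b_l = 1$ iff $k \ge l$ and $p_m \subseteq s_{k-l+m}$ for all $m = 0,\ldots,l$ (the length-$(l{+}1)$ prefix of the forbidden sequence matches the window of states ending \emph{exactly} at the current one); and $b_L = 1$ iff there is some $k' \le k$ with $p_m \subseteq s_{k'-L+m}$ for all $m = 0,\ldots,L$ (a full occurrence has been seen at some step so far). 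The base case $k = 0$ follows directly from the definition of $s_0'$, which sets $b_0 = 1$ exactly when $p_0 \subseteq s_0$ and all other $b_l = 0$ (this also covers $L = 0$). For the inductive step I would do a case analysis on the conditional effects of $\textup{mod}(a_k)$ as specified in Appendix~B: the effect for index $l \ge 1$ fires positively when $b_{l-1} = 1$ in the pre-state and $a_k$ establishes $p_l$ (setting $b_l := 1$), and clears the index otherwise; the re-arming rule for $b_0$ sets $b_0 := 1$ whenever $p_0$ holds in the new state, so occurrences may start at any step; and $b_L$, lacking a successor index, is never cleared, which produces the ``$\exists k' \le k$'' form. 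Here ``$a_k$ establishes $p_l$'' means $p_l \subseteq \textup{succ}(s_{k-1},a_k) = s_k$, which, since conditional effects may only read the pre-state, is compiled as a condition on $s_{k-1}$ together with the effects of $a_k$. I would also verify that when two conditional effects would write conflicting values into the same $b_l$ — which can only happen when the forbidden sequence self-overlaps — the convention built into $\textup{mod}$ (arming/advancing a match dominates clearing it) is exactly what keeps the invariant true.

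Finally, instantiating the invariant at the last state of the execution of $\pi'$: $b_L = 0$ there iff there is no $k' \le K$ with $p_m \subseteq s_{k'-L+m}$ for all $m$, i.e.\ iff the states along $\pi$ contain no sequence $\seq{s'_k}{s'_{k+L}}$ with $p_l \subseteq s'_{k+l}$ for all $l = 0,\ldots,L$, starting at any $k$. Combined with the reduction of the first paragraph, this is exactly the asserted equivalence. The overall argument mirrors the correctness proof of the plan-forbidding compilation of \cite{katz2018novel}, generalized from forbidding a single full plan to forbidding every occurrence of a fixed substring of partial states.

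I expect the main obstacle to be the bookkeeping in the inductive step: phrasing ``$a_k$ establishes $p_l$'' as a condition evaluable on the pre-state plus $a_k$'s effects, and ruling out spurious interactions between the conditional effects for different indices $l$ (in particular the self-overlap / simultaneous-write case, where the write-precedence of $\textup{mod}$ must be exactly right so that every full occurrence still triggers $b_L := 1$). The remaining ingredients — the plan bijection and the reduction to the single goal literal $b_L = 0$ — are routine.
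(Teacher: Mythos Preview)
The thesis does not actually supply a proof of this proposition: it states the result, defers the formal definition of $\textup{mod}(a)$ to Appendix~B of the original RA-L paper, and moves on. The only hint at a proof strategy in the thesis is the remark accompanying the analogous prefix-forbidding compilation in \cref{ch:diverse_planning}, namely that ``the proof of the correctness of the compilation is similar to the proof of Theorem~6 in \cite{katz18novel}.'' Your plan is exactly that argument, carried over from forbidding a fixed plan/prefix to forbidding an arbitrary contiguous occurrence of a partial-state sequence, so in spirit it is the same approach the authors have in mind.

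Your plan is sound and hits the right technical points. The reduction via the $\pi \leftrightarrow \pi'$ bijection and the goal literal $b_L=0$ is the correct framing, and the invariant you state on the $b_l$'s is the natural one to prove. The two places you flag as delicate are indeed the only non-routine parts: (i) expressing ``$a_k$ establishes $p_l$'' purely in terms of the pre-state and the static effects of $a_k$, and (ii) the self-overlap case where the re-arming rule for $b_0$ (and more generally advancing to $b_l$) conflicts with the clearing effect on the same variable. For (ii), note that the paper's informal description of $\textup{mod}$ (``sets $b_l=1$ and $b_{l-1}=0$'') literally tracks only a single match frontier, which would make your invariant ``$b_l=1$ iff the length-$(l{+}1)$ prefix matches the current window'' too strong; you will need the precise conditional-effect ordering from Appendix~B to pin down whether multiple $b_l$ can be simultaneously $1$ and which write wins. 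That bookkeeping is the only place your proof could go wrong, and you have already identified it.
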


Multiple infeasible sequences are forbidden by iterative reformulation.
We are now ready to discuss the properties of our Factored-NLP Planner (\cref{fig:flowchart}):
\begin{theorem}
	\label{thr:thetheorem}
	If the underlying classical planner is sound and complete, and the nonlinear optimizer always finds a feasible solution if one exists, then the Factored-NLP Planner is sound and complete.
\end{theorem}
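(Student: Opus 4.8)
The plan is to prove soundness and completeness separately, relying on two already-established facts: Theorem~\ref{thm:prefix}-style locality of the continuous problem (here re-cast through Property~\ref{prop:subgraph-intro}, factor time invariance) and Proposition~\ref{thm:forbid}, which characterizes exactly which plans survive the reformulation. The overall structure mirrors the proof of the analogous Theorem in \cref{ch:diverse_planning}, but the bookkeeping is heavier because conflicts are now subgraphs of the Factored-NLP rather than plan prefixes.

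First I would handle \textbf{soundness}. The algorithm terminates and returns $\langle (s_0,\xs_0),\ldots,(s_K,\xs_K)\rangle$ only when (i) the task planner produced a task plan $\langle s_0,\ldots,s_K\rangle$ valid for the \emph{current} reformulated planning task, and (ii) the nonlinear solver found a feasible assignment of the continuous variables to the Factored-NLP $G(\langle s_0,\ldots,s_K\rangle)$. I need to argue that a plan valid for the reformulated task is also valid for the original task $\langle \mathcal{V},\mathcal{A},s_0,g\rangle$. This is the ``only if'' direction packaged inside Proposition~\ref{thm:forbid}, applied once per forbidden infeasible sequence: each reformulation step only \emph{removes} solutions (those whose state sequence contains the infeasible partial-state sequence), so after finitely many reformulations the set of solutions is a subset of the original solution set. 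Combined with the fact that a feasible $G(\langle s_0,\ldots,s_K\rangle)$ is exactly a feasible solution of the continuous program \eqref{eq:nlp} (this is how the Factored-NLP was defined in \cref{sec:graphnlp}), the returned object satisfies all discrete transition constraints, the goal $g\subseteq s_K$, and all nonlinear constraints $\phi_b$; hence it is a genuine PNTC solution.

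Next, \textbf{completeness}. Suppose a PNTC solution exists; I want to show the algorithm finds one. The key claim is an \emph{invariant}: at every iteration, every PNTC solution remains a solution of the current reformulated planning task. This follows because we only ever forbid sequences of partial states $\langle p_0,\ldots,p_L\rangle$ obtained from a subgraph $M$ that the nonlinear solver certified infeasible; by Property~\ref{prop:subgraph-intro}, if a task plan's state sequence $\langle s_0,\ldots,s_K\rangle$ contains $\langle p_0,\ldots,p_L\rangle$ starting at any index, then $M\subseteq G(\langle s_0,\ldots,s_K\rangle)$, and since supergraphs of infeasible Factored-NLPs are infeasible (stated in \cref{sec:bg:factored-nlp}), that whole task plan is geometrically infeasible, so it cannot underlie any PNTC solution. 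Thus forbidding it discards only non-solutions, and the invariant is maintained (here I would invoke the ``if'' direction of Proposition~\ref{thm:forbid} to see that the reformulation forbids \emph{precisely} those and nothing more). Given the invariant, as long as the algorithm has not terminated, the current task has at least one solution (the assumed PNTC solution's task plan), so by soundness and completeness of the underlying PDDL planner it returns some valid task plan; the nonlinear solver either certifies its Factored-NLP feasible --- in which case, by the solver's assumed completeness, we terminate with a solution --- or extracts a fresh minimal infeasible subgraph. To close the argument I note that there are only finitely many task plans of each length whose Factored-NLP is infeasible up to the graph-isomorphism equivalence used in the database (\cref{sec:memory}), each newly extracted conflict strictly shrinks the set of surviving infeasible task plans, and the database prevents re-extracting a known conflict; hence after finitely many iterations every infeasible candidate is pruned and the planner is forced to output a feasible one. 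I would phrase the termination part carefully at a fixed horizon and then note the horizon is enumerated as in MBTS, so no feasible solution at any length is missed.

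The main obstacle I anticipate is the completeness-plus-termination bookkeeping: making precise that (a) conflicts extracted from different task plans are comparable via the isomorphism test so that ``the same conflict is never added twice'' really holds, and (b) each reformulation removes at least one previously-surviving infeasible task plan, so the process cannot stall by cycling through equivalent-but-syntactically-different conflicts. Soundness, by contrast, is almost immediate once Proposition~\ref{thm:forbid} is in hand. I would also be slightly careful about one modeling subtlety flagged at the end of \cref{ch:diverse_planning} --- that the underlying planner must not perform relevance-based pruning that merges actions with identical discrete effects but different constraints --- and fold that into the ``sound and complete classical planner'' hypothesis.
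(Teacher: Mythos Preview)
Your core argument matches the paper's proof sketch exactly: soundness and completeness both hinge on the combination of Property~\ref{prop:subgraph-intro} (any plan whose state sequence contains the forbidden partial-state sequence has an infeasible Factored-NLP, because the infeasible subgraph embeds into it) and Proposition~\ref{thm:forbid} (the reformulation removes precisely those plans and no others). The paper provides only a two-sentence sketch invoking these two facts, so your elaboration into separate soundness and completeness invariants is more detailed than, but logically identical to, what appears there.

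Two points about the termination argument you add (which the paper does not attempt): the feasible-subgraph database of \cref{sec:memory} is a runtime speedup for avoiding redundant \texttt{Solve} calls on subgraphs already known to be feasible---it does not deduplicate conflicts, so it cannot carry the weight you put on it; and the Factored-NLP Planner does not enumerate horizons as MBTS does, it calls a PDDL planner that returns a plan of whatever length, so ``the horizon is enumerated as in MBTS'' is not how progress is guaranteed. The cleaner termination argument is simply that each iteration forbids at least the task plan just tried (via its conflict), the forbidden set is monotonically growing, and the underlying planner being complete will eventually return a feasible task plan once all shorter infeasible ones sharing its prefix structure are blocked---though making this rigorous for unbounded plan length is genuinely delicate and the paper sidesteps it entirely.
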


\textit{Proof Sketch:}
The proof follows from the fact that any sequence we forbid cannot be part of any feasible solution (because it generates a Factored-NLP with a subgraph found to be infeasible, \cref{prop:subgraph-intro}), together with the fact that our compilation eliminates only plans which contain these sequences (\cref{thm:forbid}).
The completeness of the algorithm does not require the infeasible subgraphs to be minimal, nor the mapping \( \Pi \).
Nevertheless, these properties are desirable for an efficient algorithm.

\section{Experimental Results}

\label{sec:experimental_results}

\begin{figure}
	\centering
	\includegraphics[width=.3\textwidth]{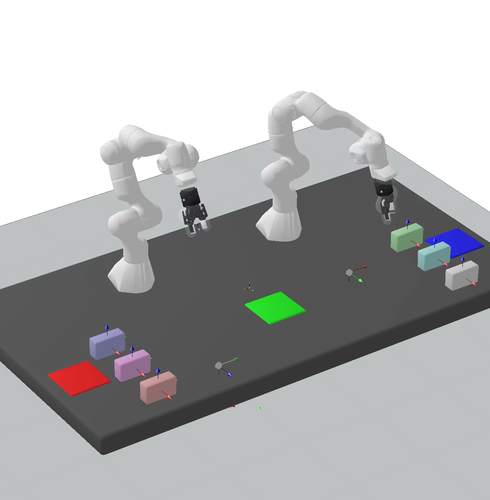}
	\includegraphics[width=.3\textwidth]{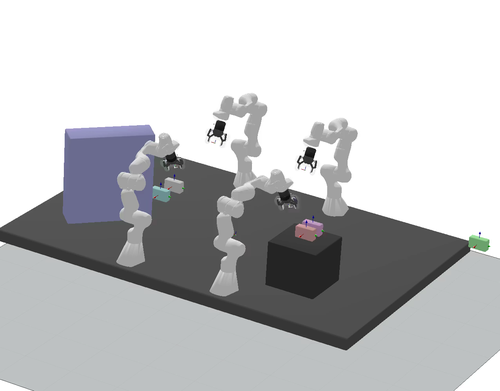}
	\includegraphics[width=.3\textwidth]{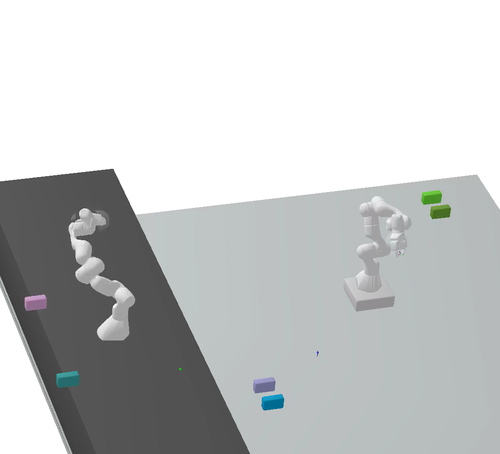}
	\caption{Three environments used in the benchmark. \textit{Left}: \textit{Laboratory}. \textit{Center}: \textit{Workshop}. \textit{Right}: \textit{Field}.}
	\label{fig:planner:results}
\end{figure}

Our algorithm is evaluated in three different simulated scenarios, where the goal is to move obstacles, rearrange, and stack up to six blocks to build towers with several robots.
The evaluation on real robots is reported in \cref{sec:real}.
\begin{enumerate}
	\item \textit{Laboratory (Lab)}: Two 7-DOF manipulator arms execute pick and place actions to build a tower.
	      The solution requires handovers, regrasping, and removing obstacles.
	      It is based on a real-world setting, \cref{fig:real}.
	\item \textit{Workshop (Work)}: An extension of the \textit{Laboratory} scenario that includes four robots and a stick that can be grasped and used as a tool to reach blocks,  \cref{fig:showcase}.
	\item \textit{Field}: Contains a fixed 7-DOF manipulator and a mobile 7-DOF manipulator, with two additional action operators: \textit{start-move} and \textit{end-move}, for moving the base of the mobile robot on the floor, \cref{fig:showcase}.
\end{enumerate}

For each scenario, we generate five different problems (e.g., \textit{Lab\_\{1,2,3,4,5\}}) by modifying the discrete goal and the initial configuration to increase complexity at both the discrete and geometric levels, while keeping the number of movable objects constant (except for the easier versions \textit{Work\_1} and \textit{Field\_1}).

\subsection{Relaxations for Finding Infeasible Subgraphs}
\label{sec:relax_tamp}

The formulation \textit{PNTC} and the solver are general and domain-independent.
Domain knowledge is introduced through the relaxations used for extracting minimal conflicts (\cref{sec:relax}).
The following relaxations (applicable in any TAMP problem) are used in the benchmark scenarios:
\begin{itemize}
	\item \emph{Removal of trajectories}: The remaining graph only contains variables for the keyframes, considerably reducing the dimensionality of the nonlinear program while still detecting most of the geometric infeasibilities.
  We note that this is the most important relaxation because the dimensionality of the underlying nonlinear program is reduced considerably (a trajectory is represented with, e.g., \num{20} waypoints, while a keyframe variable corresponds to a single waypoint).

	\item
	      \emph{Removal of collision constraints}:
	      Collision constraints connect all robot configuration and object pose variables at the same time step, resulting in a densely connected graph (see \cref{fig:cg_example2}).
	      Without collisions, the graph becomes sparse, and object and robot variables are only connected by grasping, kinematics, and placement constraints.
	\item \emph{Removal of time consistency}:
	      Time-consistency constraints (\textit{Equal} in \cref{fig:cg_example2}) appear when objects are not modified by a discrete action.
	      This relaxation does not consider the long-term dependencies of the manipulation sequence and creates a sparse time structure.
	\item \emph{Removal of robot variables}:
	      The remaining graph considers only the variables for the objects, detecting infeasible placements due to collisions between objects.
\end{itemize}

\begin{sidewaystable}[ph!]
	\small
	\caption{Number of NLP evaluations and CPU time, averaged over 10 randomly seeded runs, with standard deviations in gray.}
	\begin{tabular}{lllrrrrrrrrrrrr}
		\toprule
		{}       & \multicolumn{2}{c}{\textit{length}} & \multicolumn{2}{c}{\textit{One-way}} & \multicolumn{2}{c}{\textit{MBTS}}  & \multicolumn{2}{c}{\textit{FNPP\_t}} & \multicolumn{2}{c}{\textit{FNPP\_tr}} & \multicolumn{2}{c}{\textit{FNPP\_trn}} & \multicolumn{2}{c}{\textit{FNPP\_trng}}                                                                                                                                                                                                                                                                                                \\
		\cmidrule(lr){2-3}
		\cmidrule(lr){4-5}
		\cmidrule(lr){6-7}
		\cmidrule(lr){8-9}
		\cmidrule(lr){10-11}
		\cmidrule(lr){12-13}
		\cmidrule(lr){14-15}
		{}       & {$N_0$}                             & {$N$}                                & {\textit{NLP}}                     & {\textit{time}}                      & {{\textit{NLP}}}                      & {\textit{time}}                        & {{\textit{NLP}}}                        & {\textit{time}}                   & {\textit{NLP}}                      & {\textit{time}}                             & {\textit{NLP}}                      & {\textit{time}}                             & {\textit{NLP}}                     & {\textit{time}}                             \\
		\midrule
		Work\_1  & 2                                   & 4                                    & 77.0{\scriptsize \color{gray} 0.0} & 8.2{\scriptsize \color{gray} 1.0}    & 371{\scriptsize \color{gray} 58.5}    & 23.5{\scriptsize \color{gray} 4.4}     & 50.8{\scriptsize \color{gray} 12.0}     & 5.7{\scriptsize \color{gray} 1.2} & 53.0{\scriptsize \color{gray} 12.6} & 5.7{\scriptsize \color{gray} 1.3}           & 60.8{\scriptsize \color{gray} 10.5} & 5.6{\scriptsize \color{gray} 0.9}           & 55.2{\scriptsize \color{gray} 1.4} & \textbf{5.3}{\scriptsize \color{gray} 0.1}  \\
		Work\_2  & 4                                   & 6                                    & -                                  & -                                    & -                                     & -                                      & -                                       & -                                 & 113{\scriptsize \color{gray} 40.4}  & 22.7{\scriptsize \color{gray} 9.2}          & 94.7{\scriptsize \color{gray} 1.7}  & 19.7{\scriptsize \color{gray} 6.0}          & 86.8{\scriptsize \color{gray} 1.5} & \textbf{16.7}{\scriptsize \color{gray} 2.9} \\
		Work\_3  & 4                                   & 6                                    & -                                  & -                                    & -                                     & -                                      & -                                       & -                                 & 105{\scriptsize \color{gray} 37.0}  & \textbf{19.1}{\scriptsize \color{gray} 5.3} & 95.6{\scriptsize \color{gray} 1.0}  & 22.1{\scriptsize \color{gray} 5.9}          & 86.1{\scriptsize \color{gray} 1.5} & 21.4{\scriptsize \color{gray} 6.1}          \\
		Work\_4  & 8                                   & 10                                   & -                                  & -                                    & -                                     & -                                      & -                                       & -                                 & 282{\scriptsize \color{gray} 0.0}   & \textbf{53.1}{\scriptsize \color{gray} 5.6} & 307{\scriptsize \color{gray} 1.9}   & 56.4{\scriptsize \color{gray} 7.6}          & 270{\scriptsize \color{gray} 1.8}  & 55.4{\scriptsize \color{gray} 9.0}          \\
		Work\_5  & 8                                   & 11                                   & -                                  & -                                    & -                                     & -                                      & -                                       & -                                 & -                                   & -                                           & 355{\scriptsize \color{gray} 4.9}   & \textbf{76.0}{\scriptsize \color{gray} 7.0} & 309{\scriptsize \color{gray} 5.3}  & 76.8{\scriptsize \color{gray} 9.4}          \\
		Lab\_1   & 2                                   & 3                                    & 25.0{\scriptsize \color{gray} 0.0} & 7.1{\scriptsize \color{gray} 1.0}    & 25.0{\scriptsize \color{gray} 0.0}    & 4.1{\scriptsize \color{gray} 0.5}      & 21.0{\scriptsize \color{gray} 0.0}      & 4.5{\scriptsize \color{gray} 0.2} & 25.0{\scriptsize \color{gray} 0.0}  & \textbf{3.2}{\scriptsize \color{gray} 0.2}  & 30.0{\scriptsize \color{gray} 0.0}  & 3.3{\scriptsize \color{gray} 0.1}           & 28.0{\scriptsize \color{gray} 0.0} & 3.3{\scriptsize \color{gray} 0.2}           \\
		Lab\_2   & 2                                   & 3                                    & 12.0{\scriptsize \color{gray} 0.0} & 3.1{\scriptsize \color{gray} 0.5}    & 28.9{\scriptsize \color{gray} 0.3}    & 3.7{\scriptsize \color{gray} 0.5}      & 32.0{\scriptsize \color{gray} 0.0}      & 5.5{\scriptsize \color{gray} 0.3} & 46.0{\scriptsize \color{gray} 0.0}  & 4.3{\scriptsize \color{gray} 0.2}           & 23.0{\scriptsize \color{gray} 0.0}  & \textbf{2.1}{\scriptsize \color{gray} 0.1}  & 21.0{\scriptsize \color{gray} 0.0} & \textbf{2.1}{\scriptsize \color{gray} 0.1}  \\
		Lab\_3   & 4                                   & 5                                    & 19.0{\scriptsize \color{gray} 0.0} & 8.4{\scriptsize \color{gray} 1.2}    & 34.0{\scriptsize \color{gray} 0.0}    & 18.5{\scriptsize \color{gray} 2.7}     & 26.0{\scriptsize \color{gray} 0.0}      & 5.9{\scriptsize \color{gray} 0.4} & 23.0{\scriptsize \color{gray} 0.0}  & \textbf{3.1}{\scriptsize \color{gray} 0.2}  & 25.0{\scriptsize \color{gray} 0.0}  & 3.3{\scriptsize \color{gray} 0.4}           & 24.0{\scriptsize \color{gray} 0.0} & 3.2{\scriptsize \color{gray} 0.2}           \\
		Lab\_4   & 4                                   & 9                                    & -                                  & -                                    & -                                     & -                                      & -                                       & -                                 & -                                   & -                                           & 70.0{\scriptsize \color{gray} 0.0}  & 6.5{\scriptsize \color{gray} 0.6}           & 60.1{\scriptsize \color{gray} 3.5} & \textbf{6.3}{\scriptsize \color{gray} 0.4}  \\
		Lab\_5   & 12                                  & 17                                   & -                                  & -                                    & -                                     & -                                      & -                                       & -                                 & 87.0{\scriptsize \color{gray} 0.0}  & \textbf{18.7}{\scriptsize \color{gray} 1.8} & 93.0{\scriptsize \color{gray} 0.0}  & 19.1{\scriptsize \color{gray} 2.0}          & 83.0{\scriptsize \color{gray} 0.0} & 19.0{\scriptsize \color{gray} 2.0}          \\
		Field\_1 & 2                                   & 4                                    & 19.0{\scriptsize \color{gray} 0.0} & 7.0{\scriptsize \color{gray} 2.0}    & 90.0{\scriptsize \color{gray} 0.0}    & 15.3{\scriptsize \color{gray} 3.1}     & 19.0{\scriptsize \color{gray} 0.0}      & 5.7{\scriptsize \color{gray} 1.1} & 14.1{\scriptsize \color{gray} 0.3}  & 2.9{\scriptsize \color{gray} 1.4}           & 16.0{\scriptsize \color{gray} 0.0}  & \textbf{2.6}{\scriptsize \color{gray} 0.3}  & 16.0{\scriptsize \color{gray} 0.0} & 3.1{\scriptsize \color{gray} 1.0}           \\
		Field\_2 & 2                                   & 6                                    & -                                  & -                                    & -                                     & -                                      & -                                       & -                                 & 46.0{\scriptsize \color{gray} 0.0}  & \textbf{6.1}{\scriptsize \color{gray} 0.4}  & 53.0{\scriptsize \color{gray} 0.0}  & 6.3{\scriptsize \color{gray} 0.5}           & 52.5{\scriptsize \color{gray} 0.5} & 6.3{\scriptsize \color{gray} 0.5}           \\
		Field\_3 & 4                                   & 8                                    & -                                  & -                                    & -                                     & -                                      & -                                       & -                                 & 75.0{\scriptsize \color{gray} 0.0}  & \textbf{11.7}{\scriptsize \color{gray} 1.2} & 84.0{\scriptsize \color{gray} 0.0}  & 12.3{\scriptsize \color{gray} 1.4}          & 78.6{\scriptsize \color{gray} 0.5} & \textbf{11.7}{\scriptsize \color{gray} 0.7} \\
		Field\_4 & 6                                   & 10                                   & -                                  & -                                    & -                                     & -                                      & -                                       & -                                 & 67.0{\scriptsize \color{gray} 0.0}  & \textbf{13.2}{\scriptsize \color{gray} 1.5} & 77.0{\scriptsize \color{gray} 0.0}  & 13.6{\scriptsize \color{gray} 1.6}          & 76.0{\scriptsize \color{gray} 0.0} & 13.5{\scriptsize \color{gray} 1.3}          \\
		Field\_5 & 6                                   & 11                                   & -                                  & -                                    & -                                     & -                                      & -                                       & -                                 & 282{\scriptsize \color{gray} 0.0}   & 56.5{\scriptsize \color{gray} 6.6}          & 289{\scriptsize \color{gray} 1.0}   & \textbf{50.7}{\scriptsize \color{gray} 5.5} & 262{\scriptsize \color{gray} 0.5}  & 51.4{\scriptsize \color{gray} 6.6}          \\
		\bottomrule
	\end{tabular}
	\label{table:thetable}
\end{sidewaystable}

\subsection{Benchmark}

\paragraph{Algorithms under comparison}
We compare our approach with two different formulations that combine a discrete search with joint nonlinear optimization for solving Task and Motion Planning problems.
\begin{itemize}

	\item[--] \emph{One-way interface between Top-K Planning and a nonlinear optimizer (One-way)}.
		This baseline combines Top-K planning \cite{katz2018novel} to generate a set of different task plans with a nonlinear optimizer to evaluate the plans.
		The planner does not receive any information about the geometric reasons for infeasibility and only blocks the evaluated plans.

	\item[--] \emph{Multi-Bound Tree Search (MBTS)}.
		The MBTS Solver \cite{toussaint2017multi} incrementally builds a tree in a breadth-first order to explore sequences of discrete actions that reach the high-level goal.
		Instead of solving the full continuous optimization problem directly, MBTS first computes relaxed versions (\textit{bounds})
		that consider a subset of variables and constraints (see \cref{sec:multibound}).

	\item[--] \emph{Four Variations of our Factored-NLP Planner (in short, FNPP)}.
		We evaluate our full planner \textit{FNPP\_trng}, and
		three additional versions: \textit{FNPP\_t}, \textit{FNPP\_tr}, and \textit{FNPP\_trn} to conduct an ablation study of the algorithm to extract infeasible subgraphs.
		Suffixes indicate: $t$=time search, $r$=relaxation, $n$=convergence heuristic, and $g$=feasible graph database.

\end{itemize}

\paragraph{Metrics}

Each algorithm is run 10 times with different random seeds and a timeout of \num{100} seconds.
For each method, we report on the number of evaluated NLPs (\textit{NLP}) (lower is better) and the CPU time in seconds (\textit{time}) (lower is better) in \cref{table:thetable}.
``--'' indicates a failure to find a solution within \num{100} seconds with at least a \SI{70}{\percent} success rate.

\textit{Time}\footnote{Experiments are conducted on a Single Core i7-1165G7@2.80GHz} provides an objective way to compare algorithms that use different underlying methods.
The number of solved NLPs is informative but does not capture
the influence of the size and feasibility of NLPs on the running time of the solver.

For each problem, \textit{N} denotes the length of the shortest found task plan that is geometrically feasible, and $N_0$ is the length of the task plan that solves the initial discrete task (that is, without considering the continuous constraints).
\textit{N} and $N_0$ are proxies for the difficulty: the number of candidate plans typically grows exponentially with $N$, and the difference $N-N_0$ indicates the amount of
information about the continuous constraints that should be provided to the discrete planner.
The approximate branching factor is \num{12} in \textit{Lab\_\{1,2,3,4,5\}},
\num{13} in \textit{Field\_\{2,3,4,5\}}, \num{24} in \textit{Work\_\{2,3,4,5\}}, \num{4} in \textit{Work\_1}, and \num{5} in \textit{Field\_1}.

\paragraph{Comparison to baselines}

Concerning the problems solved, \textit{One-way} and \textit{MBTS} can only solve the easier problems in each scenario, while \textit{FNPP\_trn/trng} solves all the problems.
Our algorithm is significantly faster in the problems also solved by \textit{One-way} and \textit{MBTS}, because the more efficient encoding of geometric information reduces the running time.

The success rate of our planners \textit{FNPP\_trn/trng} is \SI{100}{\percent} in all problems except for \textit{Field\_5} (\SI{80}{\percent}), \textit{Work\_4} (\SI{95}{\percent}), and \textit{Work\_5} (\SI{90}{\percent}) where the optimizer fails to solve feasible Factored-NLPs in a few runs.
The performance of \textit{FNPP\_trng} is not affected by the branching factor of the underlying problem and provides good scaling with respect to \(N\) and \(N-N_0\).
The highest computational time corresponds to \textit{Field\_5} and \textit{Work\_5}, that require a long plan and detecting of collisions between movable objects.
In TAMP, the practical size of the Factored-NLPs is \(O(n^2 K)\) (where \(K\) is the length of the action sequence, and \(n\) is the number of objects and robots).
The domains are modelled using a small set (\(<20\)) of different types of nonlinear constraints (e.g., \cref{fig:cg_example2}).

\subsection{Ablation Study}

\begin{itemize}
	\item[--] \emph{Analysis of the relaxations:}
		\textit{FNPP\_t} detects conflicts of the form \seq{s_k}{s_{k+l}}, while \textit{FNPP\_tr} checks relaxations to generate smaller conflicts \seq{p_k}{p_{k+l}}.
		Small conflicts lead to more aggressive pruning of task plans and are essential to solve the harder problems (the number of solved problems is \num{5} versus \num{13} out of \num{15}).
		After an ablation study of each relaxation, we observe that the \textit{Removal of trajectories} and the \textit{Removal of collision constraints} are the most informative relaxations.

	\item[--] \emph{Analysis of the convergence heuristic:}
		The results show that the convergence heuristic is important in problems that require reasoning about the collisions between movable objects, e.g., when the robot must move one object before placing another to avoid a collision.
		In this case, the relaxations are not informative, while the convergence point of the optimizer in these infeasible problems usually indicates which objects are in collision.
		\textit{FNPP\_tr} solves \num{13} out of \num{15}, and  \textit{FNPP\_trn} solves \num{15} out of \num{15}.

	\item[--] \emph{Analysis of the database of feasible graphs:}
		\textit{FNPP\_trng} reduces the number of solved NLPs, from a total average of \num{1673} to \num{1508}, but there is no improvement in computational time.
		We conjecture that the database approach will provide higher benefits in a setting where solving the NLPs requires more time.

\end{itemize}

\begin{figure}[!t]
	\centering
	\includegraphics[width=.32\textwidth]{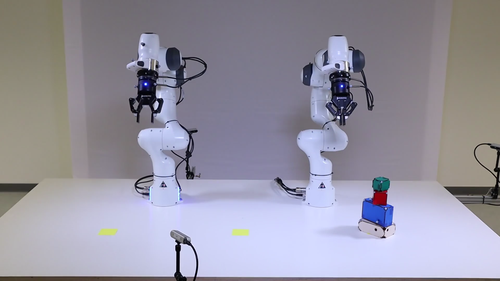}
	\includegraphics[width=.32\textwidth]{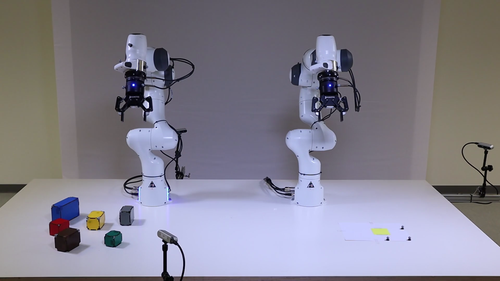}
	\includegraphics[width=.32\textwidth]{pics_nyu_inria/videos/real_obs_mid_fast.png}
	\caption{Three scenarios in the real world evaluation. \textit{Left}: \textit{Hanoi-Tower}. \textit{Center}: \textit{Tower}. \textit{Right}: \textit{Obstacles-Tower}. \vspace{.5cm}}
	\label{fig:planner:real}
\end{figure}

\subsection{Scalability}
We conduct two additional experiments in the \textit{Laboratory} scenario to explicitly evaluate the scalability of the method when increasing the number of blocks to be stacked (from \num{4} to \num{32}) and the number of movable obstacles on a cluttered table (from \num{1} to \num{6}).

Experimentally, the running time of the Factored-NLP Planner scales polynomially with the number of objects and plan length, and the practical bottleneck is the time spent on solving large nonlinear programs (with cubic complexity on the number of objects and linear on the plan length).

The main weakness of our method is that the nonlinear optimizer is not guaranteed to find a solution for a (sub)Factored-NLP even if one exists, given that the nonlinear constraints define a non-convex optimization problem (which could break the assumption in \cref{thr:thetheorem}).
However, the extensive experiments demonstrate that the solver is efficient and reliable in relevant use-cases of TAMP.

\subsection{Real-Time Planning in the Real World}
\label{sec:real}
We demonstrate our solver in a real-world version of the \textit{Laboratory} environment (two 7-DOF manipulators and up to \num{6} movable objects, see \cref{fig:real}.
The solver is integrated into a \textit{Sense}-\textit{Plan}-\textit{Act} pipeline, where we first perceive the scene with an external motion capture system, compute a full discrete and continuous plan, and then execute the plan.

The real-world evaluation consists of three scenarios: \textit{Tower}, \textit{Hanoi-Tower}, and \textit{Obstacles-Tower}, for a total of \num{11} problems (see \cref{fig:planner:real}).
The high-level goal is to build a tower of cubes at different locations: \textit{Hanoi-Tower} introduces the classical Hanoi logic constraints,
and \textit{Obstacles-Tower} requires plans that first move obstacles away.
The planning time differs across problems: \SI{2.8}{s} to build a tower of \num{6} blocks in the center of the table (\num{12} discrete actions),
\SI{8.8}{s} to remove two obstructing blocks and stack \num{4} blocks (\num{12} actions), \SI{9.4}{s} to build a Hanoi Tower (\num{12} actions), and \SI{27.2}{s} to solve a problem that requires removing obstructing blocks and transferring blocks from the left to the right side (\num{16} actions).
Recordings of planning and execution are shown on the project webpage\footnote{\url{https://quimortiz.github.io/graphnlp/}}.

\begin{figure}[!t]
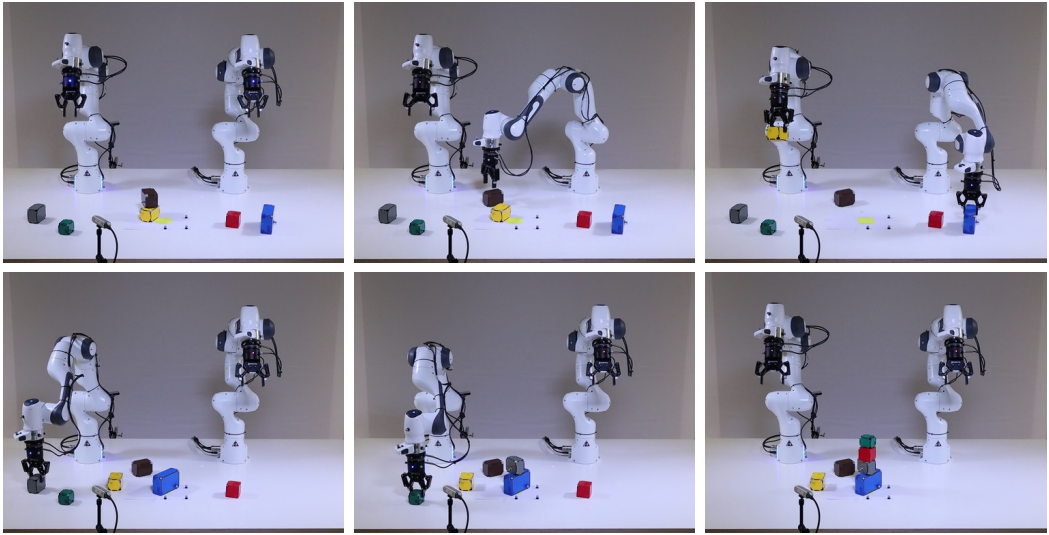

	\begingroup
	\setlength{\tabcolsep}{2pt} %

	\begin{tabular}{ccc}

		\includegraphics[width=.32\linewidth]{pics_ral/obs_mid_towerv_crop/pic_0001_crop.jpg} &
		\includegraphics[width=.32\linewidth]{pics_ral/obs_mid_towerv_crop/pic_0015_crop.jpg} &
		\includegraphics[width=.32\linewidth]{pics_ral/obs_mid_towerv_crop/pic_0025_crop.jpg}   \\

		\includegraphics[width=.32\linewidth]{pics_ral/obs_mid_towerv_crop/pic_0040_crop.jpg} &
		\includegraphics[width=.32\linewidth]{pics_ral/obs_mid_towerv_crop/pic_0050_crop.jpg} &
		\includegraphics[width=.32\linewidth]{pics_ral/obs_mid_towerv_crop/pic_0067_crop.jpg}

	\end{tabular}
	\endgroup
	\caption{The goal in this real-world experiment is to build the tower \textit{blue-gray-red-green} in the central spot (highlighted in yellow).
    The solution, computed in only \SI{8.88}{s} with our planner, requires
    a task plan with \num{12} discrete actions, moving first the \textit{brown} and \textit{yellow} blocks to avoid collisions.
	}
	\label{fig:real}
\end{figure}

\section{Limitations}
\label{sec:graph-nlp:limitations}

Similarly to our first contribution, \textit{Diverse Planning for LGP}, described in \cref{ch:diverse_planning}, our TAMP solver shares the limitations of the underlying LGP formulation, namely the local convergence of optimization methods, which might prevent finding a solution even if a problem is feasible.

To address this issue, a possible solution is to store how many times a subgraph has been found to be infeasible and use this information in a soft conflict formulation (where infeasible subgraphs only penalize task plans) or in a probabilistic conflict formulation (see also the discussion in \cref{sec:diverse:limitations}).

From an implementation perspective, our method requires a factored-NLP formulation where any subset of constraints can be evaluated for feasibility.
This prevents the use of off-the-shelf trajectory optimization frameworks, which often assume a non-factored, unstructured trajectory optimization problem.

Solving large factored nonlinear programs with joint optimization can be inefficient and is more prone to converging to infeasible local minima.
A natural way to address this issue is to combine both joint optimization and conditional constrained sampling within a single solver, bridging the gap between sample-based and optimization-based approaches to TAMP.
A foundational first step in this direction is presented in the second part of the thesis (\cref{ch:mcts,ch:meta-solver}), where we analyze how to design TAMP solvers that automatically select between sampling and optimization operations.

\section{Conclusion}

We present a solver that combines nonlinear optimization and PDDL planning for the joint optimization of discrete and continuous variables in robotic planning.
The key contribution is the novel bidirectional interface between the task plan and the continuous constraints, realized through the detection of infeasible subgraphs and a reformulation to inform the task planner about subgraph infeasibility.
The problem formulation is formalized as \textit{PNTC}, which extends classical planning with nonlinear transition constraints.

Our experiments in Task and Motion Planning show that the algorithm is faster and more scalable than the Multi-Bound Tree search for LGP, while maintaining generality and using the same input information.
These results are further validated in real-world experiments, where our solver generates plans for two 7-DOF robots and six objects in a few seconds.

In this chapter, we formalize the factored structure of trajectory optimization problems first presented in \cref{sec:bg:structure} in an illustrative manner.
Besides planning, this structure is beneficial for reasoning about computational operations (see \cref{ch:mcts}) and learning (see \cref{ch:gans,ch:learn-feas}).

The structure of the Factored-NLPs in \cref{ch:mcts,ch:learn-feas} is slightly different, as these methods use minimal representations, compressing several variables that are constant into a single variable.
This results in a more compact representation but obscures the clear temporal structure presented here.
On the other hand, the Factored-NLPs in \cref{ch:learn-feas} are built using the same formalization of Planning with Nonlinear Transition Constraints but with a redundant representation of the continuous state, which allows for better generalization across different task plans when using graph neural networks.

\part{\namePartMetaSolver}

\chapter{\nameChapterThree}
\label{ch:mcts}

\section{Introduction}

A core component of Task and Motion Planning (TAMP) is generating the keyframe configurations that fulfil the geometric and physical constraints of a fixed task plan.
The sequence of keyframe configurations is often computed through either joint optimization or a predetermined series of sampling operations.
However, both naïve sequential conditional sampling of individual variables and full joint optimization of all variables at once can be highly inefficient and non-robust, depending on the geometric environment.
As an example, consider the keyframes for the task plan pick-handover-place as shown in \cref{fig:keyframes-handover}, where obstacles, kinematic, and grasp constraints pose challenges for both sampling and optimization methods.

In this chapter\footnote{
	This chapter is based on the publication:
	Ortiz-Haro, J., Hartmann, V.
	N., Oguz, O.
	S., and Toussaint, M.
	(2021).
	Learning Efficient Constraint Graph Sampling for Robotic Sequential Manipulation.
	IEEE International Conference on Robotics and Automation (ICRA) (pp.
	4606-4612).
}, we present a novel algorithm that learns how to break a factored nonlinear program into smaller problems to generate solutions more efficiently.

Our method relies on a factored representation of the feasibility problems and utilizes Monte-Carlo Tree Search to learn assignment orders for the variables, with the aim of minimizing the computation time to generate feasible full samples.
As an online learning algorithm, Monte-Carlo Tree Search is most valuable when multiple diverse solutions must be generated within a fixed computational budget, rather than seeking a single solution.
The sparse factored representation of the nonlinear program, as presented in \cref{sec:bg:structure}, together with local information about the dimensionality of individual variables and constraints, is used to reduce the space of possible computations.

We demonstrate that our learning method quickly converges to the best sampling strategy for a given problem, outperforming user-defined orderings or fully joint optimization, while also providing higher sample diversity.

This work is a first step towards meta-solvers for TAMP, where the ultimate goal is to develop a flexible TAMP solver that automatically selects between sequential sampling and joint optimization to compute robot motions, while also balancing the search in the discrete and continuous levels of the TAMP problem.

To this end, in this chapter we first consider the subproblem of generating keyframes for a fixed high-level manipulation plan.
Though not encompassing the entire TAMP problem, this is a vital component, as keyframes are later used as waypoints for subsequent trajectory optimization or sampling-based motion planning algorithms.
Especially, this setting already reveals the complex decision-making and infrastructure required to choose between sequential sampling or joint optimization operations.

In the following \cref{ch:meta-solver}, we extend some of the ideas and techniques presented here to address the full TAMP problem, where the high-level task plan is not fixed, but also optimized.

\begin{figure}
	\centering
	\begin{subfigure}[t]{.23\textwidth}
		\centering
		\includegraphics[width=.97\linewidth]{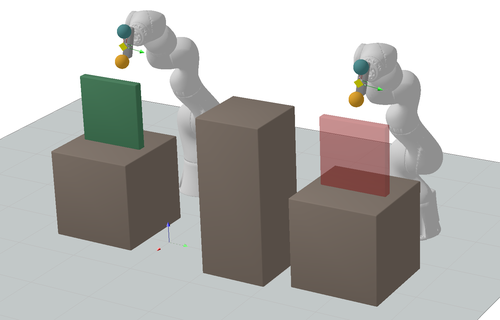}
	\end{subfigure}%
	\begin{subfigure}[t]{.23\textwidth}
		\centering
		\includegraphics[width=.97\linewidth]{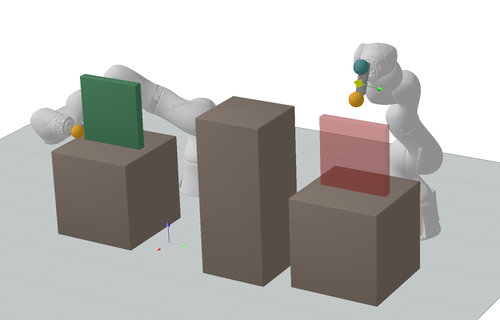}
	\end{subfigure}\\[1mm]

	\begin{subfigure}[t]{.23\textwidth}
		\centering
		\includegraphics[width=.97\linewidth]{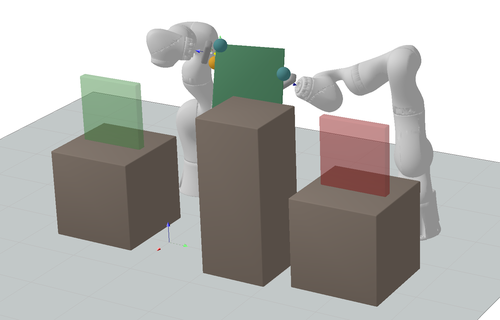}
	\end{subfigure}%
	\begin{subfigure}[t]{.23\textwidth}
		\centering
		\includegraphics[width=.97\linewidth]{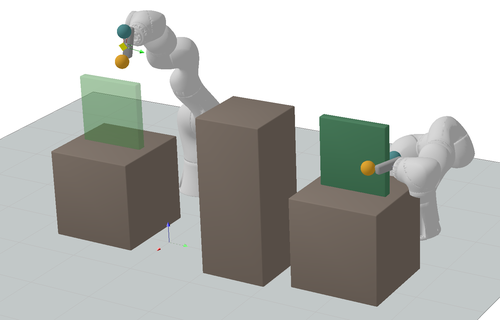}
	\end{subfigure}
	\caption{A sequence of keyframes in the \textit{Handover} problem.
		The fixed task plan is to move the green box to the target location (red) by performing an intermediate handover.
		Robot $Q$ picks the box (top-right) and hands it to robot $W$ (bottom-left) to place it in the goal position (bottom-right).
	}
	\label{fig:keyframes-handover}
\end{figure}

\section{Related Work}

\paragraph{Problem decomposition and constraint satisfaction}
In continuous optimization, decompositions utilizing the underlying problem structure can enhance the performance of algorithms both theoretically and practically~\cite{dantzig1960decomposition,bnnobrs1962partitioning, boyd2011distributed,bertsekas1979convexification}.

A constraint satisfaction problem (CSP) \cite{rossi2006handbook} reveals a graph dependency structure between variables and constraints and often presumes finite discrete domains for each variable.
Algorithms solving CSPs can exploit the graph structure, for example, identifying connected components or trees, to efficiently assign variables \cite{mouhoub2011heuristic}.
There is growing interest in sampling solutions that satisfy CSPs \cite{dechter2002generating, gogate2006new, ermon2012uniform}.
Generating multiple solutions to the problem at hand enhances the applicability of these methods in scenarios where some constraints cannot be modeled or are evaluated only after the fact \cite{danna2007generating}.

In robotics applications, such as
~\cite{19-driess-RSSws,toussaint2017multi, tonneau2018efficient, orthey2018quotient,hartmann2020robust} among many others, complex nonlinear problems are frequently decomposed into a sequence of simpler subproblems.
The solutions to the simpler problems are then used to guide optimization and sampling methods toward the solution of the comprehensive problem.

\paragraph{Meta-decision processes}
The optimization of computational operations is directly related to optimal metareasoning \cite{Russell91Principles}.
Metareasoning has varied applications in heuristic search~\cite{o2015metareasoning, lieder2014algorithm, zilberstein2011metareasoning, karpas2018rational}, as well as in other resource-limited planning processes~\cite{bratman1988plans, boddy1989solving}.
More recently, metareasoning has been applied to reinforcement learning~\cite{pascanu2017learning}, temporal planning~\cite{cashmore2018temporal}, and path planning \cite{mandalika2019generalized}.

In our work, we employ bandit algorithms~\cite{auer2002finite,kocsis2006bandit} to deliberate on the sequence of computational operations for generating solutions to a nonlinear program, where the decisions we make concern the selection of which variables to compute next.

\section{Sampling Sequences in the Pick and Place Task Plan}

\begin{figure}[h]
	\centering
	\begin{subfigure}[b]{.9\textwidth}
		\centering
		\includegraphics[width=.7\linewidth]{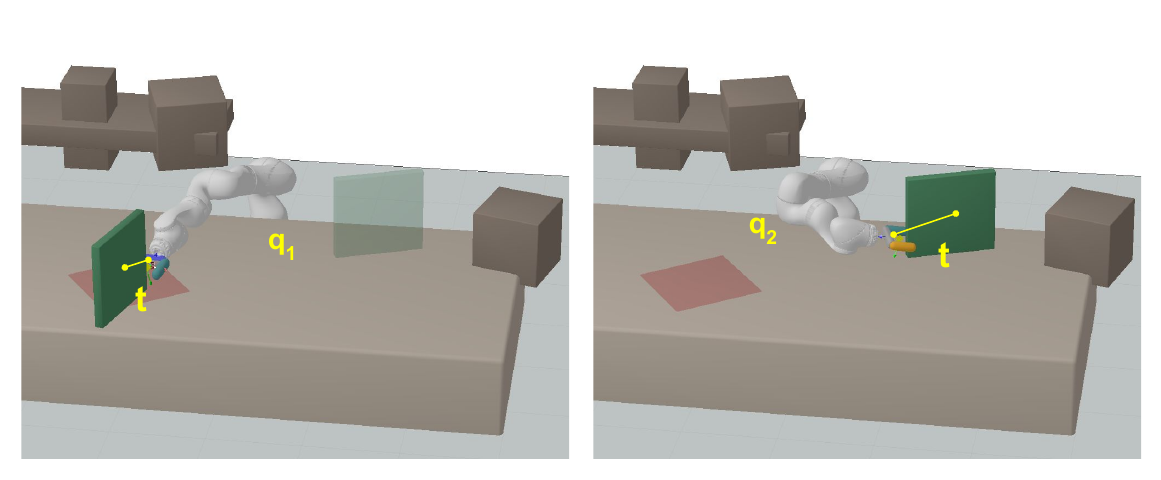}
		\caption{Visual representation of the Pick and Place problem.
			Left image shows the pick keyframe and right image shows the place keyframe.
			Variables are the robot configuration when picking $q_1$, the robot configuration when placing $q_2$, and the relative transformation between the gripper and the object $t$, which is kept constant in both keyframes because the grasp is stable.
		}
		\label{fig:mcts:pp}
	\end{subfigure}\vspace{1cm}

	\begin{subfigure}[b]{.9\textwidth}
		\centering
		\begin{tikzpicture}[scale=0.7,every node/.style={transform shape}]
			\node[latent] (zero) {} ; %

			\node[latent, below=of zero, xshift=5cm] (tq1q2) {$t,q_1,q_2$} ; %

			\node[latent, below=of zero, xshift=-2cm] (t) {$t$} ; %

			\node[latent, below=of zero] (tq1) {$t,q_1$} ; %
			\node[latent, below=of zero, xshift=2cm] (tq2) {$t,q_2$} ; %
			\node[latent, below=of tq2] (tq2-tq1q2) {$t,q_1,q_2$} ; %

			\node[latent, below=of t, xshift=-.5cm] (t-tq1) {$t,q_1$} ; %
			\node[latent, below=of t, xshift=.5cm] (t-tq2) {$t,q_2$} ; %
			\node[latent, below=of t-tq2, xshift=.5cm] (t-tq2-tq1q2) {$t,q_1,q_2$} ; %
			\node[latent, below=of t-tq1, xshift=-.5cm] (t-tq1-tq1q2) {$t,q_1,q_2$} ; %
			\node[latent, below=of tq1, xshift=0cm] (tq1-tq1q2) {$t,q_1,q_2$} ; %

			\factoredge{}{t-tq1}{t-tq1-tq1q2}
			\factoredge{}{zero}{tq2}
			\factoredge{}{tq2}{tq2-tq1q2}
			\factoredge{}{tq1}{tq1-tq1q2}
			\factoredge{}{zero}{tq1q2}
			\factoredge{}{zero}{t}
			\factoredge{}{zero}{tq1}
			\factoredge{}{t}{t-tq2}
			\factoredge{}{t-tq2}{t-tq2-tq1q2}
			\factoredge{}{t}{t-tq1}

		\end{tikzpicture}
		\caption{Possible sampling sequences for solving the Pick and Place problem.}
		\label{fig:sequence-pick-place}
	\end{subfigure}
	\caption{The Pick and Place task plan.}
	\label{fig:sequence-pick-place-all}
\end{figure}

Consider the problem of computing the keyframe configurations for a Pick and Place task plan, defined by three vector variables: the robot configuration for picking and placing the object, and the relative transformation between the gripper and the object, \cref{fig:mcts:pp}.

Some potential sampling sequences to generate a full solution are:
(i) optimize all variables jointly, (ii) sample the relative transformation first, then compute the robot configurations, or (iii) compute the pick configuration and the relative transformation jointly first, followed by the place configuration.

All possible sequences of sampling operations can be represented as a tree, where each node indicates the subset of variables that have been computed, and each edge denotes a sampling operation.
We illustrate the sampling tree for the Pick and Place task plan in \cref{fig:sequence-pick-place}.
For clarity, we omit sampling operations that have zero probability of success, a process detailed later in \cref{sec:mct:pruning}, resulting in five possible sampling sequences.

The efficacy of each strategy depends on the computational time of the individual operations, their success rates, and the probability of feasibility of subsequent sampling steps, since often partial assignments are not viable for subsequent steps.
For example, a valid relative transformation between the gripper and the object might not permit an inverse kinematics solution due to collisions or the robot's reachability limits.

\section{Sequential Sampling in Factored-NLPs as a Markov Decision Process}
\label{sec:framework}

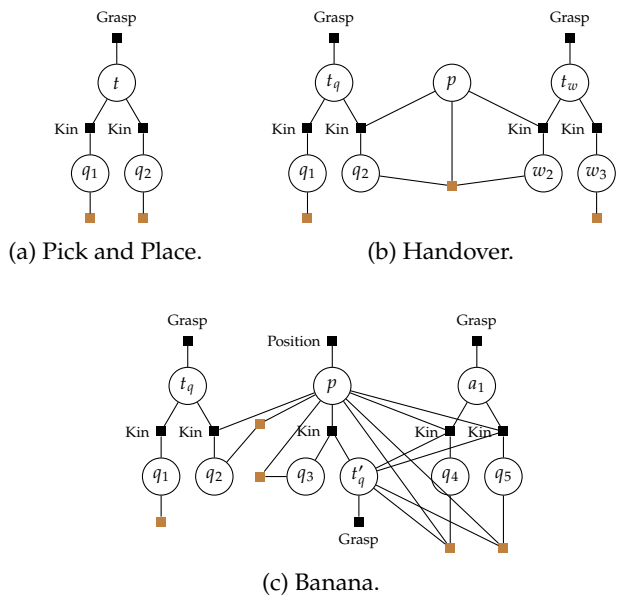
\begin{figure*}[t]
	\centering

	\begin{subfigure}[b]{.2\textwidth}
		\centering
		\begin{tikzpicture}[scale=0.7,every node/.style={transform shape}]
			\node[latent] (t) {$t$} ; %
			\node[latent, below=of t, xshift=-.5cm] (q1) {$q_1$} ; %
			\node[latent, below=of t, xshift=.5cm] (q2) {$q_2$} ; %
			\factor[above=of q2] {Kina2} {left:Kin} {t,q2} {};
			\factor[above=of q1] {Kina1} {left:Kin} {t,q1} {};
			\factor[below=of q1,color=brown] {Collision} {} {q1} {};
			\factor[below=of q2,color=brown] {Collision} {} {q2} {};
			\factor[above=of t] {Grasp} {Grasp} {t} {};
		\end{tikzpicture}
		\caption{\textup{Pick and Place.}}
		\label{fig:pick_place}
	\end{subfigure}%
	\hspace{1em}%
	\begin{subfigure}[b]{.38\textwidth}
		\centering
		\begin{tikzpicture}[scale=0.7,every node/.style={transform shape}]
			\node[latent ] (p) {$p$} ; %
			\node[latent, left=1.5 of p , yshift=.0cm] (ta) {$t_q$} ; %
			\node[latent, right=1.5 of p, yshift=.0cm ] (tb) {$t_w$} ; %
			\node[latent, below=of ta, xshift=-.5cm] (qa1) {$q_1$} ; %
			\node[latent, below=of ta, xshift=.5cm] (qa2) {$q_2$} ; %
			\node[latent, below=of tb, xshift=-.5cm] (qb2) {$w_2$} ; %
			\node[latent, below=of tb, xshift=.5cm] (qb1) {$w_3$} ; %

			\factor[above=of qa2] {Kina2} {left:Kin} {ta,qa2,p} {};
			\factor[above=of qb2] {Kinb2} {left:Kin} {tb,qb2,p} {};

			\factor[above=of qa1] {Kina1} {left:Kin} {ta,qa1} {};
			\factor[above=of qb1] {Kinb1} {left:Kin} {tb,qb1} {};

			\factor[below=1.5 of p,color=brown] {Collision} {} {qa2,qb2,p} {};
			\factor[below=of qa1,color=brown] {Collision} {} {qa1} {};
			\factor[below=of qb1,color=brown] {Collision} {} {qb1} {};
			\factor[above=of ta] {Grasp} {Grasp} {ta} {};
			\factor[above=of tb] {Grasp} {Grasp} {tb} {};
		\end{tikzpicture}
		\caption{
			\textup{Handover.}
		}
		\label{fig:factor-handover}
	\end{subfigure}%
	\vspace{.5cm}%

	\begin{subfigure}[b]{.38\textwidth}
		\centering
		\begin{tikzpicture}[scale=0.7,every node/.style={transform shape}]
			\node[latent ] (p) {$p$} ; %
			\node[latent, left=2 of p , yshift=.0cm] (ta) {$t_q$} ; %
			\node[latent, right=2 of p, yshift=.0cm ] (tb) {$a_1$} ; %
			\node[latent, below=of ta, xshift=-.5cm] (qa1) {$q_1$} ; %
			\node[latent, below=of ta, xshift=.5cm] (qa2) {$q_2$} ; %

			\node[latent, below=of p, xshift=-.5cm] (qx) {$q_3$} ; %
			\node[latent, below=of tb, xshift=-.5cm] (qb1) {$q_4$} ; %
			\node[latent, below=of tb, xshift=.5cm] (qb2) {$q_5$} ; %
			\node[latent, below=of p, xshift=.5cm] (ta2) {$t_q'$} ; %

			\factor[above=of qa2] {Kina2} {left:Kin} {ta,qa2,p} {};
			\factor[above=of qb1] {Kinb2} {left:Kin} {tb,qb1,p,ta2} {};

			\factor[above=of qa1] {Kina1} {left:Kin} {ta,qa1} {};
			\factor[above=of qb2] {Kinb1} {left:Kin} {tb,qb2,ta2,p} {};

			\factor[below=of p] {KinX} {left:Kin} {ta2,qx,p} {};

			\factor[below=of qa1,color=brown] {Collision} {} {qa1} {};
			\factor[right=of qa2,yshift=1cm,color=brown] {Collision} {} {qa2,p} {};
			\factor[below=of qb2,color=brown,yshift=-.5cm] {Collision} {} {qb2,p,ta2} {};
			\factor[below=of qb1,color=brown,yshift=-.5cm] {Collision} {} {qb1,p,ta2} {};
			\factor[left=of qx,color=brown] {Collision} {} {qx,p} {};
			\factor[above=of p] {Position} {left:Position} {p} {};
			\factor[above=of ta] {Grasp} {Grasp} {ta} {};
			\factor[below=of ta2] {Grasp} {below:Grasp} {ta2} {};
			\factor[above=of tb] {Grasp} {Grasp} {tb} {};
		\end{tikzpicture}
		\caption{\textup{Banana.}}
		\label{fig:banana-with-regrasp}
	\end{subfigure}%
	\caption{
		Factored-NLPs of the three task plans considered in this chapter.
		Circles are variables and squares are constraints.
		Brown squares represent collision constraints.
		See the main text and \cref{sec:bg:structure} for a description of the variables and constraints.
		\textit{Pick and Place} is used to illustrate and analyze different sampling sequences.
		The more complex \textit{Handover} and \textit{Banana} tasks are used to evaluate our method.
	}
	\label{fig:factor_graphs}
\end{figure*}

In this chapter, we present a framework to learn the optimal strategy for generating multiple diverse solutions for a Factored-NLP (\cref{sec:bg:factored-nlp}), where optimal is defined as maximizing the number of diverse solutions found within a fixed computational time.

The Factored-NLPs of the three manipulation tasks considered in this chapter are shown in \cref{fig:factor_graphs}.
Variables $q$ denote the configuration of the robot, with the subscript indicating different time-steps.
There are two types of variables for the configuration of objects: variables $p$ denote the absolute positions of objects, and $t$ represents relative transformations with respect to the robot gripper.
In \textit{Handover} (\cref{fig:keyframes-handover}), the configurations for the second robot are denoted with $w$.
In \textit{Banana} (\cref{fig:keyframes-banana}), $t$ represents the grasp of the box, and $a$ represents the grasp of the banana.
The meaning of each constraint is explained in detail in \cref{sec:bg:structure}.

Note that the Factored-NLPs used in this chapter have some differences compared to the ones presented in \cref{sec:bg:structure,ch:bid}.
First, we only consider the keyframes of a task plan and, therefore, do not include variables to represent trajectories.
Second, we use here a more compact representation of the optimization problem, where all variables that are constrained to be fixed are removed, and consecutive free variables that are constrained to be equal are condensed into a single variable -- 
resulting in a less structured and more problem-specific representation.

Given a Factored-NLP with a set of $N$ variables \(X = \{x_1,\ldots,x_N\}\), \(x_i \in \mathbb{R}^{n_i}\), and constraints \(\Phi = \{\phi_1,\ldots,\phi_L\}\), we first observe that a complete value assignment to all the variables in \(X\) can be computed in different ways: variables could be assigned jointly, one by one, or following any user-defined order.

To formalize this, we define an \textit{assignment state} \(w \in W\) to indicate the set of variables of \(X\) that have been assigned.
The set of all possible assignment states \(W\) is the power set \(P(\{1,\ldots,N\})\), i.e., the set of all subsets of \(\{1,\ldots,N\}\).
For instance, an assignment state \(w' = \{1,2,3\}\) means that values have been assigned to the variables \(x_1, x_2,\) and \(x_3\).
Conversely, \(x_{w}\) denotes the subset of variables in \(X\) indicated by \(w\), e.g., \(x_{w'} = \{x_1, x_2, x_3\}\).

Hence, any full assignment for all \(x_i\) can be computed following a sequence of assignment states \(H = \left(w_0, w_1, \ldots, w_g\right)\), where \(w_i \subset w_{i+1}\), and \(w_0 = \{\}, w_g = W\).
For any given step \(i\), the sequence up to that point is represented as \(H_i = (w_0, \ldots, w_i)\).

A transition \(w_i \to w_j\) in a sequence implies a computational operation
\begin{equation}
	x_{w_j} = o_{w_i, w_j}(x_{w_i})\,,
\end{equation}
that assigns numerical values to the variables indexed in \(w_j \setminus w_i\), conditioned on the given values in \(x_{w_i}\), which are kept fixed.
If the compute operation is successful, the new assignment state is \(w_j\), and the value for all computed variables so far is \(x_{w_j}\) (note that
\(x_{w_j}\) also includes the values of \(x_{w_i}\) for the variables in \(w_i\)).

The set \(\mathcal{Y}(w_i) = \{w_{j} \in W \mid w_i \subset w_{j}\}\) denotes the assignment states that can be reached with a valid transition from \(w_i\).
The computational operations are implemented either with direct conditional sampling or using optimization methods initialized with a randomized guess.
For instance, in the Pick and Place problem presented in the introduction, examples of such transitions include generating 6-DOF grasps or solving inverse kinematics (i.e., computing the robot joint values for a given end-effector position).

\cref{alg:generate-solutions} shows how to generate solutions from a Factored-NLP by choosing a valid transition in \textit{assignment space} and executing the corresponding compute operation at each step.
The performance of the algorithm depends heavily on how transitions are selected (our method described in \cref{sec:UCT} is in orange).

Each transition \(w_i \to w_j\) is a conditional sampling operation, which generates samples with an unknown but defined conditional probability density
\(p_{w_j | w_i}(x_{w_j} | x_{w_i})\), defined on the feasible manifold of \(x_{w_j}\).
There are no assumptions about the shape of the distribution.
We assume that the probability density is zero everywhere in case the transition conditioned on \(x_{w_i}\) is infeasible.
The joint probability density is
\begin{equation}
	p_{H_j}(x_{w_j}) = p_{w_j | w_i}(x_{w_j} | x_{w_i}) \cdot p_{H_i}(x_{w_i})\,,
\end{equation}
and it depends on the history \(H_j = (w_0, \ldots, w_i, w_j)\) of transitions to reach \(w_j\).

A valid transition \(w_i \to w_j\) between assignment states does not always produce an assignment that satisfies the associated constraints for \(x_{w_j}\).
There are two sources of infeasibility: \textit{(i)} the problem conditioned on the previously assigned variable \(x_{w_i}\) could be infeasible, or \textit{(ii)} the sampling operation could fail to generate a solution even if one exists.

We can define a transition probability for transitions between the current sequence \(H_i\) and the next assignment state \(w_{i+1}\) (equivalently, between \(H_i\) and \(H_{i+1} = H_{i} \cup \{w_{i+1}\}\)) as follows:
\begin{align}\label{eq:successRate}
	Pr \left( H_i \to H_{i+1} \right) =
	\hat p_{H_i, H_{i+1}} \cdot \frac{ \int_{ \Omega_{i,i+1} } p_{H_i}(x_{w_{i}}) \, d\Omega }{ \int_{ \Omega_i } p_{H_i}(x_{w_i}) \, d\Omega },
\end{align}
where \(\Omega_i\) is the feasible space for \(x_{w_i}\) and \(\Omega_{i,i+1} \subseteq \Omega_{i}\) is the feasible space for \(x_{w_i}\) for which a joint sample \(x_{w_{i+1}}\) exists.
In this estimate, the success rate \(\hat p_{H_i, H_{i+1}} \in (0,1)\) represents the probability that a solution satisfying the constraints will be found if one exists, which we estimate as constant for a given transition, rather than depending on the previous assignment \(x_{w_i}\).
Note that the success rate could be low for certain transitions, while it could be close to \num{1} for others.

\paragraph{Assignment state transition as a Markov Decision Process}
\label{sec:markov}

We can now define a Markov Decision Process (MDP):
The state space is \(\{ H_i \} \cup \{\neg\}\), i.e., the sets of partial sequences starting from \(w_0\) plus an infeasible state \(\neg\).
The action space contains the possible transitions between sequences, i.e., adding a new assignment state.
Each action has a success rate \(p_a = Pr(H \to H')\) of reaching the new sequence as defined in \eqref{eq:successRate}, (corresponding to the probability of generating an assignment satisfying the constraints) and a probability \(1-p_a\) of going to the infeasible state \(\neg\).
We note that the MDP has a maximum horizon length of \(N\) transitions, which corresponds to assigning all the variables one by one.

We use the reward structure to obtain a concept of optimality that directly relates to our objective of maximizing the number of samples:
We introduce a reward \(r_g=1\) for reaching the goal \(w_g = W\) (namely, all variables have been correctly computed), and a stochastic cost, i.e., a negative reward, \(r_t\) on each transition, which is the compute time the transition takes.
We weigh the two costs linearly, obtaining \(\lambda r_t + (1-\lambda) r_g\) with \(\lambda \in (0,1)\).

\section{Choosing Computational Operations with Monte-Carlo Tree Search}

We use Monte-Carlo Tree Search (MCTS, \cite{browne2012survey}) on the previously defined MDP to find the optimal sequence of computational operations.

MCTS incrementally builds a tree of possible transition sequences to find the most promising one.
This is achieved by randomly selecting transitions starting from the initial state \(w_0\) of the MDP,
executing their corresponding conditional sampling operations,
and using the reward structure of the MDP to guide the search.
Thus, in our setting, the nodes of the tree built by the MCTS are sequences of assignment states.

The reward structure of the MDP guides the tree search algorithm to choose transitions that have a low computational cost and a high probability of producing a full solution to the Factored-NLP.

An estimate of the average computational cost \(\hat{c} = -\sum {r_t}\) can be used as guidance for choosing a suitable \(\lambda = \frac{1}{\hat{c}+1}\).
In this case, the optimal solution in the MDP corresponds to the sequence of computational operations that maximizes the number of generated samples for the original factored nonlinear program.

\subsection{Upper Confidence Tree (UCT)}\label{sec:UCT}

In particular, we use the UCT algorithm \cite{kocsis2006bandit} to balance the exploitation of known sequences with the exploration of new sequences.
UCT expands the child node \(z\) that maximizes
\begin{equation}\label{eq:uct}
	Q_z + c\sqrt{\frac{\ln M_z}{m_z}},
\end{equation}
where \(Q_z\) is the current estimate of the expected reward of node \(z\), \(M_z\) is the number of simulations that have evaluated the parent node, \(m_z\) is the number of rollouts that have evaluated node \(z\), and \(c\) is a constant chosen by the user.
The estimate of the \(Q\) function for the nodes of the tree converges as the number of rollouts increases.
Hence, the UCT algorithm incrementally builds a search tree, estimates the \(Q\) values of each transition, and chooses actions using the upper confidence bound~\eqref{eq:uct}, leading to \cref{alg:generate-solutions}.

We note that, if at least one of the sampling sequences assigns a non-zero probability density to the entire feasible manifold, so does our algorithm, since UCT never stops exploring all possible sequences.
\begin{figure}[!t]
	\centering
	\begin{minipage}{.8\linewidth}
		\centering
		\begin{algorithm}[H] %
			\caption{Framework to generate solutions from a Factored-NLP, with our contributions in {\color{orange}orange}.}
			\label{alg:generate-solutions} %
			\begin{algorithmic}[1] %
				\State \textbf{Input:}
				Factored-NLP, compute time budget \(T\)
				\State { \(L \gets \{ \} \) } \Comment{{\color{gray} \small Empty list of samples}}
				\While{$\text{accumulated compute time} \ t \le T$}
				\State{feasible $\gets$ True}
				\State{ \(w \gets w_0\) } \Comment{{\color{gray} \small Initial empty assignment state ($w_0=\emptyset\,$)}}
				\State{ \( x_w \gets x_{w_0} \) } \Comment{{\color{gray} \small Initial empty sample ($x_{w_0}=[]\,$) }}
				\While{\(w \neq W\)}
				\State{ \(w' \gets\) \textit{Choose next state from} \(\mathcal{Y}(w)\) \color{orange}{using UCT}}
				\State{\(x_{w'}\), feasible, {\color{orange}{reward}} \(\gets o_{w, w'}(x_{w})\) } \Comment{{\color{gray} \small Execute computational operation}}
				\State{\color{orange}\textit{Update UCT tree with} reward}
				\If{feasible}
				\State{ \(w \gets w'\)}
				\State{ \(x_w \gets x_w'\)}
				\Else
				\State{\textbf{break}}
				\EndIf
				\EndWhile
				\If{feasible}
				\State{append \(x_W\) to \(L\)}
				\EndIf
				\EndWhile
				\State \textbf{Output:}
				List of valid samples \(L\)
			\end{algorithmic}
		\end{algorithm}
	\end{minipage}
\end{figure}

\subsection{Pruning the Sampling Tree Using the Factored-NLP}
\label{sec:mct:pruning}

Since the number of sequences and transitions grows exponentially with the dimension of the problem (see \cref{tab:number-edges}), it is not possible to naively apply MCTS to the previously defined MDP.
However, most real-world problems have a sparse structure that can be leveraged to significantly reduce the set of available sequences and transitions.

We provide two examples of transitions that can be pruned.
Consider a Pick and Place problem represented with the variables $\{ q_1 , q_2 , t \}$, where $q_1, q_2$ are the robot configurations for picking and placing, and $t$ is the relative transformation between the end-effector and the object (\cref{fig:sequence-pick-place-all}).

\begin{itemize}
	\item We prune $q_1 \to (q_1 , t)$ and $q_2 \to (q_2 , t)$.
	      If $q_1$ or $q_2$ is sampled independently at random, the probability of correctly grasping the object is zero.
	\item We prune $t \to (q_1 , q_2)$ because $q_1$ and $q_2$ are independent if $t$ is fixed (and it does not make sense to generate $q_1$ and $q_2$ jointly).
\end{itemize}

More generally, we combine knowledge of the Factored-NLP structure with the local information in each variable and constraint, i.e., the number of available degrees of freedom and the number of equality constraints, to prune a transition $w_i \to w_j$ if it fulfills any of the two criteria:

\paragraph{Zero probability of success}
In general, the probability of sampling variables from the ambient space such that the equality constraints are fulfilled is zero.
Hence, a transition can be pruned if the number of new, linearly independent, equality constraints exceeds the number of new degrees of freedom that are added in that transition.

\paragraph{Equivalence under conditional independence}
Given an assignment state $w_i$, two variables in the Factored-NLP are conditionally independent if all paths that connect them contain already assigned variables.
Therefore, we delete a transition that jointly samples variables that are conditionally independent with respect to those assigned in $w_i$.
In such cases, joint sampling means choosing an ordering for two conditionally independent processes (which is already available as alternative valid transitions).

In \cref{tab:number-edges}, we show how the number of transitions is reduced in the three scenarios evaluated in this chapter.

\begin{table}[t]
	\centering
	\small
	\caption{Number of transitions between assignment states in the three evaluated task plans, before and after transition pruning.}
	\label{tab:number-edges}
	\begin{tabular}{@{}lccccc@{}}
		\toprule
		               & Variables & \begin{tabular}[c]{@{}c@{}}
			                             Transitions \\ Complete\end{tabular} & \begin{tabular}[c]{@{}c@{}}Transitions \\ after Pruning\end{tabular} & \begin{tabular}[c]{@{}c@{}} Pruning \\ Ratio [\%]\end{tabular} \\ \midrule
		Pick and Place & 3         & 19                                   & 8                                                                    & 57.89                                                          \\
		Handover       & 7         & 2059                                 & 163                                                                  & 92.09                                                          \\
		Banana         & 9         & 19171                                & 534                                                                  & 97.21                                                          \\
		\bottomrule
	\end{tabular}
\end{table}

\subsection{Family of Problems and Tree Warm Start}
\label{sec:mct:family}
Obstacle and object configurations impact the computational time and the success rate of the conditional sampling operations, potentially changing the optimal sampling sequence.
Thus, the best sampling order depends on both the Factored-NLP and the specific problem instance.

MCTS provides a good framework to incorporate the information gathered from solutions to similar problems as a warm start.
We propose to warm-start the tree search by initializing $Q_z$ of each node with the average of the $Q$ values from previous problems, and set $m_z$ to an equivalent count visit $m_{\textrm{equiv}}$, which models how confident we are with the warm start, following the approach presented in \cite{gelly2007combining}.

\section{Experimental Results}

\begin{figure}
	\centering
	\begin{subfigure}[t]{.16\textwidth}
		\centering
		\includegraphics[width=.9\linewidth]{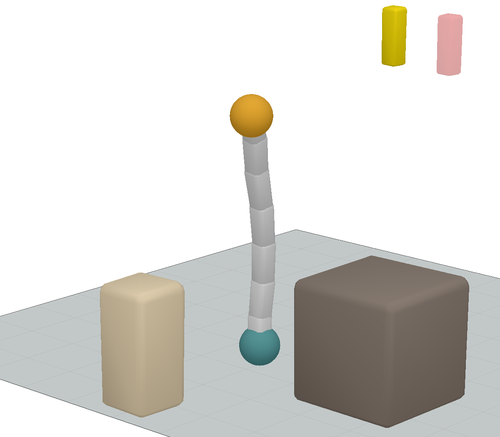}
	\end{subfigure}%
	\begin{subfigure}[t]{.16\textwidth}
		\centering
		\includegraphics[width=.9\linewidth]{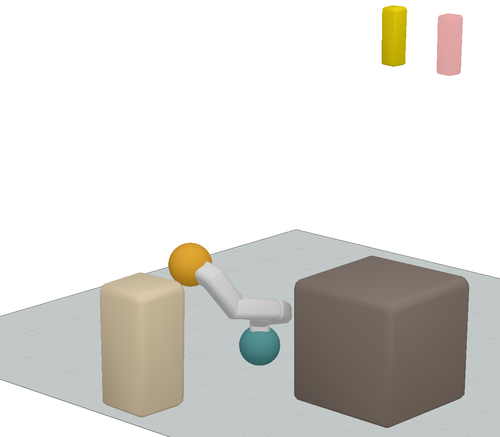}
	\end{subfigure}%
	\begin{subfigure}[t]{.16\textwidth}
		\centering
		\includegraphics[width=.9\linewidth]{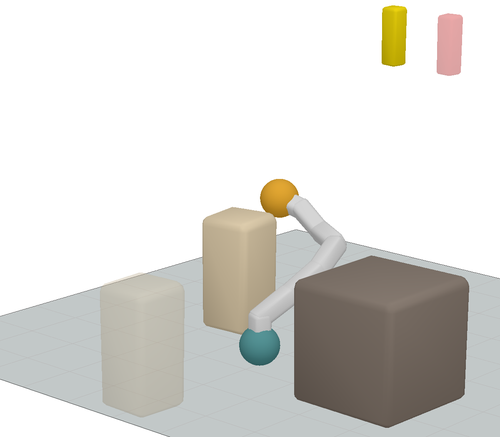}
	\end{subfigure}\\[1mm]

	\begin{subfigure}[t]{.16\textwidth}
		\centering
		\includegraphics[width=.9\linewidth]{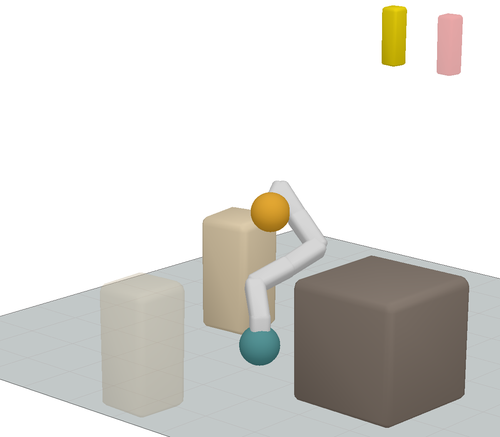}
	\end{subfigure}%
	\begin{subfigure}[t]{.16\textwidth}
		\centering
		\includegraphics[width=.9\linewidth]{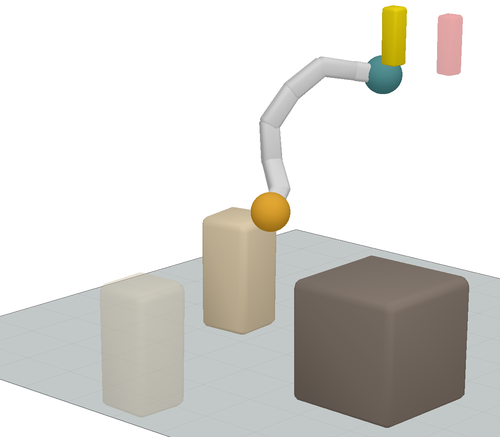}
	\end{subfigure}%
	\begin{subfigure}[t]{.16\textwidth}
		\centering
		\includegraphics[width=.9\linewidth]{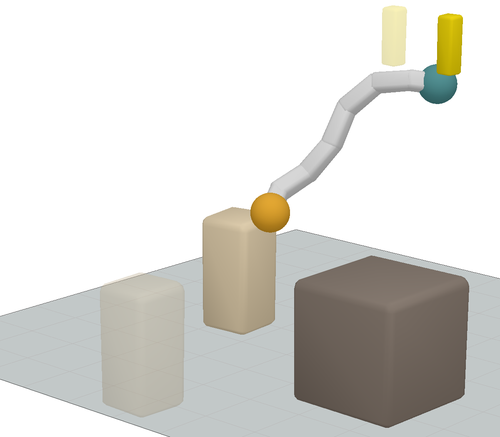}
	\end{subfigure}
	\caption{A sequence of keyframes in one instance of the \textit{Banana} problem.
		The task plan is to move the light brown box, climb on top of it, reach the banana, and place it on the red configuration.
		The dark brown object is an obstacle.
	}
	\label{fig:keyframes-banana}
\end{figure}

\subsection{Scenarios}

\begin{itemize}
	\item
	      \textit{Handover}:
	      Two robots (7D) have to collaborate to place a box in the goal configuration.
	      Grasping is modeled with a two-finger gripper.
	      \Cref{fig:keyframes-handover} shows a possible solution, and \cref{fig:factor-handover} shows the Factored-NLP.

	\item
	      \textit{Banana:}
	      The hanging `banana' has to be moved to a goal position.
	      The robot (8D) can interact with the world using both sides of the kinematic chain as a gripper.
	      The box has to be moved and used as a tool to stand on to be able to reach the goal.
	      Grasping is modeled as `grasp by touch'.
	      \cref{fig:keyframes-banana} shows snapshots from a solution sequence, and \cref{fig:banana-with-regrasp} shows the Factored-NLP.
\end{itemize}

For both problems, \num{8} different instances (see \cref{fig:instances:handover} and \cref{fig:instances:banana})
(i.e., varying obstacle and object configurations) are evaluated\footnote{Experiments are run using an Intel(R) Core(TM) i5-4200U 1.60GHz CPU.
} multiple times, and unless stated otherwise, the average results are reported in the following analyses.
The different instances can significantly impact the computation times and, thus, the optimal sampling sequence.

\begin{figure}[ht]
	\centering
	\includegraphics[width=0.22\textwidth]{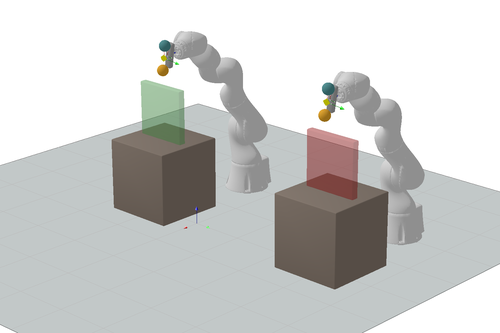}\quad
	\includegraphics[width=0.22\textwidth]{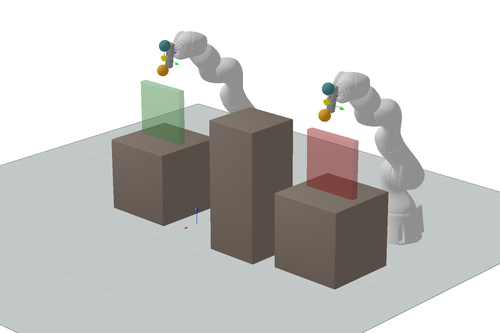}\quad
	\includegraphics[width=0.22\textwidth]{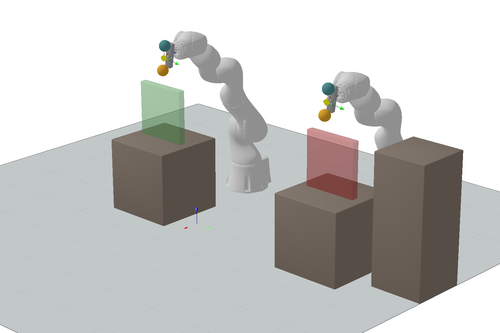}\quad
	\includegraphics[width=.22\textwidth]{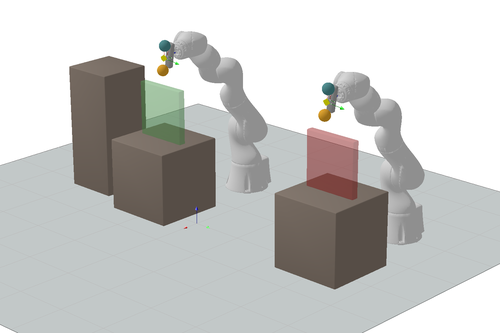} \\
	\includegraphics[width=.22\textwidth]{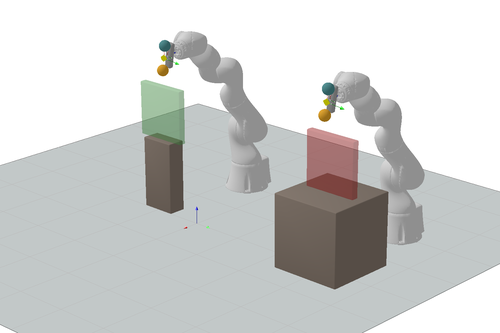} \quad
	\includegraphics[width=.22\textwidth]{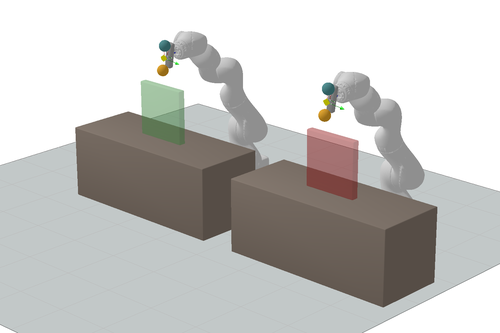} \quad
	\includegraphics[width=.22\textwidth]{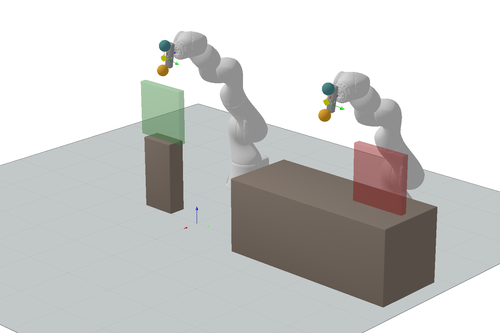} \quad
	\includegraphics[width=.22\textwidth]{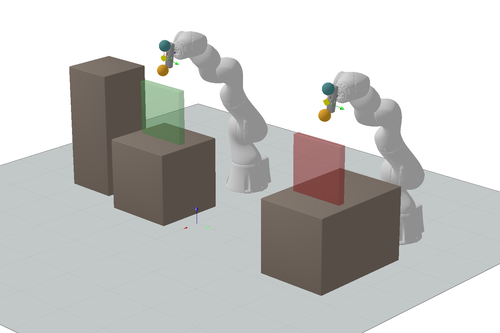}
	\caption{\textit{Handover} scenario.
		The obstacles (dark brown) constrain the possible grasps and handover positions of the robots.
  The optimal sampling sequence might vary in different scenarios, based on the number, size, and position of the obstacles.}
	\label{fig:instances:handover}
\end{figure}

\newpage
\subsection{Computational Operations}

One motivation for this work is that both the computation time and the success rate of conditional sampling operations must be considered to design efficient algorithms.
As an illustrative example, we report the average computational time and success rate of some sampling operations in the first instance of the \textit{Handover} scenario:
relative transformation (\SI{0.68}{ms}, \SI{100}{\percent}), inverse kinematics (\SI{1.55}{ms}, \SI{87}{\percent}), grasp of a fixed object (\SI{5.8}{ms}, \SI{55}{\percent}), pick and place (\SI{54}{ms}, \SI{46}{\percent}), and handover keyframe (\SI{68}{ms}, \SI{44}{\percent}).

In practice, our algorithm finds the optimal balance between \textit{(i)} choosing simple operations which are fast and have a high success rate but can potentially induce infeasibility in future assignments, and \textit{(ii)} optimizing variables jointly, which considers joint feasibility but is slower and often has a lower success rate.

The conditional sampling operations are implemented by randomizing the initial guess of a nonlinear solver without a cost term and are solved using the Augmented Lagrangian algorithm.
The initial guess covers the entire ambient space of the current partial assignment, ensuring that all possible partial solutions have a non-zero probability (see \cref{sec:framework}).

\subsection{Number of Samples and Approximate Coverage}

\begin{figure}[t]
	\centering
	\begin{subfigure}[b]{.4\textwidth}
		\centering
		\includegraphics[height=40mm]{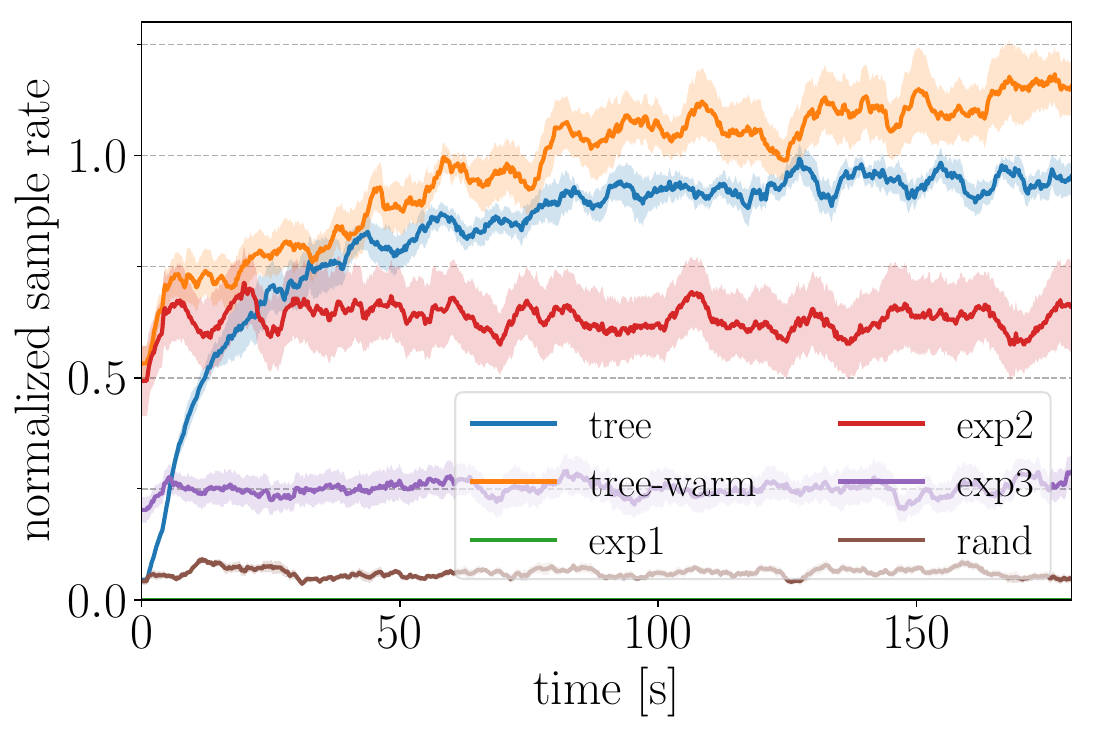}
		\caption{\textit{Handover}: Sample rate. }
		\label{fig:handover_rate}
	\end{subfigure}%
	\begin{subfigure}[b]{.4\textwidth}
		\centering
		\includegraphics[height=40mm]{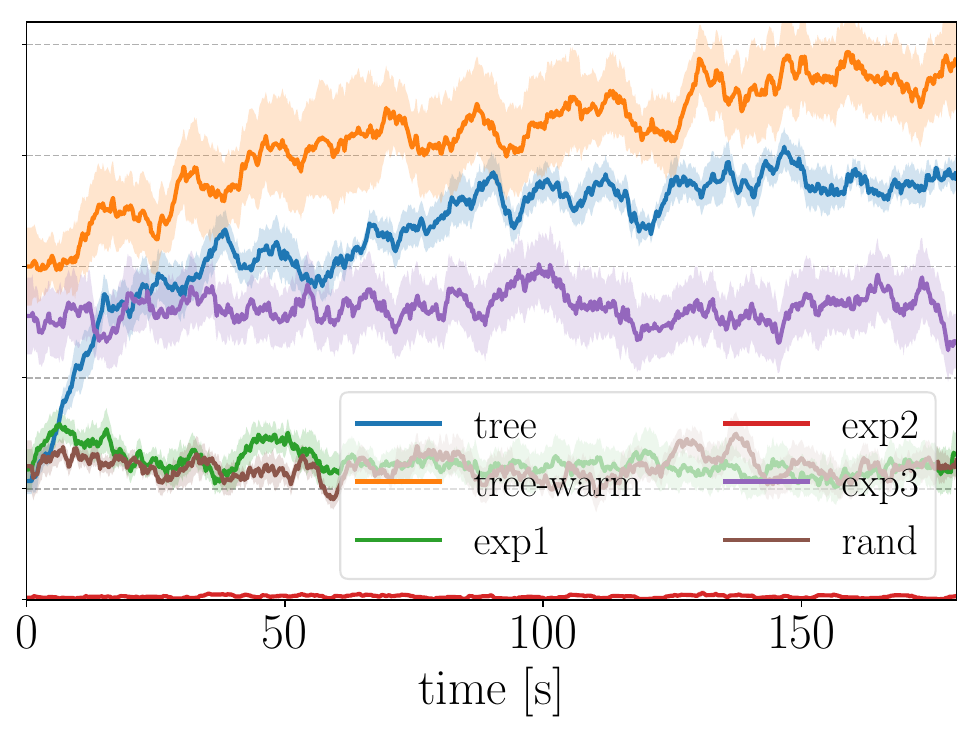}
		\caption{\textit{Banana}: Sample rate.}
		\label{fig:banana_rate}
	\end{subfigure} \\[10pt]
	\begin{subfigure}[b]{.8\textwidth}
		\includegraphics[width=\linewidth]{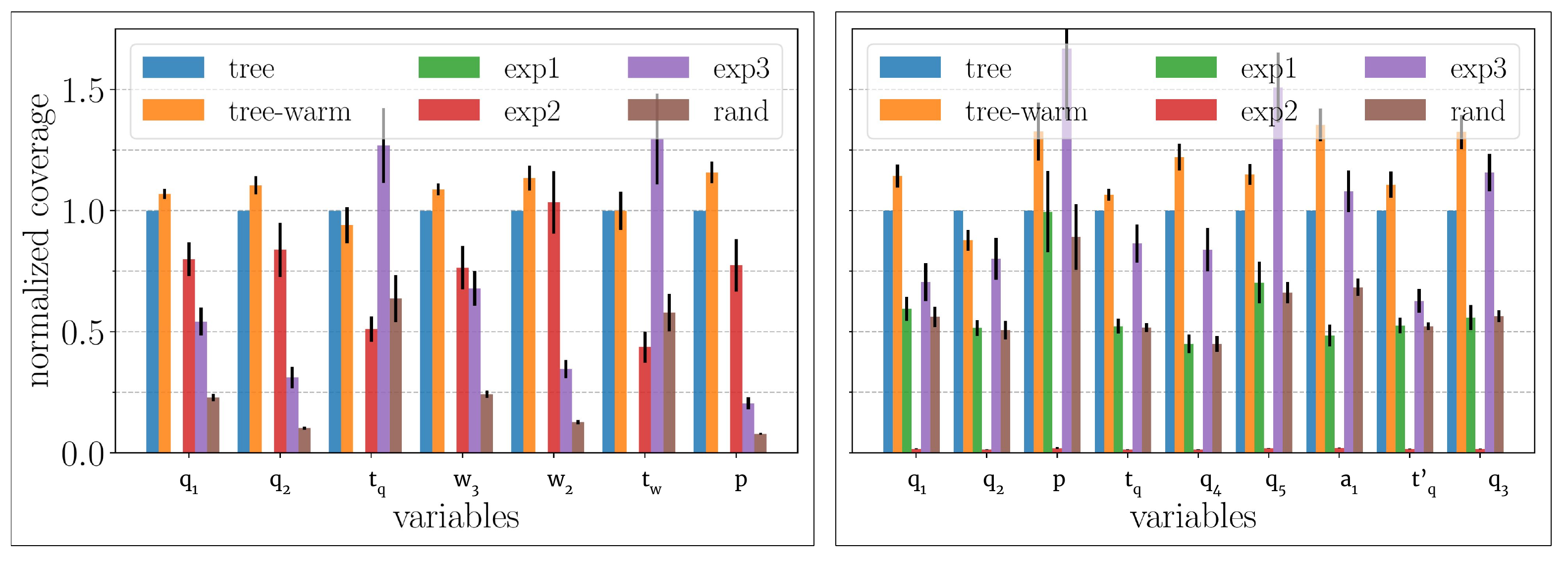}
		\caption{Coverage.
			Left: \textit{Handover}.
			Right: \textit{Banana}.
		}
		\label{fig:all_coverage}
	\end{subfigure}%
	\caption{Sampling rates and approximated projected coverage (normalized by \textit{Tree}), averaged over all problems instances.}
	\label{fig:exp}
\end{figure}

\begin{table}[ht]
	\small
	\setlength{\tabcolsep}{5pt}
	\centering
	\caption{Sampling sequences in \textit{Banana} and \textit{Handover} scenarios.
		Sampling order is left to right, each tuple $(.)
		$ denotes joint sampling of the subset. }
	\label{tab:expert-sequences}
	\begin{tabular}{@{}llr@{}}
		\toprule
		\multirow{2}{*}{\shortstack[l]{\textit{Expert 1}               \\ Joint Opt.}} & Handover & $(q_1,
		q_2, t_q , w_3,  w_2, t_w , p )   $                            \\
		 & Banana &
		$( q_1, q_2,   t_q, p ,q_3,  t_q',  q_4, q_5, a_1)$
		\\ \midrule
		\multirow{2}{*}{\shortstack[l]{\textit{Expert 2}               \\ One-by-one}}& Handover  &  $p,t_q,t_w,q_1,q_2,w_2,w_3$ \\
		 & Banana & $p,t_q,a_1,t_q', q_1,q_2,q_3,q_4,q_5$              \\ \midrule
		\multirow{2}{*}{\shortstack[l]{\textit{Expert 3}               \\ Sequential Keyframes}} & Handover &
		$( q_1 , t_q), ( q_2 , p, w_2 , t_w ), (w_3)$                  \\
		 & Banana &
		$(q_1, t_q),( q_2, p),( q_3, t_q'),(q_4, a_1),q_5$             \\  \midrule
		\multirow{3}{*}{\shortstack[l]{\textit{Tree}                   \\Best found Sequence \\(Examples)}}& Handover  &  $t_w, t_q, q_1, w_3, (q_2,p),w_2$ \\
		 &        & $(w_3,t_w),(w_2,p),(q_2,t_q),q_1$                  \\
		 & Banana & $(q_1,t_q)$, $(q_2,p),(q_4,q_5,a_1,t_q'), q_3$     \\
		 &        & $(p,q_5,a_1,t_q')$,$q_3$,$(q_1,t_q),$ $q_2,$ $q_4$ \\ %
		\bottomrule
	\end{tabular}
\end{table}

We compare the number of samples generated by our algorithms, \textit{Tree} and \textit{Tree-warm}, against \textit{Expert}-sequences and \textit{Rand}.
\textit{Tree} is our MCTS-based meta-solver (see \cref{alg:generate-solutions}), including the pruning (see \cref{sec:mct:pruning}) and starting from an empty tree. \textit{Tree-warm} is the same algorithm (also with pruning) but with a warm start using the average reward from previous problems (see \cref{sec:mct:family}).
\textit{Expert}-sequences are fixed sampling orders that represent different user-defined strategies (see \cref{tab:expert-sequences}).
\textit{Rand} selects computational operations at random, without learning.

\newpage
\paragraph{Number of samples}
We plot the evolution of the sampling rate (including the MCTS overhead for our algorithms) in our two scenarios in \cref{fig:handover_rate,fig:banana_rate}.
Asymptotically, \textit{Tree} outperforms all \textit{Experts}, which exhibit disparate performance across the two problems.
Due to the exploration of possible sampling sequences, initially, the sample rate of \textit{Tree} is lower compared to some of the \textit{Expert}-sequences.
However, \textit{Tree-warm} mitigates this issue due to the warm start and improves its sample rate over time.
We also note that the \textit{Expert}-sequences are more sensitive to the different problem instances.

The final sample rates achieved by \textit{Tree-warm} in the different instances range between \(7-15\) \si{samples/s} in the \textit{Handover} scenario and \(5-10\) \si{samples/s} in the \textit{Banana} scenario.
Examples of some of the sequences discovered by \textit{Tree} are shown in \cref{tab:expert-sequences}.

\newpage
\paragraph{Approximate coverage}
We evaluate the coverage of the solution manifold achieved by our algorithms \textit{Tree} and \textit{Tree-warm}, which should achieve good coverage (i.e., a diverse set of samples) by design.
Since the coverage of a nonlinear manifold embedded in a high-dimensional space cannot be evaluated reliably with a relatively low number of samples, we evaluate the \emph{projected coverage} as a proxy measure:
We project the full samples onto each variable of the sequence
and compare the coverage in each of these subspaces.
We discretize these subspaces and count the number of occupied cells, i.e., cells that are occupied by multiple samples are only counted once.
Although an approximation, it provides useful and interpretable information about which parts of the solution-manifold are covered.

The results shown in \cref{fig:all_coverage} are normalized by the number of occupied cells in \textit{Tree}.
While some \textit{Experts} achieve better coverage on a subset of variables, coverage by \textit{Tree} and \textit{Tree-warm} mostly outperform the others.
This confirms our hypothesis that, with the correct design and implementation of the sampling operations, maximizing the number of samples is a good heuristic to maximize the coverage.

\section{Limitations}

A limitation of our framework is that the algorithm must explore inefficient computations and needs some time to converge to the best sampling order, which could prevent usage when the computational budget is limited.
As demonstrated, this limitation is alleviated by warm-starting the algorithm using information from previous runs.
The chosen problem setting of learning to generate solutions from online experience might not be relevant when the goal is to generate a single or a few valid solutions as quickly as possible.
In this case, supervised learning methods that learn from a dataset of problems solved offline could be a better fit (see Part III of this thesis).

In this chapter, we have addressed a subproblem of TAMP, in which the task plan is fixed.
Integrating our ideas to combine sampling and optimization into a complete TAMP solver is challenging, as both the number of possible sampling sequences and the number of candidate task plans grow exponentially with the number of variables.
The full TAMP problem requires an informed balance between the compute effort spent across both different task plans and different compute operations.
In \cref{ch:meta-solver}, we present a meta-solver to address the complete TAMP problem but, limit the number of allowed sampling operations to keep the search space (and the implementation) manageable.

\begin{figure}[t]
	\centering
	\includegraphics[width=0.22\textwidth]{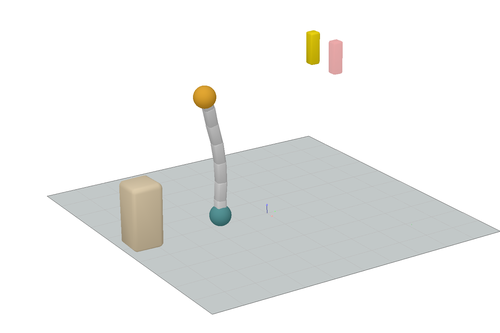}\quad
	\includegraphics[width=0.22\textwidth]{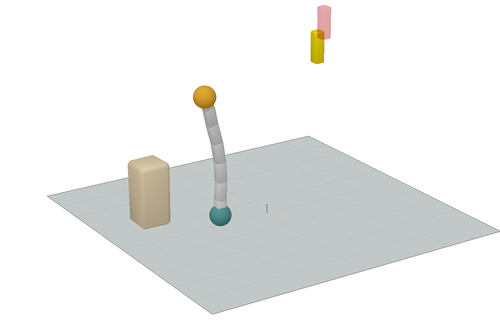}\quad
	\includegraphics[width=0.22\textwidth]{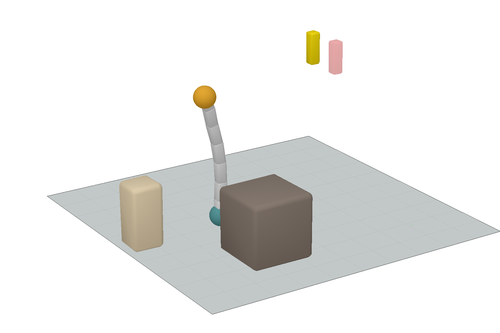}\quad
	\includegraphics[width=.22\textwidth]{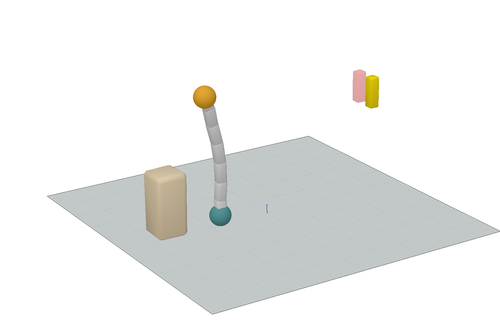} \\
	\includegraphics[width=.22\textwidth]{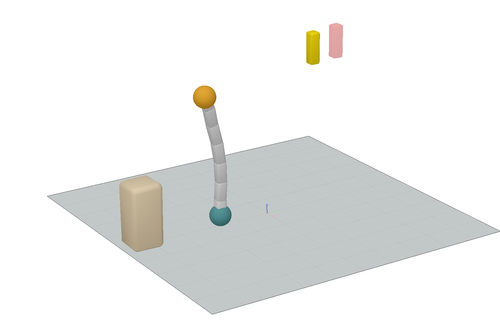} \quad
	\includegraphics[width=.22\textwidth]{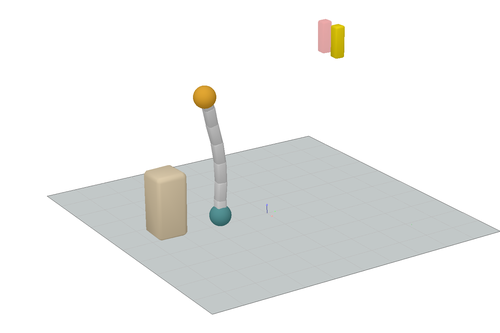} \quad
	\includegraphics[width=.22\textwidth]{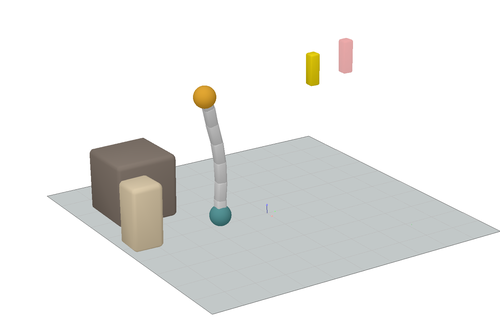} \quad
	\includegraphics[width=.22\textwidth]{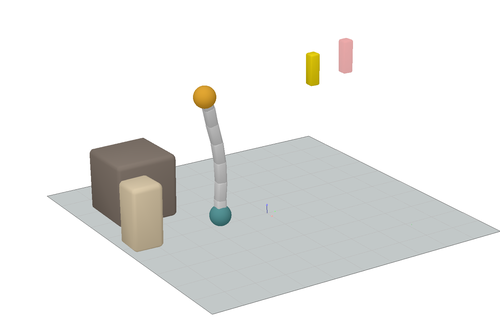}
	\caption{
		\textit{Banana}
		Scenario: Initial and goal configurations of the banana (shown in yellow and red) constrain the placement of the box (light brown).
		The dark brown box is a fixed obstacle.
  }
  \vspace{.2cm}
	\label{fig:instances:banana}
\end{figure}

\section{Conclusion}

We have proposed a meta-algorithm to reason about optimal decompositions of factored nonlinear programs in robotic manipulation planning.
Our algorithm chooses the computational decisions, i.e., which subset of variables to conditionally sample next, to maximize the number of generated samples in a fixed computational time.

We use the method to efficiently generate a diverse set of samples for keyframes in robotic manipulation, which is an essential component in any solver for TAMP problems or multimodal motion planning.

Our framework naturally allows us to also include cost factors in the Factored-NLP.
However, we neglect cost terms because our approach is tailored to provide a diverse set of feasible samples that can be used in higher-level optimization or motion planning, where the diversity and uniform coverage of samples is an essential ingredient for ensuring completeness.

The same problem setting used in this chapter, generating diverse samples from constraint manifolds, will be revisited in \cref{ch:gans} through the perspective of deep learning.
In \cref{ch:gans}, deep generative models are trained with a dataset of solutions to similar problems and are used at runtime to compute solutions for new problems faster.
In contrast to the meta-solver presented in this chapter, the neural models require that a user defines a fixed sequence of sampling operations beforehand.

\chapter{\nameChapterFour}
\label{ch:meta-solver}

\section{Introduction}

In this chapter\footnote{
	We plan to extend and submit the content of this chapter to a robotics or planning conference, for instance, IROS, ICRA, or ICAPS.
	This research has been conducted in collaboration with Erez Karpas and Marc Toussaint.
},
we present a meta-solver for the comprehensive TAMP problem.
Our meta-solver is a search algorithm that combines optimization and sampling computations to solve TAMP problems, guided by heuristics from the discrete task abstraction.

While most TAMP planners use \textit{roughly equivalent} problem formulations, they differ significantly in A) how they interleave and combine search across task and motion, and B) the computational methods used to compute the continuous variables.

In particular, all TAMP planners use a predefined set of fixed computation operations to compute low-level motion, leading to disparate performance depending on the problem at hand.
Solvers that rely on sampling partial solutions are inefficient when there are long-term dependencies in the low-level motion.
Conversely, solvers that use joint nonlinear optimization are inefficient if the problem is highly decomposable and fail when the optimization problem has infeasible local optima.

In contrast, our meta-solver not only combines search at both the continuous and discrete levels but also reasons about \textit{the best way to solve the continuous level}, deciding which computation operations to perform and in what order.
These decisions are crucial to solving problems efficiently because the time spent on motion planning is often the biggest bottleneck for all TAMP solvers.
Depending on the methods used to compute the motion, running time can vary from milliseconds to several minutes.

This work is primarily inspired by our previous studies on meta-solvers for computing the keyframes for a fixed manipulation plan (\cref{ch:mcts}), our novel factored TAMP formulation and solvers (\cref{ch:bid}), and the sampling-based TAMP solver PDDLStream \cite{garrett2020pddlstream}.
In addition to the formulation and algorithmic tools from \cref{ch:bid}, we now include conditional sampling of keyframes and use a similar mechanism to that of PDDLStream to order and prioritize computations.

Our approach represents a first step toward bridging the gap between optimization and sample-based approaches to TAMP, blending and converging towards the best strategy based on the problem at hand.

\section{Related Work}

An extensive discussion of related work in Task and Motion Planning is provided in \cref{sec:bg:related-work}.
In contrast to state-of-the-art solvers for TAMP that use either only sampling-based or only optimization-based methods by design, our TAMP meta-solver can adaptively choose between sampling or optimization computations.

PDDLStream has been extended to include optimization operations in the Ph.D.
thesis~\cite{garrett2021thesis} by merging some sequences of samplers into a larger optimization problem, resulting in a hybrid sampling and optimization algorithm.
However, the algorithm does not explicitly decide when to use sampling or optimization to compute the same set of continuous variables.
In contrast, our meta-solver explicitly reasons about which method is more efficient for generating the same set of continuous variables.

To merge sampling and optimization, we define a computational state and reason directly about which computational decisions to take next.
Our approach is inspired by classical work on meta-reasoning and decision-making \cite{Russell91Principles}, and its successful application in search, planning, and scheduling, for example, \cite{seipp2012learning,shperberg2019allocating,o2015metareasoning,zilberstein2011metareasoning,lieder2014algorithm}.

Allowing both sampling and optimization operations in our algorithm defines a search problem with an infinite branching factor to solve the original TAMP problem.
Such problems can be addressed by Partial Expansion A* \cite{felner2012partial}, Iterative Broadening \cite{ginsberg1992iterative}, or Iterative Deepening \cite{korf1985depth}.
Instead, we choose to explore a computational space with a simple A*-like algorithm that combines heuristic search with a computational level to widen the tree incrementally, similarly to PDDLStream \cite{garrett2020pddlstream}.
Interestingly, different notions of cost, heuristic, and search algorithms in the computational space result in different TAMP meta-solvers.

Recently, \emph{Effort Level Search in Infinite Completion Trees} analyzed a computational decision process with infinite branching in the context of TAMP \cite{toussaint2023effort}.
The algorithm decides where to allocate compute time given a fixed set of possible computational decisions to address each component of the TAMP problem (enumerating task plans, computing keyframes with nonlinear optimization, computing paths with sample-based motion planning, and optimization).
However, it does not address the trade-off between choosing conditional sampling and optimization when solving TAMP, as we do in our meta-solver.

\section{The Gap Between Sampling and Optimization Approaches}

Within the TAMP research community, a central concern is the large number of \textit{slightly} different problem formulations, as every TAMP solver often introduces its own unique formulation.
This has made it difficult to create hybrid methods or combine ideas from different TAMP solvers.

However, when comparing our recent optimization-based TAMP solver (Factored-NLP Planner, see \cref{ch:bid}) to PDDLStream, a state-of-the-art sample-based solver, one realizes that both formulations use very similar discrete abstractions and task planning, and represent the motion with equivalent variables and constraints, using constraint networks (in PDDLStream) or Factored-NLPs (in our work).
Additionally, they both use discrete task planning to guide the solver towards the goal (when using the \textit{optimistic} algorithms in PDDLStream).

The fundamental difference\footnote{Additional differences, though not analyzed in this chapter, include whether solvers use a conflict-based approach and whether they reuse partial solutions to compute the final solution.
} is in how the values for the constraint networks or factored nonlinear programs are assigned.
In PDDLStream, they are computed using a sequence of predefined sampling iterations, while the \textit{Factored-NLP Planner} uses joint optimization.
Both methods have advantages and disadvantages, as illustrated in \cref{ch:mcts}.

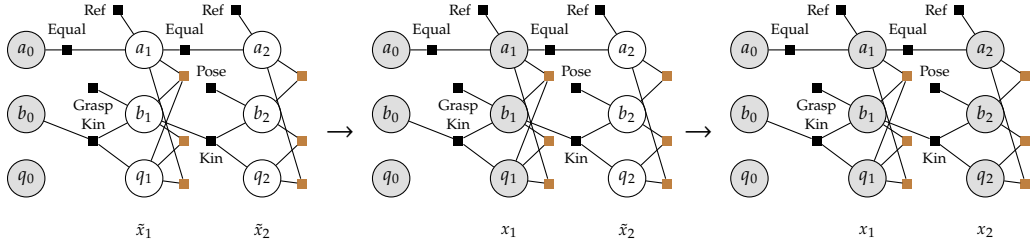
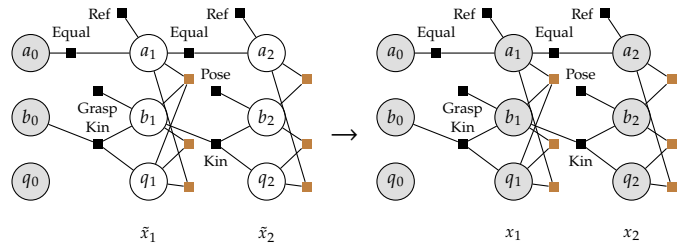
\begin{figure}[h]
	\centering
	\begin{subfigure}[t]{.99\textwidth}
		\setlength{\tabcolsep}{1pt}
		\centering
		\begin{tabular}{ccccc}
			\centered{
				\begin{tikzpicture}[scale=0.7,every node/.style={transform shape}]
					\node[obs] (a0) {$a_0$} ;
					\node[obs,below=.5 of a0  ] (b0) {$b_0$} ;
					\node[obs,below=.5  of b0] (q0) {$q_0$} ;
					\node[latent,right=1.5 of a0  ] (a1) {$a_1$} ;
					\node[latent,below=.5 of a1  ] (b1) {$b_1$} ;
					\node[latent,below=.5 of b1] (q1) {$q_1$} ;
					\node[latent,right=1.5 of a1  ] (a2) {$a_2$} ;
					\node[latent,below=.5 of a2  ] (b2) {$b_2$} ;
					\node[latent,below=.5 of b2] (q2) {$q_2$} ;

					\node (pro1) [sec, below of=q1] {\textcolor{black}{$\tilde{x}_1$}};
					\node (pro2) [sec, below of=q2] {\textcolor{black}{$\tilde{x}_2$}};

					\factor[left=.5 of b1, yshift=0.5cm] {trajp0} { below:Grasp } {b1} {};
					\factor[left=.5 of b2, yshift=0.5cm] {trajp0} { Pose } {b2} {};
					\factor[left=.5 of b1, yshift=-.5cm] {trajp0} {above:Kin} {b0, q1,b1} {};
					\factor[left=.5 of b2, yshift=-.5cm] {} {below:Kin} {b1, q2,b2} {};
					\factor[left=1 of a1] {trajp0} {Equal} { a0, a1} {};
					\factor[left=1 of a2] {trajp0} {Equal} { a1, a2} {};
					\factor[right=.3 of a1, yshift=-0.5cm,color=brown] {} {} {a1,b1,q1} {};
					\factor[right=.3 of b1, yshift=-0.5cm,color=brown] {} {} {b1,q1} {};
					\factor[right=.3 of q1, yshift=-0.1cm,color=brown] {} {} {a1,q1} {};
					\factor[right=.3 of a2, yshift=-0.5cm,color=brown] {} {} {a2,b2} {};
					\factor[right=.3 of b2, yshift=-0.5cm,color=brown] {} {} {b2,q2} {};
					\factor[right=.3 of q2, yshift=-0.1cm,color=brown] {} {} {a2,q2} {};

					\factor[above=.3 of a1,xshift=-.5cm] {trajp0} { left:Ref } {a1} {};
					\factor[above=.3 of a2,xshift=-.5cm] {} { left:Ref } {a2} {};

				\end{tikzpicture}
			}

			 &

			\centered{
				$\to$
			}

			 &

			\centered{
				\begin{tikzpicture}[scale=0.7,every node/.style={transform shape}]
					\node[obs] (a0) {$a_0$} ;
					\node[obs,below=.5 of a0  ] (b0) {$b_0$} ;
					\node[obs,below=.5  of b0] (q0) {$q_0$} ;
					\node[obs,right=1.5 of a0  ] (a1) {$a_1$} ;
					\node[obs,below=.5 of a1  ] (b1) {$b_1$} ;
					\node[obs,below=.5 of b1] (q1) {$q_1$} ;
					\node[latent,right=1.5 of a1  ] (a2) {$a_2$} ;
					\node[latent,below=.5 of a2  ] (b2) {$b_2$} ;
					\node[latent,below=.5 of b2] (q2) {$q_2$} ;

					\node (pro1) [sec, below of=q1] {\textcolor{black}{$x_1$}};
					\node (pro2) [sec, below of=q2] {\textcolor{black}{$\tilde{x}_2$}};

					\factor[left=.5 of b1, yshift=0.5cm] {trajp0} { below:Grasp } {b1} {};
					\factor[left=.5 of b2, yshift=0.5cm] {trajp0} { Pose } {b2} {};
					\factor[left=.5 of b1, yshift=-.5cm] {trajp0} {above:Kin} {b0, q1,b1} {};
					\factor[left=.5 of b2, yshift=-.5cm] {} {below:Kin} {b1, q2,b2} {};
					\factor[left=1 of a1] {trajp0} {Equal} { a0, a1} {};
					\factor[left=1 of a2] {trajp0} {Equal} { a1, a2} {};
					\factor[right=.3 of a1, yshift=-0.5cm,color=brown] {} {} {a1,b1,q1} {};
					\factor[right=.3 of b1, yshift=-0.5cm,color=brown] {} {} {b1,q1} {};
					\factor[right=.3 of q1, yshift=-0.1cm,color=brown] {} {} {a1,q1} {};
					\factor[right=.3 of a2, yshift=-0.5cm,color=brown] {} {} {a2,b2} {};
					\factor[right=.3 of b2, yshift=-0.5cm,color=brown] {} {} {b2,q2} {};
					\factor[right=.3 of q2, yshift=-0.1cm,color=brown] {} {} {a2,q2} {};

					\factor[above=.3 of a1,xshift=-.5cm] {trajp0} { left:Ref } {a1} {};
					\factor[above=.3 of a2,xshift=-.5cm] {} { left:Ref } {a2} {};
				\end{tikzpicture}}
			 &

			\centered{
				$\to$
			}

			 &

			\centered{
				\begin{tikzpicture}[scale=0.7,every node/.style={transform shape}]
					\node[obs] (a0) {$a_0$} ;
					\node[obs,below=.5 of a0  ] (b0) {$b_0$} ;
					\node[obs,below=.5  of b0] (q0) {$q_0$} ;
					\node[obs,right=1.5 of a0  ] (a1) {$a_1$} ;
					\node[obs,below=.5 of a1  ] (b1) {$b_1$} ;
					\node[obs,below=.5 of b1] (q1) {$q_1$} ;
					\node[obs,right=1.5 of a1  ] (a2) {$a_2$} ;
					\node[obs,below=.5 of a2  ] (b2) {$b_2$} ;
					\node[obs,below=.5 of b2] (q2) {$q_2$} ;

					\node (pro1) [sec, below of=q1] {\textcolor{black}{$x_1$}};
					\node (pro2) [sec, below of=q2] {\textcolor{black}{$x_2$}};

					\factor[left=.5 of b1, yshift=0.5cm] {trajp0} { below:Grasp } {b1} {};
					\factor[left=.5 of b2, yshift=0.5cm] {trajp0} { Pose } {b2} {};
					\factor[left=.5 of b1, yshift=-.5cm] {trajp0} {above:Kin} {b0, q1,b1} {};
					\factor[left=.5 of b2, yshift=-.5cm] {} {below:Kin} {b1, q2,b2} {};
					\factor[left=1 of a1] {trajp0} {Equal} { a0, a1} {};
					\factor[left=1 of a2] {trajp0} {Equal} { a1, a2} {};
					\factor[right=.3 of a1, yshift=-0.5cm,color=brown] {} {} {a1,b1,q1} {};
					\factor[right=.3 of b1, yshift=-0.5cm,color=brown] {} {} {b1,q1} {};
					\factor[right=.3 of q1, yshift=-0.1cm,color=brown] {} {} {a1,q1} {};
					\factor[right=.3 of a2, yshift=-0.5cm,color=brown] {} {} {a2,b2} {};
					\factor[right=.3 of b2, yshift=-0.5cm,color=brown] {} {} {b2,q2} {};
					\factor[right=.3 of q2, yshift=-0.1cm,color=brown] {} {} {a2,q2} {};

					\factor[above=.3 of a1,xshift=-.5cm] {trajp0} { left:Ref } {a1} {};
					\factor[above=.3 of a2,xshift=-.5cm] {} { left:Ref } {a2} {};
				\end{tikzpicture}
			}
		\end{tabular}
		\caption{Pick and Place -- Sampling.}
	\end{subfigure} \vspace{21pt}

	\begin{subfigure}[t]{.99\textwidth}
		\centering
		\setlength{\tabcolsep}{1pt}
		\begin{tabular}{ccc}
			\centered{
				\begin{tikzpicture}[scale=0.7,every node/.style={transform shape}]
					\node[obs] (a0) {$a_0$} ;
					\node[obs,below=.5 of a0  ] (b0) {$b_0$} ;
					\node[obs,below=.5  of b0] (q0) {$q_0$} ;
					\node[latent,right=1.5 of a0  ] (a1) {$a_1$} ;
					\node[latent,below=.5 of a1  ] (b1) {$b_1$} ;
					\node[latent,below=.5 of b1] (q1) {$q_1$} ;
					\node[latent,right=1.5 of a1  ] (a2) {$a_2$} ;
					\node[latent,below=.5 of a2  ] (b2) {$b_2$} ;
					\node[latent,below=.5 of b2] (q2) {$q_2$} ;

					\node (pro1) [sec, below of=q1] {\textcolor{black}{$\tilde{x}_1$}};
					\node (pro2) [sec, below of=q2] {\textcolor{black}{$\tilde{x}_2$}};

					\factor[left=.5 of b1, yshift=0.5cm] {trajp0} { below:Grasp } {b1} {};
					\factor[left=.5 of b2, yshift=0.5cm] {trajp0} { Pose } {b2} {};
					\factor[left=.5 of b1, yshift=-.5cm] {trajp0} {above:Kin} {b0, q1,b1} {};
					\factor[left=.5 of b2, yshift=-.5cm] {} {below:Kin} {b1, q2,b2} {};
					\factor[left=1 of a1] {trajp0} {Equal} { a0, a1} {};
					\factor[left=1 of a2] {trajp0} {Equal} { a1, a2} {};
					\factor[right=.3 of a1, yshift=-0.5cm,color=brown] {} {} {a1,b1,q1} {};
					\factor[right=.3 of b1, yshift=-0.5cm,color=brown] {} {} {b1,q1} {};
					\factor[right=.3 of q1, yshift=-0.1cm,color=brown] {} {} {a1,q1} {};
					\factor[right=.3 of a2, yshift=-0.5cm,color=brown] {} {} {a2,b2} {};
					\factor[right=.3 of b2, yshift=-0.5cm,color=brown] {} {} {b2,q2} {};
					\factor[right=.3 of q2, yshift=-0.1cm,color=brown] {} {} {a2,q2} {};

					\factor[above=.3 of a1,xshift=-.5cm] {trajp0} { left:Ref } {a1} {};
					\factor[above=.3 of a2,xshift=-.5cm] {} { left:Ref } {a2} {};
				\end{tikzpicture}
			}

			 &

			\centered
			{
				$\to$
			}
			 &

			\centered{
				\begin{tikzpicture}[scale=0.7,every node/.style={transform shape}]
					\node[obs] (a0) {$a_0$} ;
					\node[obs,below=.5 of a0  ] (b0) {$b_0$} ;
					\node[obs,below=.5  of b0] (q0) {$q_0$} ;
					\node[obs,right=1.5 of a0  ] (a1) {$a_1$} ;
					\node[obs,below=.5 of a1  ] (b1) {$b_1$} ;
					\node[obs,below=.5 of b1] (q1) {$q_1$} ;
					\node[obs,right=1.5 of a1  ] (a2) {$a_2$} ;
					\node[obs,below=.5 of a2  ] (b2) {$b_2$} ;
					\node[obs,below=.5 of b2] (q2) {$q_2$} ;

					\node (pro1) [sec, below of=q1] {\textcolor{black}{$x_1$}};
					\node (pro2) [sec, below of=q2] {\textcolor{black}{$x_2$}};

					\factor[left=.5 of b1, yshift=0.5cm] {trajp0} { below:Grasp } {b1} {};
					\factor[left=.5 of b2, yshift=0.5cm] {trajp0} { Pose } {b2} {};
					\factor[left=.5 of b1, yshift=-.5cm] {trajp0} {above:Kin} {b0, q1,b1} {};
					\factor[left=.5 of b2, yshift=-.5cm] {} {below:Kin} {b1, q2,b2} {};
					\factor[left=1 of a1] {trajp0} {Equal} { a0, a1} {};
					\factor[left=1 of a2] {trajp0} {Equal} { a1, a2} {};
					\factor[right=.3 of a1, yshift=-0.5cm,color=brown] {} {} {a1,b1,q1} {};
					\factor[right=.3 of b1, yshift=-0.5cm,color=brown] {} {} {b1,q1} {};
					\factor[right=.3 of q1, yshift=-0.1cm,color=brown] {} {} {a1,q1} {};
					\factor[right=.3 of a2, yshift=-0.5cm,color=brown] {} {} {a2,b2} {};
					\factor[right=.3 of b2, yshift=-0.5cm,color=brown] {} {} {b2,q2} {};
					\factor[right=.3 of q2, yshift=-0.1cm,color=brown] {} {} {a2,q2} {};

					\factor[above=.3 of a1,xshift=-.5cm] {trajp0} { left:Ref } {a1} {};
					\factor[above=.3 of a2,xshift=-.5cm] {} { left:Ref } {a2} {};
				\end{tikzpicture}
			}
		\end{tabular}
		\caption{Pick and Place -- Optimization.}
	\end{subfigure}
	\caption{Sampling and optimization approaches use different computations to generate keyframes for the Pick and Place task plan.
		From left to right, transitions from white to gray circles indicate the order in which the value of the continuous states has been computed.
	}
	\label{fig:compu:sampling-vs-optimization-factored-nlp}
\end{figure}

\begin{figure}
	\centering
	\begin{subfigure}[t]{0.24\textwidth}
		\includegraphics[width=.9\textwidth]{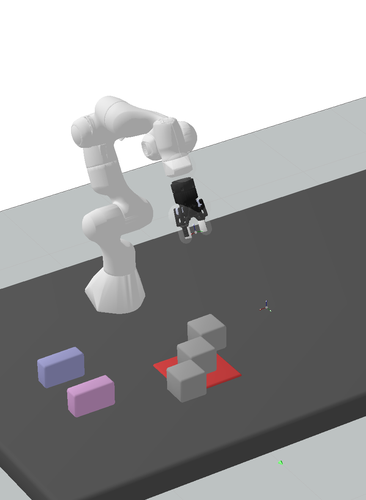}
		\caption{}\label{fig:meta:blocks-cluttered-table-env-X}
	\end{subfigure}
	\begin{subfigure}[t]{0.24\textwidth}
		\includegraphics[width=.9\textwidth]{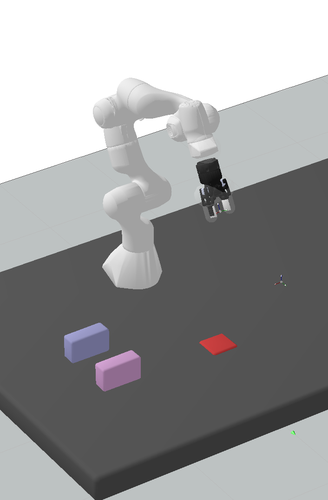}
		\caption{}\label{fig:meta:blocks-small-table-env-X}
	\end{subfigure}
	\caption{Two representative TAMP problems, where the goal is to place the two colored blocks on the red table.
		Note that the two problems are identical at the discrete task level, and only the geometric scene is different (because of the size of the table and the presence of unmovable obstacles shown in color gray).
		Interestingly, (a) is solved more efficiently with sampling-methods, while (b) is solved better with optimization methods.
	}
	\label{fig:meta:representative}
\end{figure}

\paragraph{Space of possible computations}
The trade-off between sampling and optimization has been explored in \cref{ch:mcts} for a fixed task plan.
However, in this chapter, we consider the full TAMP problem, which also entails finding potential task plans and balancing computational time and search effort among different candidate task plans.
The focus here is on generating an initial solution to the TAMP problem rapidly, rather than producing multiple solutions for the keyframes of a fixed task plan.

Addressing the full TAMP problem is considerably more challenging than solving for a fixed task plan.
Therefore, in this chapter, we confine the space of possible computations.
While in \cref{ch:mcts} we considered any conceivable sequence of conditional sampling or optimization operations, here we impose the following limitations:

\begin{itemize}
	\item The maximum allowed fine factorization for sampling is at the level of continuous states, instead of individual variables.
	      Hence, the algorithm cannot opt to compute only a relative transformation for the grasp or placement.
	      Instead, it must choose between computing a single full state, e.g., the robot configuration and the grasp, or a sequence of states, e.g., the pick and place configurations.
	\item Computing continuous states backward in time is prohibited.
	      For example, in a candidate task plan of Pick and Place, the continuous state for the place cannot be computed before the state for the pick (note that this was permitted in \cref{ch:mcts}).
	      Therefore, the continuous states must be computed either jointly (using joint optimization) or sequentially in a forward manner (first pick, then place).
\end{itemize}

With these two constraints, the available sequences of computations for a fixed Pick and Place task plan in an environment with one robot \(Q\) and two blocks \(A, B\) (refer to \cref{fig:meta:blocks-cluttered-table-env-X}) are illustrated in \cref{fig:compu:sampling-vs-optimization-factored-nlp}.
In this scenario, only two possible sequences exist because the task plan of Pick and Place comprises just two steps.
For a sequence of length three, the number of possible computations is four (i.e., the number of ordered partitions).

\section{The TAMP Computation Tree}

We first define a \emph{computational state} and a \emph{computation tree} (i.e., a tree of \emph{computational states}) to model the various computations that can be performed while resolving TAMP problems with optimization or sampling-based approaches.
Utilizing this computation tree, we will subsequently define a TAMP meta-solver as a particular search algorithm on the computation tree, and we will demonstrate that different TAMP solvers navigate the tree in very diverse and ingenious ways.

\paragraph{Discrete-continuous states}

Essentially, most TAMP solvers conceptualize the TAMP problem as a hybrid planning problem in the space of discrete-continuous states \((s, x)\).
Here, we use our recent formulation, Planning with Nonlinear Transition Constraints (PNTC) (see \cref{ch:bid}), as a point of reference.
Remember that in PNTC, TAMP is framed as a hybrid planning problem with discrete-continuous states \((s, x)\), where the initial state is \((s_0, x_0)\),
and the goals are the states \((s_g, x_g)\) with \(g \subseteq s_g\), and where the continuous state \(x_g\) satisfies the nonlinear constraints \(\phi(x_g; s_g)\).

A node \((s_{k+1}, x_{k+1})\) is a successor of \((s_k, x_k)\) if a discrete action \(a_{k+1} \in \mathcal{A}(s_k)\) exists such that \(s_{k+1} = \text{succ}(s_k, a_{k+1})\), and the pair of continuous states meet the nonlinear constraints \(\phi(x_k, x_{k+1}; s_k, s_{k+1})\) set by the discrete transition \(s_k \to s_{k+1}\).
The vector-valued nonlinear constraints \(\phi(x_k, x_{k+1};s_k, s_{k+1})\) amalgamate all the nonlinear constraints defined by the PNTC formulation
\(
\{ \phi_b(x_{k}^{b_0}, x_{k+1}^{b_1}) | ~ \phi_b \equiv \Pi(p, \tilde{p}), \forall p \subseteq s_k, \tilde{p} \subseteq s_{k+1} \}
\).

\cref{ch:bid} provides an in-depth discussion on the significance of discrete states, discrete actions, continuous states, and nonlinear constraints in the TAMP context.
We recall that in PNTC, as one continuous state corresponds to each discrete state, the continuous state now represents the keyframe configuration and the trajectory from the preceding keyframe (instead of a single configuration).

\paragraph{Computational state}

However, the notion of discrete-continuous states \((s, x)\) is not sufficient to represent the different computational operations that can be performed while solving TAMP problems.
From an optimization perspective, the continuous state is a \textit{free} variable to be optimized later.
From a sampling perspective, the continuous state is \textit{fixed}, computed with a sampling operation.
Thus, it cannot model the behavior of hybrid approaches that may combine both \emph{free} continuous states to be optimized later and \emph{fixed} continuous states that have been computed.

For designing a meta-solver, we need to define a \textit{computational state}, a more flexible notion of \textit{state} that represents that the continuous state can be either fixed or a free variable to be optimized later.
The computational state has this designation because it models the state of computations: which parts of the problem have been computed and which have not, instead of the state of the world.

Taking the PNTC formulation and notation as a reference (\cref{sec:planner:formulation}), we define a \textit{computational state} \(N\) as a 4-tuple \((s,x, \fX, \Phi)\) where,

\begin{itemize}
	\item \(s \in \mathcal{S}\) is a discrete state.
	      It represents the current discrete state of the world.
	\item \(x \in \mathcal{X}\) is a fixed continuous state.
	\item \(\fX \) is a set of free continuous states \( \fx_k \) that have not been computed yet.
	\item \(\Phi \) is a set of nonlinear constraints \( \phi_b \) on the free states, which should be satisfied when computing valid values for the free states.
\end{itemize}

Importantly, in a node \( N \), the fixed continuous state \(x\) can correspond to the discrete state of a previous step, instead of the discrete state \(s\) of the current one (see also \cref{fig:compu_example} that appears later).

Note that here we do not consider the fine factorization of the continuous states \(x \in \mathcal{X}\) into a set of variables \(\{x^i \in \mathcal{X}^i\}\), and we introduce the notation \(\tilde{x}\) to make an explicit distinction between states that have been already assigned, and those which have not.

The initial computational state is \( N_0 = (s_0,x_{0},\emptyset, \emptyset) \), i.e., a fixed continuous state \( x_{0} \) and a fixed discrete state \( s_0 \) that represents the initial continuous-discrete state of the world, and no free states or nonlinear constraints.

\paragraph{Expansions in a computation tree}

A computational state \(N = (s,x, \fX, \Phi)\) can be expanded in two different ways:

\begin{enumerate}
	\item A \textbf{discrete expansion} with a discrete action \(a\) that is applicable to the current discrete state \(s\),
	      \begin{equation}
          \label{eq:discrete_expansion}
		      \texttt{Discrete\_Expansion}(N,a) \rightarrow N' = (s',x, \fX', \Phi'),
	      \end{equation}
	      where \(s' = \text{succ}(s,a)\), \(\fX' = \fX \cup \fx'\), and \(\Phi' = \Phi \cup \phi'\).

	      Here, \(\fx'\) and \(\phi'\) are the new free continuous states and the set of nonlinear constraints that represent the motion corresponding to applying the discrete action \(a\) to the discrete state \(s\).
	      The constraints can depend on the state from the last time step, which can be either fixed or free.

	      Therefore, the discrete expansion changes the discrete state, does not change the fixed continuous state, and introduces new free states or constraints.

	      The discrete expansion is deterministic, and there is a finite number of possible expansions for each node.

	\item A \textbf{numeric expansion} that assigns values to the set of free states \(\fX\), subject to the constraints \(\Phi\).
	      \begin{equation}
          \label{eq:numeric_expansion}
		      \texttt{Numeric\_Expansion}(N) \rightarrow N' = (s,x', \emptyset, \emptyset) ~ \text{or} ~ \texttt{FAIL},
	      \end{equation}
	      where \(x'\) is the value assignment for the last free state in \(\fX\).
	      The values for all the continuous states in the sequence are stored, but only the last fixed state is required for future expansions.

	      The numeric expansion is stochastic,
	      can be executed infinitely, and there could be zero, a finite, or an infinite number of valid possible expansions for each node.
	      If the constraints are not satisfiable, as is often the case in task and motion planning, the expansion fails, resulting in a dead state \texttt{FAIL}.

	      Usually, nonlinear constraints define a manifold of possible solutions, resulting in an infinite branching factor for the numeric expansion.

	      Even if a feasible expansion exists, we cannot guarantee that one attempt to compute values will succeed because the optimization solver might fail to find a solution.
	      However,
	      it could be that by repeating the operation, we can expand the node correctly.
	      This stochastic behavior is implemented using randomized initialization and cost functions for nonlinear optimization, randomized constraint sampling, or sample-based motion planning.

\end{enumerate}

Why does it make sense to have a notion of free states?
First, free states in a computational state will be optimized later, allowing us to consider the joint nonlinear constraints that depend on the variables of the next step (e.g., \(\phi(x_1,x_2;s_1,s_2)\)).
As demonstrated by the success of optimization-based TAMP solvers, it is often beneficial to optimize the continuous variables jointly, rather than sequentially.
Moreover, numeric expansions are computationally expensive, and it may be advantageous to delay them as much as possible.

A \textit{computation tree} is a tree whose nodes are computational states, and whose edges are discrete or numeric expansions.
In the context of TAMP, we denote the computation tree as the \textbf{TAMP Computation Tree}.
To solve the original TAMP problem using the TAMP Computation Tree, the goal is to find a valid sequence of computational states \( N_0, N_1, \ldots, N_K \) such that \( N_K = (s_K,x_{K},\emptyset, \emptyset) \) is a goal state, which means reaching a discrete goal state \( s_K \supseteq g\), with a continuous state \( x_{K} \) that satisfies the nonlinear constraints \( \phi(x_{K};s_K) =
0 \), and without any free states or remaining constraints.

\section{An Example of a TAMP Computation Tree and Computational States}

In \cref{fig:compu_example}, we show an instance of a TAMP Computation Tree for a TAMP problem with one robot, \(Q\), two movable objects, \(A\) and \(B\), and a red table.

The initial state \(x_0\) is shown in \cref{fig:meta-tree-init}, and the initial discrete state is \\
\(s_0 = [ \texttt{parent\_A = A\_init}, \texttt{parent\_B = B\_init}, \texttt{gripper\_Q = free} ]\).
Given the initial state \(N_0=(s_0,x_0, \{\}, \{\})\), we can apply a discrete action \(a_1 = \texttt{pick A with Q from A\_init }\) to obtain a new node \(N_1 = (s_1,x_0, \fx_1, \Phi)\), with \(\Phi = \{\phi(x_0,\fx_1;s_0,s_1)\}\) and \(s_1 = \textup{succ}(s_0,a_1)\).

Thus, we now have a different discrete state that assumes the discrete action will be applied successfully.
However, the continuous state has not been assigned and remains as a free state subject to constraints.
Similarly, we can extend the initial state with another valid action \(a_1' = \texttt{pick B with Q from B\_init}\), resulting in another state \(N_2\) with a set of different nonlinear constraints \(\Phi'\).

In the computational state \(N_1=(s_1,x_0, \{\fx_1\}, \{\phi(x_0,\fx_1;s_0,s_1)\})\), we can decide to compute the free variables.
Specifically, we want to assign values to \(\fx_1\) subject to the constraints \(\phi(x_0,\fx_1;s_0,s_1)\).
If this operation succeeds, it results in the state \(N_3=(s_1,x_1, \{\}, \{\})\).
Now, the last continuous state is \(x_1\), and both the set of free states and constraints are empty.
If the assignment process fails, a dead child node is created.

Alternatively, we can perform a discrete expansion of the node \(N_1\) with new discrete actions \(a_2=\texttt{place A with Q on red table}\), resulting in \(N_4=(s_2,x_0, \{\fx_1, \fx_2\}, \Phi'')\), where \(\Phi'' = \{\phi(x_0,\fx_1;s_0,s_1), \phi(\fx_1,\fx_2;s_1,s_2)\}\).
The discrete state has changed to
\(s_2\) = [ \texttt{parent\_A = red table}, \texttt{parent\_B = B\_init}, \texttt{gripper\_Q = free}],
and now we have two free states and additional constraints (\cref{fig:graphnlp_representative2}).

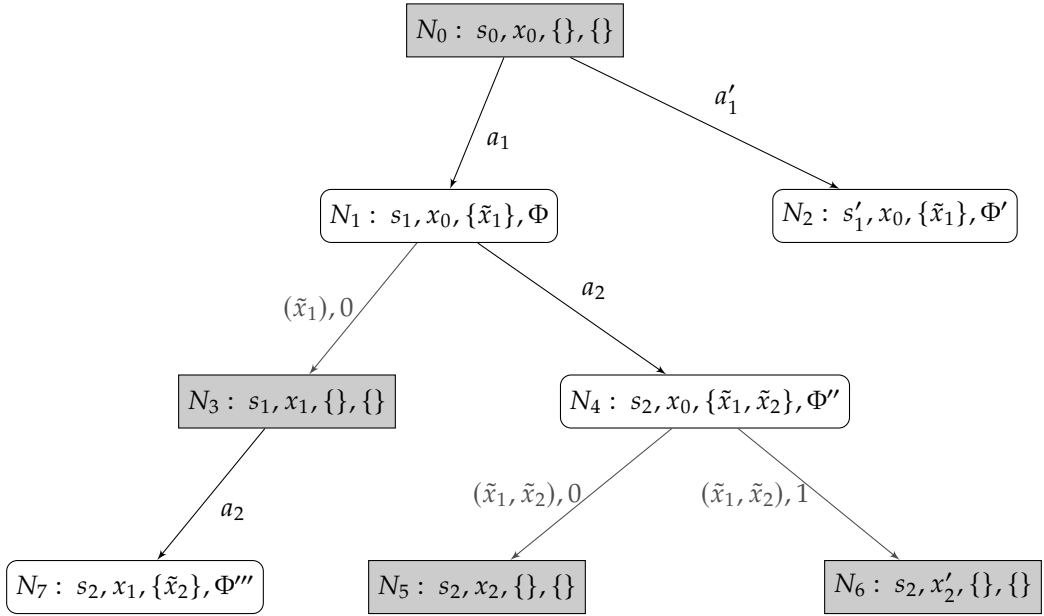
\begin{figure}
	\centering
	\begin{tikzpicture}[node distance = 7em, auto]
		\node [block] (n0) {$N_0: ~ s_0, x_0, \{\},\{\}$};

		\node [block2, below of=n0, xshift=-1cm] (n1) {$N_1 : ~ s_1, x_0, \{\fx_1\} , \Phi$};

		\node [block2, below of=n0,xshift=5cm] (n3) {$N_2 : ~ s_1', x_0, \{\fx_1\}, \Phi'$};

		\node [block,below of=n1,xshift=-2cm] (n5) {$N_3: ~ s_1,x_1, \{\}, \{\}$};
		\node [block2,below of=n1,xshift=3.5cm] (n7) {$N_4: ~ s_2, x_0, \{\fx_1, \fx_2\}, \Phi''$};

		\node [block,below of=n7,xshift=-3cm] (n8) {$N_5: ~ s_2,x_2, \{\}, \{\}$ };

		\node [block,below of=n7,xshift=3cm] (n8bis) {$N_6: ~s_2,x_2', \{\}, \{\}$ };

		\node [block2,below of=n5,xshift=-2cm] (n10) {$N_7: ~ s_2,x_1,\{\fx_2\},\Phi'''$};

		\path [line, color=black!70] (n1) -- node [left]{$(\fx_1),0$} (n5);

		\path [line, color=black!70] (n7) -- node [left]{$(\fx_1, \fx_2),0$} (n8);
		\path [line, color=black!70] (n7) -- node [left]{$(\fx_1, \fx_2),1$} (n8bis);

		\path [line] (n0) -- node {$a_1$ } (n1);

		\path [line] (n0) -- node {$a_1'$ } (n3);
		\path [line] (n1) -- node {$a_2$ } (n7);
		\path [line] (n5) -- node { $a_2$ } (n10);

	\end{tikzpicture}
	\caption{The TAMP Computation Tree.
		White nodes contain both fixed continuous states and free continuous states subject to constraints .
		Gray nodes do not contain any free states.
		The color of the edges indicates the two types of expansions: numeric expansion (in gray) and discrete extension (in black).
		See the main text for details.
		\vspace{.5cm}}
	\label{fig:compu_example}
\end{figure}

\begin{figure}
	\centering
	\begin{subfigure}[t]{0.6\textwidth}
		\centering
		\includegraphics[width=.6\textwidth]{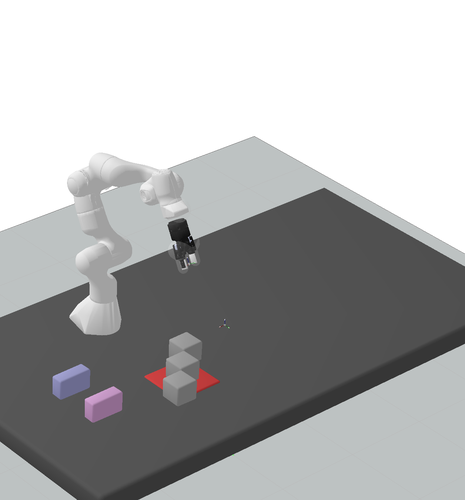}
		\caption{Initial continuous state $x_0$.}
		\label{fig:meta-tree-init}
	\end{subfigure} \\
	\vspace{20pt}

	\begin{subfigure}[t]{0.9\textwidth}
		\centering
		\begin{tikzpicture}[scale=0.7,every node/.style={transform shape}]
			\node[obs] (a0) {$a_0$} ;
			\node[obs,below=.5 of a0  ] (b0) {$b_0$} ;
			\node[obs,below=.5  of b0] (q0) {$q_0$} ;
			\node[latent,right=2 of a0  ] (a1) {$a_1$} ;
			\node[latent,below=.5 of a1  ] (b1) {$b_1$} ;
			\node[latent,below=.5 of b1] (q1) {$q_1$} ;
			\node[latent,right=2 of a1  ] (a2) {$a_2$} ;
			\node[latent,below=.5 of a2  ] (b2) {$b_2$} ;
			\node[latent,below=.5 of b2] (q2) {$q_2$} ;

			\node (pro1) [sec, below of=q1] {\textcolor{black}{$\tilde{x}_1$}};
			\node (pro2) [sec, below of=q2] {\textcolor{black}{$\tilde{x}_2$}};

			\draw[color=blue] (2,-3) rectangle (3.8,.5);
			\draw[color=blue] (4.8,-3) rectangle (6.5,.5);

			\factor[above=.3 of a1,xshift=-.5cm] {trajp0} { left:Ref } {a1} {};
			\factor[above=.3 of a2,xshift=-.5cm] {trajp0} { left:Ref } {a2} {};
			\factor[left=1 of b1, yshift=0.5cm] {trajp0} { below:Grasp } {b1} {};
			\factor[left=.5 of b2, yshift=0.5cm] {trajp0} { Pose } {b2} {};
			\factor[left=1 of b1, yshift=-.5cm] {trajp0} {above:Kin} {b0, q1,b1} {};
			\factor[left=1 of b2, yshift=-.5cm] {} {below:Kin} {b1, q2,b2} {};
			\factor[left=1 of a1] {trajp0} {Equal} { a0, a1} {};
			\factor[left=1 of a2] {trajp0} {Equal} { a1, a2} {};
			\factor[right=.3 of a1, yshift=-0.5cm,color=brown] {} {} {a1,b1,q1} {};
			\factor[right=.3 of b1, yshift=-0.5cm,color=brown] {} {} {b1,q1} {};
			\factor[right=.3 of q1, yshift=-0.1cm,color=brown] {} {} {a1,q1} {};
			\factor[right=.3 of a2, yshift=-0.5cm,color=brown] {} {} {a2,b2} {};
			\factor[right=.3 of b2, yshift=-0.5cm,color=brown] {} {} {b2,q2} {};
			\factor[right=.3 of q2, yshift=-0.1cm,color=brown] {} {} {a2,q2} {};
		\end{tikzpicture}
		\caption{Free continuous states ($\tilde{x}_1$, $\tilde{x}_2$) subject to constraints.
			Each state, represented by a blue rectangle, is factorized into variables $[a,b,q]$.
			Nonlinear constraints are shown as squares.
			Gray circles represent fixed variables (in this case, the initial state).
		}
		\label{fig:graphnlp_representative2}
	\end{subfigure} \\
	\vspace{20pt}

	\begin{subfigure}[t]{0.35\textwidth}
		\centering
		\includegraphics[width=.8\textwidth]{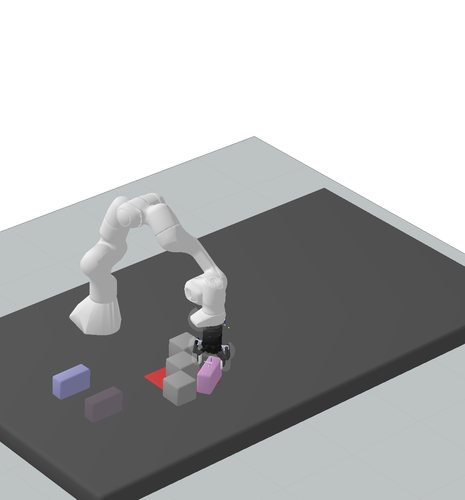}
		\caption{Continuous state $x_2$ for discrete state $s_2$.}
		\label{fig:states1}
	\end{subfigure}
	\hspace{1cm}
	\begin{subfigure}[t]{0.35\textwidth}
		\centering
		\includegraphics[width=.8\textwidth]{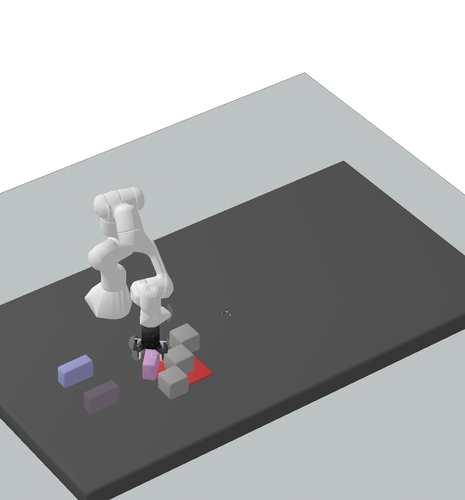}
		\caption{A different continuous state $x_2'$ for discrete state $s_2$.}
		\label{fig:states2}
	\end{subfigure}
	\vspace{10pt}
	\caption{Components of computational states in the TAMP Computation Tree of \cref{fig:compu_example} in the environment shown in \cref{fig:meta:blocks-cluttered-table-env-X}.}
\end{figure}

We can decide to expand \(N_3\) numerically by computing the two free states jointly, resulting in the state \(N_5 = (s_2, x_2, \{\}, \{\})\), with a possible \(x_2\) shown in \cref{fig:states1}.
This operation can be repeated multiple times, resulting in different (if successful) compute states \(N_6 = (s_2, x_2', \{\}, \{\})\) with \(x_2'\) shown in \cref{fig:states2}.

\clearpage

\section{A Practical Meta-Solver for TAMP}

The TAMP Computation Tree provides a framework for designing algorithms for TAMP that automatically select between joint optimization and sampling operations.
With this formulation, designing a TAMP meta-solver corresponds to defining a search strategy on the tree that chooses which node to expand next.

Exploration in the TAMP Computation Tree is a daunting task.
The state space of the computation tree has an explicitly infinite branching factor in the compute values operation, and the number of possible unique discrete states grows exponentially with the number of objects.
In addition to these two inherent TAMP challenges, the state space in the TAMP Computation Tree is even larger, as it includes the state of computation rather than just the original discrete-continuous state space.

In this section, we propose a simple search algorithm that explores the TAMP Computation Tree in an effective manner.
The goal of this algorithm is not to outperform state-of-the-art TAMP solvers, but to provide a foundational understanding and intuition of TAMP meta-solvers, which could be revolutionary for designing efficient TAMP solvers in the future.

\subsection{Algorithm}

Our TAMP meta-solver is a heuristic search algorithm on the space of computational states.
The objective of the search algorithm is to find an optimal solution to the TAMP problem, i.e., the shortest task plan that has a valid motion, using the least compute effort.

\paragraph{The computational level}

To manage the infinite branching factor of the numeric expansion, we associate a computational level with each node, \(\text{l}(N) \in \mathbb{N}\), to incrementally enumerate the infinite branching factor, analogous to the level used in PDDLStream.
Each time a node with free variables undergoes numeric expansion, we increment its level by one and reintroduce it into the open list.
When we create a new node, it inherits the level of its parent node.

Incorporating the level into the node score provides an iterative-widening search algorithm that can repeat numeric expansion operations, each time at a higher cost.
This level is the mechanism that allows us to explore new branches of the tree while conducting new expansions in previous nodes, resulting in different continuous states that could potentially lead to a solution.

\paragraph{TAMP cost and compute cost}

We distinguish between two types of costs: compute cost and TAMP cost.
The TAMP cost is the cost in the original TAMP problem, assuming each discrete action has a unit cost.
The compute cost refers to the cost of the underlying compute operations.

For the compute cost, we use a straightforward cost model for the operations in the TAMP Computation Tree.
\begin{itemize}
	\item The compute cost for a discrete expansion is zero.
	\item The compute cost for a numeric expansion of a node \(N = (s, x, \fX, \Phi)\) equals the number of free states \(|\fX|\) we attempt to compute.
\end{itemize}

Thus, the compute cost-to-go \(c_c(N)\) for node \(N = (s, x, \fX, \Phi)\) is the optimal compute cost required to reach a goal state, i.e., solving the original TAMP problem from \(N\).
It can be expressed as the sum of two components:
\begin{equation}
	c_c(N) = |\fX| + c_{\text{TAMP}}(s)\,,
\end{equation}
where \(|\fX|\) is the number of free states, and \(c_{\text{TAMP}}(s)\), the TAMP cost-to-go, is the length of the shortest task plan from \(s\) to a goal state that is also valid for continuous motion.

We can also define a compute cost-to-come \(g_c(N)\) as the number of continuous states that have been assigned, starting from the root node.
The TAMP cost-to-come \(g_{\text{TAMP}}(s)\) is the length of the task plan from the root to the current node.

Evaluating the function \(c_{\text{TAMP}}(s)\) would require solving a TAMP problem in itself.
However, we can define a lower bound \(c_{\text{TASK}}(s) \leq c_{\text{TAMP}}(s)\) that disregards the geometric information and can be efficiently calculated by calling a discrete task planner.

\paragraph{Score function}

To balance the TAMP cost with the compute cost and progressively explore the infinite branching factor of the numeric expansions, we define a score function.
The score function \(f(N)\) for a node \(N = (s, x, \fX, \Phi)\) is a tuple of two values:
\begin{equation}
	f(N) \rightarrow [ c_{\text{TASK}}(s) + g_{\text{TAMP}}(s) + l(N), -g_c(N) ]\,.
	\label{eq:f-score}
\end{equation}

The best node is selected based on the first value (lower is better), and the second value is used for breaking ties (lower is better, i.e., we prefer a node where we have already invested compute effort).
If ties persist, they are broken randomly.
These two components prioritize expanding nodes in task plans with potentially few discrete actions, which have not been previously attempted, and that will require less compute effort to achieve (since some states have already been computed).

\begin{figure}[!t]
	\centering
	\begin{minipage}{0.8\textwidth}
		\begin{algorithm}[H]
			\centering
			\caption{The TAMP meta-solver.}\label{alg:code-meta}
			\begin{algorithmic}[1]
				\State \textbf{Input:}
				$N_0 = (s_0,x_0, \{\}, \{\}, l=0)$  \\
				\Comment{{\color{gray} \small Initial computational state (with computational level $l$)}}
				\State $L \gets \{ N_0 \}$ \Comment{{\color{gray} \small Open list}}
				\While{ $| L | > 0$ }
				\State $N \gets \texttt{Choose Best}(L)$ \Comment{{\color{gray} \small Node with best score $f(N)$ (\cref{eq:f-score})}}
				\State $\texttt{Remove} ~ N ~ \texttt{from} ~ L$
				\If{$|N.\fX| = 0 ~ \And ~ g \subseteq N.s $}
				\Return $N$ \Comment{{\color{gray} \small We have reached the discrete goal, and all continuous states are assigned}}
				\EndIf
				\If{$|N.\fX| > 0$}
				\State $N' \gets \texttt{Numeric\_Expansion}(N) \quad $ \Comment{{\color{gray} \small Compute values for free variables (\cref{eq:numeric_expansion})}}
				\If{$N' \neq \texttt{FAIL}$}
				\State $N'.l \gets N.l$ \Comment{{\color{gray} \small Inherit level}}
				\State $L \gets L \cup \{ N' \}$ \Comment{{\color{gray} \small Add new node to open list}}
				\State $N.l \gets N.l + 1$ \Comment{{\color{gray} \small Increase level}}
				\State $L \gets L \cup \{ N \}$ \Comment{{\color{gray} \small Add old node to open list}}
				\Else
				\State $N.l \gets N.l + 1$ \Comment{{\color{gray} \small Increase level}}
				\State $L \gets L \cup \{ N \}$ \Comment{{\color{gray} \small Add old node to open list}}
				\State \textbf{continue}
				\EndIf
				\EndIf
				\For{$a \in \mathcal{A}(N.s)$}
				\State $N' \gets \texttt{Discrete\_Expansion}(N,a)\quad$ \Comment{{\color{gray} \small Expand with discrete action (\cref{eq:discrete_expansion})}}
				\State $N'.l \gets N.l$ \Comment{{\color{gray} \small Inherit level}}
				\State $L \gets L \cup \{ N' \}$ \Comment{{\color{gray} \small Add node to open list}}
				\EndFor
				\EndWhile
			\end{algorithmic}
		\end{algorithm}
	\end{minipage}
\end{figure}

\paragraph{What is missing to outperform state-of-the-art TAMP solvers?}

In the experiments section, we demonstrate how our simple meta-algorithm can outperform
sample-based and optimization-based TAMP solvers in short-horizon planning problems involving two objects, and one or two robots, but which require finding the right balance between individually sampling states or optimizing them jointly.

To match and surpass the performance of state-of-the-art TAMP solvers on larger TAMP problems, our meta-solver needs two additional ingredients to share information between the different computational states in the tree:

\begin{itemize}

	\item \emph{Detection and encoding of geometric conflicts} – A fundamental challenge in TAMP is that the PDDL heuristic distance does not include information about geometry and can be rather uninformative in some contexts.
	      To address this issue, we could incrementally integrate information about geometry back into the discrete task description.
	      In our previous TAMP solvers, we incorporated negative information in the form of conflicts (either prefixes or subsets of infeasible nonlinear constraints) to modify the original discrete planning task.
	      Alternatives include employing learned heuristics based on the solutions of previous similar problems or designing heuristics that, while fast to compute, can incorporate some knowledge about geometry.

	\item \emph{Reusing computations} – Solving a TAMP problem involves multiple numeric expansions, which can be computationally expensive.
	      To alleviate this, we could reuse computations from previous nodes in new expansions.
	      For instance, if we have already computed a trajectory between two configurations, that computation could be reused in other expansions requiring the same calculation.

\end{itemize}

Incorporating these two ideas into the meta-solver (\cref{alg:code-meta}) using the tools we developed in \cref{ch:bid}, namely conflict detection and creating a database of feasible partial solutions, is a direction for future work.
Another open question for future research is how to better utilize the history of all computations, reusing information between sibling nodes, different nodes with the same discrete state, and similar task plans.

\section{Analyzing and Designing TAMP Solvers with the TAMP Computation Tree}

The TAMP Computation Tree is a framework that models the different computational operations performed while solving TAMP problems.
Crucially, it provides a unified framework encompassing both conditional sampling and joint optimization.
In the TAMP Computation Tree, conditional sampling operations correspond to sampling the continuous states one at a time, while joint nonlinear optimization involves assigning values to a sequence of free states collectively.

We propose that TAMP solvers can be conceptualized as sophisticated search strategies for navigating the TAMP Computation Tree.
These strategies can share information between different computational states and store the history of all prior decisions and computations.

\begin{theorem}[Existence of a Search Algorithm in the TAMP Computation Tree]
	\label{th:tree}
	If the set of computational operations a TAMP solver utilizes is encompassed by the possible expansions in the TAMP Computation Tree, then a search algorithm exists within the TAMP Computation Tree that can emulate the solver's behavior.
\end{theorem}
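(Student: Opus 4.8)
The plan is to treat \cref{th:tree} as a \emph{simulation} (expressiveness) statement and to prove it by an inductive correspondence between the internal execution of an arbitrary TAMP solver and a run of an explicitly constructed search strategy on the TAMP Computation Tree. First I would fix an abstract model of what ``a TAMP solver'' is: an iterative procedure maintaining an internal configuration $C_t$ that, at each step $t$, applies one \emph{computational operation} $\omega_t$ drawn from a fixed repertoire $\Omega$, where by hypothesis every $\omega \in \Omega$ coincides with one of the admissible expansions of the TAMP Computation Tree --- i.e.\ a discrete expansion \eqref{eq:discrete_expansion} applied to some computational state, or a numeric expansion \eqref{eq:numeric_expansion} applied to some computational state. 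The solver may additionally carry arbitrary bookkeeping (conflict sets, databases of feasible subgraphs, caches of partial trajectories, random seeds) and use it to choose $\omega_t$; this auxiliary data will be \emph{carried verbatim} by the emulating algorithm, which is legitimate because a search strategy on the computation tree is, by the description given in this section, allowed to store the full history of prior decisions and computations.

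Next I would construct the emulating algorithm. Its state is a pair $(\mathcal{F}_t, C_t)$, where $\mathcal{F}_t$ is a finite frontier of nodes of the TAMP Computation Tree and $C_t$ is a copy of the solver's configuration. Initialize $\mathcal{F}_0 = \{ N_0 \}$ with $N_0 = (s_0, x_0, \emptyset, \emptyset)$, together with a correspondence $\rho$ relating the ``partial plans / partial assignments'' the solver is currently reasoning about to the nodes of $\mathcal{F}_t$. At step $t$, the emulator reads $C_t$, determines (via the same deterministic rule and the same random seed) the operation $\omega_t$ the solver would perform, identifies through $\rho$ the node $N \in \mathcal{F}_t$ on which $\omega_t$ acts, and applies the matching expansion: if $\omega_t$ is a discrete expansion with action $a$ it inserts $\texttt{Discrete\_Expansion}(N,a)$ into the frontier; if $\omega_t$ is a numeric expansion it calls $\texttt{Numeric\_Expansion}(N)$ with the solver's random draw, inserting the resulting node or recording \texttt{FAIL} exactly as the solver does. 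Finally it updates $C_{t+1}$ and $\rho$ to mirror the solver.

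The core of the argument is an induction on $t$ establishing the invariant: (i) $C_t$ equals the solver's configuration after $t$ steps; (ii) $\rho$ is a well-defined bijection between the solver's active partial plans/assignments and $\mathcal{F}_t$; and (iii) every node in $\mathcal{F}_t$ is reachable from $N_0$ by the expansions applied so far, hence lies in the TAMP Computation Tree. The base case is the initialization. The step case reduces directly to the hypothesis: since $\omega_t \in \Omega$ is by assumption one of the admissible expansions, the emulator's move is a legal move in the computation tree and reproduces the effect on $C_t$, while the seeded, deterministic tie-breaking forces the emulator to pick the same $\omega_t$ as the solver. From the invariant it follows that the solver outputs a solution sequence $\langle N_0, \ldots, N_K \rangle$ with $N_K$ a goal computational state (empty $\fX$ and $g \subseteq N_K.s$) if and only if the emulating algorithm has $N_K \in \mathcal{F}_K$, and the solver fails to terminate iff the emulator does; this is precisely what ``emulates the solver's behavior'' means.

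The main obstacle I expect is not any calculation but the modeling step: pinning down the abstract class of TAMP solvers narrowly enough that the repertoire-containment hypothesis is meaningful, yet broadly enough to cover the concrete solvers discussed in the chapter (PDDLStream, the Factored-NLP Planner of \cref{ch:bid}, MBTS, and the keyframe meta-solver of \cref{ch:mcts}). In particular I would need to (a) argue that solver operations that look ``finer'' than a full continuous state --- e.g.\ a PDDLStream sampler producing a single relative transformation --- are still subsumed, either by widening the admissible expansions or, as actually done here, by noting that the chapter's meta-solver deliberately restricts granularity to whole continuous states, so the theorem is understood relative to that restricted repertoire; and (b) handle the stochasticity of numeric expansions by fixing a common source of randomness (a coupling), so that ``emulate'' is read per random seed rather than merely in distribution. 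Once this abstract model is fixed, the induction is routine.
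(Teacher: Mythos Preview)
Your simulation argument is correct, and in fact it is considerably more explicit than anything the paper offers. The paper does not give a proof of \cref{th:tree} at all: immediately after the statement it simply remarks that ``this theorem presents an existence argument; we cannot offer a constructive proof, meaning we cannot articulate a concise analytical function for selecting which node to expand next,'' and then observes that the search algorithm ``can be a complex system that decides on operations to perform based on all prior actions.'' That last sentence is the entire content of the paper's justification --- the search strategy is permitted unbounded bookkeeping, so it can in principle mimic any solver whose primitive steps are tree expansions.

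Your proposal takes this one-line observation and turns it into an actual argument: you fix an abstract solver model, define the emulator as the pair $(\mathcal{F}_t, C_t)$, couple the randomness, and run an induction on $t$. This is the right way to make the claim rigorous, and your identification of the real difficulty --- the modeling step, in particular the granularity mismatch with PDDLStream-style samplers and the per-seed reading of ``emulate'' --- is exactly where the substance lies. The paper sidesteps both issues by leaving the statement informal. So there is no disagreement in approach; you have simply written out what the paper only gestures at.
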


This theorem presents an existence argument; we cannot offer a constructive proof, meaning we cannot articulate a concise analytical function for selecting which node to expand next.
The search algorithm need not be a classical one like depth-first or breadth-first search; it can be a complex system that decides on operations to perform based on all prior actions.
This could include identifying geometric conflicts or assessing bounds of infeasibility prior to a numeric expansion.

\begin{figure}[!t]
	\centering
	\begin{minipage}{0.8\textwidth}
		\begin{algorithm}[H]
			\centering
			\caption{Optimization-based TAMP solver in the TAMP Computation Tree.
				Changes with respect to the meta-solver highlighted in \textcolor{orange}{orange}.
			}\label{alg:code-meta-opt}
			\begin{algorithmic}[1]
				\State \textbf{Input:}
				$N_0 = (s_0,x_0, \{\}, \{\}, l=0)$  \\
				\Comment{{\color{gray} \small Initial computational state (with computational level $l$)}}
				\State $L \gets \{ N_0 \}$ \Comment{{\color{gray} \small Open list}}
				\While{ $| L | > 0$ }
				\State $N \gets \texttt{Choose Best}(L)$ \Comment{{\color{gray} \small Node with best score $f(N)$ (\cref{eq:f-score})}}
				\State $\texttt{Remove} ~ N ~ \texttt{from} ~ L$
				\If{$|N.\fX| = 0 ~ \And ~  g \subseteq N.s$}
				\Return $N$ \Comment{{\color{gray} \small We have reached the discrete goal, and all continuous states are assigned}}
				\EndIf
				\If{$|N.\fX| > 0 \And \textcolor{orange}{g \subseteq N.s}$} \Comment{\small \textcolor{orange}{\textbf{Optimization-based:}
						Only compute values for a full candidate task plan}}

				\State $N' \gets \texttt{Numeric\_Expansion}(N)\quad$ \Comment{{\color{gray} \small Compute values for free variables (\cref{eq:numeric_expansion})}}
				\If{$N' \neq \texttt{FAIL}$}
				\State $N'.l \gets N.l$ \Comment{{\color{gray} \small Inherit level}}
				\State $L \gets L \cup \{ N' \}$ \Comment{{\color{gray} \small Add new node to open list}}
				\State $N.l \gets N.l + 1$ \Comment{{\color{gray} \small Increase level}}
				\State $L \gets L \cup \{ N \}$ \Comment{{\color{gray} \small Add old node to open list}}
				\Else
				\State $N.l \gets N.l + 1$ \Comment{{\color{gray} \small Increase level}}
				\State $L \gets L \cup \{ N \}$ \Comment{{\color{gray} \small Add old node to open list}}
				\State \textbf{continue}
				\EndIf
				\EndIf
				\For{$a \in \mathcal{A}(N.s)$}
				\State $N' \gets \texttt{Discrete\_Expansion}(N,a)\quad$ \Comment{{\color{gray} \small Expand with discrete action (\cref{eq:discrete_expansion})}}
				\State $N'.l \gets N.l$ \Comment{{\color{gray} \small Inherit level}}
				\State $L \gets L \cup \{ N' \}$ \Comment{{\color{gray} \small Add node to open list}}
				\EndFor
				\EndWhile
			\end{algorithmic}
		\end{algorithm}
	\end{minipage}
\end{figure}

Some advanced TAMP solvers factorize TAMP problems (i.e., the full discrete and continuous state is decomposed).
For clarity in this presentation, we have depicted a TAMP Computation Tree in the full configuration space, without explicitly modeling factorization.
Incorporating factorization would simply require breaking down the full continuous state into variables, as demonstrated in the Planning with Nonlinear Transition Constraints formulation.

We now illustrate how traditional sample-based or optimization-based TAMP solvers are represented as search algorithms in the TAMP Computation Tree that utilize only a subset of the available operations.

\begin{figure}[!t]
	\centering
	\begin{minipage}{0.8\textwidth}
		\begin{algorithm}[H]
			\caption{Sampling-based TAMP solver in the TAMP Computation Tree.
					{Changes with respect to the meta-solver highlighted in \textcolor{orange}{orange}.}}
			\label{alg:code-meta-sampling}
			\begin{algorithmic}[1]
				\State \textbf{Input:}
				$N_0 = (s_0,x_0, \{\}, \{\}, l=0)$  \\
				\Comment{{\color{gray} \small Initial computational state (with computational level $l$)}}
				\State $L \gets \{ N_0 \}$ \Comment{{\color{gray} \small Open list}}
				\While{ $| L | > 0$ }
				\State $N \gets \texttt{Choose Best}(L)$ \Comment{{\color{gray} \small Node with best score $f(N)$ (\cref{eq:f-score})}}
				\State $\texttt{Remove} ~ N ~ \texttt{from} ~ L$
				\If{$|N.\fX| = 0 ~ \And ~ g \subseteq N.s $}
				\Return $N$ \Comment{{\color{gray} \small We have reached the discrete goal, and all continuous states are assigned}}
				\EndIf
				\If{$|N.\fX| > 0$}
				\State $N' \gets \texttt{Numeric\_Expansion}(N)\quad$ \Comment{{\color{gray} \small Compute values for free variables (\cref{eq:numeric_expansion})}}
				\If{$N' \neq \texttt{FAIL}$}
				\State $N'.l \gets N.l$ \Comment{{\color{gray} \small Inherit level}}
				\State $L \gets L \cup \{ N' \}$ \Comment{{\color{gray} \small Add new node to open list}}
				\State $N.l \gets N.l + 1$ \Comment{{\color{gray} \small Increase level}}
				\State $L \gets L \cup \{ N \}$ \Comment{{\color{gray} \small Add old node to open list}}
				\Else
				\State $N.l \gets N.l + 1$ \Comment{{\color{gray} \small Increase level}}
				\State $L \gets L \cup \{ N \}$ \Comment{{\color{gray} \small Add old node to open list}}
				\State \textbf{continue}
				\EndIf
				\EndIf
				\If{\textcolor{orange}{$|N.\fX| = 0$}}
				\Comment{\small \textcolor{orange}{\textbf{Sampling-Based:}
						Discrete expansion only in nodes without free variables}}
				\For{$a \in \mathcal{A}(N.s)$}
				\State $N' \gets \texttt{Discrete\_Expansion}(N,a)\quad$ \Comment{{\color{gray} \small Expand with discrete action (\cref{eq:discrete_expansion})}}
				\State $N'.l \gets N.l$ \Comment{{\color{gray} \small Inherit level}}
				\State $L \gets L \cup \{ N' \}$ \Comment{{\color{gray} \small Add node to open list}}
				\EndFor
				\EndIf
				\EndWhile
			\end{algorithmic}
		\end{algorithm}
	\end{minipage}
\end{figure}

\paragraph{Optimization-based TAMP solver}

Within the TAMP Computation Tree framework, an optimization-based solver applies numeric expansion exclusively to computational states that satisfy the discrete goal—thereby optimizing all continuous states for a candidate plan collectively at the end.

\cref{alg:code-meta-opt} demonstrates this approach by altering a single line in the TAMP meta-solver, highlighted in \textcolor{orange}{orange}.

\paragraph{Sampling-based TAMP solver}
In the TAMP Computation Tree framework, a sampling-based solver permits only the discrete expansion of nodes without free variables.
This approach limits the number of free states in a compute state to one, ensuring that motion is computed sequentially, step by step.
\cref{alg:code-meta-sampling} implements this method by changing one line in the TAMP meta-solver, also highlighted in \textcolor{orange}{orange}.

\section{Experimental Results}

\begin{figure}

	\centering
	\begin{subfigure}[t]{0.2\textwidth}
		\includegraphics[width=.9\textwidth]{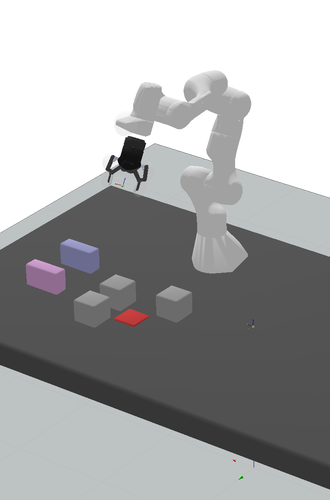}
		\caption{}\label{fig:meta:tower-env}
	\end{subfigure}
	\begin{subfigure}[t]{0.2\textwidth}
		\includegraphics[width=.9\textwidth]{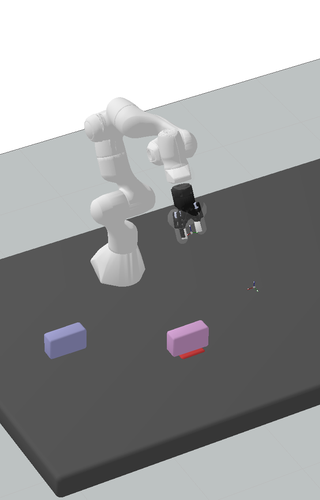}
		\caption{}\label{fig:meta:move_first-env}
	\end{subfigure}
	\begin{subfigure}[t]{0.2\textwidth}
		\includegraphics[width=.9\textwidth]{figs_old_computer/opti_computation/Pictures/r1_2b_3obs_put_on_table.png}
		\caption{}\label{fig:meta:blocks-cluttered-table-env}
	\end{subfigure}
	\begin{subfigure}[t]{0.2\textwidth}
		\includegraphics[width=.9\textwidth]{figs_old_computer/opti_computation/Pictures/r1_2b_small_table_0_obs.png}
		\caption{}\label{fig:meta:blocks-small-table-env}
	\end{subfigure} \\

	\begin{subfigure}[t]{0.25\textwidth}
		\includegraphics[width=.9\textwidth]{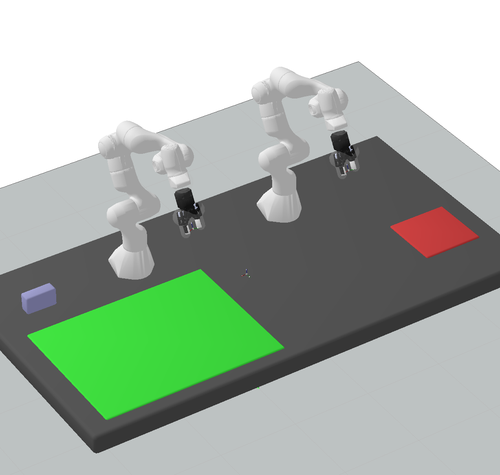}
		\caption{}\label{fig:meta:transfer-big-table-env}
	\end{subfigure}
	\begin{subfigure}[t]{0.25\textwidth}
		\includegraphics[width=.9\textwidth]{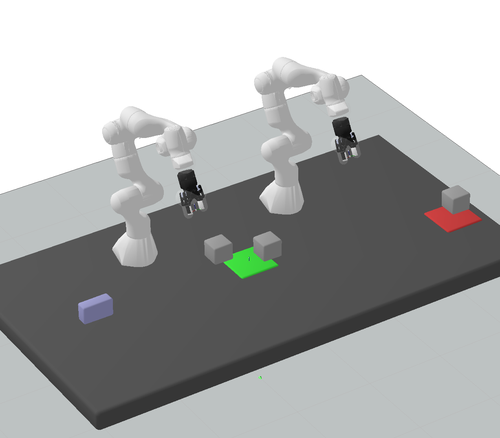}
		\caption{}\label{fig:meta:transfer-with-obs-env}
	\end{subfigure}
	\begin{subfigure}[t]{0.25\textwidth}
		\includegraphics[width=.9\textwidth]{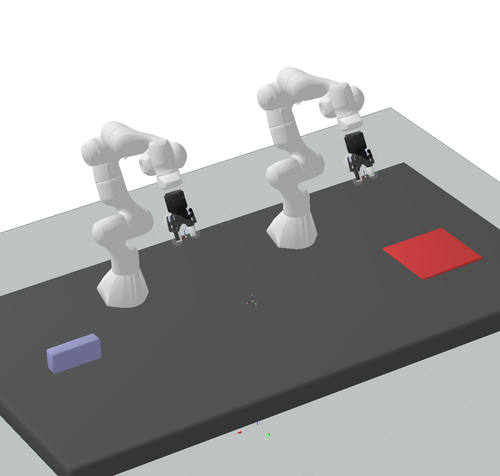}
		\caption{}\label{fig:meta:handover-env}
	\end{subfigure}
	\caption{Benchmark TAMP problems.
	}
	\label{fig:meta:tamp-problems}
\end{figure}

\begin{figure}[!t]
	\centering
	\begin{subfigure}[t]{.8\textwidth}
		\centering
		\includegraphics[width=.18\textwidth]{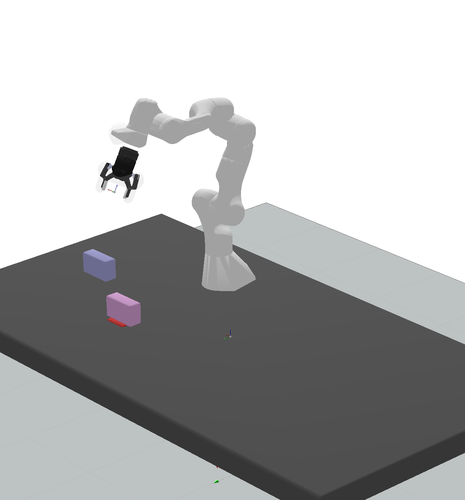}
		\includegraphics[width=.18\textwidth]{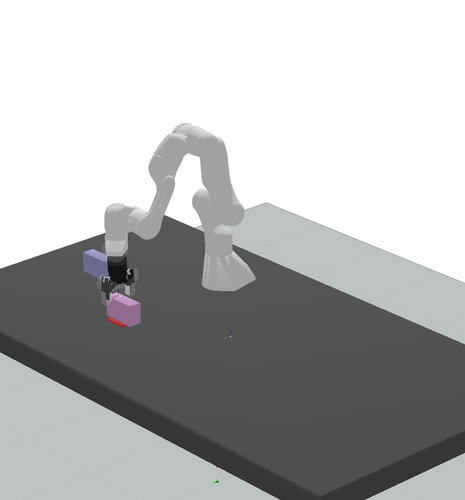}
		\includegraphics[width=.18\textwidth]{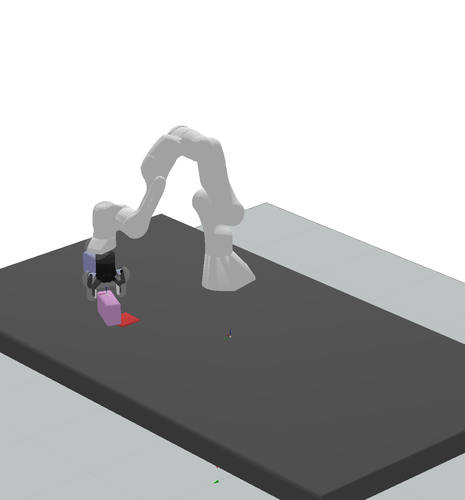}
		\includegraphics[width=.18\textwidth]{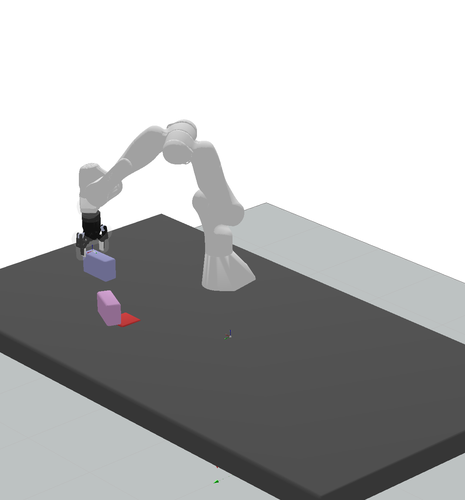}
		\includegraphics[width=.18\textwidth]{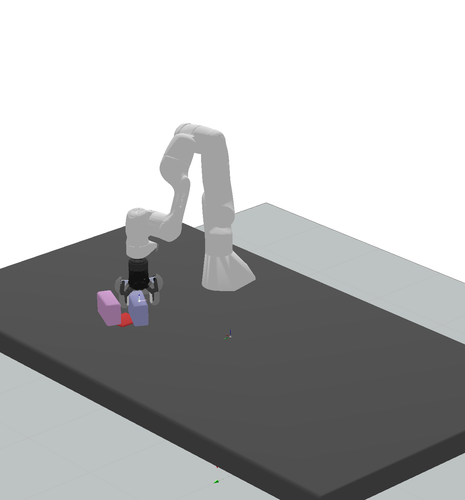}
		\caption{}
		\label{fig:meta:move_first-sol}
	\end{subfigure}

	\begin{subfigure}[t]{.8\textwidth}
		\centering
		\includegraphics[width=.18\textwidth]{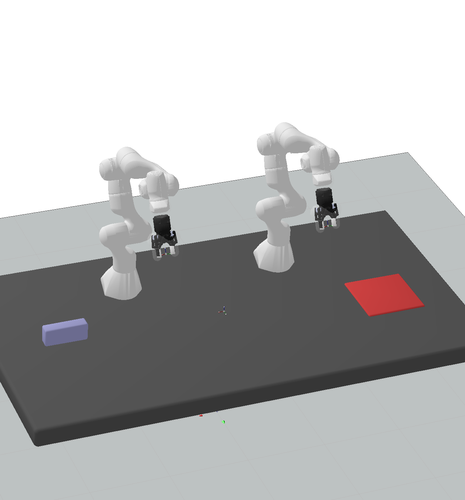}
		\includegraphics[width=.18\textwidth]{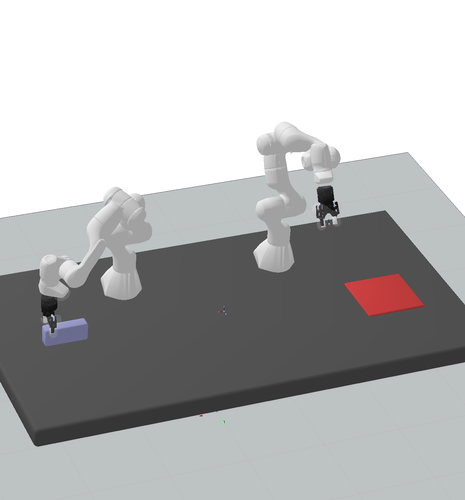}
		\includegraphics[width=.18\textwidth]{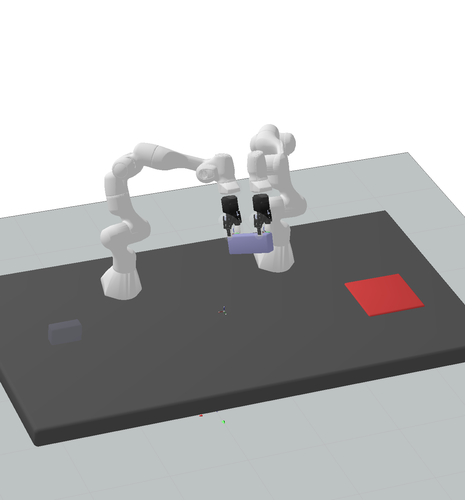}
		\includegraphics[width=.18\textwidth]{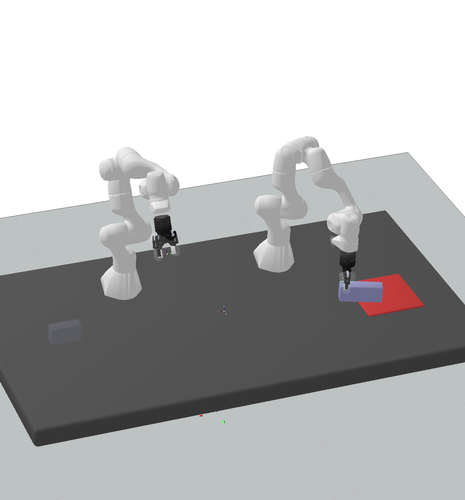}
		\caption{}
		\label{fig:meta:handover-sol}
	\end{subfigure}

	\begin{subfigure}[t]{.8\textwidth}
		\centering
		\includegraphics[width=.18\textwidth]{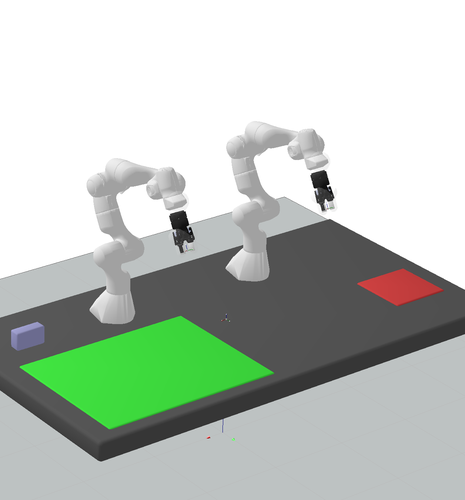}
		\includegraphics[width=.18\textwidth]{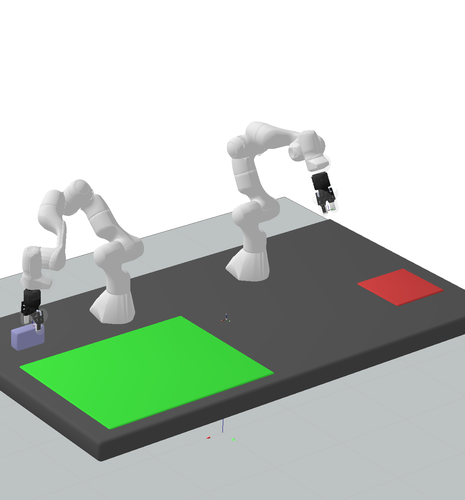}
		\includegraphics[width=.18\textwidth]{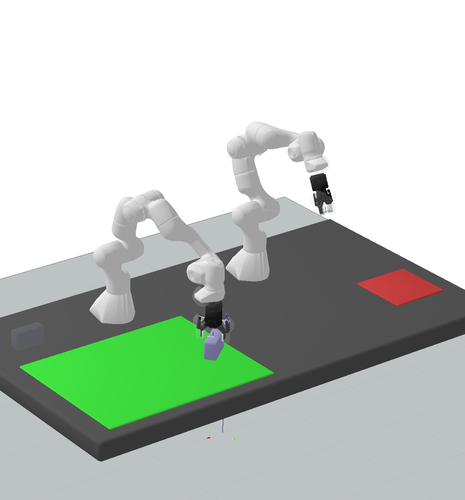}
		\includegraphics[width=.18\textwidth]{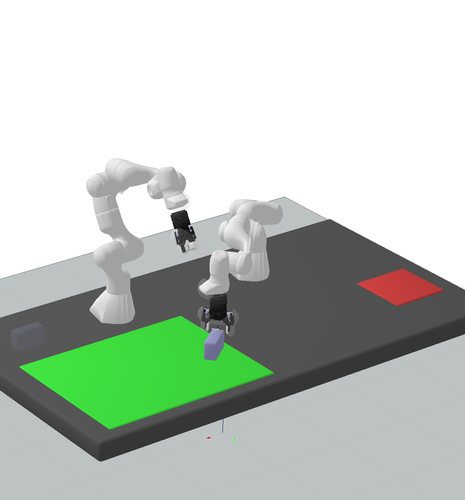}
		\includegraphics[width=.18\textwidth]{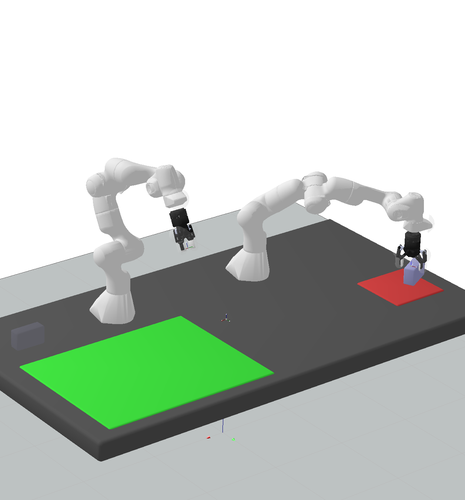}
		\caption{}
		\label{fig:meta:transfer-big-table-sol}
	\end{subfigure}
	\caption{Example solutions to the TAMP problems in \cref{fig:meta:tamp-problems} (Part 1).
	}
	\label{fig:meta:sol-all1}
\end{figure}

\begin{figure}[!t]
  \centering
	\begin{subfigure}[t]{.8\textwidth}
		\centering
		\includegraphics[width=.18\textwidth]{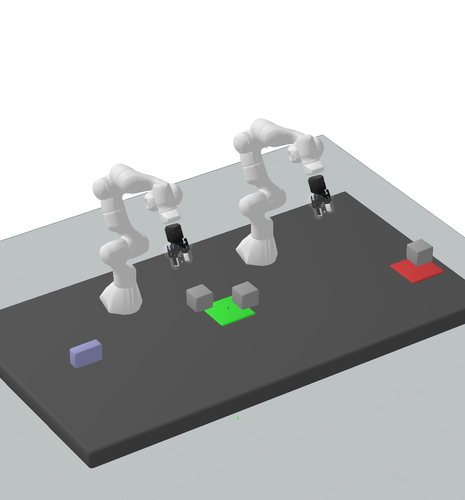}
		\includegraphics[width=.18\textwidth]{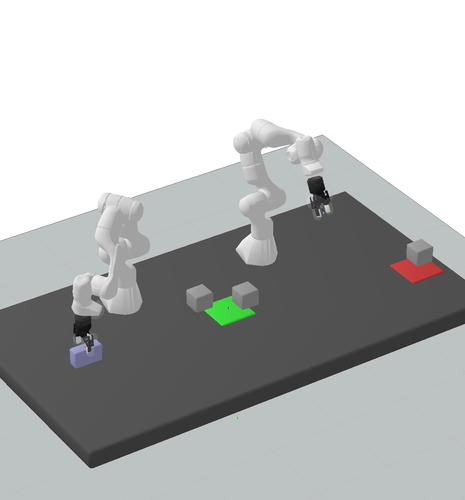}
		\includegraphics[width=.18\textwidth]{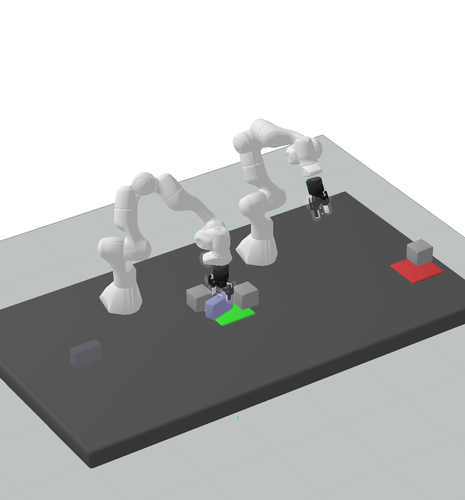}
		\includegraphics[width=.18\textwidth]{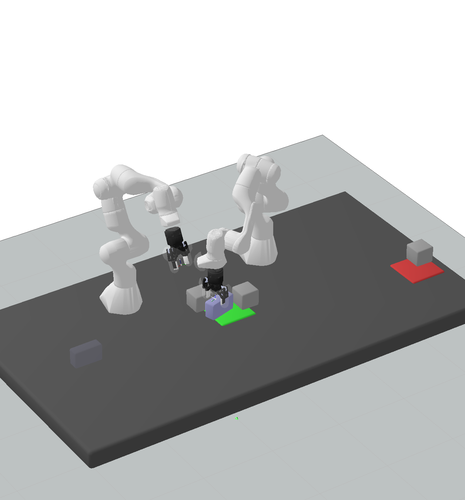}
		\includegraphics[width=.18\textwidth]{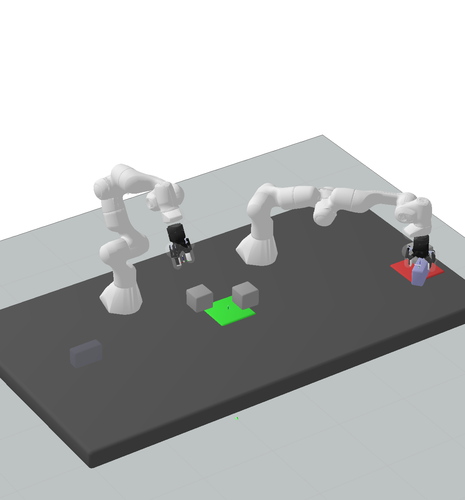}
		\caption{}
		\label{fig:meta:transfer-with-obs-sol}
	\end{subfigure}

	\begin{subfigure}[t]{.8\textwidth}
		\centering
		\includegraphics[width=.18\textwidth]{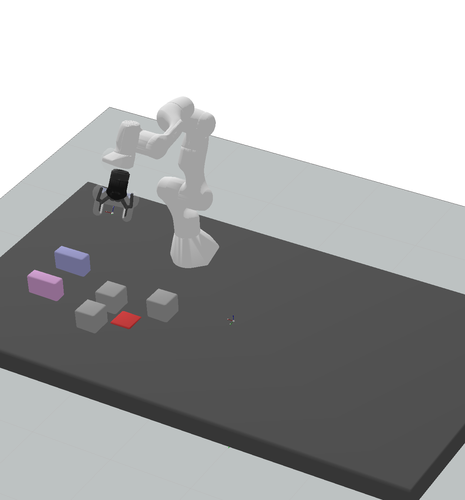}
		\includegraphics[width=.18\textwidth]{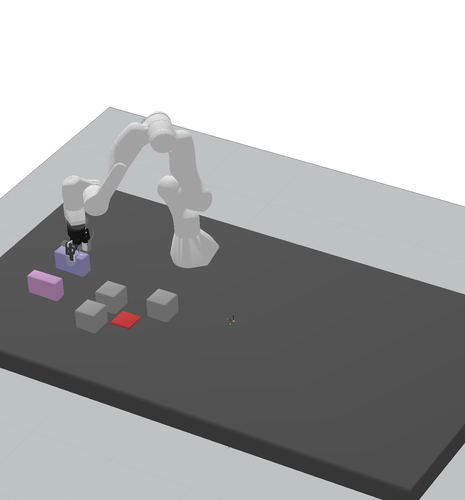}
		\includegraphics[width=.18\textwidth]{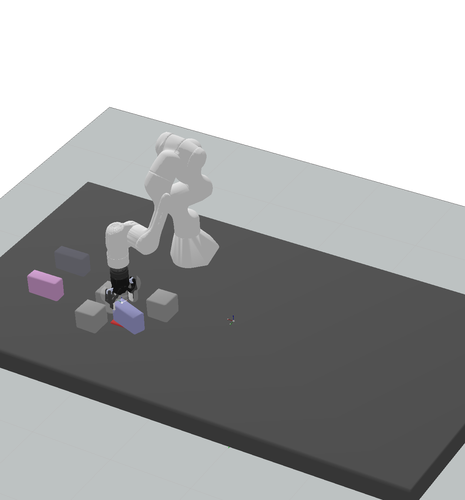}
		\includegraphics[width=.18\textwidth]{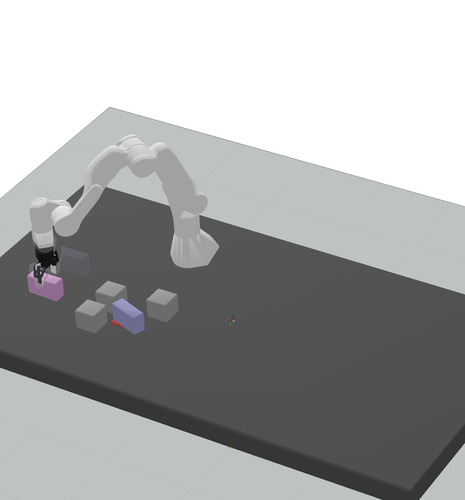}
		\includegraphics[width=.18\textwidth]{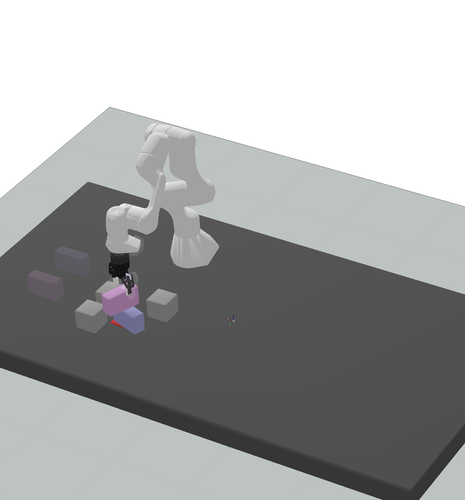}
		\caption{}
		\label{fig:meta:tower-sol}
	\end{subfigure}
	\begin{subfigure}[t]{.8\textwidth}
		\centering
		\includegraphics[width=.18\textwidth]{figs_old_computer/opti_computation/Pictures/r1_place_two_in_cluttered/crop/0000.png}
		\includegraphics[width=.18\textwidth]{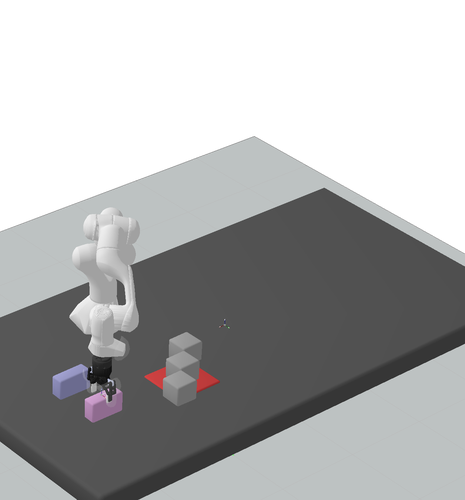}
		\includegraphics[width=.18\textwidth]{figs_old_computer/opti_computation/Pictures/r1_place_two_in_cluttered/crop/0003.png}
		\includegraphics[width=.18\textwidth]{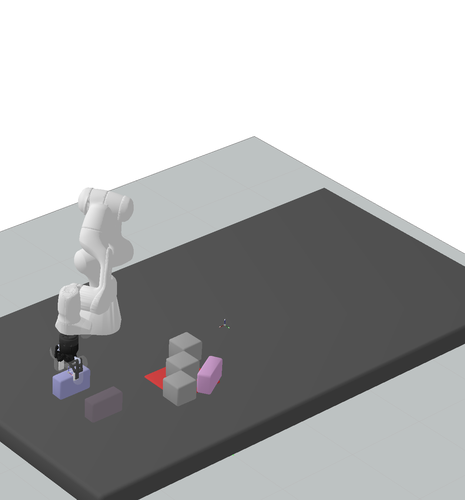}
		\includegraphics[width=.18\textwidth]{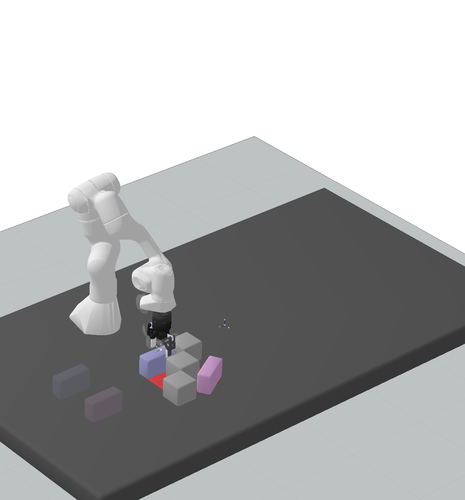}
		\caption{}
		\label{fig:meta:blocks-cluttered-table-sol}
	\end{subfigure}
	\begin{subfigure}[t]{.8\textwidth}
		\centering
		\includegraphics[width=.18\textwidth]{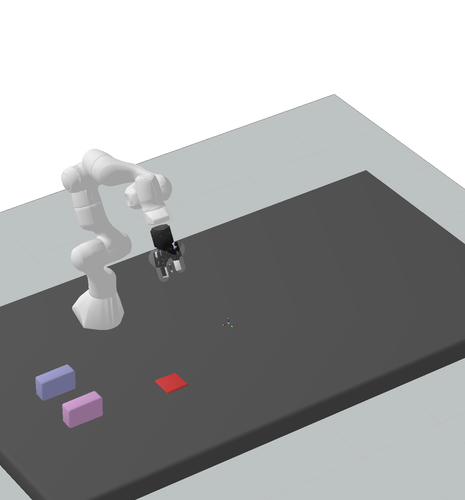}
		\includegraphics[width=.18\textwidth]{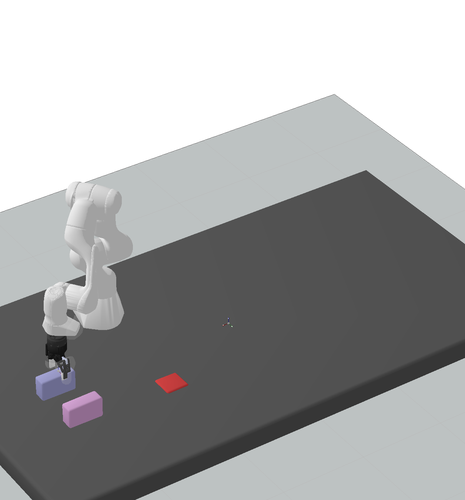}
		\includegraphics[width=.18\textwidth]{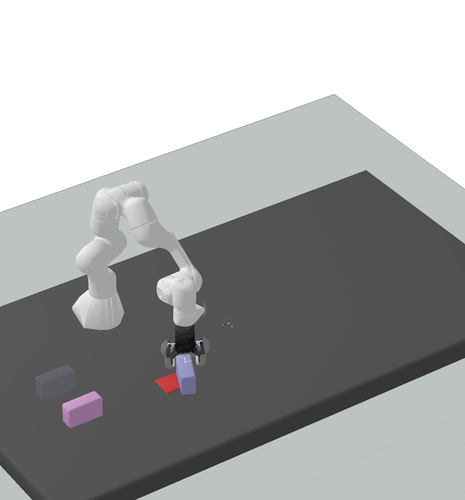}
		\includegraphics[width=.18\textwidth]{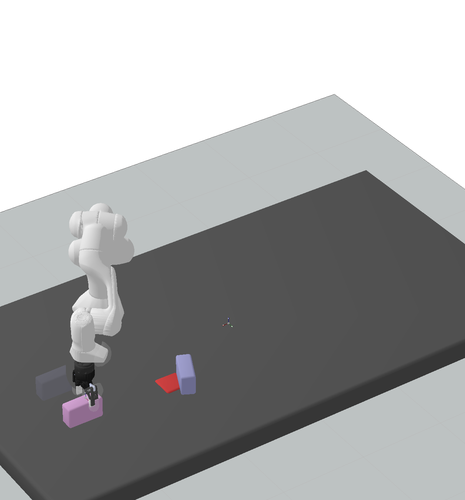}
		\includegraphics[width=.18\textwidth]{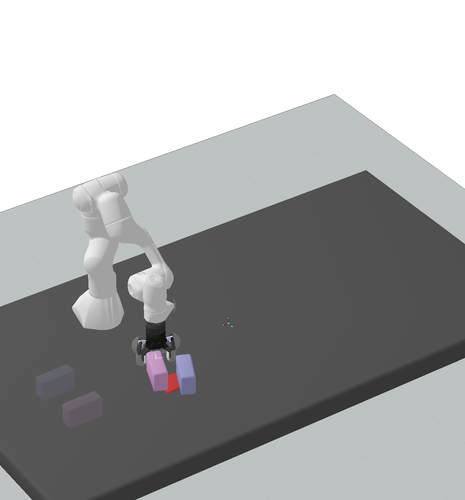}
		\caption{}
		\label{fig:meta:blocks-small-table-sol}
	\end{subfigure}
	\caption{Example solutions to the TAMP problems in \cref{fig:meta:tamp-problems} (Part 2).
	}
	\label{fig:meta:sol-all2}
\end{figure}

The goal of the experimental evaluation is twofold.
First, we aim to present a set of small and illustrative TAMP problems that expose the balance and trade-off between optimization and sampling.
Second, we demonstrate that our TAMP meta-solver bridges the gap between optimization and sample-based solvers for TAMP, outperforming them in terms of average computational time across different problems.
Importantly, we do not expect our TAMP solver to outperform state-of-the-art TAMP solvers in problems that involve interacting with a large number of objects.
Our TAMP meta-solver, implemented as a simple yet effective A*-like search on the tree, cannot yet scale to large settings, which require more effective ways to explore the space of possible computations.

\paragraph{Benchmark}
Our benchmark comprises 7 problems that involve at most two robots and two objects.
The maximum length of the task plan required to solve all problems is four.
The main challenges are the interdependencies between the different steps of the motion and the constraints imposed by the geometry and obstacles of the environment.

\begin{enumerate}
	\item \emph{Blocks on a small table} (in short, small) -- The task is to move the two blocks to the red table, which requires very precise placement to ensure that the two blocks fit on the table.
	      See the environment in \cref{fig:meta:blocks-small-table-env} and
	      the example solution in \cref{fig:meta:blocks-small-table-sol}.
	\item \emph{Blocks on a cluttered table} (cluttered) -- The task is to move the two blocks to the red table.
	      The table is now larger, but there are three obstacles in the middle.
	      See the environment in \cref{fig:meta:blocks-cluttered-table-env} and the example solution in \cref{fig:meta:blocks-cluttered-table-sol}.
	\item \emph{Handover of a small block} (handover) -- The goal is to move the object to the red table.
	      Due to reachability constraints, the optimal solution requires a handover, which demands very precise grasping to avoid collisions between the end-effectors.
	      See the environment in \cref{fig:meta:handover-env} and the example solution in \cref{fig:meta:handover-sol}.
	\item \emph{Transfer of a block} (transfer) -- The task is to move the block to the red table.
	      Now, the block is very small, so the robots cannot perform a handover and should use the large auxiliary table instead.
	      The table is large, and both robots can only reach a small subset of the table.
	      See the environment in \cref{fig:meta:transfer-big-table-env} and the example solution in \cref{fig:meta:transfer-big-table-sol}.
	\item \emph{Transfer of a block with obstacles} (transfer-obs) -- The task is to move the block to the red table.
	      Now, the table is cluttered with obstacles, and the auxiliary table is situated in the middle.
	      See the environment in \cref{fig:meta:transfer-with-obs-env} and the example solution in \cref{fig:meta:transfer-with-obs-sol}.
	\item \emph{Stacking blocks} (stack) -- The task is to build a tower on a table with obstacles.
	      See the environment in \cref{fig:meta:tower-env} and the example solution in \cref{fig:meta:tower-sol}.
	\item \emph{Block on an occupied table} (occupied) -- The task is to put a block on a table that is already filled with one block.
	      Therefore, the optimal solution requires moving the obstructing block first.
	      See the environment in \cref{fig:meta:move_first-env} and the example solution in \cref{fig:meta:move_first-sol}.
\end{enumerate}

Note that in the current implementation, we do not compute the continuous trajectory between keyframes; instead, we consider only the keyframe configurations as the continuous state.
In tabletop environments, once the sequence of keyframes is computed, the continuous trajectory can be efficiently computed using sample-based motion planning (e.g., RRT) and/or trajectory optimization.

\paragraph{Algorithms}
We evaluate the TAMP meta-solver (\cref{alg:code-meta}) against a representative optimization-based TAMP solver (\cref{alg:code-meta-opt}) and a sampling-based TAMP solver (\cref{alg:code-meta-sampling}), implemented as different search strategies in the TAMP Computation Tree.
The solvers will be denoted, respectively, by \emph{Meta}, \emph{Opt (Optimization)}, and \emph{Sampling}.

\paragraph{Evaluation metrics}

As a metric, we use the compute time to find a solution.
In particular, we consider only the compute time spent in the \texttt{Numeric\_Expansion} operations, while neglecting the time spent computing the PDDL heuristic and managing the search queue in the TAMP Computation Tree, which is usually one or two orders of magnitude smaller.

Experiments are run 20 times, and we report the first quartile, median, and third quartile.
We normalize the compute time by the median of the best-performing algorithm in each problem.
In each run, we use a different random seed, which influences the initialization of nonlinear optimization and sampling operations, often resulting in very disparate compute times because numeric expansions fail or compute values that are not valid later to reach the goal.

\subsection{Example Execution of the Three Algorithms}

We first show how \emph{Sampling}, \emph{Opt (Optimization)}, and \emph{Meta} explore the TAMP Computation Tree in different ways when solving different problems.

\cref{fig:meta:tree-small-table} shows the TAMP Computation Tree for the problem \textit{Blocks on a small table}, and
\cref{fig:meta:tree-big-table-obs} shows the TAMP Computation Tree for the problem \textit{Blocks on a cluttered table}, for the three algorithms.
The trees correspond to a reference run in our benchmark; because all algorithms are stochastic, the tree might vary between different runs.

\paragraph{Legend:}
We represent computation trees following the example in \cref{fig:compu_example}.
Gray squares represent compute states without free continuous states.
The green square is a fixed continuous state that reaches the goal, i.e., a solution to the TAMP problem.
A red square indicates that the numeric expansion has failed.
The top gray square is the initial computational state.
Circles are compute states with one or more free continuous states.
Blue circles are compute states that reach the discrete goal but contain free continuous states.
An arrow from a circle to a square indicates a numeric expansion.
The number of continuous states computed in a numeric expansion is equal to the number of circular nodes from the previous last fixed state (previous gray square).
An arrow from a square or circle node to a circle node indicates a discrete expansion.

\paragraph{Observations:}
The \textit{optimization} solver only performs numeric expansion on nodes that reach the discrete goal.
This is reflected in the tree because only the blue circles (goal candidates) undergo numerical expansion (i.e., have children that are squares).
The numeric expansion computes all the free states from the root and the circles jointly.

The \textit{sampling} solver performs numeric expansions on all nodes that have free states and does not allow for consecutive free states.
This is illustrated in the tree because a circle node is never a descendant of another circle node.

The \textit{meta} solver combines sampling and optimization, allowing for multiple free states and numeric expansion of intermediate states, resulting in an adaptive combination of circle and square nodes in the tree.

\paragraph{Performance in these problems:}
The two selected problems are representative of the performance of the three algorithms in the benchmark.
The initial state and the goal are the same, but the size of the table and the presence of three obstacles significantly impact performance.

\textit{Blocks on a small table} (\cref{fig:meta:blocks-small-table-env})
-- requires joint reasoning to place the blocks precisely on the small table without collisions.
The optimization solver only requires two numeric expansions because the placement of the two blocks is optimized jointly (the first expansion failed due to convergence to a poor local optimum).
Conversely, the sampling solver requires multiple sampling attempts because most of the partial solutions for the placement of the first block are unsuitable for the placement of the second block.
This is illustrated by the number of failed numerical expansions in the last expansion.
Eventually, with enough attempts, we sample continuous states that are also compatible with the subsequent steps to reach the goal.
The meta-solver initially attempts to generate a solution using sampling, but after a few failed attempts, the search algorithm automatically switches to optimization.
The solution is generated by computing the last three states jointly, conditioned on the pick configuration of the first block -- which allows for taking into account the tight constraints of the placement of the two blocks.

\textit{Blocks on a cluttered table} (\cref{fig:meta:blocks-cluttered-table-env}) -- does not require joint reasoning, and sampling is more effective because it avoids solving a larger optimization problem with multiple infeasible stationary points due to the three obstacles.
The optimization solver requires multiple attempts to find a joint solution to the candidate plans (several red squares).
Conversely, the sampling solver finds the solution with only two attempts, using less expensive computations because each step is computed individually.
Here, the meta-solver behaves similarly to sampling, as the search algorithm always favors easier sampling operations at the beginning.

\begin{figure}
	\centering
	\begin{subfigure}[b]{.9\textwidth}
		\centering
		\includegraphics[height=.2\textheight]{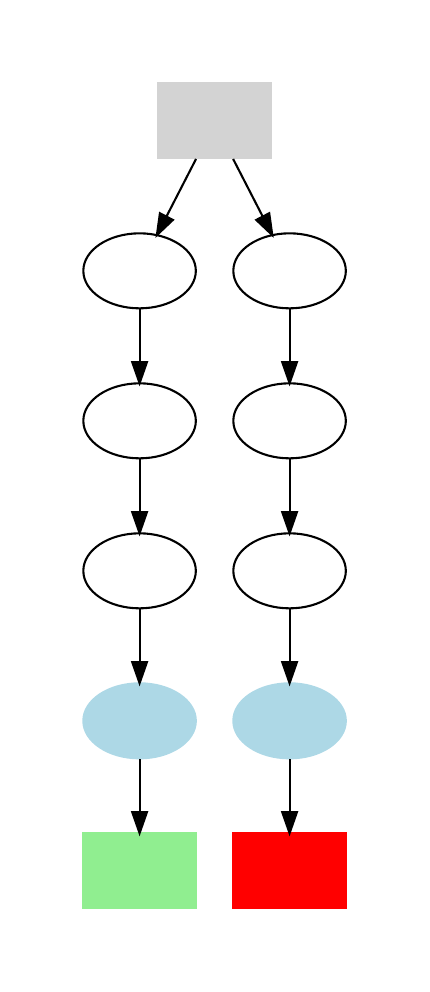}
		\caption{Optimization-based solver (\emph{Opt}).}
	\end{subfigure}\vspace{.5cm}

	\begin{subfigure}[b]{.99\textwidth}
		\centering
		\includegraphics[width=.99\textwidth]{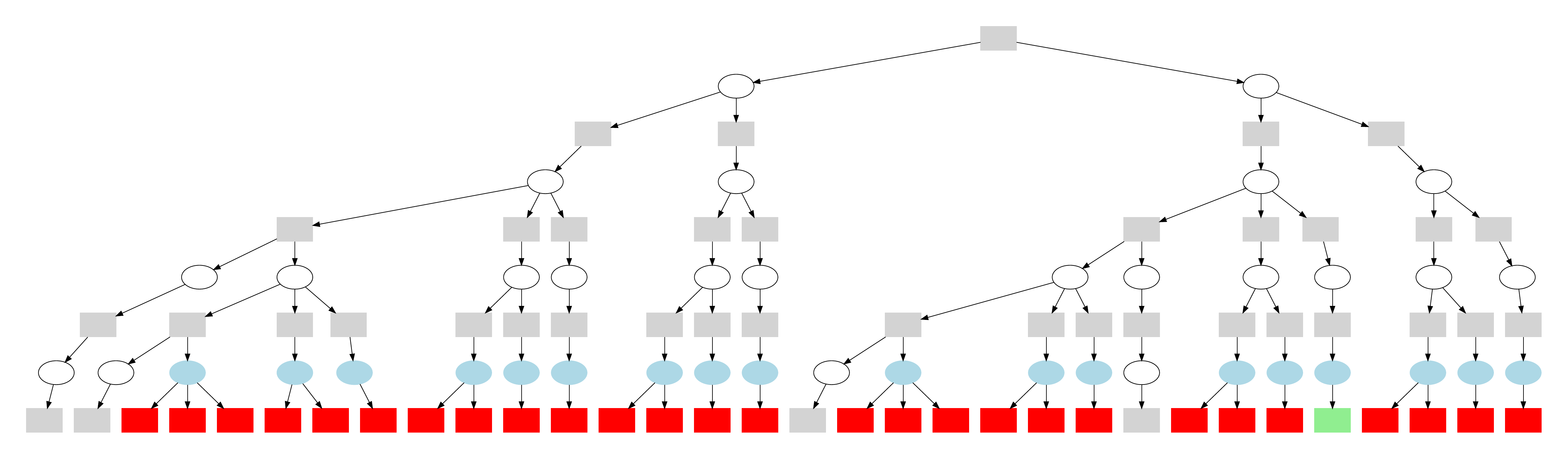}
		\caption{Sampling-based solver (\emph{Sampling}).}
		\label{fig:meta:computetree:small-table:sampling-all}
	\end{subfigure}\vspace{.5cm}

	\begin{subfigure}[b]{.9\textwidth}
		\centering
		\includegraphics[height=.3\textheight]{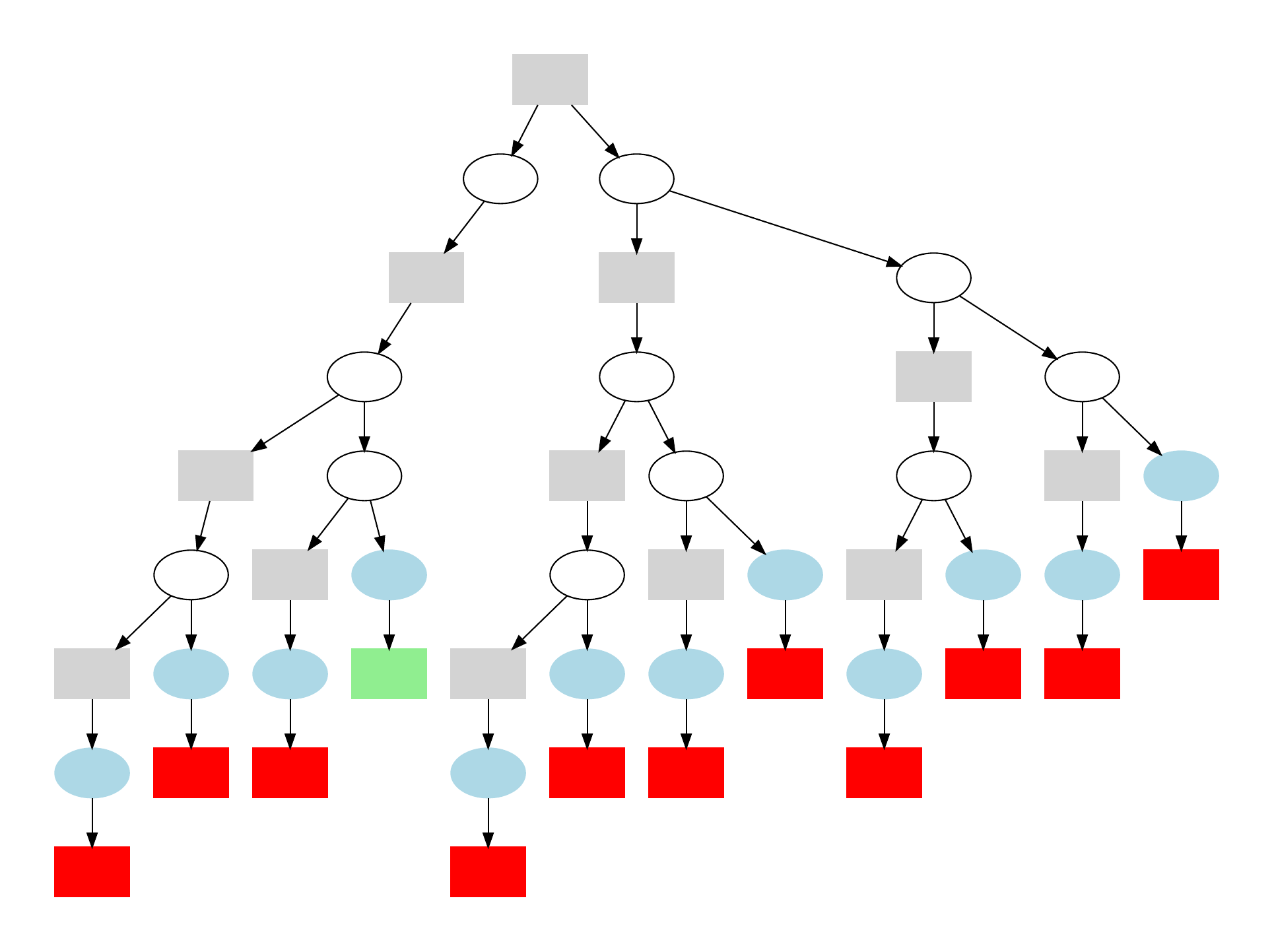}
		\caption{Meta-solver (\emph{Meta}).}
		\label{fig:meta:computetree:small-table:meta-all}
	\end{subfigure}
	\caption{Computation Tree in the problem \textit{Blocks on a small table}.}
	\label{fig:meta:tree-small-table}
\end{figure}

\begin{figure}
	\centering
	\begin{subfigure}[b]{.5\textwidth}
		\centering
		\includegraphics[height=.2\textheight]{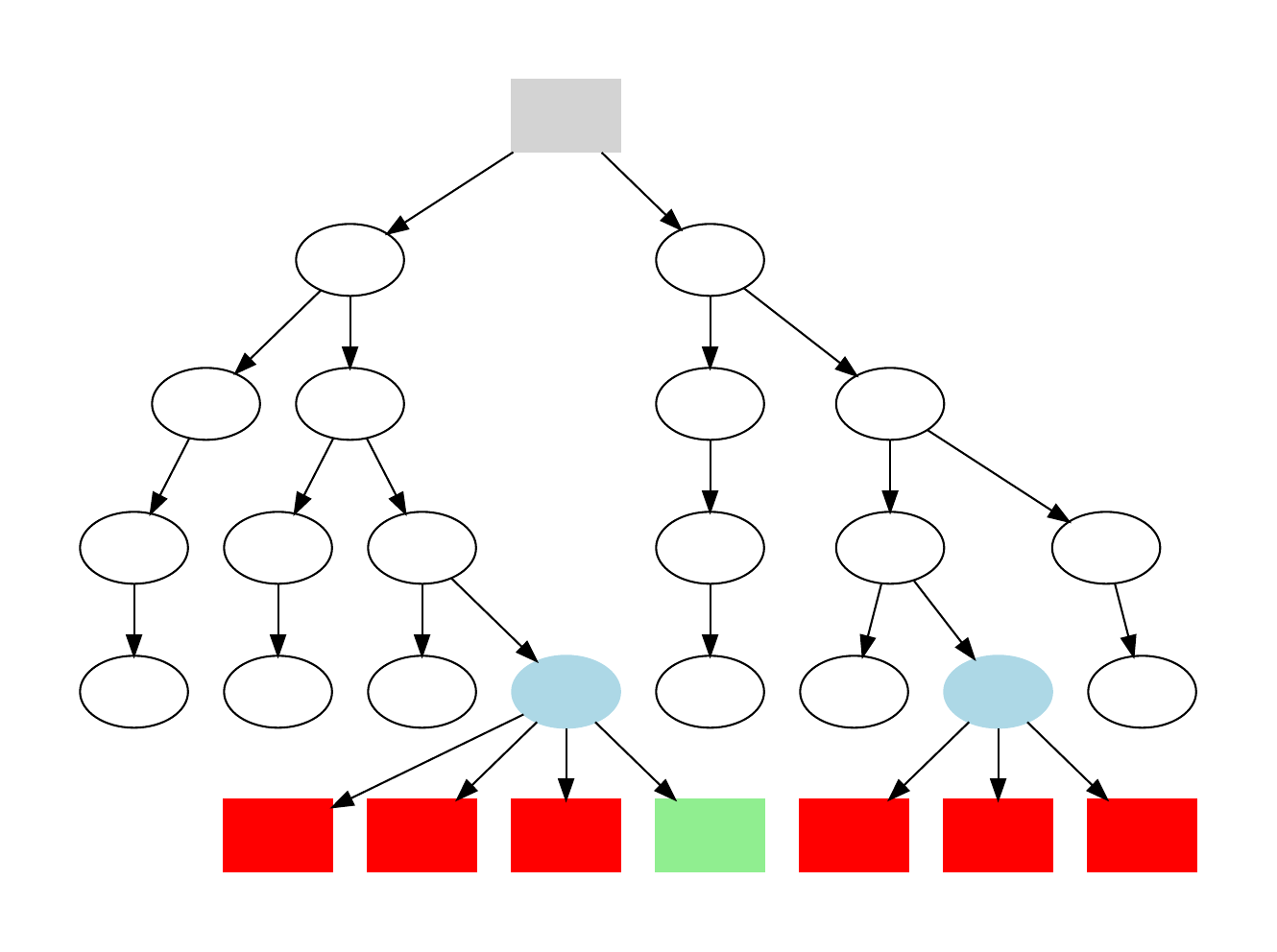}
		\caption{Optimization-based solver (\emph{Opt}).}
		\label{fig:meta:computetree:big-table-obs:opt}
	\end{subfigure}\vspace{.5cm}

	\begin{subfigure}[b]{.55\textwidth}
		\centering
		\includegraphics[height=.3\textheight]{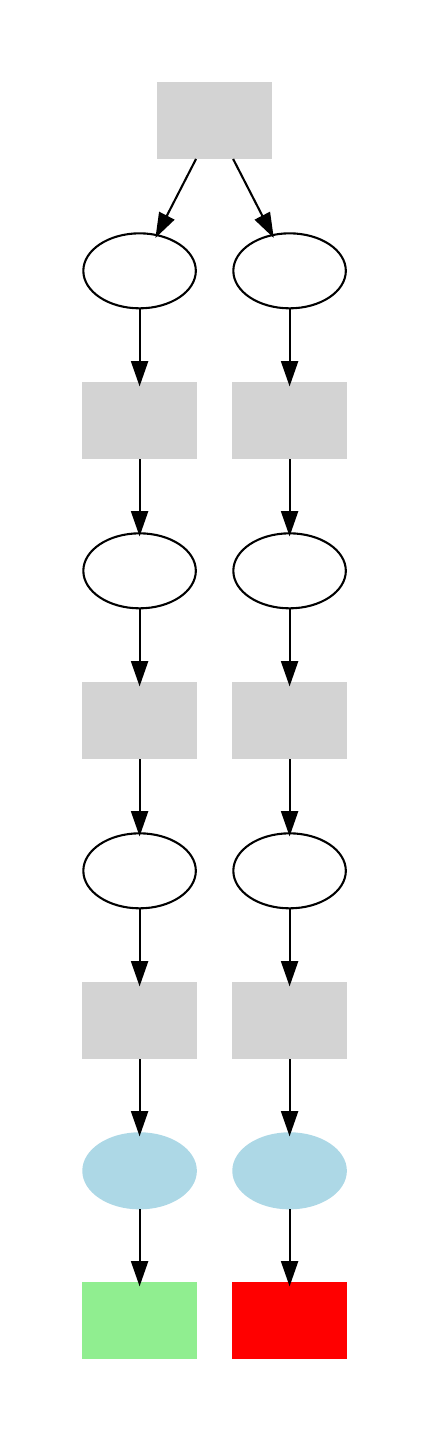}
		\caption{Sampling-based solver (\emph{Sampling}).}
		\label{fig:meta:computetree:big-table-obs:sampling}
	\end{subfigure}\vspace{.5cm}

	\begin{subfigure}[b]{.55\textwidth}
		\centering
		\includegraphics[height=.3\textheight]{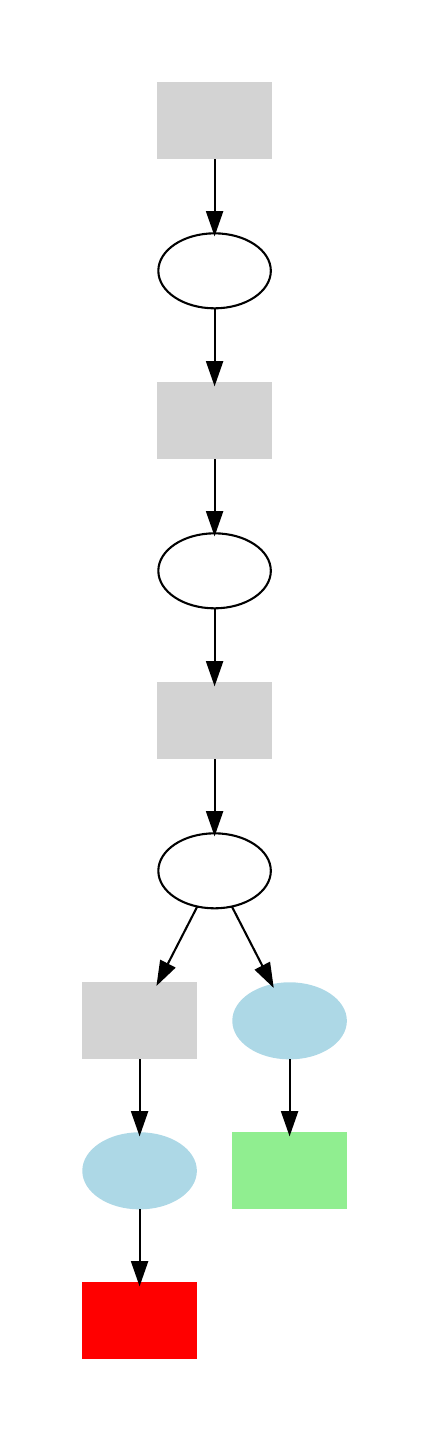}
		\caption{Meta-solver (\emph{Meta}).}
		\label{fig:meta:computetree:big-table-obs:meta}
	\end{subfigure}
	\caption{Computation Tree in the problem \textit{Blocks on a cluttered table}.}
	\label{fig:meta:tree-big-table-obs}
\end{figure}

\subsection{Comparison}

\begin{figure}[t]
	\centering
	\includegraphics[width=.8\textwidth]{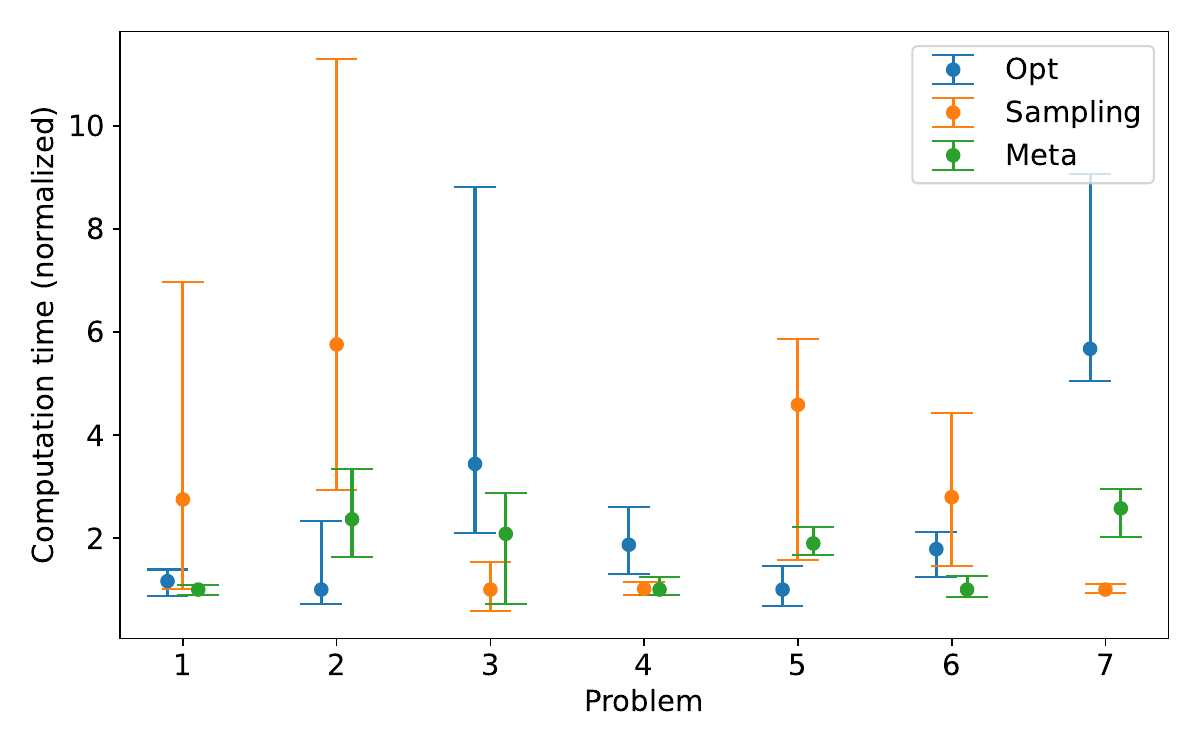}
  \caption{Computation time to find a solution in the \num{7} problems of the benchmark, normalized by the median of the best-performing algorithm in each problem.
    The dot is the median, and the brackets show the interval between the first and third quartile (i.e., the \num{25}th and \num{75}th percentiles).
		The numeric results and the problem labels are shown in \cref{tab:meta:results}.
	}
	\label{fig:meta:results}
\end{figure}

\begin{table}[t]
	\centering
	{
		\small
		\begin{tabular}{||l|l||c|c|c||c|c|c||c|c|c||c|c|c||c|c|c||c|c|c||c|c|c||c|c|c||c|c|c||}
\hline
\# & Problem & \multicolumn{3}{c||}{Opt} & \multicolumn{3}{c||}{Sampling} & \multicolumn{3}{c||}{Meta}\\
\hline
 &  & $Q_1$ & $Q_2$ & $Q_3$ & $Q_1$ & $Q_2$ & $Q_3$ & $Q_1$ & $Q_2$ & $Q_3$\\
\hline
1 & handover & \textcolor{ForestGreen}{0.9} & 1.2 & 1.4 & \textcolor{Red}{1.0} & \textcolor{Red}{2.7} & \textcolor{Red}{7.0} & 0.9 & \textcolor{ForestGreen}{1.0} & \textcolor{ForestGreen}{1.1}\\
\hline
2 & small & \textcolor{ForestGreen}{0.7} & \textcolor{ForestGreen}{1.0} & \textcolor{ForestGreen}{2.3} & \textcolor{Red}{2.9} & \textcolor{Red}{5.8} & \textcolor{Red}{11.3} & 1.6 & 2.4 & 3.3\\
\hline
3 & cluttered & \textcolor{Red}{2.1} & \textcolor{Red}{3.4} & \textcolor{Red}{8.8} & \textcolor{ForestGreen}{0.6} & \textcolor{ForestGreen}{1.0} & \textcolor{ForestGreen}{1.5} & 0.7 & 2.1 & 2.9\\
\hline
4 & stack & \textcolor{Red}{1.3} & \textcolor{Red}{1.9} & \textcolor{Red}{2.6} & 0.9 & 1.0 & \textcolor{ForestGreen}{1.1} & \textcolor{ForestGreen}{0.9} & \textcolor{ForestGreen}{1.0} & 1.2\\
\hline
5 & occupied & \textcolor{ForestGreen}{0.7} & \textcolor{ForestGreen}{1.0} & \textcolor{ForestGreen}{1.5} & 1.6 & \textcolor{Red}{4.6} & \textcolor{Red}{5.9} & \textcolor{Red}{1.7} & 1.9 & 2.2\\
\hline
6 & transfer & 1.2 & 1.8 & 2.1 & \textcolor{Red}{1.5} & \textcolor{Red}{2.8} & \textcolor{Red}{4.4} & \textcolor{ForestGreen}{0.8} & \textcolor{ForestGreen}{1.0} & \textcolor{ForestGreen}{1.3}\\
\hline
7 & transfer-obs & \textcolor{Red}{5.1} & \textcolor{Red}{5.7} & \textcolor{Red}{9.1} & \textcolor{ForestGreen}{0.9} & \textcolor{ForestGreen}{1.0} & \textcolor{ForestGreen}{1.1} & 2.0 & 2.6 & 2.9\\
\hline
\hline
 & Summary & 1.7 & 2.3 & 4.0 & 1.3 & 2.7 & 4.6 & 1.2 & 1.7 & 2.1\\
\hline
\end{tabular}

	}
	\caption{
    Computation time to find a solution in the \num{7} problems of the benchmark, normalized by the median of the best-performing algorithm in each problem. $Q_1, Q_2$, and $Q_3$ are the first quartile, the median, and the third quartile, respectively.
		Colors red and green highlight the worst and best algorithms for each statistic ($Q_1, Q_2$, and $Q_3$) in each problem.
		The same results are shown graphically in \cref{fig:meta:results}.
	}
	\label{tab:meta:results}
\end{table}

Results are shown in \cref{fig:meta:results} and \cref{tab:meta:results}.
From the reported statistics, we consider \( Q_3 \) as the most important one, as it models the worst-case performance.
Minimizing the worst-case performance is critical for real applications, where the goal is to guarantee that an algorithm solves the problem quickly with a very high probability.

\textit{Is sampling or optimization better?}
When comparing only the sampling and optimization-based solvers, optimization performs better in 4 out of 7 problems.
\textit{Sampling} is better in problems that do not require reasoning about joint constraints and have obstacles (which have a more negative impact on the optimization-based solver): \textit{Blocks on a cluttered table}, \textit{Stacking blocks}, and \textit{Transfer of a block with obstacles}.
Optimization is better in problems that require reasoning about joint constraints but do not contain obstacles: \textit{Blocks on a small table}, \textit{Handover of a small block}, \textit{Transfer of a block}, and \textit{Block on an occupied table}.

\textit{Analysis of the performance of the meta-solver:}
If we analyze the results problem by problem, the meta-solver is never the worst in any problem, confirming that providing the flexibility to choose computational operations is required if we want to solve each problem as efficiently as possible.
This is indeed a very strong result, as it shows that our meta-solver can automatically balance between optimization and sampling and can choose the best strategy for each problem online, simply by trying different operations in a clever manner.
Furthermore, the meta-solver outperforms both alternatives in 3 out of 7 problems.

When we summarize the results across problems using the arithmetic mean of \( Q_1 \), \( Q_2 \), and \( Q_3 \), \textit{Meta} outperforms both \textit{Sampling} and \textit{Optimization}.
The most significant difference is in the third quartile, i.e., the worst-case performance, where \textit{Meta} is, on average, twice as fast as the baselines.

\subsection{Discussion of Scalability}

Preliminary tests on larger problems indicate that the \textit{Meta} algorithm scales less effectively than \textit{Opt} and \textit{Sampling} in TAMP problems with more objects.
The branching factor in \textit{Meta} is higher than in \textit{Opt} and \textit{Sampling} because it allows for a more flexible set of operations, and the number of compute states to expand grows more rapidly.
In contrast, \textit{Sampling} and \textit{Opt}, by design, restrict the space of possible computations, enabling them to evaluate more candidate task plans and resulting in shorter times to generate solutions in larger problems.

In our meta-solver, the current A*-like search neither shares information between nodes of different subbranches nor similar plans, nor does it utilize the history of computations, resulting in a suboptimal search strategy when the number of objects increases.

Overall, we believe that the TAMP Computation Tree is a promising framework for analyzing and designing TAMP solvers, but it requires a better scoring function for scalability.
It remains unclear whether achieving scalability will necessitate additional complex reasoning to select the next node for expansion, or if a simple scoring function will suffice.

\section{Limitations}

The framework presented here represents more a foundational study than a practical algorithm superior to existing TAMP solvers.
Indeed, our experimental study focuses primarily on TAMP problems requiring short-horizon planning.

A potential extension for scaling to larger problems involves incorporating the conflict extraction algorithm presented in \cref{ch:bid}.
Upon detecting a conflict, the PDDL domain used to estimate the distance to the goal can be updated using the same reformulation as in \cref{ch:bid}.
Ensuring that the detected conflict is genuine and not the result of an unlucky solver initialization, as previously discussed in the Limitations section of \cref{ch:bid,ch:diverse_planning}, is a significant challenge.
Beyond conflicts, sharing more information between nodes belonging to the same high-level task plan, even if they are not in the same branch, is a promising idea.
Also, sharing positive information between nodes, such as reusing the results of a numeric expansion in a different node, could be a potential direction for scaling the algorithm.

Moreover, we have limited the space of sampling operations to the level of a continuous state, instead of using a finer factorization in terms of single variables or backward-in-time conditional sampling, as in \cref{ch:mcts}.
Thus, our meta-solver is not yet exploiting the factored structure we have explored throughout this thesis.
As future work, we plan to investigate the benefits of allowing this fine discretization in the numeric expansions while balancing the inherent trade-off between implementation complexity and algorithmic performance in our TAMP meta-solver.

\section{Conclusion}

In this chapter, we have introduced the TAMP Computation Tree, an abstract TAMP framework that reasons at the level of computational states, instead of adhering to the traditional notion of discrete-continuous states commonly used in prior TAMP solvers.
A computational state encompasses both fixed and free variable states subject to nonlinear constraints, enabling the modeling of the behavior of both optimization-based and sample-based TAMP solvers.

Leveraging this innovative framework, we have proposed a TAMP meta-solver, where the term \textit{meta} implies that this solver can capitalize on both sampling and optimization techniques to tackle TAMP challenges, acting as a \textit{solver of solvers}.
The meta-solver is realized as a simple yet efficient heuristic search algorithm in the space of computational states, striking a natural balance between optimization and sampling-based computations.

Through a series of illustrative problems, we have demonstrated that the ideal TAMP solver varies with the specific problem, as no single numerical computation method excels across all scenarios.
On a benchmark limited to small problems,
our meta-solver adaptively selects the most suitable strategy for each problem, intelligently experimenting with various operations, and, on average, outperforms both optimization and sampling-based TAMP solvers across a broad spectrum of diverse problems.

\part{\namePartLearning}

\chapter{\nameChapterFive}
\label{ch:gans}

\section{Introduction}

Computing keyframe configurations of a manipulation sequence is a core problem in robotics and represents one of the fundamental steps in optimization-based methods for Task and Motion Planning (TAMP).
It involves sampling from a constraint manifold, which can be formulated as a nonlinear optimization problem without a cost term.
However, in cluttered environments with complex grasp models, these constraints become highly nonlinear, and local optimizers often get trapped in poor local optima, failing to find a feasible solution.

In \cref{ch:mcts}, we addressed this challenge with a meta-solver that automatically combines joint optimization and constrained sampling to find the best sequence of computations for generating solutions.
In a complementary line of research, this chapter\footnote{This chapter is based on the publication: Ortiz-Haro, J., Ha, J.
	S., Driess, D., \& Toussaint, M.
	(2022).
	Structured deep generative models for sampling on constraint manifolds in sequential manipulation.
	In the Conference on Robot Learning (pp.
	213-223).
	PMLR.
} presents a method to accelerate joint nonlinear optimization using a dataset of solutions from similar problems.

Our method, Deep Generative Constraint Sampling (DGCS), combines a deep generative model for sampling close to a constraint manifold with nonlinear constrained optimization to project onto the manifold.
The generative model, conditioned on the problem instance and taking a scene image as input, is trained with a dataset of solutions and a novel analytic constraint term.

An image-based representation of the problem instance (e.g., \cite{driess2020deep, xieimprovisation, ebert17self, paxton19visual}) provides a fixed-size parametrization that encodes obstacles and objects for interaction, eliminating the need to engineer features for the problem instance, and can accommodate a varying number of obstacle objects.

Generative Adversarial Networks (GANs) \cite{goodfellow2014generative,arjovsky2017wasserstein} and Variational Autoencoders (VAEs) \cite{kingma2013auto} have introduced a powerful methodology for training such generative models and have shown potential to represent complex distributions in high-dimensional spaces.
This work adopts the training objectives and methods of GANs, aiming to minimize the divergence between the generative and target distributions.
With a set of diverse solutions from similar problems as training data, the deep generative model is trained to produce samples close to the current problem's solution manifold.
These samples are then used as a warm start for the nonlinear optimizer, resulting in a combined data and model-based approach.

Despite the expressive power of function approximators and recent advancements in deep generative models, they still face limitations in generating samples from highly nonlinear and multimodal distributions.
This is critical in our application, as the solution manifolds of manipulation sequences are highly nonlinear and often disconnected.

To address multimodality, accuracy, and sample complexity, we propose an extension of our generative framework to leverage the structure of factored nonlinear programs in robotic sequential manipulation, as detailed in \cref{sec:bg:structure}.
Specifically, we decompose the sampling of a full solution into a sequence of smaller sampling operations and train a separate conditional generative model for each step.

We evaluate our approach on two problems of robotic sequential manipulation in cluttered environments.
Experimental results show that our deep generative model produces diverse and precise samples and outperforms heuristic warm start initialization.

\section{Related Work}

\paragraph{Generative models in robotics}
Recently, deep generative models have successfully been applied in robotics, especially in settings where problems are represented directly with images or point clouds.
For instance, 6DOF grasps of complex objects can be generated using a VAE \cite{mousavian2019dof,murali2020} conditioned on the object's point cloud.

In the context of motion and manipulation planning, generative models have been used to sample informative and collision-free configurations 
\cite{ichter2018learning,ha2018adaptive, 
sutanto2020learning, 
kurutach2018learning, 
kim2017guiding, 
simeonov2020long}.
These informed samples, as opposed to traditional uniform sampling, significantly improve the running times of sampling-based algorithms.
In this line of research, the general goal is to train a network to directly predict partial or full solutions to a problem.
In contrast, we combine learned generative models with model-based local optimization for constraint projection to achieve accurate sampling on high-dimensional nonlinear constraint manifolds.

\paragraph{Warm start in nonlinear optimization}
In robotics, nonlinear optimization is used to sample on constraint manifolds and to optimize trajectories, e.g., \cite{toussaint2018differentiable, mordatch2012discovery, winkler2018gait}.
Recent data-based approaches aim to learn a warm start to reduce online computation time \cite{mansard2018using,merkt2018leveraging}.
However, the mapping between problem instances and feasible manifolds is extremely nonlinear and discontinuous, presenting a fundamental challenge \cite{hauser2016learning,discontinuity2018tang}.

In settings where nonlinear programs can be represented with mixed-integer constraints, a successful approach is to learn an assignment for the integer constraints and conduct subsequent optimization
\cite{deits2019lvis,bertsimas2020online,cauligi2020learning,wells2019learning,driess2020deep}.
Compared to our method, integer formulations use a discriminative model that is easier to train, but their generalization to problems without a clear integer structure is challenging.
Recently, \cite{santoso2020Generative} applied GANs to inverse kinematics, incorporating forward kinematics into the network training.
In contrast, we use general analytical features in the cost term, and our framework can scale to longer sequential manipulation tasks by exploiting factorization.

\begin{figure}[t!]
	\centering
\setlength{\tabcolsep}{0.1cm} 
	\begin{tabular}{cccc}
    \includegraphics[trim={0 9cm 9cm 7cm},clip,width=0.23\textwidth]{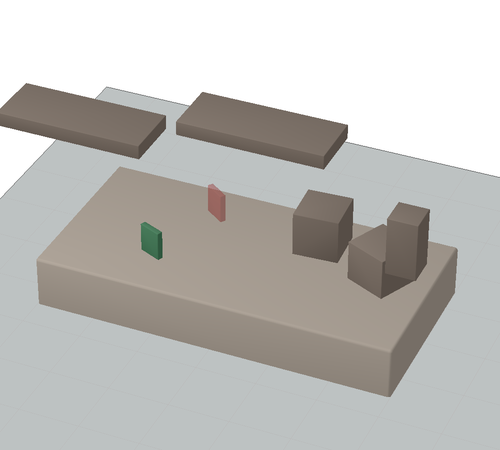}&
  \includegraphics[trim={0 9cm 9cm 7cm},clip,width=0.23\textwidth]{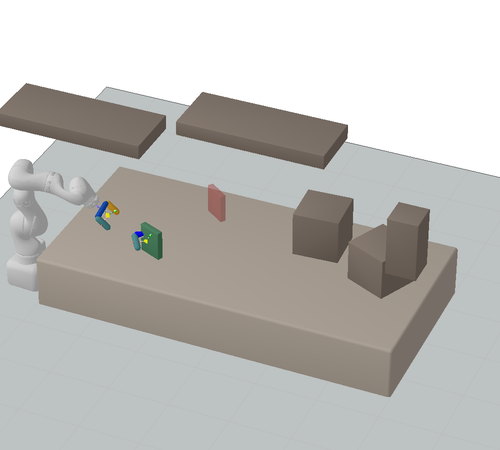}&
  \includegraphics[trim={0 9cm 9cm 7cm},clip,width=0.23\textwidth]{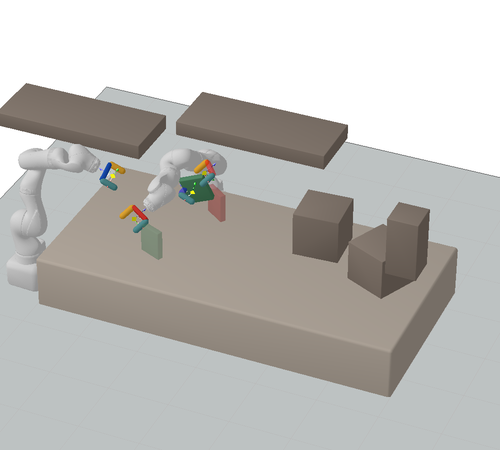}&
\includegraphics[trim={0 9cm 9cm 7cm},clip,width=0.23\textwidth]{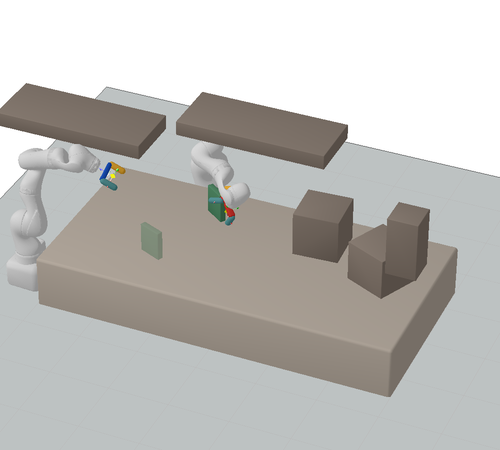} \\[.1cm]
		\hline            \\[.1cm]
	    \includegraphics[trim={0 7cm 9cm 7cm},clip,width=0.22\textwidth]{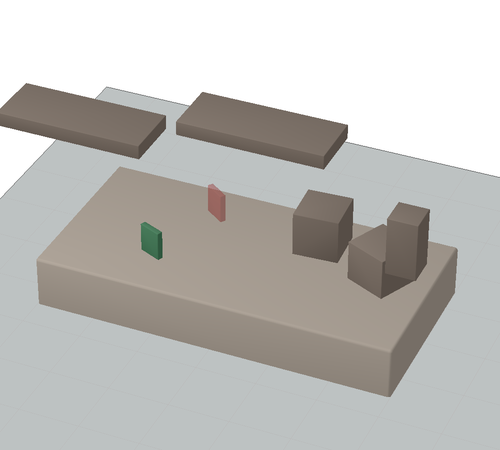}&
	  \includegraphics[trim={0 7cm 9cm 7cm},clip,width=0.22\textwidth]{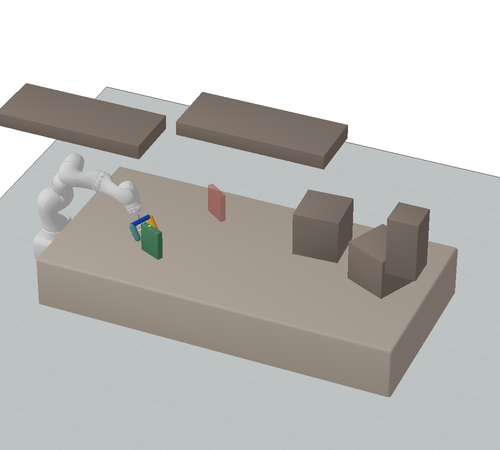}&
	  \includegraphics[trim={0 7cm 9cm 7cm},clip,width=0.22\textwidth]{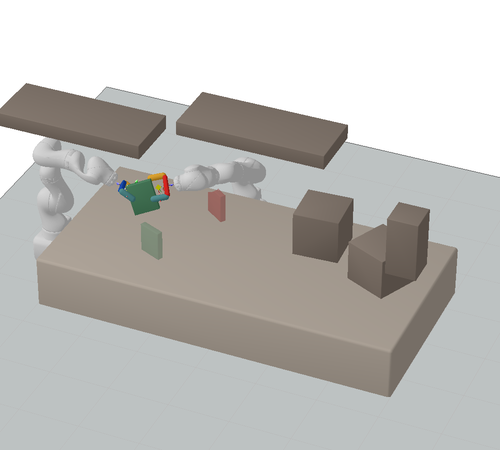}&
	\includegraphics[trim={0 7cm 9cm 7cm},clip,width=0.22\textwidth]{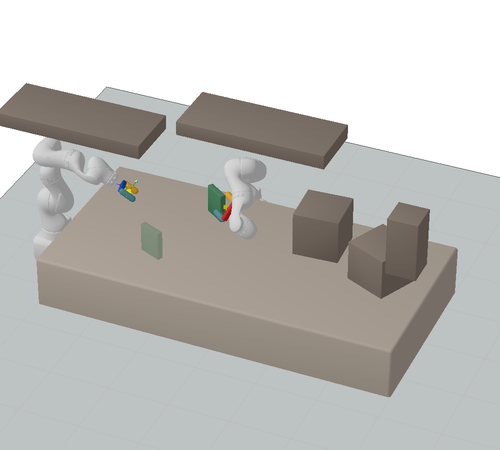} \\[.1cm]
	\end{tabular}
	\caption{
		Sequence of keyframes (pick - handover - place) for the \textit{Handover} problem.
		Our sampling framework (DGCS) combines a deep generative model that produces approximate samples (top row) conditioned on the scene (first column), with a nonlinear optimizer that projects them onto the constraint manifold (bottom row).
	}
	\label{fig:handover}
\end{figure}

\newpage

\section{Sampling on a Constraint Manifold}

The problem we address is to generate samples from a manifold \(\M_\tau\) parametrized by a fixed-dimensional (though potentially large, e.g., images) problem parameter \(\tau\in\mathbb{R}^m\),
\begin{equation}
	\M_{\tau} = \{ x \in \R^n \mid \feq(x; \tau)=0,~ \fineq(x;\tau) \le 0 \} ~,
\end{equation}
where \(\feq(x;\tau) : \R^n \times \R^m \to \R^{m_{\text{eq}}}\) and \(\fineq(x; \tau) : \R^n \times \R^m \to \R^{m_{\text{ineq}}}\) are the nonlinear equality and inequality constraints, respectively, and the parameter \(\tau \in \R^m\) represents the current problem instance and parametrizes all constraints.

In the context of TAMP, such manifolds arise once we have fixed the task plan (e.g., \cref{eq:lgp-nlp} in the LGP Formulation and \cref{eq:nlp} in the PNTC formulation).
The parameter \(\tau\) represents the properties of all objects in the scene, such as initial position, size, shape, and goal position.
The variable \(x\in \mathbb{R}^n\) concatenates all the degrees of freedom of the configurations of robots and objects in the entire manipulation sequence.
The (in)equality constraints \(\feq, \fineq\) describe the objectives of the problem, such as collision avoidance, grasping, kinematic and pose constraints, and are parametrized by the problem instance (e.g., the position of obstacles).

A generative model \(x \sim \PP(\tau)\) that produces feasible solutions, i.e., samples \(x^{(i)} \in \M_{\tau}\), is built from two components (see \cref{alg:gans:generate-solutions}): a randomized seed \(x_0\in \mathbb{R}^n\) generation and a constrained optimization algorithm that projects the seed \(x_0\) onto the solution manifold via the optimization problem \eqref{eq:projection},
\begin{equation}
	\label{eq:projection}
	\min_{x \in \mathbb{R}^n} \| x - x_0 \|^2 \quad \text{s.t.} \quad \feq(x;\tau) = 0 ,~ \fineq(x;\tau) \le 0.
\end{equation}
In our current implementation, we solve \eqref{eq:projection}
approximately by running a nonlinear optimizer from the starting point \(x_0\) on the feasibility problem:
\begin{equation}
	\text{find} \ x \in \mathbb{R}^n \quad \text{s.t.
	} \quad \feq(x;\tau) = 0 ,~ \fineq(x;\tau) \le 0,
\end{equation}
where the initialization and internal regularization of the optimizer (Augmented Lagrangian with Gauss-Newton) provide an implicit regularization with respect to \(x_0\).

Without using learning methods, the seed \(x_0\) is typically sampled from a uniform or Gaussian distribution over the ambient space \(\mathbb{R}^n\), resulting in an uninformed guess usually far from the solution manifold.

The projection step is a non-convex optimization problem with no guarantees of producing a feasible sample; especially for complex sequential manipulation problems, the optimization landscape often contains substantial infeasible local optima induced by the nonlinear constraints, making nonlinear projection prone to failure unless the seeds are close to the solution manifold.

\begin{center}
	\begin{minipage}{.8\linewidth}
		\begin{algorithm}[H]
			\caption{Sampling on a constraint manifold with deep generative models.}
			\label{alg:gans:generate-solutions}
			\begin{algorithmic}[1]
				\State \textbf{Input:}
				Problem parametrization \(\tau\), number of trials \(N\)
				\State \(L = \{\}\) \Comment{{\color{gray}{\small Empty list of samples}}}
				\For{\(i=1,2,\ldots,N\)}

        \State Sample \(x_0^{(i)} \sim \PP_\theta(\tau)\)
        \Comment{{\color{gray} \small Generate an approximate solution with deep generative models}}

				\State \(x^{(i)} \leftarrow \Pi(x_0^{(i)}) \quad\) 
        \Comment{{\color{gray} \small  Project \(x_0^{(i)}\) onto \(\M_{\tau}\) with nonlinear optimization, warm-started with \(x_0^{(i)}\) (\cref{eq:projection})}}
				\If {\(x^{(i)}\) is feasible }
				\State Append \(x^{(i)}\) to \(L\)
				\EndIf
				\EndFor
        \State \textbf{Output:} \(L\) \Comment{{\color{gray}  \small  {List of valid samples}}}
			\end{algorithmic}
		\end{algorithm}
		\vspace{.5cm}
	\end{minipage}
\end{center}

To address such difficulties, we train a seeding distribution \(\PP_{\theta}(\tau)\) to approximate a reference distribution of feasible solutions \(\PP_r(\tau)\), so that it can generate diverse samples close to the parametric manifold \(\M_{\tau}\).
These samples are then used as a warm start for the optimizer, projecting them onto \(\M_{\tau}\).
An example of our framework in the context of manipulation planning is shown in \cref{fig:handover}.

\section{Training Deep Generative Models to Sample on Constraint Manifolds}

Our deep generative model is denoted by \( x \sim \PP_{\theta}(\tau) \) with \( x = G_{\theta}(z, \tau), z \sim \PP_z \), where \( \PP_z \) is a multivariate Gaussian distribution, and \( G_{\theta} \) is a neural network parameterized by \( \theta \).

In contrast to the standard application of adversarial generative models for image generation, our setting also includes an analytical description of the target distribution's support, namely, the features \( \f(x;\tau) = [ \feq(x;\tau) , \max( 0 , \fineq(x;\tau) )] \) that characterize \( \M_\tau \) with \( \f(x;\tau) = 0 \).
We consider a data-free, gradient-based optimization of the analytical constraint violation, 
\begin{equation}
	\label{eq:data_free}
	\min \E_{\tau} \E_{ x \sim \PP_{\theta} } \norm{ \f( x ; \tau) }^2 ~.
\end{equation}
However, this approach is extremely ill-posed and can converge to a deterministic mapping \( G_{\theta}(z,\tau) = G_{\theta,\tau} \) for all \( z \), where the model disregards the noise \( z \) and loses the capacity to generate a diverse distribution.

To enforce sample diversity, we regularize with respect to a reference distribution \( \PP_r \) that is diverse and has its support on the manifold, satisfying \( \E_{ x \sim \PP_r } \norm{ \f( x ; \tau) }^2 = 0 \).
We formulate the problem as follows:
\begin{equation}
	\label{eq:distance+term}
	\min \E_{\tau} \ W( \PP_{\theta}(\tau), \PP_{r}(\tau)) + \beta \E_{ x \sim \PP_{\theta} } \norm{ \f( x ; \tau) }^2 ~,
\end{equation}
where \( W \) represents the Wasserstein distance, and \( \beta \) belongs to \( \R_{>0} \).
Although the analytical term does not provide additional information beyond the support of \( \PP_r \), its contribution is crucial in the context of function approximation and stochastic gradient descent in non-convex optimization, as demonstrated in the experiment section.

\subsection{Wasserstein Distance and Adversarial Formulation}

The Wasserstein-1 (Earth Mover's distance) between two probability distributions \(\PP_r\) and \(\PP_\theta\) as defined in \eqref{eq:wasser-def} is intuitively the cost of optimally transporting mass from one distribution to the other,
\begin{equation}
	\label{eq:wasser-def}
	W(\PP_{r}, \PP_{\theta}) = \inf _{\gamma \in \Pi(\PP_{r}, \PP_{\theta})} \E_{(x, y) \sim \gamma} \norm{x-y} ~,
\end{equation}
where \(\Pi(\PP_r,\PP_{\theta})\) denotes the set of all joint distributions with marginals \(\PP_r\) and \(\PP_{\theta}\).

Compared to other distance measures such as Jensen-Shannon Divergence or Total Variation, adversarial generative models employing Wasserstein distances have demonstrated superior practical stability and convergence, and are less susceptible to mode collapse \cite{arjovsky2017wasserstein,gulrajani2017improved}.

Furthermore, in our application, the Wasserstein distance provides a meaningful interpretation as it reflects the geometric distance between distributions—a critical factor in enhancing the success rate of subsequent nonlinear optimization.

We minimize the objective function in \eqref{eq:distance+term} using the Wasserstein GAN formulation \cite{arjovsky2017wasserstein,gulrajani2017improved}.
Employing Kantorovich duality, we transform the original definition in \eqref{eq:wasser-def} into a minimax game between the critic network \( D \) and the generator \( G \), training both concurrently via stochastic gradient descent.
Specifically, the minimax problem, incorporating our analytical feature, is:
\begin{equation}
	\label{eq:wgan-train}
	\min _{G} \max _{D} ~ \E_{\tau} ~ \E_{x \sim \PP_r }D(x;\tau)-\E_{x \sim \PP_{\theta}}D(x;\tau) - \lambda \E_{\hat{x} \sim \PP_{\hat{x}}}\left(\norm{\nabla D(\hat{x};\tau)}-1\right)^{2} +
	\beta \E_{x \sim \PP_{\theta}} \norm{ \f(x;\tau) }^2\,,
\end{equation}
where \( \beta, \lambda \) are positive real numbers, and \( \hat{x} \) denotes samples interpolated between \( \PP_r \) and \( \PP_{\theta} \).
Our reference distribution \( \PP_r(\tau) \) consists of a discrete set of solution/problem pairs \((x_i , \tau_i)\).
These data points are computed offline by resolving a large number of problems with nonlinear optimization or sequential constrained sampling, starting from randomized and uninformed initializations--a labor-intensive process that necessitates multiple attempts.

\section{Structured Generative Model by Exploiting Factorization}

\begin{figure}
	\centering
	\begin{subfigure}[b]{0.49\textwidth}
		\centering
		\begin{tikzpicture}[scale=0.8,every node/.style={transform shape}]
			\node[latent ] (t) {$t$} ; %
			\node[latent, right=of t ] (p2) {$p$} ; %
			\node[latent, below=of p2] (q2) {$q_2$} ; %
			\node[latent, left=of q2] (q1) {$q_1$} ; %
			\factor[above=of q1] {Kin1} {left:Kin} {t,q1} {};
			\factor[below=of p2] {Kin2} {left:Kin} {p2,t,q2} {};
			\factor[above=of t] {Grasp} {Grasp} {t} {};
			\factor[above=of p2] {Pose} {Pose} {p2} {};
			\factor[below=of q1,xshift=.5cm,color=brown] {col} {} {q1} {};
			\factor[right=of q2,color=brown] {col} {} {q2,p2} {};
		\end{tikzpicture} \\
		\caption{Factored-NLP.}
		\label{fig:graph-pp}
	\end{subfigure}
	\hfill
	\begin{subfigure}[b]{0.49\textwidth}
		\centering
		\begin{tikzpicture}[scale=0.8,every node/.style={transform shape}]
			\node[latent ] (t) {$t$} ; %
			\node[latent, right=of t ] (p2) {$p$} ; %
			\node[latent, below=of p2] (q2) {$q_2$} ; %
			\node[latent, left=of q2] (q1) {$q_1$} ; %
			\factoredge{}{p2}{t}
			\factoredge{}{t,p2}{q2}
			\factoredge{}{t}{q1}
		\end{tikzpicture} \\
		\caption{Sampling Network.}
		\label{fig:sample-pp}
	\end{subfigure}
	\caption{Factored-NLP and sampling network for the \textit{Pick and Place} problem.
		(a) Circles represent variables, and squares represent constraints (see the main text and \cref{sec:bg:structure} for an explanation).
		Brown squares indicate collision constraints.
		(b) Arrows indicate the factorization of the joint probability and the sequence of sampling operations.}
	\label{fig:graphs-robotics}
\end{figure}
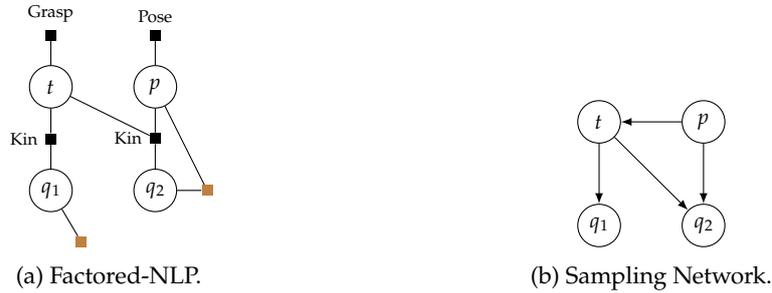

The previous sections treated \( x \in \mathbb{R}^n \) as a single vector variable.
While the approach for training a single generative model \( x = G_{\theta}(z;\tau), z \sim \PP_z \) is powerful, we can further improve scalability to large problems by exploiting a given factorization of \( x \) and sequentially decomposing the sampling procedure into a sequence of conditional sampling operations, as in Bayesian networks \cite{koller2009probabilistic}.

We now assume a Factored-NLP formulation of the nonlinear manifold (\cref{sec:bg:factored-nlp}).
The vector variable \( x \in \mathbb{R}^n \) is factored into a set \( X = \{x_1, \ldots, x_N\} \) of \( N \) smaller vector variables \( x_i \in \mathbb{R}^{n_i} \), and constraints are decomposed into a set of smaller nonlinear constraints \( \Phi = \{ \phi_1, \ldots, \phi_L \} \), each one depending only on a subset of the variables.

As illustrated in \cref{sec:bg:structure} for the case of TAMP, such factorization naturally arises in many applications, where each variable has some semantic and geometric meaning.

\cref{fig:graphs-robotics,fig:graphs-handover-assembly} show some examples of such Factored-NLPs in the context of robotic manipulation.
Note that the structure of these Factored-NLPs is slightly different from the ones shown in \cref{sec:bg:structure} and \cref{sec:planner:formulation}.
The key differences are: 1) here, all variables that are constrained to be fixed to some specific value are removed from the graph; 2) we use two distinct variables to represent object poses: \( t \) for the relative grasp and \( p \) for the absolute pose; and 3) consecutive variables that are constrained to be equal are merged into a single variable.

Both formulations of the Factored-NLPs are equivalent, and the differences serve only to expose a slightly different structure that is beneficial for decomposing the problem into a sequence of sampling operations.
However, in comparison to \cref{sec:planner:formulation}, we lose the clear temporal and local structures.

We quickly introduce the notation used in this chapter.
Variables for the configurations of robots \( Q \) and \( W \) are \( q,w \in \mathbb{R}^7 \) for the arm configuration, and \( r_q, r_w \in SE(2) \) for the pose of the mobile base.
Relative transformations between objects and grippers are \( t_q, t_w \in SE(3) \), and absolute object positions are \( p, \tilde{p} \in SE(3) \).
Nonlinear constraints (explained in detail in \cref{sec:bg:structure}), are kinematics (\textit{Kin}), grasp, position (\textit{Pose}), relative position (\textit{Relpose}), and collisions (in color brown).

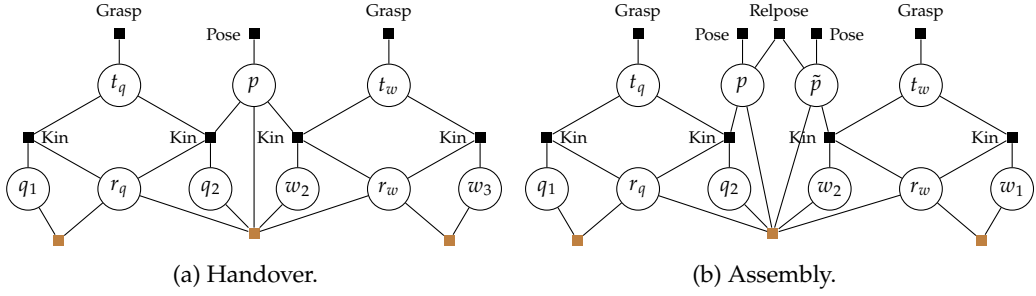
\begin{figure}
	\centering
	\begin{subfigure}{0.45\linewidth}
		\centering
		\begin{tikzpicture}[scale=0.8,every node/.style={transform shape}]
			\node[latent ] (p) {$p$} ; %
			\node[latent, left=1.5 of p , yshift=.0cm] (ta) {$t_q$} ; %
			\node[latent, right=1.5 of p, yshift=.0cm ] (tb) {$t_w$} ; %
			\node[latent, below=of ta, xshift=-1.5cm] (qa1) {$q_1$} ; %
			\node[latent, below=of ta] (ra) {$r_q$} ; %
			\node[latent, below=of ta, xshift=1.5cm] (qa2) {$q_2$} ; %
			\node[latent, below=of tb, xshift=-1.5cm] (qb2) {$w_2$} ; %
			\node[latent, below=of tb, xshift=1.5cm] (qb1) {$w_3$} ; %
			\node[latent, below=of tb]  (rb) {$r_w$} ; %
			\factor[above=of qa2] {Kina2} {left:Kin} {ta,qa2,p, ra} {};
			\factor[above=of qb2] {Kinb2} {left:Kin} {tb,qb2,p, rb} {};
			\factor[above=of qa1] {Kina1} {right:Kin} {ta,qa1,ra} {};
			\factor[above=of qb1] {Kinb1} {left:Kin} {tb,qb1,rb} {};
			\factor[above=of p] {Pose} {left:Pose} {p} {};
			\factor[above=of ta] {Grasp} {Grasp} {ta} {};
			\factor[above=of tb] {Grasp} {Grasp} {tb} {};

			\factor[below=of qa1,xshift=.5cm,color=brown] {col} {} {qa1,ra} {};

			\factor[below=of qb1,xshift=-.5cm,color=brown] {col} {} {qb1,rb} {};

			\factor[below= 2 of p, color=brown] {col} {} {qb2,qa2,p,ra,rb} {};

		\end{tikzpicture}
		\caption{Handover.}
		\label{fig:hand-constraint-graph}
	\end{subfigure}
	\hspace{10pt}
	\begin{subfigure}{0.45\linewidth}
		\centering
		\begin{tikzpicture}[scale=0.8,every node/.style={transform shape}]
			\node[latent ] (p) {$p$} ; %
			\node[latent , right=.5 of p] (p2) {$\tilde{p}$} ; %
			\node[latent, left=1 of p , yshift=.0cm] (ta) {$t_q$} ; %
			\node[latent, right=1 of p2, yshift=.0cm ] (tb) {$t_w$} ; %
			\node[latent, below=of ta, xshift=-1.5cm] (qa1) {$q_1$} ; %
			\node[latent, below=of ta] (ra) {$r_q$} ; %
			\node[latent, below=of ta, xshift=1.5cm] (qa2) {$q_2$} ; %
			\node[latent, below=of tb, xshift=-1.5cm] (qb2) {$w_2$} ; %
			\node[latent, below=of tb, xshift=1.5cm] (qb1) {$w_1$} ; %
			\node[latent, below=of tb]  (rb) {$r_w$} ; %
			\factor[above=of qa2] {Kina2} {left:Kin} {ta,qa2,p, ra} {};
			\factor[above=of qb2] {Kinb2} {left:Kin} {tb,qb2,p2, rb} {};
			\factor[above=of qa1] {Kina1} {right:Kin} {ta,qa1,ra} {};
			\factor[above=of qb1] {Kinb1} {left:Kin} {tb,qb1,rb} {};
			\factor[above=of p] {Pose} {left:Pose} {p} {};
			\factor[above=of p2] {Pose} {right:Pose} {p2} {};
			\factor[above=of p, xshift=.6cm] {Pose} {Relpose} {p,p2} {};
			\factor[above=of ta] {Grasp} {Grasp} {ta} {};
			\factor[above=of tb] {Grasp} {Grasp} {tb} {};

			\factor[below=of qa1,xshift=.5cm,color=brown] {col} {} {qa1,ra} {};

			\factor[below=of qb1,xshift=-.5cm,color=brown] {col} {} {qb1,rb} {};
			\factor[below=2 of p,xshift=.5cm,color=brown] {col} {} {p,p2,ra,rb,qa2,qb2} {};

		\end{tikzpicture}
		\caption{Assembly.}
	\end{subfigure}
	\caption{Factored-NLPs for the \textit{Handover} and \textit{Assembly} problems.\vspace{.5cm}}
	\label{fig:graphs-handover-assembly}
\end{figure}

\subsection{Directed Graphical Model and Sequential Sampling}

\begin{figure}
	\centering
	\begin{subfigure}[t]{0.49\textwidth}
		\centering
		\begin{tikzpicture}[scale=.8,every node/.style={transform shape}]
			\node[latent] (p) {$p$} ; %
			\node[latent, left=of p] (ra) {$r_q$} ; %
			\node[latent, right=of p] (rb) {$r_w$} ; %
			\node[latent, below=.5 of p, xshift=-1cm] (ta) {$t_q$} ; %
			\node[latent, below=.5 of p, xshift=1cm] (tb) {$t_w$} ; %
			\node[latent, below=2 of p,  xshift=1cm] (qb2) {$w_2$} ; %
			\node[latent, below=2 of p,  xshift=2cm] (qb1) {$w_3$} ; %
			\node[latent, below=2 of p,  xshift=-2cm] (qa1) {$q_1$} ; %
			\node[latent, below=2 of p,  xshift=-1cm] (qa2) {$q_2$} ; %
			\factoredge{}{p}{ra}
			\factoredge{}{p}{rb}
			\factoredge{}{ra}{ta}
			\factoredge{}{rb}{tb}
			\factoredge{}{p}{ta}
			\factoredge{}{ta, p}{tb}
			\factoredge{}{ta, p, ra}{qa2}
			\factoredge{}{ta, ra}{qa1}
			\factoredge{}{tb, rb}{qb1}
			\factoredge{}{tb, p, rb}{qb2}
		\end{tikzpicture}
		\caption{Handover.}
	\end{subfigure}
	\begin{subfigure}[t]{0.49\textwidth}
		\centering
		\begin{tikzpicture}[scale=.8,every node/.style={transform shape}]
			\node[latent] (p) {$p$} ; %
			\node[latent, right=of p] (p2) {$\tilde{p}$} ; %
			\node[latent, left=of p] (ra) {$r_q$} ; %
			\node[latent, right=of p2] (rb) {$r_w$} ; %
			\node[latent, below=.5 of p, xshift=-1cm] (ta) {$t_q$} ; %
			\node[latent, below=.5 of p2, xshift=1cm] (tb) {$t_w$} ; %
			\node[latent, below=2 of p2,  xshift=1cm] (qb2) {$w_2$} ; %
			\node[latent, below=2 of p2,  xshift=2cm] (qb1) {$w_1$} ; %
			\node[latent, below=2 of p,  xshift=-2cm] (qa1) {$q_1$} ; %
			\node[latent, below=2 of p,  xshift=-1cm] (qa2) {$q_2$} ; %
			\factoredge{}{p}{ra}
			\factoredge{}{p}{p2}
			\factoredge{}{p2}{rb}
			\factoredge{}{ra}{ta}
			\factoredge{}{rb, p2}{tb}
			\factoredge{}{p}{ta}
			\factoredge{}{ta, p, ra}{qa2}
			\factoredge{}{ta, ra}{qa1}
			\factoredge{}{tb, rb}{qb1}
			\factoredge{}{tb, p2, rb}{qb2}
		\end{tikzpicture}
		\caption{Assembly.}
	\end{subfigure}
	\caption{Sampling networks for the \textit{Assembly} and \textit{Handover} problems.}
	\label{fig:sampling-hand-asse}
\end{figure}
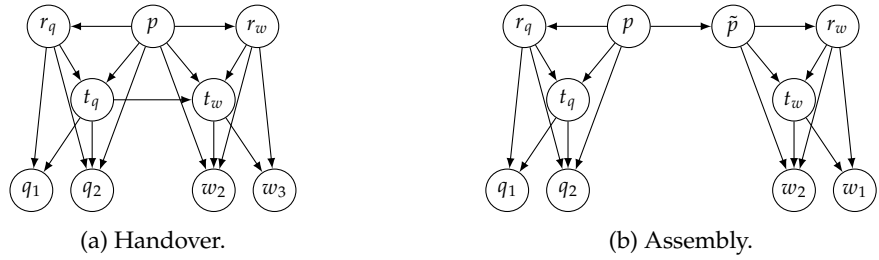

The factored structure directly suggests a factorization of the joint probability distribution $\PP(\tau)$ from which we want to sample.
As in Bayesian networks, we can sequence the sampling if we decide on an ordering of variables that corresponds to a directed acyclic graph.
Instead of learning a single $G_{\theta}$ to produce a full assignment with $x = G_{\theta}(z;\tau), ~ z \sim \PP_z$, we learn a \emph{conditional} model for each factor using the corresponding marginal distributions of the original data and the subset of corresponding analytical features in the Factored-NLP.

\newpage
We illustrate the benefits of this factorization with the \textit{Pick and Place} problem, as shown in \cref{fig:graphs-robotics}.
The joint probability density function \( P(p,t,q_1,q_2)  \)
for this problem is factorized into (we omit conditioning on $\tau$ for clarity):
\begin{equation}
	P(p,t,q_1,q_2) = P(p) \, P(t|p) \, P(q_1|t) \, P(q_2|t, p) ~,
\end{equation}
where $p$ is the placement position of the object, $t$ is the relative transformation between the object and the gripper, and $q_1$, $q_2$ are the robot joint configurations at the pick and place keyframes.
The factorization exploits conditional independence between $(q_1,p)$ given $t$ and $(q_1,q_2)$ given $t$.
We leverage this structure by training a sequence of conditional samplers: $p \sim \PP_p$, $t \sim \PP_t(p)$, $q_1 \sim \PP_{q_1}(t)$, and $q_2 \sim \PP_{q_2}(t,p)$, as illustrated in \cref{fig:sample-pp,fig:sequence}.
This factorization can be easily extended to longer manipulation sequences, with the ordering
$p \to r \to t \to q$ (object pose, robot base, grasp, robot joint values), as shown in \cref{fig:sampling-hand-asse} for the \textit{Handover} and \textit{Assembly} problems.

As a side note, using the marginal distributions of a jointly consistent dataset is necessary, as only the marginals contain useful information about whether a partial assignment will admit a full solution.
For example, when sampling \( p \sim \mathbb{P}_p \), we aim to fulfill local constraints (e.g., \textit{Pose}) and ensure that the value will be consistent with the future assignment of other variables.
The second type of constraints cannot be evaluated efficiently but is modeled by the marginal distribution of the data.

\subsection{The Advantage of Factorization for Modeling Multimodality}

Generating samples from a distribution with disconnected supports using a deep generative model that receives continuous input noise \( z \sim \mathbb{P}_z \) requires infinite gradients in \( G_{\theta} \) and can only be done approximately.
In these cases, training is unstable and very sensitive to hyperparameters and architecture.

We can model disconnected distributions more effectively by factoring the full joint probability into a sequence of smaller, conditional modules, as confirmed by our experimental results in \cref{sec:ablation}.
The sequencing still requires that each module has the ability to produce some degree of multimodality.
However, once a module in the sequence receives disconnected input in the form of conditioning, it can successfully produce a disconnected output.
As we chain modules with the ability to generate a small amount of disconnected components given continuous input, the number of possible disconnected components of the output grows exponentially with the number of modules in sequence.

Furthermore, from a practical perspective, training smaller modules has proved to be more effective.
For instance, we observed that the analytical feature \( \|\phi(x;\tau)\|^2 \) of the joint problem can provide badly conditioned gradients when evaluated far from the manifold.
This issue is alleviated when considering only subsets of constraints and variables.
In our preliminary experiments, we also evaluated GAN frameworks that have a mechanism to model disconnected distributions \cite{chen2016chen,khayatkhoei2018disconnected}, but we did not find significant improvements.

\begin{figure}[t!]
	\setlength{\tabcolsep}{1pt}
	\begin{tabular}{ccccc}
		1                                                                                                                           & 2                                                                             & 3 & 4 & 5 \\
		\includegraphics[trim={3cm 0  3cm 0},clip, width=0.19\textwidth]{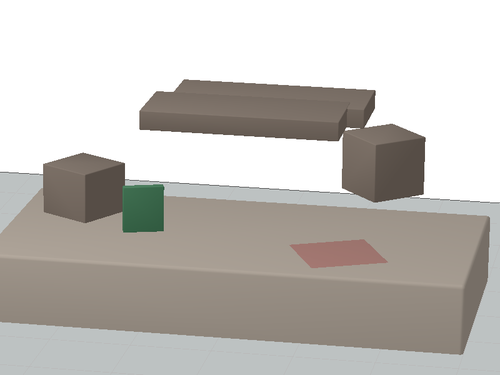} &
		\includegraphics[trim={3cm 0  3cm 0},clip,width=0.19\textwidth]{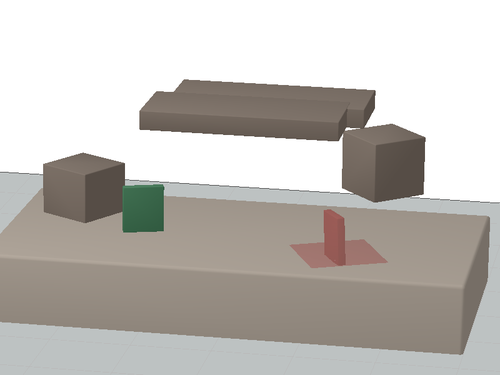}  &
		\includegraphics[trim={3cm 0  3cm 0},clip,width=0.19\textwidth]{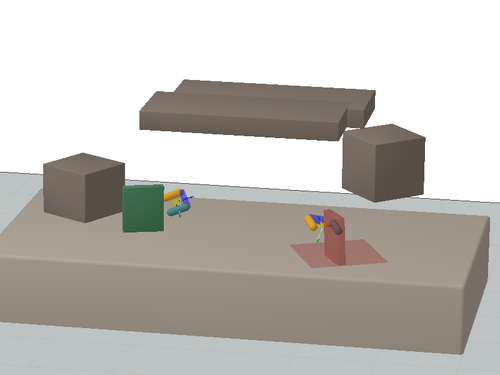}  &
		\includegraphics[trim={3cm 0  3cm 0},clip,width=0.19\textwidth]{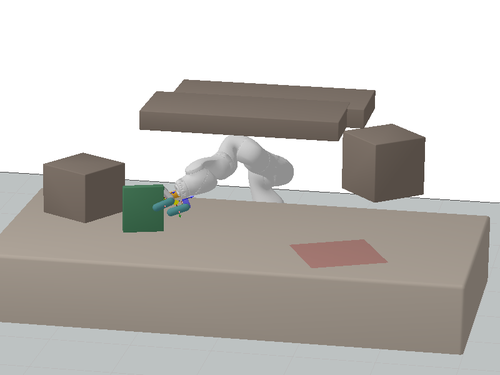}  &
		\includegraphics[trim={3cm 0  3cm 0},clip,width=0.19\textwidth]{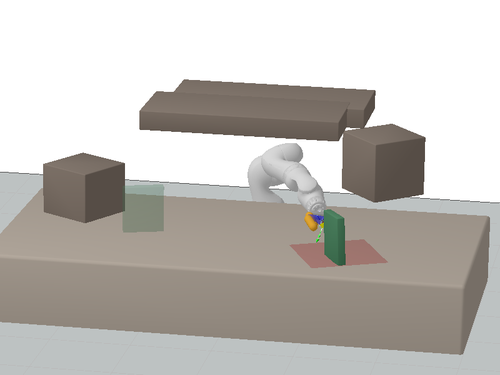}                                                                                              \\
		\includegraphics[width=0.12\textwidth]{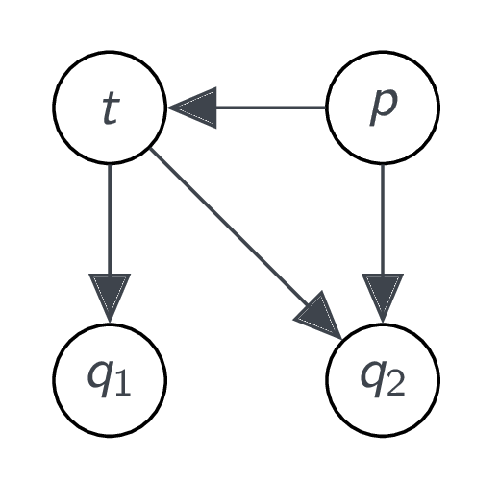}
		                                                                                                                            & \includegraphics[width=0.12\textwidth]{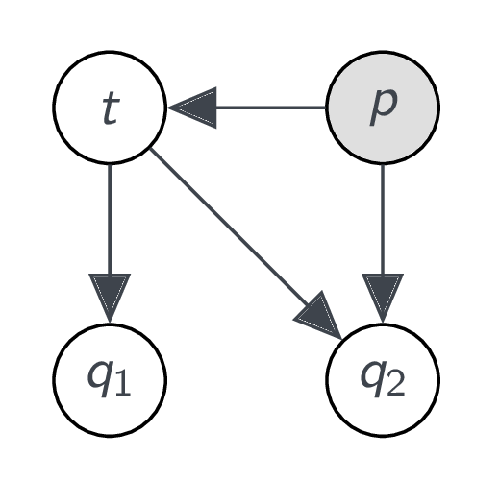}
		                                                                                                                            & \includegraphics[width=0.12\textwidth]{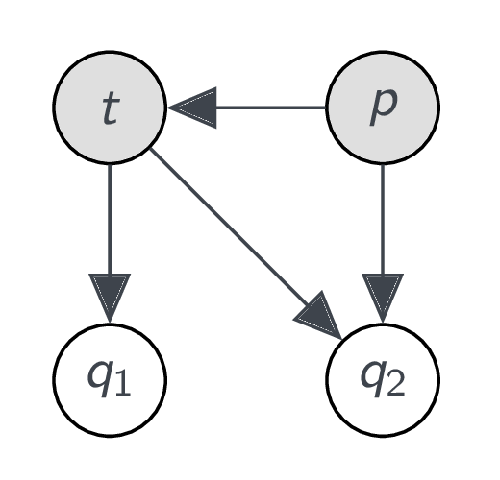}
		                                                                                                                            & \includegraphics[width=0.12\textwidth]{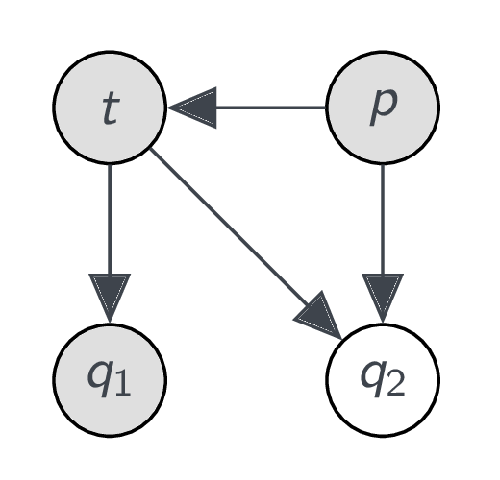}
		                                                                                                                            & \includegraphics[width=0.12\textwidth]{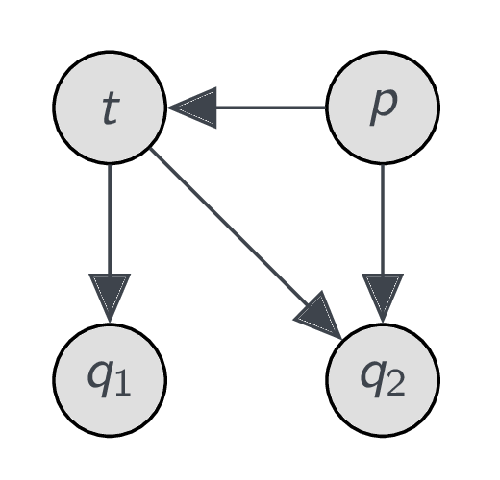}
	\end{tabular}
	\caption{Sequence of learned deep generative models for the \textit{Pick and Place} problem.}
	\label{fig:sequence}
\end{figure}

\section{Experiments}

\subsection{Image-Based Problem Representation}

We use an image-based representation of the problem instance \( \tau \) that consists of a depth image and masks.
Specifically, \( \tau = \{d , m_1 , m_2 , m_3\} \), where \( d \) is the depth image, and \( m_1, m_2, m_3 \) are three masks containing information from the initial object pose, the goal pose or placement region, and obstacles, respectively, as shown in \cref{fig:instance-assembly}.
In the factored approach, each generative module receives as input only the relevant masks; for example, the sampler for the robot pick configuration receives a mask of the obstacles and the initial configuration but not the goal pose.

The main strength of the image representation is that it can generalize to different object shapes and a varying number of obstacles and shapes.
Moreover, a depth camera is readily available, approximate masks can be computed with image segmentation techniques, and it provides a robust representation of sequential manipulation problems in tabletop scenarios.

\begin{figure}
	\def\namelist{18011,18017}
	\includegraphics[width=0.19\textwidth]{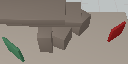}
	\includegraphics[width=0.19\textwidth]{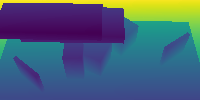}
	\includegraphics[width=0.19\textwidth]{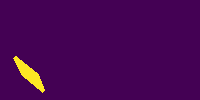}
	\includegraphics[width=0.19\textwidth]{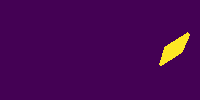}
	\includegraphics[width=0.19\textwidth]{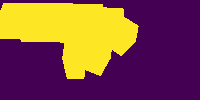} \\
	\includegraphics[width=0.19\textwidth]{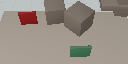}
	\includegraphics[width=0.19\textwidth]{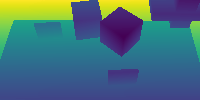}
	\includegraphics[width=0.19\textwidth]{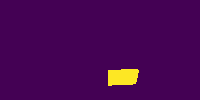}
	\includegraphics[width=0.19\textwidth]{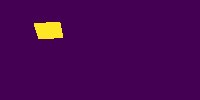}
	\includegraphics[width=0.19\textwidth]{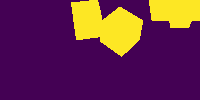}
	\caption{Image-based representation of problem instances in \textit{Assembly} and \textit{Handover}.
		\vspace{.5cm}}
	\label{fig:instance-assembly}
\end{figure}

\subsection{Scenarios}

We consider three different manipulation tasks that involve object manipulation with stable grasps:

\begin{itemize}
	\item \textit{Pick and Place}: A robot must pick up an object and place it within a specified rectangular region on the table.
	      Refer to \cref{fig:sequence,fig:gan:pp_six} for illustrations.

	\item \textit{Assembly}: Two mobile robots are required to each pick up an object and join them together.
	      The assembly process is not prescriptive and is characterized as a manifold with constraints on both rotation and position: the objects must align perpendicularly and establish stable contact with predetermined faces of the cubes, forming a 'T' shape.
	      Refer to \cref{fig:assembly,fig:gan:asse_seven} for illustrations.

	\item \textit{Handover}: Two mobile robots collaborate to transfer the object from its starting to its target position via a handover.
	      Refer to \cref{fig:handover,fig:gan:hand_four} for illustrations.
\end{itemize}

These tasks are executed on a cluttered table with obstacles varying in number from three to five.
The grasp between the gripper and the object is defined by a two-fingered gripper (e.g., the gripper of the Franka Panda robot), which restricts both position and orientation.
Robots and movable objects are required to avoid collisions with each other, as well as with any obstacles and the table itself.
The training dataset comprises 4,000 pairs of problem-solution scenarios, computed offline using a user-defined sampling sequence to ensure diversity.
Each problem is depicted by a 64x128x4 image (the network's input) and the corresponding environment (used to calculate the analytical error term during training and to conduct nonlinear optimization in the benchmark).

Variability in instances is introduced by differences in the number, position, and size of obstacles, the dimensions and location of the objects, and the goal configuration (for example, see \cref{fig:gan:pp_six}).
The instances in both the evaluation and training datasets are generated from the same distribution.

\begin{figure}[t]
	\begin{tabular}{c|c}
		\includegraphics[trim={2cm 6cm 4cm 6cm},clip,width=.22\textwidth]{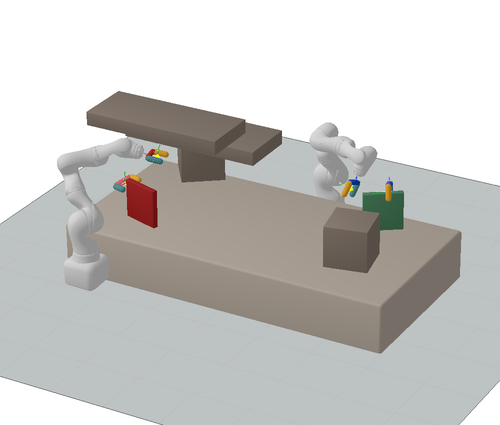}
		\includegraphics[trim={2cm 6cm 4cm 6cm},clip,width=.22\textwidth]{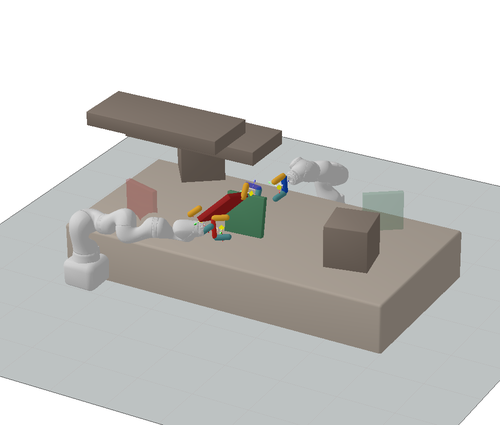} &
		\includegraphics[trim={2cm 6cm 4cm 6cm},clip,width=.22\textwidth]{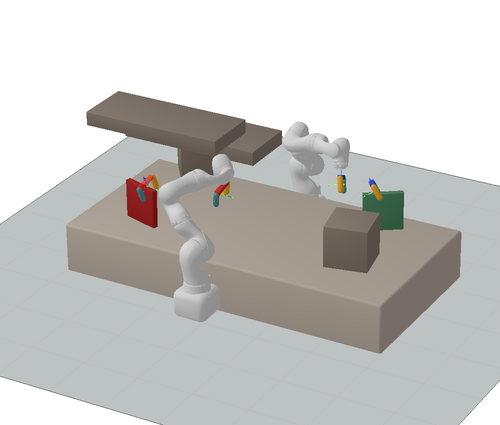}
		\includegraphics[trim={2cm 6cm 4cm 6cm},clip,width=.22\textwidth]{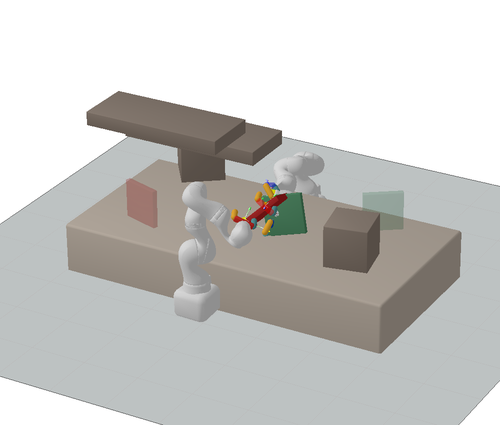}
	\end{tabular}
	\caption{
    Two samples from the deep generative model in the same instance of the \textit{Assembly} problem are displayed.
		Each sample is depicted with two keyframes (pick and assembly).
		Both seeds lead to a feasible solution.
	}
	\label{fig:assembly}
  \vspace{.2cm}
\end{figure}

\subsection{Ablation Study}
\label{sec:ablation}

The \textit{Pick and Place} scenario is utilized for an ablation study of the proposed generative model.
We assess the contribution of the factored structure (\textit{Big NN} vs \textit{Structured NN}) and the analytical error term (\textit{+analytical}).
We evaluate the precision and coverage of seed samples (output from the deep generative models) and solutions (after projection with nonlinear optimization) by generating \num{4,000} samples for each new instance (\num{30} in total).
Results are presented in \cref{tab:ablation}.
We report the following metrics:

\begin{itemize}
	\item \textit{Error}: the constraint violation $||\f(x;\tau)||^2$ (unitless, lower is better).
	\item \textit{Precision}: the average nearest neighbour distance to a reference dataset.
	      It models how close samples are to the real data (lower is better).
	\item \textit{Coverage}: the average nearest neighbour distance from the reference dataset to the computed samples.
	      It describes how well the learned distribution covers the reference dataset (lower is better).
	\item \textit{Success rate}: the success rate of the nonlinear optimization, signifying how many samples result in a feasible solution after optimization (higher is better).
\end{itemize}

When evaluating the seeds (output of the generative model), our two contributions are vital for achieving a minimal constraint violation (analytical term) and optimal coverage (structure).
Seeds from the model with both structure and the analytical term are more likely to lead to a solution (success rate).
After the projection step, only samples from networks with structure exhibit good coverage.

\begin{table}[t!]
	\centering
	\small
	\setlength{\tabcolsep}{4pt}
	\begin{tabular}{@{}lllllll@{}}
		\toprule
		                                                & \multicolumn{3}{c}{Seeds}                       & \multicolumn{3}{c}{Solutions}                                                                                                  \\
		\cmidrule(lr){2-4} \cmidrule(lr){5-7}
		                                                & Coverage                                        & Precision                                       & Error                                       & Coverage & Precision & Success \\ \midrule
		Big NN                                          & \rawtwo[f_dist2][0]$\pm$\rawtwo[f_dist2][1]     & \rawtwo[f_dist1][0]$\pm$\rawtwo[f_dist1][1]     & \rawtwo[error][0]$\pm$\rawtwo[error][1]     &
		\soltwo[f_dist2][0]$\pm$\soltwo[f_dist2][1]     & \soltwo[f_dist1][0]$\pm$\soltwo[f_dist1][1]     & 0.46                                                                                                                           \\
		Big NN+analytical                               & \rawzero[f_dist2][0]$\pm$\rawzero[f_dist2][1]   & \rawzero[f_dist1][0]$\pm$\rawzero[f_dist1][1]   & \rawzero[error][0]$\pm$\rawzero[error][1]   &
		\solzero[f_dist2][0]$\pm$\solzero[f_dist2][1]   & \solzero[f_dist1][0]$\pm$\solzero[f_dist1][1]   & 0.43                                                                                                                           \\
		Structure NN                                    & \rawthree[f_dist2][0]$\pm$\rawthree[f_dist2][1] & \rawthree[f_dist1][0]$\pm$\rawthree[f_dist1][1] & \rawthree[error][0]$\pm$\rawthree[error][1] &
		\solthree[f_dist2][0]$\pm$\solthree[f_dist2][1] & \solthree[f_dist1][0]$\pm$\solthree[f_dist1][1] & 0.56                                                                                                                           \\
		Structure NN+analytical                         & \rawone[f_dist2][0]$\pm$\rawone[f_dist2][1]     & \rawone[f_dist1][0]$\pm$\rawone[f_dist1][1]     & \rawone[error][0]$\pm$\rawone[error][1]     &
		\solone[f_dist2][0]$\pm$\solone[f_dist2][1]     & \solone[f_dist1][0]$\pm$\solone[f_dist1][1]     & 0.78                                                                                                                           \\ \bottomrule
	\end{tabular}
	\caption{Ablation study in the \textit{Pick and Place} scenario.}
	\label{tab:ablation}
\end{table}

\subsection{Benchmark: Generative Models in Nonlinear Optimization}

The \textit{Assembly} and \textit{Handover} scenarios are used to compare our generative model against two baseline methods for warm-starting (seeding) nonlinear optimizers.
We analyze the number of solved problems and the number of necessary optimization runs.
Measuring the number of solved nonlinear programs is an indirect way to evaluate coverage and sample quality, as both are fundamental to solving a diverse set of problems with a nonlinear optimizer and preventing convergence to infeasible points.
We compare our complete model (deep generative model with structure and analytical error term), in short, \textit{Deep}, with the baselines:
\begin{itemize}
	\item \textit{Rand:} Randomized initial guess around a reference value.
	\item \textit{Rand Data:}
	      Choosing samples from the training dataset at random.
	      The initial point is a feasible sample for another problem of the same family.
	      This is actually a strong baseline because it provides diverse, informative initial seeds.
\end{itemize}

We evaluate the generative model (deep generative model + optimization) on \num{200} problems from the evaluation dataset.
The experiments are repeated 10 times, and we report the mean and variance.
We first report how many optimization trials (each trial has an independent starting point) are necessary to solve each of the test instances and plot the histogram of the mean value in \cref{fig:trials}.
Unsolved problems are assumed to be solved with \num{10} trials (maximum number of trials).

In both scenarios, the proposed deep generative model outperforms the baseline warm starts, significantly reducing the number of trials required to solve the instances.
First, note that \textit{Rand} can only solve \num{10}\% of the problems when using a maximum of \num{10} trials.
This is because the initial guess is not informative, and the nonlinear optimizer often converges to infeasible points.
We now compare \textit{Deep} against \textit{Rand Data} and observe that \textit{Deep} provides a \num{1.5}-\num{2}x improvement.
On average (across problems), from (Rand Data, $3.86\pm1.29$) to (Deep, $2.77\pm1.79$) in \textit{Handover} and from (Rand Data, $3.99\pm1.44$) to (Deep, $2.07\pm1.38$) in \textit{Assembly}.
To complete the analysis, we also show the cumulative number of problems solved as we increase the number of optimization trials in \cref{fig:cumul}.

\begin{figure}[t]
	\centering
	\begin{subfigure}[t]{0.49\textwidth}
		\centering
		\includegraphics[width=0.8\textwidth]{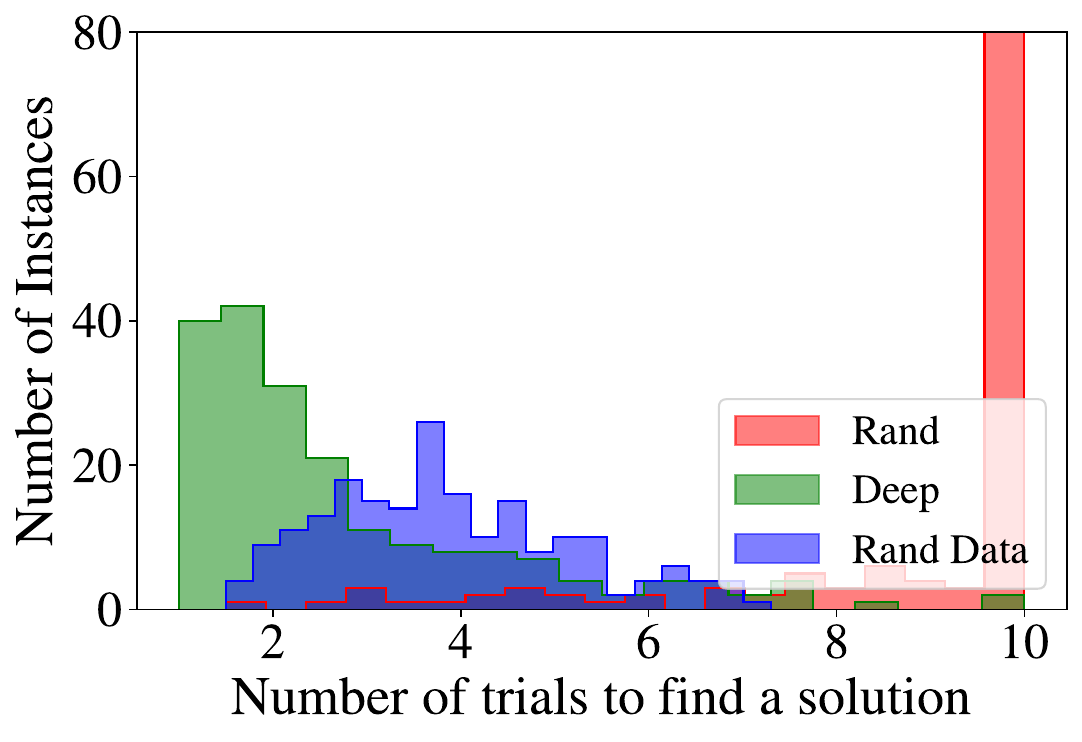}
		\caption{Handover.}
	\end{subfigure}
	\begin{subfigure}[t]{0.49\textwidth}
		\centering
		\includegraphics[width=0.8\textwidth]{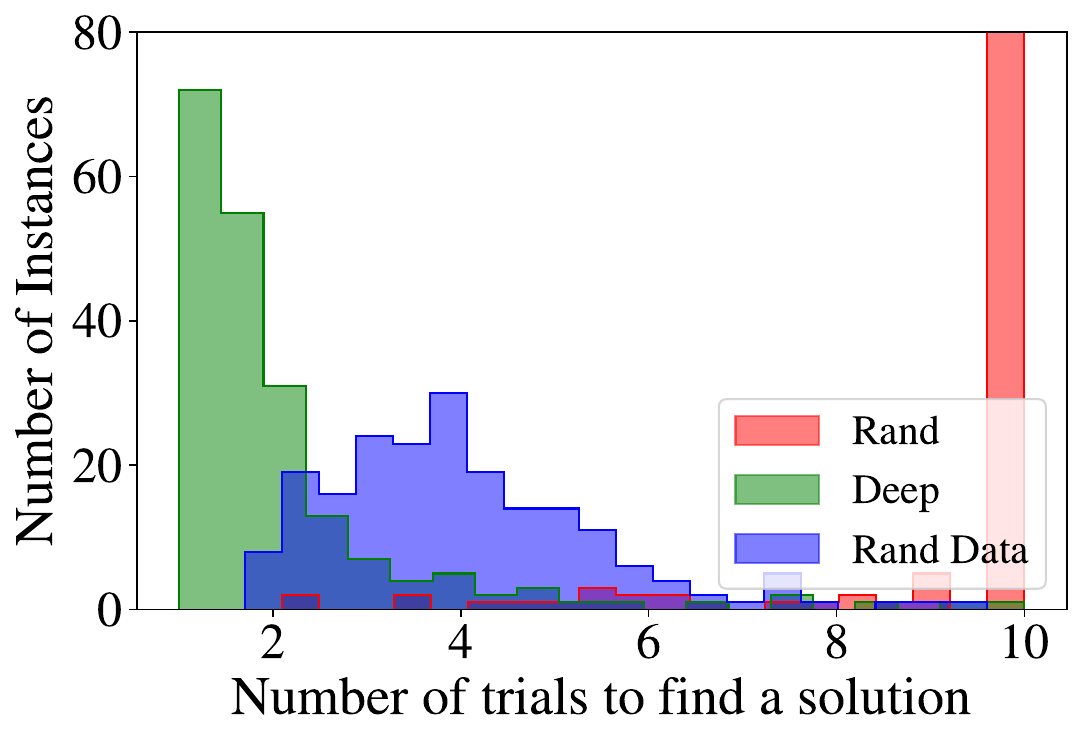}
		\caption{Assembly.}
	\end{subfigure}
	\caption{Histogram of the estimated number of trials necessary to solve an instance (lower is better). \vspace{.5cm}
	}
	\label{fig:trials}
\end{figure}
\begin{figure}
	\centering
	\begin{subfigure}[t]{0.49\textwidth}
		\centering
		\includegraphics[width=0.8\textwidth]{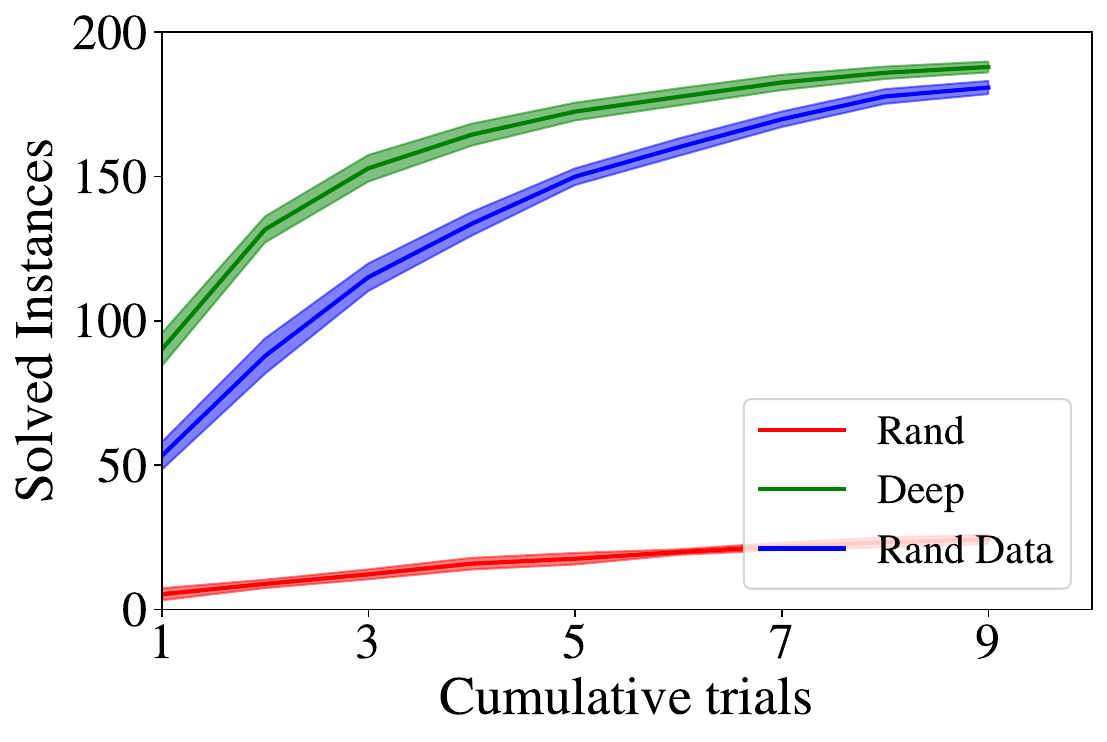}
		\caption{Handover.}
	\end{subfigure}
	\begin{subfigure}[t]{0.49\textwidth}
		\centering
		\includegraphics[width=0.8\textwidth]{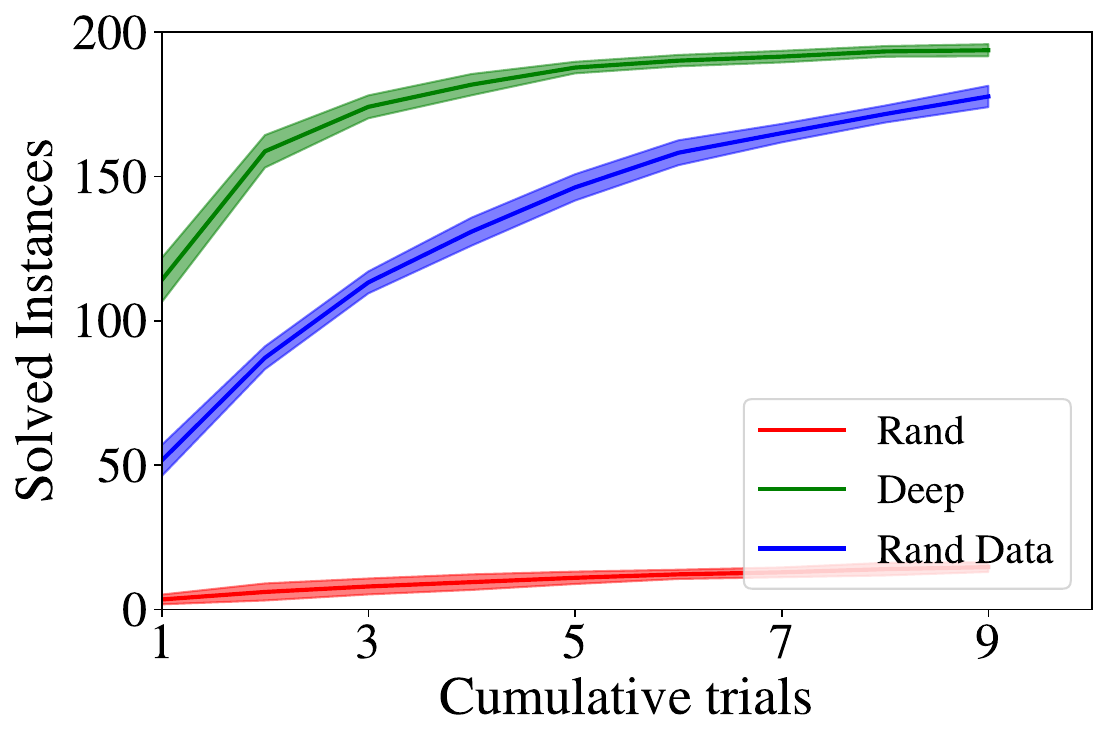}
		\caption{Assembly.}
	\end{subfigure}
	\caption{Cumulative number of solved problems (higher is better).}
	\label{fig:cumul}
\end{figure}

The computational overhead of evaluating the neural network is small (we produce \num{10} samples in \SI{8}{ms} with a GPU), while most of the time is spent in optimization runs that converge to infeasible points.

\begin{figure}
	\centering
	\includegraphics[width=.95\textwidth]{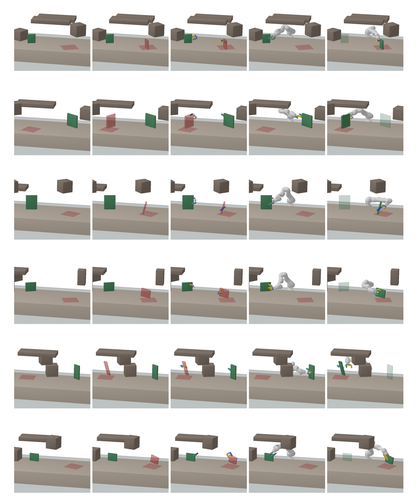}
	\caption{Sequence of sampling operations (output of the deep generative model) in \emph{Pick and Place} across six different instances.
		Variables are sampled in the following order: object pose on the table, grasp, robot pick configuration, and robot place configuration.
	}
	\label{fig:gan:pp_six}
\end{figure}

\begin{figure}
	\centering
	\includegraphics[width=.82\textwidth]{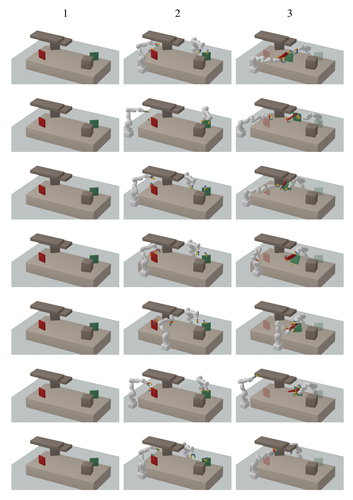}
	\caption{Seven different samples (output of the deep generative model) within the same instance of the \textit{Assembly} problem.
		Each row represents a different sample.
		Column 1 shows the problem instance; column 2, the pick keyframe; and column 3, the assembly keyframe.
	}
	\label{fig:gan:asse_seven}
\end{figure}

\begin{figure}
	\centering
	\includegraphics[width=.75\textwidth]{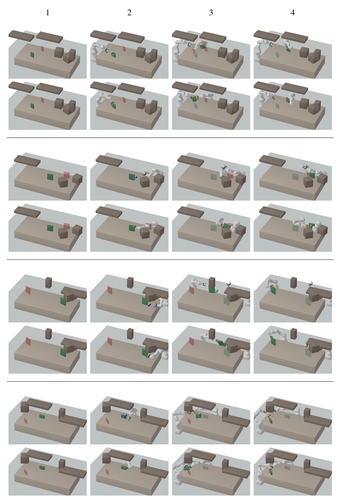}
	\caption{Four pairs of initial guesses (deep generative model outputs) and solutions (post-optimization) across different instances of the \textit{Handover} problem.
		In each pair of rows, the top row displays the approximate sample, and the bottom row shows the optimized solution.
		The first column presents the scene, while columns 2, 3, and 4 display the pick, handover, and place keyframes, respectively.
	}
	\label{fig:gan:hand_four}
\end{figure}

\section{Limitations}

This chapter highlights that finding a good warm start for nonlinear optimization is an intricate problem in itself.
Although the proposed method clearly outperforms randomized initialization, we observe that one of the baselines, namely warm-starting with a solution from the dataset chosen at random (and thus without considering the conditioning on the scene), also offers competitive results, given its simplicity.
Thus, a fundamental question is to understand what constitutes a good warm start for nonlinear optimization and how this might depend on the problem instance.
In our work, the underlying assumption is that a sample close to the solution manifold is a good warm start.
In practice, we observe that it is often sufficient and beneficial to have diverse samples on different bases of attraction of the optimizer, which might vary based on the problem instance.

A future research direction is to analyze the expressivity and practical performance of different formulations and architectures for deep generative models, comparing explicit generative models (such as GANs and VAEs), energy-based implicit models, and diffusion-based models.

An inherent limitation of our approach is that generative models are only applicable to similar problems with the same number of variables and constraints.
Specifically, for each class of problems evaluated in the experiments--\textit{Pick and Place}, \textit{Handover}, and \textit{Assembly}--we have generated a different dataset of solutions and trained different models.

In contrast, in the following \cref{ch:learn-feas}, we use the structure of the Factored-NLP to provide generalization across different problem classes (i.e., different types and numbers of constraints and variables) by sharing and combining small modules, resulting in a single universal model for different task plans.
The neural architectures and applications differ, as we predict subsets of infeasible nonlinear constraints instead of generating solutions.
However, we see great potential in adapting these insights back to the generative framework.

\section{Conclusion}
\label{sec:conclusion}

In this work, we propose Deep Generative Constraint Sampling (DGCS), a novel approach to sampling from a constraint manifold to address challenges in robotic sequential manipulation.
Our framework combines a deep generative sampling model, conditioned on an image-based representation of the problem, with a nonlinear optimizer to project samples onto the manifold.
Additionally, we extend the approach to exploit a given factorization of the problem by training a sequence of conditional generative models rather than a single joint generator.
Our empirical results confirm that the trained generative models outperform heuristic warm start strategies.
Moreover, incorporating analytic constraints into the training of the generative model, as well as exploiting the factorization of a given problem, significantly enhances the efficiency, diversity, and precision of the sampling approach.

Our current framework integrates generative sampling using a neural network with subsequent projection through constrained optimization.
A promising future direction involves exploring the possibility of embedding the optimization algorithm as the last layer of the generative model, while still maintaining good coverage and multimodality.

In this chapter, we fix the sequence of sampling operations by design when transforming the Factored-NLP into a directed sampling network.
As discussed in \cref{ch:mcts}, the choice of the sampling operations sequence can significantly impact performance. However, sampling operations with deep learning are conditioned on the problem scene, providing flexibility to capture interdependencies in the manipulation sequences and mitigating the drawbacks of uninformed uniform sampling.

Looking forward, we aim to develop deep generative models that directly work with the Factored-NLP, eliminating the need for manually designed directed graphical models, as this would allow for the automatic learning of the most effective decompositions.

\chapter{\nameChapterSix}
\label{ch:learn-feas}

\section{Introduction}

When a Factored Nonlinear Program (Factored-NLP) is over-constrained or infeasible, a fundamental challenge is to extract a minimal conflict -- a minimal subset of constraints that can never be fulfilled.
Traditional approaches require solving several nonlinear programs, incrementally adding and removing constraints, and are thus computationally expensive.

In this chapter\footnote{
	This chapter is based on the publication:
	Ortiz-Haro, J., Ha, J.
	S., Driess, D., Karpas, E., \theAnd Toussaint, M.
	(2023).
	Learning Feasibility of Factored Nonlinear Programs in Robotic Manipulation Planning.
	IEEE International Conference on Robotics and Automation (ICRA) (pp.
  3729-3735).
},
we propose a graph neural architecture that predicts which subsets of variables and constraints are infeasible in a Factored-NLP.
The model is trained with a dataset of labeled subgraphs from Factored-NLPs and can make useful predictions on larger problems than those seen during training.

As an application, we evaluate our method in robotic sequential manipulation and integrate this model into our novel TAMP solver presented in \cref{ch:bid}.
The objective is to quickly determine which constraints fail in the trajectory optimization problems of candidate task plans, which is one of the computational bottlenecks of our solver.

Beyond factored nonlinear programs in TAMP, our framework is applicable for detecting infeasibility in constraint satisfaction problems, combinatorial optimization, and Boolean satisfaction (SAT), which have broad applications in robotics, planning, and scheduling.

The foundation of our framework is built on the factored structure of manipulation planning problems, which has been presented in \cref{sec:bg:structure}, and further developed and analyzed with our new factored TAMP formulation in \cref{ch:bid}.

An overview of our approach is shown in \cref{fig:overview}.
The input to our model is directly the graph representation of the Factored-NLP, including semantic information about variables and constraints (e.g., a class label), and a continuous feature for each variable that encodes geometric information about the scene.
Finding the minimal infeasible subgraph (i.e., a subset of variables and constraints of the Factored-NLP) is cast as a graph node classification problem, and the predicted infeasible subsets are extracted with a connected component analysis.

By leveraging the factored structure, our model is able to predict infeasibility in longer manipulation sequences involving more objects and robots, as well as different geometric environments -- a broader generalization than our deep generative models presented in \cref{ch:gans}, which were limited to a fixed high-level task plan.

Our experiments show that the model accelerates general algorithms for conflict extraction by a factor of \num{50}, and our previous heuristic algorithm for conflict detection in TAMP by a factor of \num{4}.

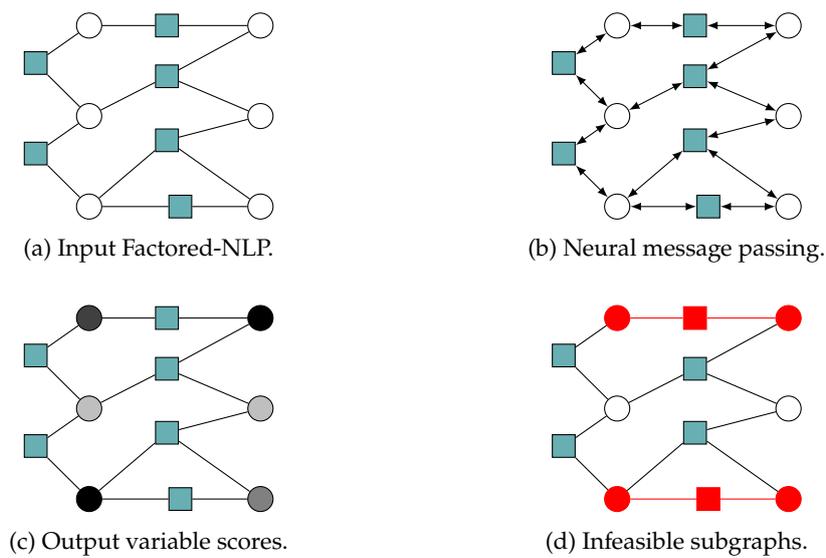
\begin{figure}
	\begin{subfigure}[c]{0.49\columnwidth}

		\centering

		\begin{tikzpicture}[node distance=1.2cm,
				every node/.style={fill=white, font=\small}, align=center]
			\node (var1)     [gnn_var]          {$$};
			\node (var2)     [right of=var1, xshift=3em,  gnn_var]          {$$};

			\node (var3)     [below of=var1, gnn_var]          {$$};
			\node (var4)     [below of=var2, gnn_var]          {$$};
			\node (var5)     [below of=var3, gnn_var]          {$$};
			\node (var6)     [below of=var4, gnn_var]          {$$};

			\node (con12)     [right of = var1 , xshift=-.5em,gnn_con ]          {$$};
			\node (con13)     [below of = var1 , yshift=2em, xshift=-2em, gnn_con]          {$$};
			\node (con35)     [below of = var3 , yshift=2em, xshift=-2em, gnn_con]          {$$};
			\node (con234)     [below of = con12 ,yshift=1.5em, gnn_con]          {$$};

			\node (con546)     [below of = con234 , yshift=1em, gnn_con]          {$$};

			\node (con56)     [right of = var5 , gnn_con]          {$$};

			\draw[-]             (con12) -- (var1);
			\draw[-]             (con12) -- (var2);

			\draw[-]             (con234) -- (var3);
			\draw[-]             (con234) -- (var4);
			\draw[-]             (con234) -- (var2);

			\draw[-]             (con546) -- (var5);
			\draw[-]             (con546) -- (var4);
			\draw[-]             (con546) -- (var6);

			\draw[-]             (con13) -- (var3);
			\draw[-]             (con13) -- (var1);

			\draw[-]             (con35) -- (var3);
			\draw[-]             (con35) -- (var5);

			\draw[-]             (con56) -- (var5);
			\draw[-]             (con56) -- (var6);

		\end{tikzpicture}
		\subcaption{Input Factored-NLP.}
	\end{subfigure}
	\begin{subfigure}[c]{0.49\columnwidth}
		\centering
		\begin{tikzpicture}[node distance=1.2cm,
				every node/.style={fill=white, font=\small}, align=center]
			\node (var1)     [gnn_var]          {$$};
			\node (var2)     [right of=var1, xshift=3em,  gnn_var]          {$$};

			\node (var3)     [below of=var1, gnn_var]          {$$};
			\node (var4)     [below of=var2, gnn_var]          {$$};
			\node (var5)     [below of=var3, gnn_var]          {$$};
			\node (var6)     [below of=var4, gnn_var]          {$$};

			\node (con12)     [right of = var1 , xshift=-.5em,gnn_con ]          {$$};
			\node (con13)     [below of = var1 , yshift=2em, xshift=-2em, gnn_con]          {$$};
			\node (con35)     [below of = var3 , yshift=2em, xshift=-2em, gnn_con]          {$$};
			\node (con234)     [below of = con12 ,yshift=1.5em, gnn_con]          {$$};

			\node (con546)     [below of = con234 , yshift=1em, gnn_con]          {$$};

			\node (con56)     [right of = var5 , gnn_con]          {$$};

			\draw[latex-latex]             (con12) -- (var1);
			\draw[latex-latex]             (con12) -- (var2);

			\draw[latex-latex]             (con234) -- (var3);
			\draw[latex-latex]             (con234) -- (var4);
			\draw[latex-latex]             (con234) -- (var2);

			\draw[latex-latex]             (con546) -- (var5);
			\draw[latex-latex]             (con546) -- (var4);
			\draw[latex-latex]             (con546) -- (var6);

			\draw[latex-latex] (con13) -- (var3);
			\draw[latex-latex]             (con13) -- (var1);

			\draw[latex-latex]             (con35) -- (var3);
			\draw[latex-latex]             (con35) -- (var5);

			\draw[latex-latex]             (con56) -- (var5);
			\draw[latex-latex]             (con56) -- (var6);

		\end{tikzpicture}

		\subcaption{Neural message passing.}
	\end{subfigure}
	\vspace{.5cm}

	\begin{subfigure}[c]{0.49\columnwidth}
		\centering
		\begin{tikzpicture}[node distance=1.2cm,
				every node/.style={fill=white, font=\small}, align=center]
			\node (var1)     [gnn_var, fill=darkgray]          {$$};
			\node (var2)     [right of=var1, xshift=3em,  gnn_var, fill=black]          {$$};

			\node (var3)     [below of=var1, gnn_var, fill=lightgray]          {$$};
			\node (var4)     [below of=var2, gnn_var, fill=lightgray]          {$$};
			\node (var5)     [below of=var3, gnn_var, fill=black]          {$$};
			\node (var6)     [below of=var4, gnn_var, fill=gray]          {$$};

			\node (con12)     [right of = var1 , xshift=-.5em,gnn_con]          {$$};
			\node (con13)     [below of = var1 , yshift=2em, xshift=-2em, gnn_con]          {$$};
			\node (con35)     [below of = var3 , yshift=2em, xshift=-2em, gnn_con]          {$$};
			\node (con234)     [below of = con12 ,yshift=1.5em, gnn_con]          {$$};

			\node (con546)     [below of = con234 , yshift=1em, gnn_con]          {$$};

			\node (con56)     [right of = var5 , gnn_con]          {$$};

			\draw[-]             (con12) -- (var1);
			\draw[-]             (con12) -- (var2);

			\draw[-]             (con234) -- (var3);
			\draw[-]             (con234) -- (var4);
			\draw[-]             (con234) -- (var2);

			\draw[-]             (con546) -- (var5);
			\draw[-]             (con546) -- (var4);
			\draw[-]             (con546) -- (var6);

			\draw[-]             (con13) -- (var3);
			\draw[-]             (con13) -- (var1);

			\draw[-]             (con35) -- (var3);
			\draw[-]             (con35) -- (var5);

			\draw[-]             (con56) -- (var5);
			\draw[-]             (con56) -- (var6);
		\end{tikzpicture}
		\subcaption{Output variable scores.}
	\end{subfigure}
	\begin{subfigure}[c]{0.49\columnwidth}
		\centering
		\begin{tikzpicture}[node distance=1.2cm,
				every node/.style={fill=white, font=\small}, align=center]
			\node (var1)     [gnn_var, draw=red,fill=red]          {$$};
			\node (var2)     [right of=var1, xshift=3em,  gnn_var,draw=red, fill=red]          {$$};

			\node (var3)     [below of=var1, gnn_var]          {$$};
			\node (var4)     [below of=var2, gnn_var]          {$$};
			\node (var5)     [below of=var3, gnn_var, draw=red,fill=red]          {$$};
			\node (var6)     [below of=var4, gnn_var, draw=red, fill=red]          {$$};

			\node (con12)     [right of = var1 , xshift=-.5em,gnn_con, draw=red, fill=red]          {$$};
			\node (con13)     [below of = var1 , yshift=2em, xshift=-2em, gnn_con]          {$$};
			\node (con35)     [below of = var3 , yshift=2em, xshift=-2em, gnn_con]          {$$};
			\node (con234)     [below of = con12 ,yshift=1.5em, gnn_con]          {$$};

			\node (con546)     [below of = con234 , yshift=1em, gnn_con]          {$$};

			\node (con56)     [right of = var5 , gnn_con, color=red]          {$$};

			\draw[-,color=red]             (con12) -- (var1);
			\draw[-,color=red]             (con12) -- (var2);

			\draw[-]             (con234) -- (var3);
			\draw[-]             (con234) -- (var4);
			\draw[-]             (con234) -- (var2);

			\draw[-]             (con546) -- (var5);
			\draw[-]             (con546) -- (var4);
			\draw[-]             (con546) -- (var6);

			\draw[-]             (con13) -- (var3);
			\draw[-]             (con13) -- (var1);

			\draw[-]             (con35) -- (var3);
			\draw[-]             (con35) -- (var5);

			\draw[-,color=red]             (con56) -- (var5);
			\draw[-,color=red]             (con56) -- (var6);

		\end{tikzpicture}

		\subcaption{Infeasible subgraphs.}
	\end{subfigure}
	\caption{Overview of our approach to detecting minimal infeasible subgraphs in a Factored-NLP. \textit{(a)}
		The input of the model is the graph representation of the Factored-NLP.
		Circles represent variables, and squares represent constraints.
		\textit{(b)}
		We perform several iterations of neural message passing using the structure of the Factored-NLP.
		\textit{(c)}
		The network outputs the probability that a variable belongs to a minimal infeasible subgraph.
		\textit{(d)}
		We extract several minimal infeasible subgraphs using a connected component analysis.
	}
	\label{fig:overview}
\end{figure}

\section{Related Work}

\paragraph{Minimal infeasible subsets of constraints}

In the discrete SAT and CSP literature, a minimal infeasible subset of constraints (also called a Minimal Unsatisfiable Subset of Constraints or a Minimal Unsatisfiable Core) is usually computed by solving a sequence of SAT and MAX-SAT problems \cite{liffiton2008algorithms,marques2021conflict, hemery2006extracting}.

In continuous domains, a minimal infeasible subset can be found by solving a sequence of feasibility problems, adding and removing constraints, with linear complexity in the number of constraints \cite{amaldi1999some}.
This search can be accelerated with a divide and conquer strategy, with logarithmic complexity \cite{junker2004preferred}.
In convex and nonlinear optimization, we can find approximate minimal subsets by solving one optimization problem with slack variables \cite{shoukry2018smc}.

In contrast, our method uses learning to directly predict minimal infeasible subsets of variables and constraints and can be combined with these previous approaches to reduce computational time.

\paragraph{Graph Neural Networks in combinatorial optimization}

We use Graph Neural Networks (GNN) \cite{kipf2016semi, battaglia2018relational, ma_tang_2021} for learning in graph-structured data.
Different message passing and convolutions have been proposed, e.g., \cite{gilmer2017neural,velivckovic2017graph}.
Our architecture, targeted toward inference in factored nonlinear programs, is inspired by previous works that approximate belief propagation in factor graphs \cite{zhang2020factor, garcia2020neural, kuck2020belief}.

Recently, GNN models have been applied to solve NP-hard problems \cite{schuetz2021combinatorial}, Boolean Satisfaction \cite{selsam2018learning}, Max cut \cite{yao2019experimental}, constraint satisfaction \cite{toenshoff2021graph}, and discrete planning \cite{shen2020learning,rivlin2020generalized,DBLP:conf/socs/NirSK21}.
Compared to state-of-the-art solvers, learned models achieve competitive solution times and scalability but are outperformed in reliability and accuracy.
To our knowledge, this is the first work to use a GNN model to predict minimal infeasible subsets of constraints in a continuous domain.

\paragraph{Graph Neural Networks in manipulation planning}

In manipulation planning, Graph Neural Networks
are a popular architecture to represent the relations between movable objects because they provide a strong relational bias and a natural generalization to include additional objects in the scene.

For example, they have been used as problem encodings to learn policies for robotic assembly \cite{pmlr-v164-funk22a,Ghasemipour2022blocks} and manipulation planning \cite{li2020towards}, to learn object importance and guide task and motion planning \cite{silver2021planning}, and to learn dynamical models and interactions between objects \cite{driess2022learning}, \cite{paus2020predicting}.
Previous works often use task-specific, object-centric representations, where the vertices of the graph represent the objects, and the task is encoded in the initial feature vector of each variable.
Alternatively, our model performs message passing using the structure of the nonlinear program of the manipulation sequence, achieving better generalization to different task plans that fulfill different goals.

\section{Formulation}

\subsection{Minimal Infeasible Subgraph in a Factored-NLP}

Given an infeasible or over-constrained Factored-NLP \( G=(X_G \cup \Phi_G,E_G) \) with variables \( X_G \) and constraints \( \Phi_G \) (refer to \cref{eq:factored-nlp,eq:factpred-nlp-graph}), our intention is to identify a minimal infeasible subgraph, i.e., a subset of variables and constraints that are jointly infeasible and cannot be reduced further.

To define it formally, a minimal infeasible subgraph \( M = (X_M \cup \Phi_M, E_M) \) of a Factored-NLP \( G=(X_G \cup \Phi_G,E_G) \), is a subset of variables \( X_M \subseteq X_G \) and constraints \( \Phi_M \subseteq \Phi_G \) that is infeasible; yet, any proper subset of it is feasible:
\begin{equation}
	M \subseteq G, ~\mathcal{F}(M)=0, ~ \mathcal{F}(M')= 1, ~ \forall M' \subset M,
\end{equation}
where \(\mathcal{F}(M)\) denotes the feasibility of the Factored-NLP (see \cref{eq:fac:feas}), holding the value \( 1 \) if it is feasible and \( 0 \) otherwise.

In this chapter, we consider only minimal subgraphs in the form of \textit{variable-induced} subgraphs because they enable a more compact representation.
Given a graph \( G \) and a subset of variables \( X' \subseteq X_G \), a \emph{variable-induced} subgraph \( M = G[X'] = (X' \cup \Phi', E') \), where \( \Phi' = \{ \phi \in \Phi_G \mid \text{Neigh}_G(\phi) \subseteq X' \} \), is the subgraph spanned by the variables \( X' \).
Intuitively, \( G[X'] \) contains the variables \( X' \) and all the constraints that can be evaluated with these variables.
Our approach can be adapted to predict general subgraphs if required, by modifying the proposed variable classification to constraint classification in \cref{sec:asnodeclassif}.

A Factored-NLP can contain multiple infeasible subgraphs, and a variable \( x_i \in X_G \) can belong to multiple infeasible subgraphs.
Recall that a minimal infeasible subgraph is connected, and a supergraph \( \tilde{M} \supseteq M \) of an infeasible subgraph \( M \) is also infeasible.

\subsection{Minimal Infeasible Subgraph as Variable Classification}
\label{sec:asnodeclassif}

Let \( \Omega_G = \{ M_r ~|~ M_r \subseteq G \text{~minimal infeasible} \} \) be the set of minimal infeasible subgraphs of a Factored-NLP \( G \).
Instead of learning the mapping \( \omega : G \mapsto \Omega_G \) directly, we propose to learn an over-approximation \( \tilde{\omega} \) that can efficiently be framed as binary variable classification.

We first introduce the \textit{variable-feasibility} function \( \psi(x_i;G) \) that assigns a label \( y_i \in \{0,1\} \) to each variable \( x_i \in X_G \):
\( y_i=0 \) if \( x_i \) belongs to some infeasible subgraph and \( y_i=1 \) otherwise.
Given such a labeled graph, we can recover the infeasible subgraphs approximately by computing the connected components on the subgraph induced by the variables labeled \( 0 \), i.e., \( G\left[\{ x_i \in X_G \mid y_i = 0 \}\right] \).
Thus, we define the approximate mapping as
\begin{equation}
	\tilde{\omega}(G) = \text{CCA}\left(G\left[\{ x_i \in X_G \mid y_i = 0 \}\right]\right),
\end{equation}
where \( \text{CCA} \) denotes a connected component analysis.

The approximate mapping \( \tilde{\omega} \) is exact, i.e., \( \tilde{\omega} = \omega \), if the infeasible subgraphs are disconnected.
If two or more infeasible subgraphs are connected, it returns their union as a minimal infeasible subgraph, i.e., \( \cup \tilde{\omega} = \cup \omega \), which over-approximates the size of the original minimal infeasible subgraph.
Our neural model will be trained to emulate the labels of the \textit{variable-feasibility} function \( \psi \).

We emphasize that learning the approximate function \( \tilde{\omega} \) is not a real limitation.
First, because the prediction will be integrated into an algorithm that can further reduce the size of the infeasible subgraph, if not already minimal, as shown later in \cref{sec:algorithm}.
Second, because finding small infeasible subgraphs, as opposed to strictly minimal, is already useful in many applications.
Finally, note that \( \omega \) could be transformed into a multiclass variable classification \( f(x_i;G) = r_i \subseteq \{1,\ldots,R\} \), where each variable may belong to multiple classes -- but this would require a complex and potentially intractable permutation invariant formulation.

\subsection{GNN with the Structure of a Factored-NLP}

A fundamental idea of our method is to use the structure of the Factored-NLP for message passing with Graph Neural Networks (GNN) to learn the \emph{variable-feasibility} \( \psi(x_i;G) \).

In neural message passing, each variable vertex \( x_i \in X_G \) has a feature vector \( z_i \in \mathbb{R}^{n_z} \) that is updated with the incoming messages of the neighboring constraints.
Each \( z_i \) is initialized with \( z_i^0 \) to encode semantic and continuous information of the variable \( x_i \) (an example of how to initialize the features in manipulation planning is shown in \cref{sec:encoding}).
The update rule follows a two-step process: first, each constraint computes and sends back a message to each neighboring variable, which depends on the current features of all the neighboring variables.
Second, each variable aggregates the information of the incoming messages from the constraints and updates its feature vector,
\begin{subequations}\label{eq:message_pass}
	\begin{align}
		[ \oplus \mu_{a \to i} ]_{i \in N(a)} = \text{Message}_{a}( [ \oplus z_{i} ]_{i \in N(a)}), \\
		z_i' = \text{Update} ( \text{AGG}_{a \in N(i)} ~ \mu_{a \to i} , z_i ),
	\end{align}
\end{subequations}
where \( \mu_{a\to i } \in \mathbb{R}^{n_{\mu}} \) is the message from constraint \( a \) to variable \( i \).
The operator \( [\oplus \sbullet ]_{i} \) denotes concatenation.
\( N(a) = \text{Neigh}_G(\phi_a) \) is the ordered set of variables connected to the constraint \( \phi_a \).
Conversely, \( N(i) = \text{Neigh}_G(x_i) \) is the set of constraints connected to variable \( x_i \).
AGG is an aggregation function, e.g., max, sum, mean, or weighted average.
We use max (element-wise) in our implementation.
A graphical representation is shown in \cref{fig:message_pass}.

\texttt{Update} and \( \texttt{Message}_a \) are small MLPs (Multilayer Perceptron) with learnable parameters.
As the nonlinear constraints in the Factored-NLP are not permutation invariant or symmetric, the features \( z_i \) must be concatenated in a predefined order \( N(a) \) when evaluating \( \texttt{Message}_a \).
The function \texttt{Update} is shared by all vertices (which generalizes to Factored-NLPs with additional variables).
The function \( \texttt{Message}_a \) is shared between different constraints of the Factored-NLP that represent the same mathematical function, i.e., \( \texttt{Message}_a = \texttt{Message}_b \) iff \( \phi_a(x) = \phi_b(x) \forall x \) (which generalizes to Factored-NLPs with additional constraints).
For example, in manipulation planning, all constraints that model collisions between objects will share the same \( \texttt{Message} \) MLP.

The message passing update \eqref{eq:message_pass} is performed \( K \) times, starting from the initial feature vectors \( z_i^0 \).
The feature vectors after \( K \) iterations are used for feasibility prediction with a small MLP classifier,
\begin{equation}
	\hat{y}_i = \text{Classifier}(z_i^K)\,.
	\label{eq:node_classif}
\end{equation}
The number of iterations is a hyperparameter of the model, and the weights of the MLPs may differ between message passing iterations \( k=1,\ldots,K \) (e.g., \( \texttt{Message}_a \) at \( k=1 \), denoted with
\( \texttt{Message}_a^1 \), is different from \( \texttt{Message}_a^k \) at iteration \( k \)).
The parameters of the classifier, message, and update networks
are trained end-to-end to minimize the weighted binary cross-entropy loss between the prediction \( \hat{y}_i \) and the \textit{variable-feasibility} labels \( y_i \).

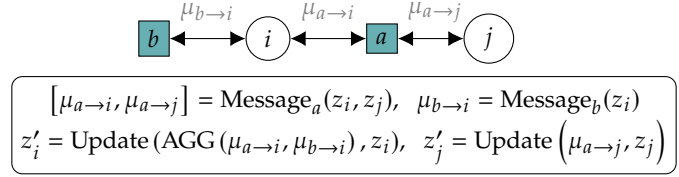
\begin{figure}
	\centering
	\begin{tikzpicture}[node distance=1.5cm,
			every node/.style={fill=white, font=\small}, align=center]
		\node (vari)     [gnn_var]          {$i$};
		\node (varj)     [right of=vari, xshift=4em,  gnn_var]          {$j$};

		\node (cona)     [right of = vari , gnn_con2]          {$a$};
		\node (conb)     [left of = vari,  gnn_con2]          {$b$};

		\draw[<->]             (cona) -- (vari);
		\draw[<->]              (cona) -- (varj) ;

		\draw[<->]             (conb) -- (vari);

		\node (mu1)     [right of = vari , xshift = -2em, yshift=1em,  sec]          {$\mu_{a \to i}$};
		\node (mu2)     [right of = vari , xshift = +2em, yshift=1em, sec]          {$\mu_{a\to j}$};

		\node (mu3)     [left of = vari , xshift = +2em, yshift=1em, sec]          {$\mu_{b\to i}$};

		\node (planner)     [process,below of=vari,yshift=1em, xshift=1cm]          {
			$\left[ \mu_{a \to i} , \mu_{a \to j } \right] = \text{Message}_a(  z_i , z_j  )$,~~
			$\mu_{b \to i}  = \text{Message}_b(  z_i   ) $  \\
			$z_i' = \text{Update}\left( \text{AGG}\left( \mu_{a\to i } , \mu_{b \to i } \right) , z_i \right) $,~~
			$z_j' = \text{Update}\left( \mu_{ a \to j}  , z_j \right) $
		};

	\end{tikzpicture}

	\caption{Message passing in a Factored-NLP with two variables ($i,j$) and two constraints $(a,b$).}
	\label{fig:message_pass}
\end{figure}

\begin{center}
	\begin{minipage}{.8\linewidth}
		\begin{algorithm}[H]
			\caption{Conflict Extraction with a Graph Neural Network.}
			\label{alg:overview}
			\begin{algorithmic}[1]
				\State \textbf{Input:}
        \State Factored-NLP $G=(X_G\cup \Phi_G, E_G)$ \Comment{{\small \color{gray} Infeasible factored nonlinear program}}
        \State \texttt{GNN\_Model} = \{$\texttt{Message}_a^k, \texttt{Update}^k, \texttt{Classifier}$\}\quad \Comment{{\small \color{gray} Learned GNN model}}
        \State \texttt{Solve} \Comment{{\small \color{gray} Algorithm provided by the user}}
      \State \texttt{Reduce} \Comment{{\small \color{gray} Algorithm provided by the user}}
    \State \textbf{Output:} $M \subseteq G$ \quad \Comment{{\small \color{gray} Minimal infeasible subgraph}}
				\State $\{\hat{y}_i\} = \texttt{GNN\_Model}(G)$
				\State $\delta \leftarrow 0.5, ~ \delta_r \leftarrow 1.2$
				\While{ \textbf{True}}
        \State $ X_{\delta} = \{ x_i \in X_G \mid \hat{y}_i < \delta\} \quad$ \Comment{{\small \color{gray} Candidate infeasible variables}}
        \For{$g \in \texttt{CCA}(G[X_{\delta}])$} \Comment{{\small \color{gray} Connected component analysis}}
				\State $ \texttt{feasible} \leftarrow \texttt{Solve}(g) $ \; \label{lbl:solve}
				\If{\textup{not \texttt{feasible}}}
				\State $ M \leftarrow \texttt{Reduce}(g) $ \;
				\State \textbf{Return} $M$
				\EndIf
				\EndFor

				\State $\delta \leftarrow \delta \times \delta_r \;$
				\EndWhile

			\end{algorithmic}

		\end{algorithm}
	\end{minipage}
\end{center}

\subsection{Algorithm to Detect Minimal Infeasible Subgraphs}
\label{sec:algorithm}

To account for the approximation in our variable classification formulation and small prediction errors, we integrate the learned classifier into a classical algorithm to detect minimal infeasible subgraphs.

We assume the user provides the $\texttt{Solve}$ and $\texttt{Reduce}$ routines, which check if a Factored-NLP is feasible and compute a minimal infeasible subset of constraints, respectively.
$\texttt{Reduce}$ is an expensive routine, as it involves solving several nonlinear programs by adding and removing constraints.
The number of evaluated NLPs—and therefore the computation time—depends on the size of the input graph: linear with the total number of variables according to \cite{amaldi1999some}, or logarithmic according to \cite{junker2004preferred}.

Our algorithm is outlined in \cref{alg:overview}.
The GNN model is evaluated once on the input Factored-NLP and computes feasibility scores, $\hat{y_i}$, for each variable.
By iteratively increasing the classification threshold, $\delta$, we select the candidate infeasible variables, $X_{\delta}$, with a score lower than the current threshold, $\delta$.
We then generate candidate infeasible subgraphs with connected component analysis on the \textit{variable-induced} subgraph, $G[X_{\delta}]$,
which are evaluated with $\texttt{Solve}$.
Once an infeasible subgraph is found, we use $\texttt{Reduce}$ to obtain a minimal infeasible subgraph.

A traditional conflict extraction approach would run $\texttt{Solve}$ and $\texttt{Reduce}$ directly on the input Factored-NLP.
The acceleration in our algorithm, therefore, comes from evaluating these routines on small (ideally minimal) candidates.
\cref{alg:overview} can be extended to compute multiple minimal infeasible subgraphs by omitting the return statement and adding a special check to avoid solving a supergraph of an infeasible subgraph identified in a previous iteration.

\section{Factored-NLP for Manipulation Planning}

\begin{figure}[t]
	\centering
	\includegraphics[width=.13\textwidth]{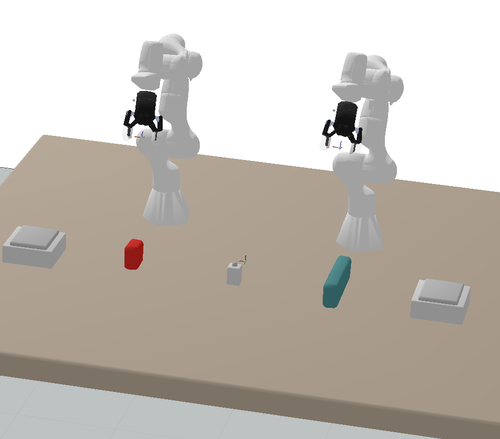}
	\includegraphics[width=.13\textwidth]{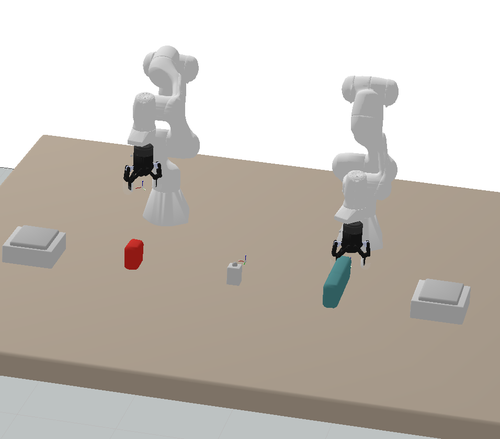}
	\includegraphics[width=.13\textwidth]{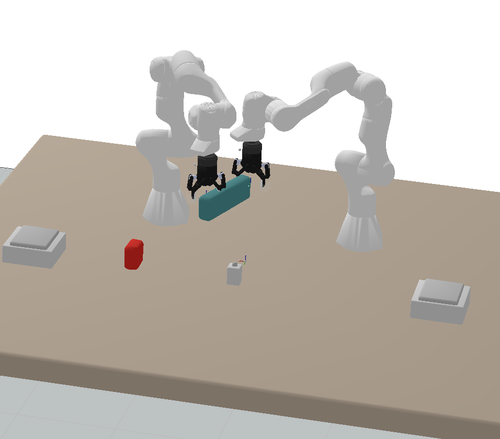}
	\includegraphics[width=.13\textwidth]{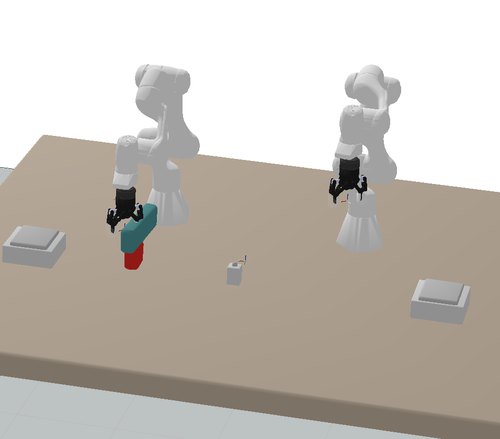} \\
	\begin{tikzpicture}[scale=0.7,every node/.style={transform shape}]

		\node[latent] (a0) {$a_0$} ;

		\node[latent,below=.5 of a0 ] (b0) {$b_0$} ;

		\node[latent, ,below=.5 of b0] (A0) {$A_0$} ;

		\node[latent,below=.5 of A0 ] (B0) {$B_0$} ;

		\node[latent,below=.5 of B0] (q0) {$q_0$} ;
		\node[latent,below=.5 of q0] (w0) {$w_0$} ;

		\node[latent,right=2 of a0 ] (a1) {$a_1$} ;
		\node[latent,below=.5 of a1 ] (b1) {$b_1$} ;

		\node[latent, ,below=.5 of b1] (A1) {$A_1$} ;

		\node[latent,below=.5 of A1 ] (B1) {$B_1$} ;

		\node[latent,below=.5 of B1] (q1) {$q_1$} ;
		\node[latent,below=.5 of q1] (w1) {$w_1$} ;

		\node[latent,right=2 of a1 ] (a2) {$a_2$} ;
		\node[latent,below=.5 of a2 ] (b2) {$b_2$} ;
		\node[latent,below=.5 of b2 ] (A2) {$A_2$} ;
		\node[latent,below=.5 of A2 ] (B2) {$B_2$} ;
		\node[latent,below=.5 of B2] (q2) {$q_2$} ;
		\node[latent,below=.5 of q2] (w2) {$w_2$} ;

		\node[latent,right=2 of a2 ] (a3) {$a_3$} ;
		\node[latent,below=.5 of a3 ] (b3) {$b_3$} ;

		\node[latent,below=.5 of b3 ] (A3) {$A_3$} ;
		\node[latent,below=.5 of A3 ] (B3) {$B_3$} ;

		\node[latent,below=.5 of B3] (q3) {$q_3$} ;
		\node[latent,below=.5 of q3] (w3) {$w_3$} ;

		\factor[left=1 of A0, yshift=0.0cm] {trajp0} { Posediff } {A0, a0} {};
		\factor[left=1 of B0, yshift=0.0cm] {trajp0} { Posediff } {B0, b0} {};
		\factor[left=1 of w0, yshift=0.5cm] {trajp0} { Ref } {w0} {};
		\factor[left=1 of q0, yshift=0.5cm] {trajp0} { Ref } {q0} {};
		\factor[left=1 of a0, yshift=-0.5cm] {trajp0} { Ref } {a0} {};
		\factor[above=.3 of a1,xshift=-.5cm] {trajp0} { left:Ref } {a1} {};
		\factor[above=.3 of a2, xshift=-.5cm] {trajp0} { left:Ref } {a2} {};

		\factor[left=1 of A1, yshift=-0.5cm] {trajp0} { Posediff } {A1, a1} {};

		\factor[left=1 of B1, yshift=0.0cm] {trajp0} { Posediff } {B1, b1, q1} {};

		\factor[left=1 of b0, yshift=0.0cm] {trajp0} { Ref } {b0} {};
		\factor[left=1 of b1, yshift=0.5cm] {trajp0} { below:Grasp } {b1} {};
		\factor[left=1 of b2, yshift=0.5cm] {trajp0} { Grasp } {b2} {};
		\factor[right=1 of b3, yshift=0.5cm] {trajp0} { Pose } {b3} {};
		\factor[above=.3 of a3,xshift=-.5cm] {trajp0} {left:Ref} {a3} {};

		\factor[left=1 of b1, yshift=-.5cm] {trajp0} {below:Kin} {b0, q1,b1} {};

		\factor[right=1.2 of b1, yshift=-.3cm] {trajp0} {Kin} {b1, q2,b2,w2} {};

		\factor[left=.7 of b3, yshift=-.3cm] {trajp0} {Kin} {w3,b3,A3,b2} {};

		\factor[left=1 of a3] {trajp0} {Equal} {a2,a3} {};

		\factor[left=1 of a1] {trajp0} {Equal} { a0, a1} {};

		\factor[left=1 of a2] {trajp0} {Equal} { a1, a2} {};

		\factor[left=1 of A2, yshift=-0.5cm] {trajp0} { Posediff } {A2, a2} {};
		\factor[left=1 of A3, yshift=-0.5cm] {trajp0} { Posediff } {A3, a3} {};
		\factor[left=1 of B2, yshift=0.0cm] {trajp0} { Posediff } {B2, b2, w2} {};
		\factor[left=1 of B3, yshift=0.0cm] {trajp0} { Posediff } {B3, b3, A3} {};

		\factor[right=.3 of A0, yshift=-0.5cm,color=brown] {} {} {A0,B0} {};
		\factor[right=.3 of B0, yshift=-0.5cm,color=brown] {} {} {B0,q0} {};
		\factor[right=.3 of q0, yshift=-0.1cm,color=brown] {} {} {A0,q0} {};

		\factor[right=.3 of q0, yshift=-0.5cm,color=brown] {} {} {q0,w0} {};
		\factor[left=.9 of w0, yshift=-0.5cm,color=brown] {} {} {A0,w0} {};
		\factor[left=.2 of w0, yshift=+1.0cm,color=brown] {} {} {w0,B0} {};

		\factor[right=.3 of A1, yshift=-0.5cm,color=brown] {} {} {A1,B1} {};
		\factor[right=.3 of B1, yshift=-0.5cm,color=brown] {} {} {B1,q1} {};
		\factor[right=.3 of q1, yshift=-0.1cm,color=brown] {} {} {A1,q1} {};

		\factor[right=.3 of q1, yshift=-0.5cm,color=brown] {} {} {q1,w1} {};
		\factor[left=.9 of w1, yshift=-0.5cm,color=brown] {} {} {A1,w1} {};
		\factor[left=.2 of w1, yshift=+1cm,color=brown] {} {} {w1,B1} {};

		\factor[right=.3 of A2, yshift=-0.5cm,color=brown] {} {} {A2,B2} {};
		\factor[right=.3 of B2, yshift=-0.5cm,color=brown] {} {} {B2,q2} {};
		\factor[right=.3 of q2, yshift=-0.1cm,color=brown] {} {} {A2,q2} {};

		\factor[right=.3 of q2, yshift=-0.5cm,color=brown] {} {} {q2,w2} {};
		\factor[left=.9 of w2, yshift=-0.5cm,color=brown] {} {} {A2,w2} {};
		\factor[left=.3 of w2, yshift=+1cm,color=brown] {} {} {w2,B2} {};

		\factor[right=.3 of A3, yshift=-0.5cm,color=brown] {} {} {A3,B3} {};
		\factor[right=.3 of q3, yshift=-0.1cm,color=brown] {} {} {A3,q3} {};

		\factor[right=.3 of q3, yshift=-0.5cm,color=brown] {} {} {q3,w3} {};
		\factor[left=.9 of w3, yshift=-0.5cm,color=brown] {} {} {A3,w3} {};
		\factor[left=.2 of w3, yshift=+1cm,color=brown] {} {} {w3,B3} {};
		\factor[right=.3 of B3, yshift=-0.5cm,color=brown] {} {} {B3,q3} {};

	\end{tikzpicture}
	\caption{Factored-NLP for the task plan $\langle$\textit{pick object B with robot Q from B\_init}, \textit{pick object B with robot W from robot Q}, \textit{place object B with robot W on object A}$\rangle$.
		Circles represent variables, and squares represent constraints.
		Each column symbolizes a keyframe of the manipulation sequence.
		$q,w$ are the configurations of the two robots; $A,B$ are the absolute positions of the two objects, and $a,b$ are the relative poses of these objects with respect to their parent in the kinematic tree (e.g., the table, a robot, or another object as indicated by the task plan).
		See the main text and \cref{sec:bg:structure} for an explanation of variables and constraints.
	}
	\label{fig:cg_example2}
\end{figure}

\subsection{Structure of the Factored-NLP}

As an application within TAMP, we use our model to predict minimal infeasibility when computing the keyframe configurations that fulfill a high-level task plan.

When the optimization problem is infeasible, finding a minimal subset of infeasible constraints is crucial for understanding the cause of the infeasibility and providing valuable feedback to the task planner, as demonstrated in our conflict-based TAMP planner (\cref{ch:bid}).

The Factored-NLPs used in this chapter are generated using the formulation Planning with Nonlinear Transition Constraints (PNTC,
\cref{sec:planner:formulation}) presented in \cref{ch:bid}, which ensures a consistent local and repeatable structure to enable generalization across different nonlinear programs.

However, we employ a different, yet equivalent, formulation of the continuous space within PNTC, using two continuous variables for each movable object: one indicating the absolute position and another for the relative, along with additional constraints.

This makes the Factored-NLPs more redundant, as the absolute positions of the objects can be deduced from their relative positions and the positions of the parent frame.
Nevertheless, now the Factored-NLP can be formulated using a smaller number of distinct types of nonlinear constraints.
Since each type of constraint corresponds to a unique $\texttt{Message}$ network, this formulation becomes vital for generalization in scenes with more objects.

Thus, Factored-NLPs in this chapter contain three types of variables: robot configurations, object absolute positions, and object relative positions with respect to the parent frame.
Beyond the nonlinear constraints highlighted throughout the thesis, we now incorporate a new constraint type that ensures the geometric consistency between relative and absolute poses of objects (\textit{Posediff}).

In 
\cref{fig:cg_example2}, we display the Factored-NLP corresponding to the sequence
$\langle$\textit{pick object B with robot Q from B\_init}, \textit{pick object B with robot W from robot Q}, \textit{place object B with robot W on object A}$\rangle$,
in an environment with two robots, $Q$ and $W$, and two objects, $A$ and $B$.
For comparison, the Factored-NLP using the original PNTC formulation for the same task plan is presented in \cref{fig:graph-in-planner-q}.

Lastly, it is worth noting that in this chapter, we do not consider trajectory variables (e.g., $\tau_q$ in \cref{fig:graph-in-planner-q} in \cref{ch:bid}) because the keyframe variables already provide very informative information for evaluating geometric infeasibility.

\subsection{Encoding of the Problem in the Initial Feature Vectors}\label{sec:encoding}

The structure of the Factored-NLP encodes the number of objects, robots, and the task plan.
The geometric description of the environment is encoded locally in the initial feature vector of each variable $z_i^0$.
Specifically, the initial feature vector includes the information of unary constraints (i.e., constraints evaluated only on a single variable, which are then not added to the message passing architecture), additional semantic class information (for example, whether the variable represents an object or a robot, but without including a notion of a time index or entity), and geometric information that is relevant for the constraints (for example, the size of the objects).
The dimension of $z_i^0$ is fixed, and shorter feature vectors are padded with zeros.

For example, suppose that the Factored-NLP of 
\cref{fig:cg_example2} is evaluated in a scene where robot $Q$ is at pose { \small $T_Q = [0.32 , \,0.41 , \,0.56, \,0.707 , \, 0 ,\, 0 ,\, 0.707 ] $ }, the start position of object $A$ is { \small $T_A = [ 0.35, \, 0.4, \, 0.5 , \, 0.707 , \,  0 , \, 0 , \, 0.707  ]$ }, and object $A$ is a box of size { \small $S_A = [0.2 , \, 0.3 , \, 0.2 ]$}.
Then the $z^0$ of variables $\{q_0,q_1,q_2,q_3\}$ is { \small $[ 1,\, 0,\,0,\,0,\,0,\,0 , T_Q ]$ }, where the first six components indicate that it is a robot, and $T_Q$ is the base pose.
The $z^0$ of $\{a_0,a_1,a_2,a_3\}$ is {\small $ [0, \, 1,\, 0,\, 0, \, 0, \,0 , \, T_A ]$ }, where the first components indicate that it is a relative pose with respect to the reference position $T_A$.
The $z^0$ of $\{A_0,A_1,A_2,A_3\}$ is { \small
		$[ 0, \, 0, \, 1, \, 0,\, 0, \, 0 , \, S_A ,\, 0 ,\, 0 , \, 0, \, 0 ]$ } to indicate that it is an absolute position of an object of size $S_A$.

\section{Experimental Results}

\begin{figure}
	\setlength{\tabcolsep}{0.2em} %
	\centering
	\begin{tabular}{ccc}
		\includegraphics[width=.25\columnwidth]{./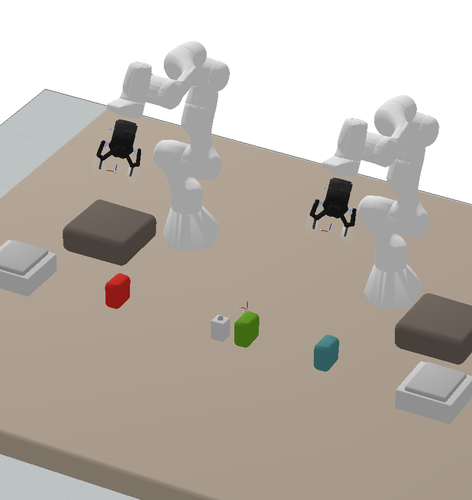} &
		\includegraphics[width=.25\columnwidth]{./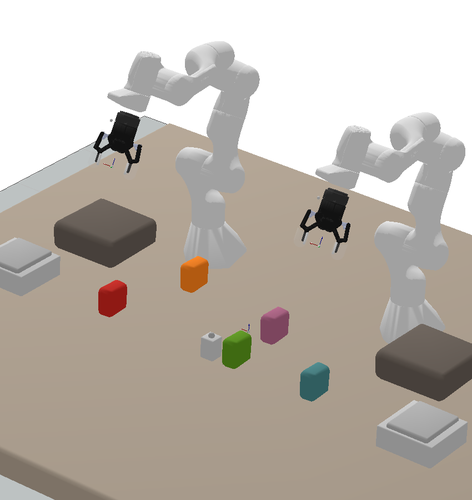}  &
		\includegraphics[width=.25\columnwidth]{./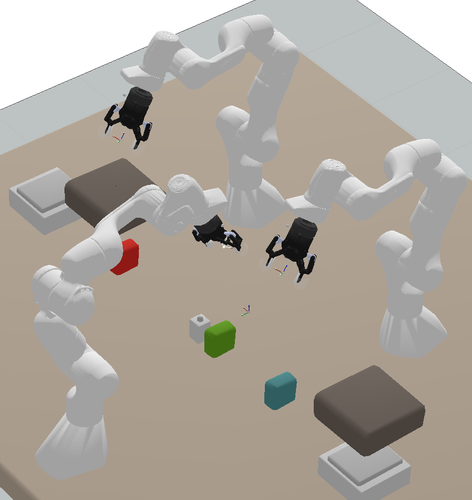}
	\end{tabular}
	\caption{TAMP scenarios.
		Obstacles are brown, blocks are colorful and tables are white.
		\textit{Left}: Training Data, \textit{Middle}: + Blocks dataset, \textit{Right:} + Robots dataset.\vspace{.5cm}}
	\label{fig:view_data}
\end{figure}

\begin{figure}
	\centering
	\includegraphics[width=.19\textwidth]{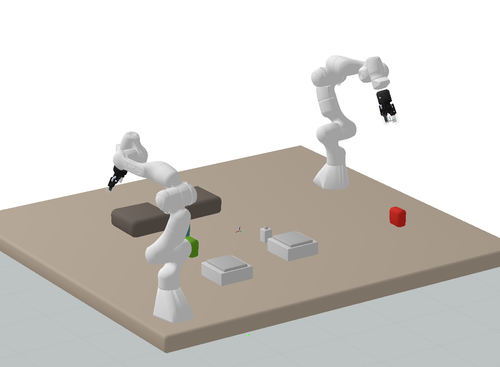}
	\includegraphics[width=.19\textwidth]{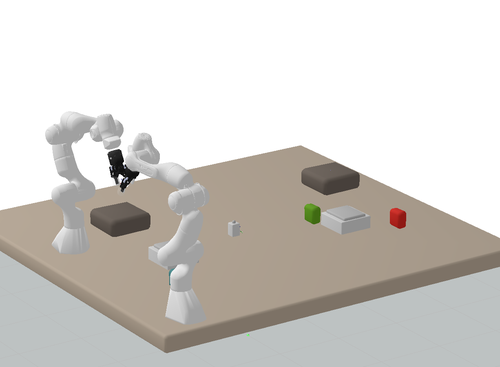}
	\includegraphics[width=.19\textwidth]{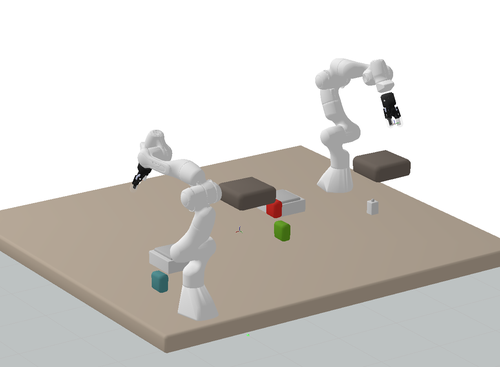}
	\includegraphics[width=.19\textwidth]{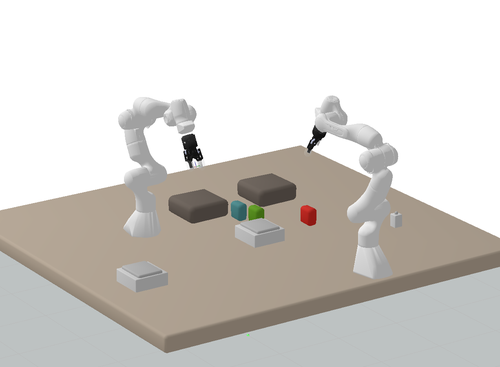}
	\includegraphics[width=.19\textwidth]{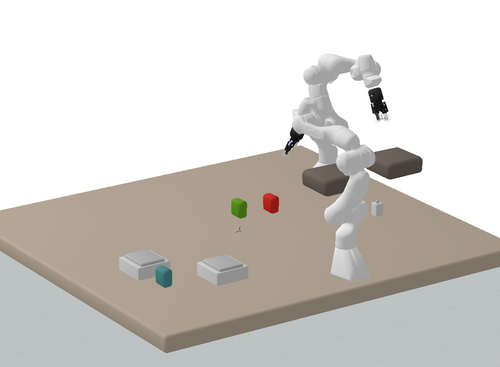}
	\caption{Training Data -- Different Scenes.
		The positions of robots, obstacles, and blocks are randomized.
	}
\end{figure}

\paragraph{Scenario}

We evaluate our model in robotic sequential manipulation.
The objective is to predict minimal infeasibility of task plans that build towers and rearrange blocks into different configurations, in scenarios containing a varying number of blocks, robots, and movable obstacles, in different positions.
See 
\cref{fig:view_data,fig:view_skeleton}.
The following settings are used to generate the training dataset (\SI{4800}{Factored-NLPs}):
\begin{enumerate}
	\item[--] Five movable objects: 3 blocks and 2 obstacles.
		Both types of objects have collision constraints, but obstacles are larger and usually block grasps or placements.
	\item[--] Two robots: 7-DOF Panda robot arms, which can pick and place objects using a top grasp.
	\item[--] Different geometric scenes: the positions of the objects, robots, and tables are randomized.
	\item[--] Manipulation sequences of lengths 4 to 7 (length of the high-level task plan).
\end{enumerate}

To evaluate the generalization capabilities of the learned model, we consider three additional datasets:
\begin{enumerate}
	\item[--] \texttt{+Robots}: we add an additional robot.
	\item[--] \texttt{+Blocks}: we add two additional blocks.
	\item[--] \texttt{+Actions}: this dataset contains Factored-NLPs from longer task plans (lengths of 8 to 10).
\end{enumerate}

\subsection{Data Generation}

For training the GNN model, we need a set of Factored-NLPs with labeled variables to indicate whether they belong to a minimal infeasible subset.
First, we generate a set of interesting task plans.
Second, we evaluate the manipulation sequences on random geometric scenes.
To compute the feasibility labels, we adapt the conflict extraction algorithm of \cref{ch:bid} to find up to 10 minimal infeasible subgraphs.

\subsection{Accuracy of the GNN Classifier}

We compare our model (\emph{GNN}) against a Multilayer Perceptron (\emph{MLP}) and a sequential model (\emph{MLP-SEQ}), trained with the same dataset.

The \emph{MLP} computes $\hat{y}_i = \text{MLP}( \tilde{z}^0_i, A, C)$, where $\tilde{z}_i^0 = [ z_i^0 , t_i , e_i]$ is the feature vector of the variable we want to classify.
It concatenates the feature vector $z_i^0$ used in the \emph{GNN}, with the time index $t_i$ of the variable, and a parametrization that defines the \textit{name} of the variable $e_i$ (for instance, we represent an object with its starting pose).
Note that $t_i$ and $e_i$ are not used in the \emph{GNN} model because this information is encoded in the structure of the graph.
$A$ is the encoding of the whole task plan, using small vectors to encode each token, e.g., \{\qquote{pick}, \qquote{block1}, \qquote{l\_gripper}, \qquote{table}\}.
To account for sequences of different lengths, we fix a maximum length and add padding.
$C$ is the scene parametrization and contains the position and shapes of all possible objects and robots.

We also evaluate \emph{MLP-SEQ}, a sequential model $\text{MLP}( \tilde{z}^0_i, \text{SEQ} (A), C)$ that encodes the action sequence with a recurrent network (Gated Recurrent Units).

We first evaluate the accuracy of the models to predict if a variable belongs to a minimal infeasible subset, see \cref{tab-theaccuracy}.
Our \emph{GNN} model outperforms the alternative architectures, both in the original \textit{Train Data} and, especially, in the extension datasets.
Our model maintains a constant $\sim$\SI{95}{\percent} success rate across all datasets, while the performance of \emph{MLP} and \emph{MLP-SEQ} drops to \SI{48}{\percent} and \SI{75}{\percent}, respectively.
We also evaluate the accuracy of our model to predict infeasible subgraphs, using the proposed method that combines variable classification and connected component analysis, with the initial threshold for classification set at $\delta = 0.5$.
Our model outperforms \emph{MLP} and \emph{MLP-SEQ}, finding between \SI{70}{\percent} and \SI{57}{\percent} of the infeasible subgraphs, and \SI{30}{\percent}-\SI{50}{\percent} of the predicted subgraphs are minimal, see \cref{tab-theaccuracygraph}.
Between \SI{34}{\percent}-\SI{48}{\percent} of the predicted infeasible graphs are actually feasible.
As shown later, these levels of accuracy, together with our iterative threshold strategy, result in a strong acceleration.

\begin{table}[t]
	\small
	\caption{Classification accuracy.
		Each pair indicates the accuracy of predicting feasible and infeasible variables.
	}
	\label{tab-theaccuracy}
	\centering
	\renewcommand{\arraystretch}{1.4} %
	\setlength{\tabcolsep}{0.4em} %
	\begin{tabular}{  l c c c c   } \toprule
		               & Train Data                & + Blocks                & + Robots                 & + Actions               \\
		\midrule
		\emph{GNN}     & $(94.7, ~95.4)  $         & $(    96.1, ~95.2  )$   & $(    95.7, ~95.3   )$   & $(    94.6, ~94.1    )$ \\
		\emph{MLP}     & $(    93.0,~82.2       )$ & $(   93.4,~80.8     )$  & $(   93.0,~80.8       )$ & $(   91.0,~48.0    )$   \\
		\emph{MLP-SEQ} & $(   83.5,~88.1    )$     & $(    82.3,~88.8     )$ & $(  82.1,~88.8   )$      & $(   74.0,~75.3      )$ \\
		\bottomrule
	\end{tabular}
\end{table}

\begin{table}
	\small
	\caption{Prediction of infeasible subgraphs.
		Each pair indicates the ratio ``found / total'' (higher is better) and ``minimal / found'' (higher is better).
	}
	\label{tab-theaccuracygraph}
	\centering
	\renewcommand{\arraystretch}{1.4}
	\setlength{\tabcolsep}{0.4em}
	\begin{tabular}{  l c c c c   } \toprule
		               & Train Data        & + Blocks         & + Robots            & + Actions           \\
		\midrule
		\emph{GNN}     & $ (71.2,~54.1)  $ & $ (58.9,~33.3) $ & $  (~70.2,~55.3)  $ & $  (~57.1,~41.9)  $ \\
		\emph{MLP}     & $(58.5,~54.6)$    & $(34.5,~53.2)$   & $(55.2,~37.6)$      & $(22.1,~35.5)$      \\
		\emph{MLP-SEQ} & $(65.7,~26.0)$    & $(28.6,~21.2)$   & $(61.3,~09.5)$      & $(36.3,~11.0)$      \\
		\bottomrule
	\end{tabular}
\end{table}

\emph{MLP}, \emph{MLP-SEQ}, and \emph{GNN} have the same information to make the predictions because the Factored-NLP is a deterministic mapping of the action sequence and the geometric scene.
Although the unstructured \emph{MLP} and \emph{MLP-SEQ} baselines could potentially learn this mapping, our experiments show that the representation does not emerge naturally, confirming that a structured model yields better generalization.

\subsection{Finding Minimal Infeasible Subgraphs}

\begin{table}
\small
\caption{Finding one minimal infeasible subgraph, evaluated on
	100 different Factored-NLPs.
	Each pair indicates the average number of evaluated NLPs (lower is better) and the compute time (lower is better), normalized by the results of \emph{GNN+g1}.
}
\label{tab:infeas_subgraph}
\begin{center}
\renewcommand{\arraystretch}{1.4} %
\setlength{\tabcolsep}{0.4em} %
\begin{tabular}{  l c c c c   } \toprule
	                 & Train Data     & +   Blocks     & + Robots       & + Actions      \\
	\midrule
	\emph{GNN+e}     & $(1.57,~2.25)$ & $(1.44,~2.09)$ & $(1.66,~2.14)$ & $(1.50,~2.19)$ \\
	\emph{GNN+g1}    & $(1,~1)$       & $(1,~1)$       & $(1,~1)$       & $(1,~1)$       \\
	\emph{Oracle}    & $(0.83,~0.97)$ & $(0.62,~0.79)$ & $(0.83,~0.84)$ & $(0.71,~0.86)$ \\
	\emph{Expert}    & $(3.66,~4.32)$ & $(3.13,~5.06)$ & $(4.33,~4.62)$ & $(3.33,~4.56)$ \\
	\emph{General 2} & $(3.50,~64.1)$ &
	$(3.30,~163)$    & $(3.50,~66.5)$ & $(3.83,~128)$                                    \\
	\bottomrule
\end{tabular}
\end{center}
\end{table}

We analyze the time required to find one minimal infeasible subgraph in an infeasible Factored-NLP with the following algorithms:

\begin{itemize}

	\item \textit{Oracle}, which knows beforehand the minimal infeasible subgraph and executes only a single call to $\texttt{Solve}$ and $\texttt{Reduce}$ on this minimal infeasible subgraph.
	      This provides a lower bound on the compute time.

	\item \textit{General \{1,2\}}, which are generic algorithms for conflict extraction: \emph{General 1} uses constraint filtering \cite{amaldi1999some}, and \emph{General 2} uses \textit{QuickXplain} \cite{junker2004preferred}.

	\item \textit{Expert} is the heuristic algorithm for conflict extraction in manipulation planning presented in \cref{ch:bid}.

	      It exploits the temporal structure, domain relaxations, and the convergence of the optimizer to quickly discover the conflicts.

	\item \textit{GNN+\{e,g1\}} combines the prediction of our \emph{GNN} model with either \emph{Expert} or \emph{General 1},
	      which are used as the \texttt{Reduce} routine in \cref{alg:overview}.

\end{itemize}

Results are shown in \cref{tab:infeas_subgraph}.
\emph{GNN+g1} is \num{60}-\num{120}x faster than \emph{General 2} (which is faster than \emph{General 1}).
This highlights the benefits of our approach in domains where we can compute a dataset using \textit{General} offline and train the model to get an order-of-magnitude improvement in new problems.
\emph{GNN+g1} is \num{4}-\num{5}x faster than the \textit{Expert} algorithm and only \num{1.2}x slower than an oracle.
Moreover, the acceleration provided by \textit{GNN} is maintained in all the datasets.
This confirms the good accuracy and generalization of the architecture seen in the classification results.
As a side note,
\textit{Expert} is faster than \textit{General 2} because it solves many small feasible NLPs first until it finds one that is infeasible (which is faster than solving infeasible NLPs).

\begin{figure}

	\setlength{\tabcolsep}{0.2cm} %
	\centering
	\begin{tabular}{ccc}
		\includegraphics[width=.22\columnwidth]{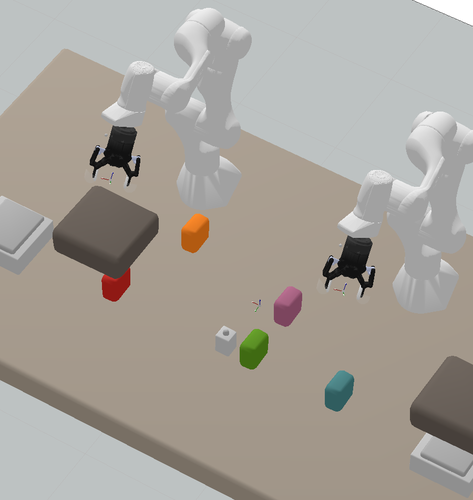} &
		\includegraphics[width=.22\columnwidth]{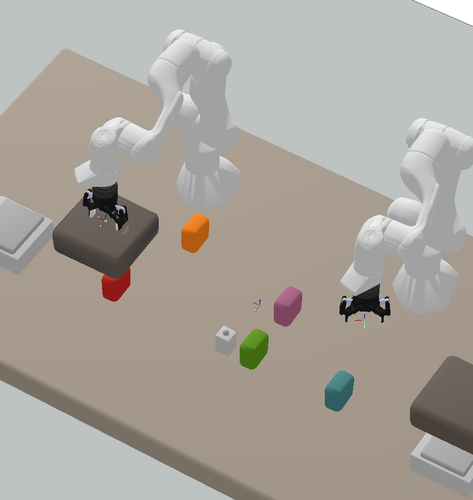} &
		\includegraphics[width=.22\columnwidth]{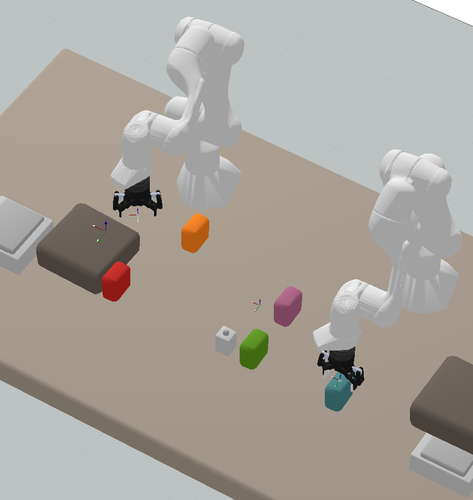}   \\
		\includegraphics[width=.22\columnwidth]{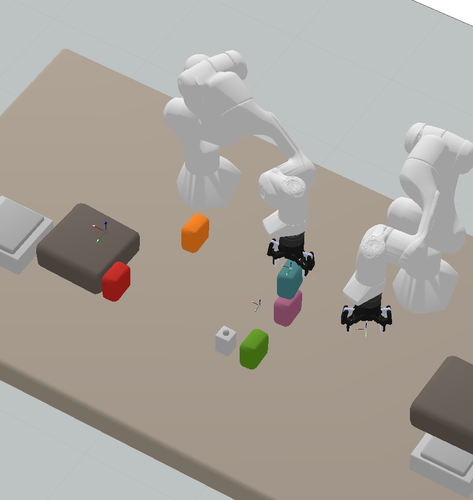} &
		\includegraphics[width=.22\columnwidth]{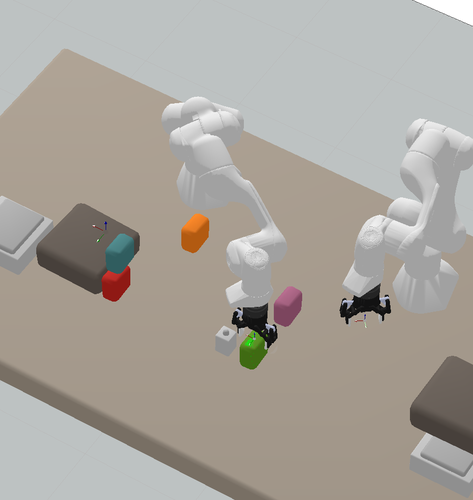} &
		\includegraphics[width=.22\columnwidth]{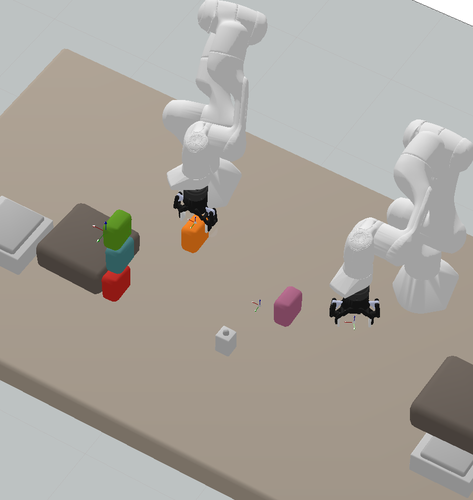}
	\end{tabular}
	\caption{Keyframes of a task plan in the evaluation dataset \textit{+ Actions}.
		Robots build a tower of blocks [\textit{red}, \textit{blue}, \textit{green}, \textit{orange}], moving first an obstacle.
	}
	\label{fig:view_skeleton}
\end{figure}

\subsection{Integration in a Conflict-Based TAMP Planner}

We demonstrate the benefits of using deep learning to accelerate conflict extraction inside the Factored-NLP Planner (\cref{ch:bid}).
Our planner iteratively generates candidate task plans, detects infeasible subgraphs of the Factored-NLP, and encodes this information back into the discrete description of the problem.
For this evaluation, we define \num{10} TAMP problems for a reference environment in each setting: \texttt{+Actions}, \texttt{+Robots}, and \texttt{+Blocks}.
We report the sum (across all \num{10} TAMP problems) of the number of solved NLPs (lower is better) and the computational time in the conflict extraction component of the TAMP solver.
\textit{GNN+e} (which is more robust than \textit{GNN+g1} in this setting) takes only (\SI{8.33}{s}, \SI{511}{NLPs}) in \texttt{+Actions}, (\SI{9.83}{s}, \SI{603}{NLPs}) in \texttt{+Robots}, and (\SI{63.9}{s}, \SI{1979}{NLPs}) in \texttt{+Blocks}, and is between \si{2} and \si{3} times faster than the \textit{expert} algorithm, which requires (\SI{24.2}{s}, \SI{731}{NLPs}), (\SI{38.7}{s}, \SI{1116}{NLPs}), and (\SI{137.9}{s}, \SI{2554}{NLPs}).

\section{Limitations}

From a practical perspective, the limitation of our approach is that the neural model primarily serves as a component within an extensive model-based pipeline for TAMP.
Conflict extraction alone is not capable of generating robot motion or suggesting the next potential task plan.

Furthermore, our graph-based classifier, despite its potential for accurate predictions, requires integration into a traditional
conflict extraction algorithm to refine the identified conflicts.
Looking ahead, future research might branch out in three distinct and potentially conflicting directions: improving the neural models to achieve impeccable predictions, adapting the TAMP solver to handle approximate predictions, or progressively transitioning from model-based TAMP solvers with integrated learning components to fully learning-based TAMP solvers.

\section{Conclusion}

In this chapter, we have presented a neural model to predict the minimal infeasible subsets of variables and constraints in a factored nonlinear program.
The structure of the nonlinear program is used for neural message passing, providing generalization to problems with more variables and constraints.

Our model achieves high accuracy, and the predictions can be integrated to guide and accelerate classical and heuristic algorithms for detecting minimal conflicts.
As confirmed in the experiments, the neural model is directly applicable as a submodule of the Factored-NLP Planner (\cref{ch:bid}) to accelerate the conflict detection pipeline.

When compared to Deep Generative Constraint Sampling (\cref{ch:gans}), these graph neural models exhibit superior generalization across varied task plans.
As such, a new research question is how to extend these ideas and insights to the generative setting.

Similar to foundational models in computer vision or natural language processing, we see great potential in investigating general models for manipulation planning, where these models can serve as backbones for different downstream tasks in any TAMP solver.
In this setting, graph neural networks are an attractive architecture to combine logic and geometric information in an end-to-end, yet interpretable, manner.
The structure of the graph is used to model discrete information such as the task plan and the number of objects and robots, and the feature vectors of nodes and edges are used to encode geometric information.

\makeatletter
\def\toclevel@chapter{-1}
\makeatother

\chapter{Conclusions}
\label{ch:conclusion}

We conclude this thesis by summarizing the main contributions and results of this work and by discussing some possible future directions and open challenges.

\section{Summary of Contributions}

\paragraph{Factored structure of Task and Motion Planning}

In \cref{sec:bg:structure}, we present a refined factored representation of optimization problems within Task and Motion Planning.
This factorization naturally arises from the temporal, object-centric, and robot-centric representations of the problem, but it had not been studied in detail in previous optimization-based TAMP solvers.

The formal definition, properties, and methods to generate these factored nonlinear programs are presented in \cref{ch:bid}.
During the development of this thesis, we demonstrate the advantages and generality of this representation, illustrating how it can represent a wide range of diverse task plans in a unified manner, including problems involving multiple robots, objects, and diverse task plans.

Unintentionally, our graph representation has come to closely resemble the factored representation of PDDLStream \cite{garrett2020pddlstream}.
Exploiting this factorization is crucial for designing efficient interfaces between continuous optimization and task planning.
From a sampling perspective, we can design good sampling operations that incrementally compute the motion, and from an optimization perspective, we can detect infeasible subsets of constraints and encode this information back into the task planner (\cref{ch:bid}).

We strongly believe that the factored representation of TAMP can be used to bridge the gap between optimization and sample-based approaches to TAMP (\cref{ch:mcts,ch:meta-solver}), and to improve the generalization and efficiency of deep learning methods for TAMP (\cref{ch:gans,ch:learn-feas}).

\paragraph{\namePartOneSentence}

The objective of the first part of the thesis is to combine trajectory optimization and discrete task planning into a unified and efficient framework for solving TAMP problems.
A fundamental challenge in solving large-scale problems with hard geometric and physical constraints is to automatically inform the task planner about motion feasibility.

In \cref{ch:diverse_planning,ch:bid}, we present two new conflict-based solvers for TAMP that attempt to solve the TAMP problem by evaluating candidate task plans, identifying why plans fail when considering the continuous constraints, and encoding this information back into the discrete planning problem.

Our first new solver, \emph{Diverse Planning for LGP} (\cref{ch:diverse_planning}), detects and encodes infeasible prefixes of the task plan.
Additionally, ideas from diverse planning and meta-reasoning are used to choose the most promising plans to test next and to decide how much compute effort to allocate to searching for conflicts.

In \emph{\nameChapterTwo} (\cref{ch:bid}), we introduce a new factored hybrid planning formulation for TAMP, which provides a more efficient interface between discrete planning and optimization.
Based on this formulation, our second solver, the \textit{Factored-NLP Planner}, can now detect any infeasible subset of nonlinear constraints and encode this information back into the task planner, resulting in a highly efficient bidirectional interface between task planning and motion planning.

\paragraph{\namePartMetaSolverSentence}

In the second part of the thesis, we propose two meta-solvers: algorithms that can automatically combine two different techniques—optimization and sampling—to compute the robot motion in TAMP problems.

In \emph{\nameChapterThree} (\cref{ch:mcts}), we begin by considering the problem of computing the motion for a fixed task plan, which corresponds to solving challenging factored nonlinear programs.
Using a Monte Carlo Tree Search formulation, our method automatically discovers the best sequence of sampling and/or optimization operations to solve the problem.
This adaptive algorithm outperforms both fixed sampling sequences and full nonlinear optimization used in previous work, in terms of the diversity and number of solutions found within a fixed amount of compute time.

In \emph{\nameChapterFour} (\cref{ch:meta-solver}), we present a preliminary study towards a complete TAMP meta-solver that optimizes the task plan and the robot motion while automatically deciding whether it is better to use sequential sampling or joint optimization.
To this end, we first introduce a notion of a compute state that extends the traditional discrete-continuous states with additional free states subject to constraints that have not yet been computed.

Our first TAMP meta-solver is an informed search algorithm on this computational space.
We show that this simple search strategy can outperform sample-based and optimization-based solvers on average compute time across diverse TAMP settings with few objects and robots.
However, further research is needed to enhance the TAMP meta-solver's performance for tackling large-scale problems.

\paragraph{\namePartLearningSentence}

In the third part of the thesis, we propose two novel ways to use learning to accelerate expensive operations in a TAMP solver.
We assume that a dataset of solutions to similar problems can be computed offline using a model-based solver, which requires multiple expensive computations, such as solving nonlinear programs with multiple restarts or combinatorial optimization.
At runtime, the learned models are used to accelerate our model-based solvers, providing speed-ups on new, unseen problems.
In both contributions, exploiting the Factored-NLP representation of the problem is crucial to achieving good generalization and performance.

\textit{Deep Generative Constraint Sampling} (\cref{ch:gans}) combines a deep learning model with nonlinear optimization to generate keyframes of manipulation sequences faster.
In particular, we use Generative Adversarial Networks to produce a good warm start for nonlinear optimization, outperforming alternative warm start initialization strategies.
Here, we transform the Factored-NLP into a directed graphical model, to reduce sample complexity and increase the expressivity and multimodality of the generative model.

In \textit{\nameChapterSix} (\cref{ch:learn-feas}), we propose a classifier that predicts which constraints in a Factored-NLP cannot be fulfilled, a fundamental step in our conflict-based TAMP solver (\cref{ch:bid}).
The model takes the structure of the Factored-NLP as direct input, together with a local encoding of the manipulation scene, providing good generalization across different task plans.
Using this graph-based classifier, we can detect conflicts an order of magnitude faster than classical conflict extraction strategies.

\section{Open Challenges and Future Work}

In this section, we discuss open challenges and outline several promising ideas for further research in the field of Task and Motion Planning in robotics.

\paragraph{TAMP benchmarks}

Different TAMP methods utilize slightly varied formulations and various benchmark problems, making it challenging to compare and analyze algorithms, utilize common tools, and draw on ideas from other research groups.
Additionally, similar problem formulations to TAMP are often studied under different names in robotics, such as multimodal motion planning and manipulation planning, again with slightly varied formulations and heuristics tailored to different environments.
There is a clear need for standardized benchmarks.
Drawing inspiration from the discrete planning and reinforcement learning communities, it is evident that we need a unified way to define the problem (e.g., PDDL \cite{mcdermott1998pddl} or the OpenAI-Gym interface \cite{brockman2016openai}), and a set of different standardized scenarios (e.g., domains in PDDL and benchmarks or datasets in RL) to evaluate our methods.

While there have been some attempts to establish TAMP benchmarks \cite{lagriffoul2018platform}, the utilization of different software tools and slightly varied problem formulations has hindered broad adoption.
A successful TAMP benchmark should define a common interface for stating a problem and its solution, a simulator to validate any provided solution, and interfaces that could optionally be utilized in the solvers, such as discrete abstractions and differentiable constraints.

Moreover, our research experience underscores the need to develop superior motion planning and optimization tools.
Although certain algorithmic ideas, such as sample-based motion planning and trajectory optimization, have reached a mature state, the research community still requires high-quality, standalone, and open-source implementations.

\paragraph{Optimization-based solvers for TAMP}

In this thesis, we have presented two TAMP solvers that combine discrete planning and optimization with a conflict-based approach, effectively scaling to large-scale problems involving multiple robots and objects.

However, a fundamental issue with optimization-based approaches remains, namely, convergence to local minima.
Because our solvers are conflict-based, a failure to find a solution on a feasible problem due to an unfortunate initialization compromises the completeness of our approach.

In practice, our solvers have performed very well in tabletop environments and with simple geometric shapes, where there are few local optima in the trajectory optimization problems.
This assumption holds true for many relevant manipulation problems.
However, for more general applications, we need to address the issue of local minima, making our solvers more robust against failed optimization attempts and allowing them to try the same problem again with a different initialization.
As discussed in the limitations of our solvers, a promising future direction is to use soft or probabilistic conflict formulations.

\paragraph{Nonlinear optimization in robotics}

Nonlinear optimization methods are a powerful tool in robotics and have shown great success in solving robotics problems in high-dimensional spaces with complex constraints, e.g., \cite{winkler2018gait,mordatch2012discovery,toussaint2018differentiable}.
Unfortunately, these results are often difficult to reproduce for non-experts, as they require technical knowledge and experience to formulate the problem correctly (e.g., the selection of variables and constraints, the scaling of each term, and the warm start).

In practice, we observe that sequential conditional sampling and sample-based motion planning are more robust to the choice of hyperparameters and the exact problem formulation.
In contrast to optimization, where bad hyperparameters often lead to failure, in sampling-based algorithms, choosing hyperparameters incorrectly often means longer solution times, but solvers manage to find a solution.

A key difference is that sampling-based algorithms are often anytime algorithms, improving the success rate and solution cost with more compute time.
The na\"{i}ve way to convert optimization methods into an anytime algorithm is to add random restarts.
However, we observe that random restarts are not informative enough to solve hard problems within a reasonable timeframe.
We believe there is great potential in more intelligent restart strategies that, for instance, use both the structure of the problem and previous computations, as explored in this thesis.

Moving forward, to increase the influence and adoption of nonlinear optimization in robotics, it is essential to incorporate these advanced restart strategies into nonlinear solvers.
Coupled with automatic hyperparameter tuning and appropriate scaling of costs and constraints, this opens up interesting avenues for both research and software development.

\paragraph{Development of meta-solvers for TAMP -- research and software infrastructure}

When deploying TAMP systems in the real world, we aim to ensure that our planning algorithms consistently perform quickly across all potential problems.
Achieving this robustness is only possible with TAMP solvers designed to identify the most efficient computing method for solving current problems.
Our algorithms, presented in \cref{ch:mcts,ch:meta-solver}, represent a foundational step in this direction, yet they have some limitations in terms of scalability and applicability.

Thinking in terms of computational space and optimization over computing decisions is a very powerful idea, and we believe that this is a promising direction to pursue.
However, we require more complex models to reason about computation cost and success, to share information between similar task plans, and to find intelligent ways to reuse previous computations.

Further, our research on TAMP meta-solvers highlights that, in addition to research contributions, there is a need for better software infrastructure and tools to combine sampling, discrete search, and optimization into a unified framework.

\paragraph{Perception for TAMP}

A limitation of our work is that we assume an extremely accurate perception module, which provides a perfect model of the world in a form that is convenient for planning.
For instance, we assume knowledge of the objects' positions in the world, differentiable model-based nonlinear constraints, and a low-dimensional representation (e.g., parametric shape) of the objects we intend to manipulate.
Additionally, we presume these objects have simple geometric shapes, such as cubes, cylinders, and spheres.

An attractive and structured approach to extend our TAMP solvers to include perception is to utilize neural-based perception modules that can directly map high-dimensional sensor input to the required representation for planning.
Recent deep learning methods have shown great success in perception tasks, such as object detection, and pose estimation and segmentation \cite{he2017mask,kirillov2023segment,redmon2016you,labbe2020cosypose}.

However, small perception errors can lead to large failures in the planning and execution of manipulation tasks.
An alternative approach is to learn directly manipulation features or discrete states for planning.
For instance, this could involve generating a discrete state representation directly from images for high-level task plan computation \cite{yuan2022sornet}, or learning manipulation features that can be used directly to synthesize motion using trajectory optimization \cite{ha2022deep,simeonov2022neural}.

\paragraph{Learning universal policies for robotic manipulation}

In Part III of this thesis, we have demonstrated how to leverage learning to accelerate expensive operations in a TAMP solver.
Although the proposed learned modules are integrated as small components in model-based algorithms, they have limited applications as standalone components.

A natural extension is to train models that can either solve the entire TAMP problem or, at the very least, substantial components of it, such as computing the complete motion for a given task plan.
The scope of research in this area is vast, with numerous contributions, such as \cite{driess2021learning,kase2020transferable,fang2019dynamics,gupta2020relay,ichter2020broadly}, among many others.

Drawing from the contributions in our thesis, we aim to utilize our structured representation to enhance efficiency and generalization across a wide array of manipulation tasks.
Our future work involves applying this graph representation to develop neural universal manipulation policies, which directly (or indirectly) map the current state to the subsequent action that robots should take, all while conditioned on the high-level task plan.
By leveraging the graph structure of the optimization problem, a single policy—trainable either through imitation or reinforcement learning—can be adapted to various task plans in different scenes.
Graph-structured policies could be trained with diverse data across different manipulation tasks.
The resulting policies could then be applied to larger problems (e.g., those involving more robots or objects), potentially outperforming recent transformer-based architectures such as \cite{shridhar2023perceiver,brohan2023rt}.

We envision that a combination of data, structure, and models will be necessary to create a universal policy with exceptional generalization capabilities, enabling it to tackle various tasks and environments.
Structured policies hold the potential to merge learned features from perception and contact models—which are inherently challenging to model—with precise model-based features, such as the robot's kinematics, joint limits, or self-collision avoidance.

\section{Final Remarks}

In this thesis, we have studied Task and Motion Planning from a multidisciplinary perspective, combining ideas from optimization, discrete planning, and learning.

Improving TAMP solvers is not just key for the future of robotics but also a fascinating research topic.
I hope this thesis has shown that TAMP is a very interesting field, requiring hybrid planning with continuous and discrete variables and constraints, and advanced reasoning about abstraction and structure.
All these concepts are essential for any robot operating in our continuous world while leveraging discrete abstractions and decompositions for more effective long-term planning.

The question of whether to integrate learning into model-based reasoning is highly pertinent in the context of TAMP.
On one hand, good approximate forward models and model-based solvers are readily available; on the other hand, planning with such models might be time-consuming, often too slow for real-time planning.

From a learning perspective, TAMP constitutes a compelling challenge, given the high dimensionality, requirement for long-term planning, and multimodality.
Generating new data or training new neural networks for every distinct TAMP problem is impractical, underscoring the essential need for generalization abilities in learning-based TAMP.

Finding the right balance between model-based approaches and learning can become even more challenging when we consider the real world, with its complex object shapes and perception through images or point clouds.
While today it is clear that a combination of learning and model-based methods in TAMP is required to solve complex manipulation problems, this equilibrium could shift in the future.
Even if TAMP systems were to become predominantly based on data and neural networks, model-based TAMP would continue to play a central role, providing both a dataset of diverse solutions and the correct understanding and inductive bias for the design of efficient learning-based systems.

Beyond advancements in TAMP solvers, real-world applications will also require deeper exploration into enhanced perception, dexterous manipulation, and dynamic replanning.
However, a deep understanding of model-based TAMP is instrumental, even when some real challenges are not considered or are simplified.

From a research perspective, the journey through task and motion planning in robotics has been enlightening, touching upon multiple paradigms in robotics.
I hope the algorithms, formulations, and discussions presented in this thesis have piqued your interest and can serve as a starting point for further research in the field.

{
\small
\bibliographystyle{apalike}
\bibliography{IEEEabrv,IEEEexample}

\begin{thebibliography}{}

\bibitem[Amaldi et~al., 1999]{amaldi1999some}
Amaldi, E., Pfetsch, M.~E., and Trotter, L.~E. (1999).
\newblock Some structural and algorithmic properties of the maximum feasible subsystem problem.
\newblock In {\em Int{.} Conf{.} on Integer Progr{.} and Combinatorial Optimization}.

\bibitem[Andreani et~al., 2008]{andreani2008augmented}
Andreani, R., Birgin, E.~G., Mart{\'i}nez, J.~M., and Schuverdt, M.~L. (2008).
\newblock On augmented lagrangian methods with general lower-level constraints.
\newblock {\em SIAM Journal on Optimization}, 18(4):1286--1309.

\bibitem[Arjovsky et~al., 2017]{arjovsky2017wasserstein}
Arjovsky, M., Chintala, S., and Bottou, L. (2017).
\newblock {W}asserstein generative adversarial networks.
\newblock In {\em Proceedings of the 34th International Conference on Machine Learning}.

\bibitem[Auer et~al., 2002]{auer2002finite}
Auer, P., Cesa-Bianchi, N., and Fischer, P. (2002).
\newblock Finite-time analysis of the multiarmed bandit problem.
\newblock {\em Machine learning}, 47(2-3):235--256.

\bibitem[B{\"a}ckstr{\"o}m and Nebel, 1995]{backstrom-nebel-compint1995}
B{\"a}ckstr{\"o}m, C. and Nebel, B. (1995).
\newblock Complexity results for {SAS$^{+}$} planning.
\newblock {\em Computational Intelligence}, 11(4):625--655.

\bibitem[Baier and McIlraith, 2006]{baier2006planning}
Baier, J.~A. and McIlraith, S.~A. (2006).
\newblock Planning with temporally extended goals using heuristic search.
\newblock In {\em International Conference on Automated Planning and Scheduling, { ICAPS}}.

\bibitem[Battaglia et~al., 2018]{battaglia2018relational}
Battaglia, P.~W., Hamrick, J.~B., Bapst, V., Sanchez-Gonzalez, A., Zambaldi, V., Malinowski, M., Tacchetti, A., Raposo, D., Santoro, A., Faulkner, R., et~al. (2018).
\newblock Relational inductive biases, deep learning, and graph networks.
\newblock {\em arXiv preprint arXiv:1806.01261}.

\bibitem[Bellman, 1957]{Bellman:DynamicProgramming}
Bellman, R. (1957).
\newblock {\em {Dynamic Programming}}.
\newblock Dover Publications.

\bibitem[Benders, 1962]{bnnobrs1962partitioning}
Benders, J.~F. (1962).
\newblock Partitioning procedures for solving mixed-variables programming problems.
\newblock {\em Numer. Math.}, 4(1):238–252.

\bibitem[Bertsekas, 1979]{bertsekas1979convexification}
Bertsekas, D.~P. (1979).
\newblock Convexification procedures and decomposition methods for nonconvex optimization problems.
\newblock {\em Journal of Optimization Theory and Applications}, 29(2):169--197.

\bibitem[Bertsekas, 1997]{bertsekas1997nonlinear}
Bertsekas, D.~P. (1997).
\newblock Nonlinear programming.
\newblock {\em Journal of the Operational Research Society}, 48(3):334--334.

\bibitem[Bertsimas and Stellato, 2020]{bertsimas2020online}
Bertsimas, D. and Stellato, B. (2020).
\newblock Online mixed-integer optimization in milliseconds.

\bibitem[Betts, 1998]{betts1998survey}
Betts, J.~T. (1998).
\newblock Survey of numerical methods for trajectory optimization.
\newblock {\em Journal of guidance, control, and dynamics}, 21(2):193--207.

\bibitem[Bicchi and Kumar, 2000]{bicchi2000robotic}
Bicchi, A. and Kumar, V. (2000).
\newblock Robotic grasping and contact: A review.
\newblock In {\em Proceedings 2000 ICRA. Millennium conference. IEEE international conference on robotics and automation. Symposia proceedings (Cat. No. 00CH37065)}, volume~1, pages 348--353. IEEE.

\bibitem[Biere et~al., 2009]{biere2009handbook}
Biere, A., Heule, M., and van Maaren, H. (2009).
\newblock {\em Handbook of satisfiability}, volume 185.
\newblock IOS press.

\bibitem[Billard and Kragic, 2019]{billard2019trends}
Billard, A. and Kragic, D. (2019).
\newblock Trends and challenges in robot manipulation.
\newblock {\em Science}, 364(6446):eaat8414.

\bibitem[Boddy and Dean, 1989]{boddy1989solving}
Boddy, M. and Dean, T.~L. (1989).
\newblock {\em Solving time-dependent planning problems}.
\newblock Brown University, Department of Computer Science.

\bibitem[Bonet and Geffner, 2001]{bonet2001planning}
Bonet, B. and Geffner, H. (2001).
\newblock Planning as heuristic search.
\newblock {\em Artificial Intelligence}, 129(1-2):5--33.

\bibitem[Boyd et~al., 2011]{boyd2011distributed}
Boyd, S., Parikh, N., and Chu, E. (2011).
\newblock {\em Distributed optimization and statistical learning via the alternating direction method of multipliers}.
\newblock Now Publishers Inc.

\bibitem[Bratman et~al., 1988]{bratman1988plans}
Bratman, M.~E., Israel, D.~J., and Pollack, M.~E. (1988).
\newblock Plans and resource-bounded practical reasoning.
\newblock {\em Computational intelligence}, 4(3):349--355.

\bibitem[Brockman et~al., 2016]{brockman2016openai}
Brockman, G., Cheung, V., Pettersson, L., Schneider, J., Schulman, J., Tang, J., and Zaremba, W. (2016).
\newblock Openai gym.
\newblock {\em CoRR}, abs/1606.01540.

\bibitem[Brohan et~al., 2023]{brohan2023rt}
Brohan, A., Brown, N., Carbajal, J., Chebotar, Y., Chen, X., Choromanski, K., Ding, T., Driess, D., Dubey, A., Finn, C., et~al. (2023).
\newblock Rt-2: Vision-language-action models transfer web knowledge to robotic control.
\newblock {\em arXiv preprint arXiv:2307.15818}.

\bibitem[{Browne} et~al., 2012]{browne2012survey}
{Browne}, C.~B., {Powley}, E., {Whitehouse}, D., {Lucas}, S.~M., {Cowling}, P.~I., {Rohlfshagen}, P., {Tavener}, S., { Perez}, D., {Samothrakis}, S., and {Colton}, S. (2012).
\newblock A survey of monte carlo tree search methods.
\newblock {\em IEEE Transactions on Computational Intelligence and AI in Games}, 4(1):1--43.

\bibitem[Cashmore et~al., 2018]{cashmore2018temporal}
Cashmore, M., Coles, A., Cserna, B., Karpas, E., Magazzeni, D., and Ruml, W. (2018).
\newblock Temporal planning while the clock ticks.
\newblock In {\em ICAPS}.

\bibitem[Cauligi et~al., 2020]{cauligi2020learning}
Cauligi, A., Culbertson, P., Stellato, B., Bertsimas, D., Schwager, M., and Pavone, M. (2020).
\newblock Learning mixed-integer convex optimization strategies for robot planning and control.
\newblock In {\em 2020 59th IEEE Conference on Decision and Control (CDC)}, pages 1698--1705. IEEE.

\bibitem[Chen et~al., 2016]{chen2016chen}
Chen, X., Duan, Y., Houthooft, R., Schulman, J., Sutskever, I., and Abbeel, P. (2016).
\newblock Infogan: Interpretable representation learning by information maximizing generative adversarial nets.
\newblock {\em CoRR}, abs/1606.03657.

\bibitem[Coles et~al., 2012a]{coles2012survey}
Coles, A., Coles, A., Olaya, A.~G., Jim{\'e}nez, S., L{\'o}pez, C.~L., Sanner, S., and Yoon, S. (2012a).
\newblock A survey of the seventh international planning competition.
\newblock {\em Ai Magazine}, 33(1):83--88.

\bibitem[Coles et~al., 2012b]{DBLP:journals/jair/ColesCFL12}
Coles, A.~J., Coles, A., Fox, M., and Long, D. (2012b).
\newblock {COLIN:} planning with continuous linear numeric change.
\newblock {\em J. Artif. Intell. Res.}

\bibitem[Conn et~al., 2013]{conn2013lancelot}
Conn, A.~R., Gould, G., and Toint, P.~L. (2013).
\newblock {\em LANCELOT: a Fortran package for large-scale nonlinear optimization (Release A)}, volume~17.
\newblock Springer Science \& Business Media.

\bibitem[Cordella et~al., 2004]{cordella2004sub}
Cordella, L.~P., Foggia, P., Sansone, C., and Vento, M. (2004).
\newblock A (sub) graph isomorphism algorithm for matching large graphs.
\newblock {\em IEEE transactions on pattern analysis and machine intelligence}.

\bibitem[Danna et~al., 2007]{danna2007generating}
Danna, E., Fenelon, M., Gu, Z., and Wunderling, R. (2007).
\newblock Generating multiple solutions for mixed integer programming problems.
\newblock In {\em International Conference on Integer Programming and Combinatorial Optimization}, pages 280--294. Springer.

\bibitem[Dantam et~al., 2016]{dantam2016incremental}
Dantam, N.~T., Kingston, Z.~K., Chaudhuri, S., and Kavraki, L.~E. (2016).
\newblock Incremental task and motion planning: A constraint-based approach.
\newblock In {\em Robotics: Science and systems}.

\bibitem[Dantam et~al., 2018]{dantam2018incremental}
Dantam, N.~T., Kingston, Z.~K., Chaudhuri, S., and Kavraki, L.~E. (2018).
\newblock An incremental constraint-based framework for task and motion planning.
\newblock {\em The International Journal of Robotics Research}, 37(10):1134--1151.

\bibitem[Dantzig and Wolfe, 1960]{dantzig1960decomposition}
Dantzig, G.~B. and Wolfe, P. (1960).
\newblock Decomposition principle for linear programs.
\newblock {\em Operations research}, 8(1):101--111.

\bibitem[Dechter et~al., 2002]{dechter2002generating}
Dechter, R., Kask, K., Bin, E., Emek, R., et~al. (2002).
\newblock Generating random solutions for constraint satisfaction problems.
\newblock In {\em AAAI}, pages 15--21.

\bibitem[Deits et~al., 2019]{deits2019lvis}
Deits, R., Koolen, T., and Tedrake, R. (2019).
\newblock Lvis: learning from value function intervals for contact-aware robot controllers.
\newblock In {\em 2019 International Conference on Robotics and Automation (ICRA)}, pages 7762--7768. IEEE.

\bibitem[Deits and Tedrake, 2014]{deits2014footstep}
Deits, R. and Tedrake, R. (2014).
\newblock Footstep planning on uneven terrain with mixed-integer convex optimization.
\newblock In {\em 2014 IEEE-RAS international conference on humanoid robots}, pages 279--286. IEEE.

\bibitem[Dellaert et~al., 2017]{dellaert2017factor}
Dellaert, F., Kaess, M., et~al. (2017).
\newblock Factor graphs for robot perception.
\newblock {\em Foundations and Trends{\textregistered} in Robotics}.

\bibitem[Driess et~al., 2021]{driess2021learning}
Driess, D., Ha, J.-S., Tedrake, R., and Toussaint, M. (2021).
\newblock Learning geometric reasoning and control for long-horizon tasks from visual input.
\newblock In {\em 2021 IEEE International Conference on Robotics and Automation (ICRA)}, pages 14298--14305. IEEE.

\bibitem[Driess et~al., 2020]{driess2020deep}
Driess, D., Ha, J.-S., and Toussaint, M. (2020).
\newblock Deep visual reasoning: Learning to predict action sequences for task and motion planning from an initial scene image.
\newblock In {\em Robotics: Science and Systems 2020 (RSS 2020)}. RSS Foundation.

\bibitem[Driess et~al., 2022]{driess2022learning}
Driess, D., Huang, Z., Li, Y., Tedrake, R., and Toussaint, M. (2022).
\newblock Learning multi-object dynamics with compositional neural radiance fields.

\bibitem[Driess et~al., 2019]{19-driess-RSSws}
Driess, D., Oguz, O., and Toussaint, M. (2019).
\newblock Hierarchical task and motion planning using logic-geometric programming (hlgp).
\newblock RSS Workshop on Robust Task and Motion Planning.

\bibitem[Ebert et~al., 2017]{ebert17self}
Ebert, F., Finn, C., Lee, A.~X., and Levine, S. (2017).
\newblock Self-supervised visual planning with temporal skip connections.
\newblock In {\em Conference on Robot Learning}.

\bibitem[Ermon et~al., 2012]{ermon2012uniform}
Ermon, S., Gomes, C., and Selman, B. (2012).
\newblock Uniform solution sampling using a constraint solver as an oracle.
\newblock In {\em Proceedings of the Twenty-Eighth Conference on Uncertainty in Artificial Intelligence}, UAI'12, page 255–264, Arlington, Virginia, USA. AUAI Press.

\bibitem[Fang et~al., 2019]{fang2019dynamics}
Fang, K., Zhu, Y., Garg, A., Savarese, S., and Fei-Fei, L. (2019).
\newblock Dynamics learning with cascaded variational inference for multi-step manipulation.
\newblock {\em arXiv preprint arXiv:1910.13395}.

\bibitem[Felner et~al., 2012]{felner2012partial}
Felner, A., Goldenberg, M., Sharon, G., Stern, R., Beja, T., Sturtevant, N., Schaeffer, J., and Holte, R. (2012).
\newblock Partial-expansion a* with selective node generation.
\newblock In {\em Proceedings of the AAAI Conference on Artificial Intelligence}, volume~26, pages 471--477.

\bibitem[Fern{\'a}ndez-Gonz{\'a}lez et~al., 2018]{fernandez2018scottyactivity}
Fern{\'a}ndez-Gonz{\'a}lez, E., Williams, B., and Karpas, E. (2018).
\newblock Scottyactivity: Mixed discrete-continuous planning with convex optimization.
\newblock {\em Journal of Artificial Intelligence Research}, 62:579--664.

\bibitem[Ferrer{-}Mestres et~al., 2017]{ferrer2017combined}
Ferrer{-}Mestres, J., Franc{\`{e}}s, G., and Geffner, H. (2017).
\newblock Combined task and motion planning as classical {AI} planning.
\newblock {\em CoRR}.

\bibitem[Fikes and Nilsson, 1971]{fikes1971strips}
Fikes, R.~E. and Nilsson, N.~J. (1971).
\newblock Strips: A new approach to the application of theorem proving to problem solving.
\newblock {\em Artificial intelligence}, 2(3-4):189--208.

\bibitem[Fox and Long, 2006]{fox2006modelling}
Fox, M. and Long, D. (2006).
\newblock Modelling mixed discrete-continuous domains for planning.
\newblock {\em Journal of Artificial Intelligence Research}.

\bibitem[Frey et~al., 1997]{frey1997factor}
Frey, B.~J., Kschischang, F.~R., Loeliger, H.-A., and Wiberg, N. (1997).
\newblock Factor graphs and algorithms.
\newblock In {\em Proceedings of the Annual Allerton Conference on Communication Control and Computing}, volume~35, pages 666--680. Citeseer.

\bibitem[Funk et~al., 2022]{pmlr-v164-funk22a}
Funk, N., Chalvatzaki, G., Belousov, B., and Peters, J. (2022).
\newblock Learn2assemble with structured representations and search for robotic architectural construction.
\newblock In {\em Proceedings of the 5th Conference on Robot Learning}, volume 164 of {\em Proceedings of Machine Learning Research}, pages 1401--1411. PMLR.

\bibitem[Garrett, 2021]{garrett2021thesis}
Garrett, C.~R. (2021).
\newblock Sampling-based robot task and motion planning in the real world.
\newblock In {\em Doctoral Thesis, MIT}.

\bibitem[Garrett et~al., 2021]{garrett2021integrated}
Garrett, C.~R., Chitnis, R., Holladay, R., Kim, B., Silver, T., Kaelbling, L.~P., and Lozano-Pérez, T. (2021).
\newblock Integrated task and motion planning.
\newblock {\em Annual Review of Control, Robotics, and Autonomous Systems}.

\bibitem[Garrett et~al., 2018]{garrett2018sampling}
Garrett, C.~R., Lozano-P{\'e}rez, T., and Kaelbling, L.~P. (2018).
\newblock Sampling-based methods for factored task and motion planning.
\newblock {\em The International Journal of Robotics Research}.

\bibitem[Garrett et~al., 2020]{garrett2020pddlstream}
Garrett, C.~R., Lozano-P{\'e}rez, T., and Kaelbling, L.~P. (2020).
\newblock Pddlstream: Integrating symbolic planners and blackbox samplers via optimistic adaptive planning.
\newblock In {\em Proceedings of ICAPS}.

\bibitem[Gelly and Silver, 2007]{gelly2007combining}
Gelly, S. and Silver, D. (2007).
\newblock Combining online and offline knowledge in uct.
\newblock In {\em Proceedings of the 24th international conference on Machine learning}, pages 273--280.

\bibitem[Gerevini et~al., 2009]{gerevini2009deterministic}
Gerevini, A., Haslum, P., Long, D., Saetti, A., and Dimopoulos, Y. (2009).
\newblock Deterministic planning in the fifth international planning competition: {PDDL3} and experimental evaluation of the planners.
\newblock {\em Artif. Intell.}

\bibitem[Ghasemipour et~al., 2022]{Ghasemipour2022blocks}
Ghasemipour, S. K.~S., Freeman, D., David, B., Gu, S.~S., Kataoka, S., and Mordatch, I. (2022).
\newblock Blocks assemble! learning to assemble with large-scale structured reinforcement learning.

\bibitem[Gilmer et~al., 2017]{gilmer2017neural}
Gilmer, J., Schoenholz, S.~S., Riley, P.~F., Vinyals, O., and Dahl, G.~E. (2017).
\newblock Neural message passing for quantum chemistry.
\newblock In {\em International conference on machine learning}, pages 1263--1272. PMLR.

\bibitem[Ginsberg and Harvey, 1992]{ginsberg1992iterative}
Ginsberg, M.~L. and Harvey, W.~D. (1992).
\newblock Iterative broadening.
\newblock {\em Artificial Intelligence}, 55(2-3):367--383.

\bibitem[Gogate and Dechter, 2006]{gogate2006new}
Gogate, V. and Dechter, R. (2006).
\newblock A new algorithm for sampling csp solutions uniformly at random.
\newblock In {\em International Conference on Principles and Practice of Constraint Programming}, pages 711--715. Springer.

\bibitem[Goodfellow et~al., 2016]{Goodfellow-et-al-2016}
Goodfellow, I., Bengio, Y., and Courville, A. (2016).
\newblock {\em Deep Learning}.
\newblock MIT Press.

\bibitem[Goodfellow et~al., 2014]{goodfellow2014generative}
Goodfellow, I., Pouget-Abadie, J., Mirza, M., Xu, B., Warde-Farley, D., Ozair, S., Courville, A., and Bengio, Y. (2014).
\newblock Generative adversarial nets.
\newblock {\em Advances in neural information processing systems}, 27:2672--2680.

\bibitem[Gulrajani et~al., 2017]{gulrajani2017improved}
Gulrajani, I., Ahmed, F., Arjovsky, M., Dumoulin, V., and Courville, A.~C. (2017).
\newblock Improved training of wasserstein gans.
\newblock In {\em Advances in neural information processing systems}, pages 5767--5777.

\bibitem[Gupta et~al., 2020]{gupta2020relay}
Gupta, A., Kumar, V., Lynch, C., Levine, S., and Hausman, K. (2020).
\newblock Relay policy learning: Solving long-horizon tasks via imitation and reinforcement learning.
\newblock In {\em Conference on Robot Learning}, pages 1025--1037. PMLR.

\bibitem[Ha et~al., 2022]{ha2022deep}
Ha, J.-S., Driess, D., and Toussaint, M. (2022).
\newblock Deep visual constraints: Neural implicit models for manipulation planning from visual input.
\newblock {\em IEEE Robotics and Automation Letters}, 7(4):10857--10864.

\bibitem[Ha et~al., 2018]{ha2018adaptive}
Ha, J.-S., Park, Y.-J., Chae, H.-J., Park, S.-S., and Choi, H.-L. (2018).
\newblock Adaptive path-integral autoencoders: Representation learning and planning for dynamical systems.
\newblock {\em Advances in Neural Information Processing Systems}, 31:8927--8938.

\bibitem[Hadfield-Menell et~al., 2016]{hadfield2016sequential}
Hadfield-Menell, D., Lin, C., Chitnis, R., Russell, S., and Abbeel, P. (2016).
\newblock Sequential quadratic programming for task plan optimization.
\newblock In {\em 2016 IEEE/RSJ International Conference on Intelligent Robots and Systems (IROS)}, pages 5040--5047. IEEE.

\bibitem[Hartmann et~al., 2020]{hartmann2020robust}
Hartmann, V.~N., Oguz, O.~S., Driess, D., Toussaint, M., and Menges, A. (2020).
\newblock Robust task and motion planning for long-horizon architectural construction planning.
\newblock In {\em Int{.} Conf{.} on Intelligent Robots and Systems (IROS)}.

\bibitem[Hartmann et~al., 2022]{hartmann2022long}
Hartmann, V.~N., Orthey, A., Driess, D., Oguz, O.~S., and Toussaint, M. (2022).
\newblock Long-horizon multi-robot rearrangement planning for construction assembly.
\newblock {\em IEEE Transactions on Robotics}, 39(1):239--252.

\bibitem[Haslum et~al., 2018]{haslum2018extending}
Haslum, P., Ivankovic, F., Ramirez, M., Gordon, D., Thi{\'e}baux, S., Shivashankar, V., and Nau, D.~S. (2018).
\newblock Extending classical planning with state constraints: Heuristics and search for optimal planning.
\newblock {\em Journal of Artificial Intelligence Research}, 62:373--431.

\bibitem[Hauser, 2016]{hauser2016learning}
Hauser, K. (2016).
\newblock Learning the problem-optimum map: Analysis and application to global optimization in robotics.
\newblock {\em IEEE Transactions on Robotics}, 33(1):141--152.

\bibitem[Hauser and Latombe, 2010]{hauser2010multi}
Hauser, K. and Latombe, J.-C. (2010).
\newblock Multi-modal motion planning in non-expansive spaces.
\newblock {\em The International Journal of Robotics Research}, 29(7):897--915.

\bibitem[Hauser and Ng-Thow-Hing, 2011]{Hauser2011}
Hauser, K. and Ng-Thow-Hing, V. (2011).
\newblock {Randomized multi-modal motion planning for a humanoid robot manipulation task}.
\newblock {\em IJRR}, 30(6):678--698.

\bibitem[He et~al., 2017]{he2017mask}
He, K., Gkioxari, G., Doll{\'a}r, P., and Girshick, R. (2017).
\newblock Mask r-cnn.
\newblock In {\em Proceedings of the IEEE international conference on computer vision}, pages 2961--2969.

\bibitem[Helmert, 2006]{helmert2006fast}
Helmert, M. (2006).
\newblock The fast downward planning system.
\newblock {\em Journal of Artificial Intelligence Research}, 26:191--246.

\bibitem[Hemery et~al., 2006]{hemery2006extracting}
Hemery, F., Lecoutre, C., Sais, L., Boussemart, F., et~al. (2006).
\newblock Extracting mucs from constraint networks.
\newblock In {\em ECAI}.

\bibitem[Hochreiter and Schmidhuber, 1997]{hochreiter1997long}
Hochreiter, S. and Schmidhuber, J. (1997).
\newblock Long short-term memory.
\newblock {\em Neural computation}, 9(8):1735--1780.

\bibitem[Hoffmann and Nebel, 2001]{hoffmann2001ff}
Hoffmann, J. and Nebel, B. (2001).
\newblock The ff planning system: Fast plan generation through heuristic search.
\newblock {\em Journal of Artificial Intelligence Research}, 14:253--302.

\bibitem[Huang et~al., 2019]{huang2019large}
Huang, E., Jia, Z., and Mason, M.~T. (2019).
\newblock Large-scale multi-object rearrangement.
\newblock In {\em 2019 International Conference on Robotics and Automation (ICRA)}, pages 211--218. IEEE.

\bibitem[Ichter et~al., 2018]{ichter2018learning}
Ichter, B., Harrison, J., and Pavone, M. (2018).
\newblock Learning sampling distributions for robot motion planning.
\newblock In {\em 2018 IEEE International Conference on Robotics and Automation (ICRA)}, pages 7087--7094. IEEE.

\bibitem[Ichter et~al., 2020]{ichter2020broadly}
Ichter, B., Sermanet, P., and Lynch, C. (2020).
\newblock Broadly-exploring, local-policy trees for long-horizon task planning.
\newblock {\em arXiv preprint arXiv:2010.06491}.

\bibitem[Junker, 2004]{junker2004preferred}
Junker, U. (2004).
\newblock Preferred explanations and relaxations for over-constrained problems.
\newblock In {\em AAAI}.

\bibitem[Kaelbling and Lozano-Pérez, 2011]{kaelbling2011hierarchical}
Kaelbling, L.~P. and Lozano-Pérez, T. (2011).
\newblock Hierarchical task and motion planning in the now.
\newblock In {\em International Conference on Robotics and Automation (ICRA)}, pages 1470--1477.

\bibitem[Karpas et~al., 2018]{karpas2018rational}
Karpas, E., Betzalel, O., Shimony, S.~E., Tolpin, D., and Felner, A. (2018).
\newblock Rational deployment of multiple heuristics in optimal state-space search.
\newblock {\em Artificial Intelligence}, 256:181--210.

\bibitem[Kase et~al., 2020]{kase2020transferable}
Kase, K., Paxton, C., Mazhar, H., Ogata, T., and Fox, D. (2020).
\newblock Transferable task execution from pixels through deep planning domain learning.
\newblock In {\em 2020 IEEE International Conference on Robotics and Automation (ICRA)}, pages 10459--10465. IEEE.

\bibitem[Katz and Sohrabi, 2020]{katz2020reshaping}
Katz, M. and Sohrabi, S. (2020).
\newblock Reshaping diverse planning.
\newblock In {\em The Thirty-Fourth {AAAI} Conference on Artificial Intelligence, { AAAI} 2020}, pages 9892--9899. {AAAI} Press.

\bibitem[Katz et~al., 2018a]{katz18novel}
Katz, M., Sohrabi, S., Udrea, O., and Winterer, D. (2018a).
\newblock A novel iterative approach to top-k planning.
\newblock In {\em Proceedings of the Twenty-Eighth International Conference on Automated Planning and Scheduling, {ICAPS} 2018, Delft, The Netherlands, June 24-29, 2018}, pages 132--140. {AAAI} Press.

\bibitem[Katz et~al., 2018b]{katz2018novel}
Katz, M., Sohrabi, S., Udrea, O., and Winterer, D. (2018b).
\newblock A novel iterative approach to top-k planning.
\newblock In {\em ICAPS}.

\bibitem[Kavraki et~al., 1996]{kavraki1996probabilistic}
Kavraki, L.~E., Svestka, P., Latombe, J.-C., and Overmars, M.~H. (1996).
\newblock Probabilistic roadmaps for path planning in high-dimensional configuration spaces.
\newblock {\em IEEE transactions on Robotics and Automation}, 12(4):566--580.

\bibitem[Khayatkhoei et~al., 2018]{khayatkhoei2018disconnected}
Khayatkhoei, M., Singh, M., and Elgammal, A. (2018).
\newblock Disconnected manifold learning for generative adversarial networks.
\newblock {\em CoRR}, abs/1806.00880.

\bibitem[Kim et~al., 2017]{kim2017guiding}
Kim, B., Kaelbling, L.~P., and Lozano{-}P{\' {e}}rez, T. (2017).
\newblock Guiding the search in continuous state-action spaces by learning an action sampling distribution from off-target samples.
\newblock {\em CoRR}, abs/1711.01391.

\bibitem[Kingma and Welling, 2013]{kingma2013auto}
Kingma, D.~P. and Welling, M. (2013).
\newblock Auto-encoding variational bayes.
\newblock {\em arXiv preprint arXiv:1312.6114}.

\bibitem[Kingston et~al., 2020]{Kingston2020a}
Kingston, Z., Wells, A.~M., Moll, M., and Kavraki, L.~E. (2020).
\newblock Informing multi-modal planning with synergistic discrete leads.
\newblock In {\em IEEE International Conference on Robotics and Automation}, pages 3199--3205.

\bibitem[Kipf and Welling, 2016]{kipf2016semi}
Kipf, T.~N. and Welling, M. (2016).
\newblock Semi-supervised classification with graph convolutional networks.
\newblock {\em arXiv preprint arXiv:1609.02907}.

\bibitem[Kirillov et~al., 2023]{kirillov2023segment}
Kirillov, A., Mintun, E., Ravi, N., Mao, H., Rolland, C., Gustafson, L., Xiao, T., Whitehead, S., Berg, A.~C., Lo, W.-Y., et~al. (2023).
\newblock Segment anything.
\newblock {\em arXiv preprint arXiv:2304.02643}.

\bibitem[Kocsis and Szepesv{\'a}ri, 2006]{kocsis2006bandit}
Kocsis, L. and Szepesv{\'a}ri, C. (2006).
\newblock Bandit based monte-carlo planning.
\newblock In {\em European conference on machine learning}, pages 282--293. Springer.

\bibitem[Koehler, 1998]{koehler1998planning}
Koehler, J. (1998).
\newblock Planning under resource constraints.
\newblock In {\em ECAI}.

\bibitem[Koller and Friedman, 2009]{koller2009probabilistic}
Koller, D. and Friedman, N. (2009).
\newblock {\em Probabilistic graphical models: principles and techniques}.
\newblock MIT press.

\bibitem[Korf, 1985]{korf1985depth}
Korf, R.~E. (1985).
\newblock Depth-first iterative-deepening: An optimal admissible tree search.
\newblock {\em Artificial intelligence}, 27(1):97--109.

\bibitem[Krizhevsky et~al., 2012]{krizhevsky2012imagenet}
Krizhevsky, A., Sutskever, I., and Hinton, G.~E. (2012).
\newblock Imagenet classification with deep convolutional neural networks.
\newblock {\em Advances in neural information processing systems}, 25.

\bibitem[Krontiris and Bekris, 2016]{krontiris2016efficiently}
Krontiris, A. and Bekris, K.~E. (2016).
\newblock Efficiently solving general rearrangement tasks: A fast extension primitive for an incremental sampling-based planner.
\newblock In {\em 2016 IEEE International Conference on Robotics and Automation (ICRA)}, pages 3924--3931. IEEE.

\bibitem[Kuck et~al., 2020]{kuck2020belief}
Kuck, J., Chakraborty, S., Tang, H., Luo, R., Song, J., Sabharwal, A., and Ermon, S. (2020).
\newblock Belief propagation neural networks.
\newblock {\em Advances in Neural Information Processing Systems}, 33:667--678.

\bibitem[Kurutach et~al., 2018]{kurutach2018learning}
Kurutach, T., Tamar, A., Yang, G., Russell, S., and Abbeel, P. (2018).
\newblock Learning plannable representations with causal infogan.

\bibitem[Labb{\'e} et~al., 2020]{labbe2020cosypose}
Labb{\'e}, Y., Carpentier, J., Aubry, M., and Sivic, J. (2020).
\newblock Cosypose: Consistent multi-view multi-object 6d pose estimation.
\newblock In {\em Computer Vision--ECCV 2020: 16th European Conference, Glasgow, UK, August 23--28, 2020, Proceedings, Part XVII 16}, pages 574--591. Springer.

\bibitem[Lagriffoul et~al., 2018]{lagriffoul2018platform}
Lagriffoul, F., Dantam, N.~T., Garrett, C., Akbari, A., Srivastava, S., and Kavraki, L.~E. (2018).
\newblock Platform-independent benchmarks for task and motion planning.
\newblock {\em IEEE Robotics and Automation Letters}, 3(4):3765--3772.

\bibitem[Lagriffoul et~al., 2014]{lagriffoul2014efficiently}
Lagriffoul, F., Dimitrov, D., Bidot, J., Saffiotti, A., and Karlsson, L. (2014).
\newblock Efficiently combining task and motion planning using geometric constraints.
\newblock {\em The International Journal of Robotics Research}.

\bibitem[Lamiraux and Mirabel, 2021]{lamiraux2021prehensile}
Lamiraux, F. and Mirabel, J. (2021).
\newblock Prehensile manipulation planning: Modeling, algorithms and implementation.
\newblock {\em IEEE Transactions on Robotics}, 38(4):2370--2388.

\bibitem[LaValle and Kuffner, 2001]{lavalle2001rapidly}
LaValle, S. and Kuffner, J. (2001).
\newblock Rapidly-exploring random trees: Progress and prospects.
\newblock {\em Algorithmic and Computational Robotics}, pages 303--307.

\bibitem[LaValle, 2006]{DBLP:books/cu/L2006}
LaValle, S.~M. (2006).
\newblock {\em Planning Algorithms}.
\newblock Cambridge University Press.

\bibitem[Lelis, 2013]{Lelis13Active}
Lelis, L. H.~S. (2013).
\newblock Active stratified sampling with clustering-based type systems for predicting the search tree size of problems with real-valued heuristics.
\newblock In {\em Proceedings of the Sixth Annual Symposium on Combinatorial Search,SOCS}. {AAAI} Press.

\bibitem[Lembono et~al., 2020]{santoso2020Generative}
Lembono, T.~S., Pignat, E., Jankowski, J., and Calinon, S. (2020).
\newblock Generative adversarial network to learn valid distributions of robot configurations for inverse kinematics and constrained motion planning.
\newblock {\em CoRR}, abs/2011.05717.

\bibitem[Li et~al., 2020]{li2020towards}
Li, R., Jabri, A., Darrell, T., and Agrawal, P. (2020).
\newblock Towards practical multi-object manipulation using relational reinforcement learning.
\newblock In {\em 2020 ieee international conference on robotics and automation (icra)}, pages 4051--4058. IEEE.

\bibitem[Lieder et~al., 2014]{lieder2014algorithm}
Lieder, F., Plunkett, D., Hamrick, J.~B., Russell, S.~J., Hay, N., and Griffiths, T. (2014).
\newblock Algorithm selection by rational metareasoning as a model of human strategy selection.
\newblock In {\em Advances in neural information processing systems}, pages 2870--2878.

\bibitem[Liffiton and Sakallah, 2008]{liffiton2008algorithms}
Liffiton, M.~H. and Sakallah, K.~A. (2008).
\newblock Algorithms for computing minimal unsatisfiable subsets of constraints.
\newblock {\em Journal of Automated Reasoning}, 40(1):1--33.

\bibitem[Lipovetzky, 2021]{Lipovetzky21Width}
Lipovetzky, N. (2021).
\newblock Width-based algorithms for common problems in control, planning and reinforcement learning.
\newblock In {\em Proceedings of the Thirtieth International Joint Conference on Artificial Intelligence, {IJCAI} 2021, Virtual Event / Montreal, Canada, 19-27 August 2021}, pages 4956--4960. ijcai.org.

\bibitem[Long and Fox, 2003]{long20033rd}
Long, D. and Fox, M. (2003).
\newblock The 3rd international planning competition: Results and analysis.
\newblock {\em Journal of Artificial Intelligence Research}, 20:1--59.

\bibitem[Ma and Tang, 2021]{ma_tang_2021}
Ma, Y. and Tang, J. (2021).
\newblock {\em Deep Learning on Graphs}.
\newblock Cambridge University Press.

\bibitem[Mandalika et~al., 2019]{mandalika2019generalized}
Mandalika, A., Choudhury, S., Salzman, O., and Srinivasa, S. (2019).
\newblock Generalized lazy search for robot motion planning: Interleaving search and edge evaluation via event-based toggles.
\newblock In {\em Proceedings of the International Conference on Automated Planning and Scheduling}, volume~29, pages 745--753.

\bibitem[Mansard et~al., 2018]{mansard2018using}
Mansard, N., DelPrete, A., Geisert, M., Tonneau, S., and Stasse, O. (2018).
\newblock Using a memory of motion to efficiently warm-start a nonlinear predictive controller.
\newblock In {\em 2018 IEEE International Conference on Robotics and Automation (ICRA)}, pages 2986--2993. IEEE.

\bibitem[Marques-Silva et~al., 2021]{marques2021conflict}
Marques-Silva, J., Lynce, I., and Malik, S. (2021).
\newblock Conflict-driven clause learning sat solvers.
\newblock In {\em Handbook of satisfiability}, pages 133--182. ios Press.

\bibitem[McDermott et~al., 1998]{mcdermott1998pddl}
McDermott, D., Ghallab, M., Howe, A., Knoblock, C., Ram, A., Veloso, M., Weld, D., and Wilkins, D. (1998).
\newblock Pddl-the planning domain definition language.

\bibitem[Merkt et~al., 2018]{merkt2018leveraging}
Merkt, W., Ivan, V., and Vijayakumar, S. (2018).
\newblock Leveraging precomputation with problem encoding for warm-starting trajectory optimization in complex environments.
\newblock In {\em 2018 IEEE/RSJ International Conference on Intelligent Robots and Systems (IROS)}, pages 5877--5884. IEEE.

\bibitem[Migimatsu and Bohg, 2020]{migimatsu2020object}
Migimatsu, T. and Bohg, J. (2020).
\newblock Object-centric task and motion planning in dynamic environments.
\newblock {\em Robotics and Automation Letters}.

\bibitem[Mildenhall et~al., 2021]{mildenhall2021nerf}
Mildenhall, B., Srinivasan, P.~P., Tancik, M., Barron, J.~T., Ramamoorthi, R., and Ng, R. (2021).
\newblock Nerf: Representing scenes as neural radiance fields for view synthesis.
\newblock {\em Communications of the ACM}, 65(1):99--106.

\bibitem[Mordatch et~al., 2012]{mordatch2012discovery}
Mordatch, I., Todorov, E., and Popovi{\'c}, Z. (2012).
\newblock Discovery of complex behaviors through contact-invariant optimization.
\newblock {\em ACM Transactions on Graphics (TOG)}, 31(4):1--8.

\bibitem[Mouhoub and Jafari, 2011]{mouhoub2011heuristic}
Mouhoub, M. and Jafari, B. (2011).
\newblock Heuristic techniques for variable and value ordering in csps.
\newblock In {\em Proceedings of the 13th annual conference on Genetic and evolutionary computation}, pages 457--464.

\bibitem[Mousavian et~al., 2019]{mousavian2019dof}
Mousavian, A., Eppner, C., and Fox, D. (2019).
\newblock 6-dof graspnet: Variational grasp generation for object manipulation.
\newblock {\em CoRR}, abs/1905.10520.

\bibitem[Murali et~al., 2020]{murali2020}
Murali, A., Mousavian, A., Eppner, C., Paxton, C., and Fox, D. (2020).
\newblock 6-dof grasping for target-driven object manipulation in clutter.
\newblock In {\em 2020 IEEE International Conference on Robotics and Automation (ICRA)}, pages 6232--6238. IEEE.

\bibitem[Nir et~al., 2021]{DBLP:conf/socs/NirSK21}
Nir, R., Shleyfman, A., and Karpas, E. (2021).
\newblock Learning-based synthesis of social laws in {STRIPS}.
\newblock In {\em Proceedings of the Fourteenth International Symposium on Combinatorial Search, {SOCS} 2021}, pages 88--96. {AAAI} Press.

\bibitem[Nocedal and Wright, 2006]{NoceWrig06}
Nocedal, J. and Wright, S.~J. (2006).
\newblock {\em Numerical Optimization}.
\newblock Springer, New York, NY, USA, 2e edition.

\bibitem[O'Ceallaigh and Ruml, 2015]{o2015metareasoning}
O'Ceallaigh, D. and Ruml, W. (2015).
\newblock Metareasoning in real-time heuristic search.
\newblock In {\em Eighth Annual Symposium on Combinatorial Search}.

\bibitem[Orthey et~al., 2018]{orthey2018quotient}
Orthey, A., Escande, A., and Yoshida, E. (2018).
\newblock Quotient-space motion planning.
\newblock In {\em 2018 IEEE/RSJ International Conference on Intelligent Robots and Systems (IROS)}, pages 8089--8096. IEEE.

\bibitem[Ortiz-Haro et~al., 2023]{ortiz2023learning}
Ortiz-Haro, J., Ha, J.-S., Driess, D., Karpas, E., and Toussaint, M. (2023).
\newblock Learning feasibility of factored nonlinear programs in robotic manipulation planning.
\newblock In {\em 2023 IEEE International Conference on Robotics and Automation (ICRA)}, pages 3729--3735. IEEE.

\bibitem[Ortiz-Haro et~al., 2022a]{ortiz2022structured}
Ortiz-Haro, J., Ha, J.-S., Driess, D., and Toussaint, M. (2022a).
\newblock Structured deep generative models for sampling on constraint manifolds in sequential manipulation.
\newblock In {\em Conference on Robot Learning}, pages 213--223. PMLR.

\bibitem[Ortiz-Haro et~al., 2021]{ortiz2021learning}
Ortiz-Haro, J., Hartmann, V.~N., Oguz, O.~S., and Toussaint, M. (2021).
\newblock Learning efficient constraint graph sampling for robotic sequential manipulation.
\newblock In {\em 2021 IEEE International Conference on Robotics and Automation (ICRA)}, pages 4606--4612. IEEE.

\bibitem[Ortiz-Haro et~al., 2022b]{ortiz2022conflictinterface}
Ortiz-Haro, J., Karpas, E., Katz, M., and Toussaint, M. (2022b).
\newblock A conflict-driven interface between symbolic planning and nonlinear constraint solving.
\newblock {\em IEEE Robotics and Automation Letters}, 7(4):10518--10525.

\bibitem[Ortiz-Haro et~al., 2022c]{ortiz22conflict}
Ortiz-Haro, J., Karpas, E., Michael, K., and Toussaint, M. (2022c).
\newblock Conflict-directed diverse planning for logic-geometric programming.
\newblock In {\em Proceedings of ICAPS}.

\bibitem[Ota, 2004]{Ota2004}
Ota, J. (2004).
\newblock Rearrangement of multiple movable objects - integration of global and local planning methodology.
\newblock In {\em IEEE International Conference on Robotics and Automation, 2004. Proceedings. ICRA '04. 2004}, volume~2, pages 1962--1967 Vol.2.

\bibitem[Pascanu et~al., 2017]{pascanu2017learning}
Pascanu, R., Li, Y., Vinyals, O., Heess, N., Buesing, L., Racani{\`e}re, S., Reichert, D., Weber, T., Wierstra, D., and Battaglia, P. (2017).
\newblock Learning model-based planning from scratch.
\newblock {\em arXiv preprint arXiv:1707.06170}.

\bibitem[Paus et~al., 2020]{paus2020predicting}
Paus, F., Huang, T., and Asfour, T. (2020).
\newblock Predicting pushing action effects on spatial object relations by learning internal prediction models.
\newblock In {\em 2020 IEEE International Conference on Robotics and Automation (ICRA)}, pages 10584--10590. IEEE.

\bibitem[Paxton et~al., 2019]{paxton19visual}
Paxton, C., Barnoy, Y., Katyal, K.~D., Arora, R., and Hager, G.~D. (2019).
\newblock Visual robot task planning.
\newblock In {\em International Conference on Robotics and Automation (ICRA)}, pages 8832--8838. {IEEE}.

\bibitem[Piotrowski et~al., 2016]{piotrowski2016heuristic}
Piotrowski, W.~M., Fox, M., Long, D., Magazzeni, D., and Mercorio, F. (2016).
\newblock Heuristic planning for pddl+ domains.
\newblock In {\em Workshops at the Thirtieth AAAI Conference on Artificial Intelligence}.

\bibitem[Posa et~al., 2014]{posa2014direct}
Posa, M., Cantu, C., and Tedrake, R. (2014).
\newblock A direct method for trajectory optimization of rigid bodies through contact.
\newblock {\em The International Journal of Robotics Research}, 33(1):69--81.

\bibitem[Redmon et~al., 2016]{redmon2016you}
Redmon, J., Divvala, S., Girshick, R., and Farhadi, A. (2016).
\newblock You only look once: Unified, real-time object detection.
\newblock In {\em Proceedings of the IEEE conference on computer vision and pattern recognition}, pages 779--788.

\bibitem[Richter and Westphal, 2010]{richter2010lama}
Richter, S. and Westphal, M. (2010).
\newblock The lama planner: Guiding cost-based anytime planning with landmarks.
\newblock {\em Journal of Artificial Intelligence Research}, 39:127--177.

\bibitem[Rivlin et~al., 2020]{rivlin2020generalized}
Rivlin, O., Hazan, T., and Karpas, E. (2020).
\newblock Generalized planning with deep reinforcement learning.
\newblock {\em arXiv preprint arXiv:2005.02305}.

\bibitem[Rossi et~al., 2006]{rossi2006handbook}
Rossi, F., Van~Beek, P., and Walsh, T. (2006).
\newblock {\em Handbook of constraint programming}.
\newblock Elsevier.

\bibitem[Russell and Wefald, 1991]{Russell91Principles}
Russell, S.~J. and Wefald, E. (1991).
\newblock Principles of metareasoning.
\newblock {\em Artif. Intell.}, 49(1-3):361--395.

\bibitem[Satorras and Welling, 2020]{garcia2020neural}
Satorras, V.~G. and Welling, M. (2020).
\newblock Neural enhanced belief propagation on factor graphs.
\newblock {\em CoRR}, abs/2003.01998.

\bibitem[Scala et~al., 2016]{scala2016interval}
Scala, E., Haslum, P., Thi{\'e}baux, S., and Ramirez, M. (2016).
\newblock Interval-based relaxation for general numeric planning.
\newblock In {\em ECAI 2016}, pages 655--663. IOS Press.

\bibitem[Schmitt et~al., 2017]{schmitt2017optimal}
Schmitt, P.~S., Neubauer, W., Feiten, W., Wurm, K.~M., Wichert, G.~V., and Burgard, W. (2017).
\newblock Optimal, sampling-based manipulation planning.
\newblock In {\em 2017 IEEE International Conference on Robotics and Automation (ICRA)}, pages 3426--3432. IEEE.

\bibitem[Schuetz et~al., 2021]{schuetz2021combinatorial}
Schuetz, M. J.~A., Brubaker, J.~K., and Katzgraber, H.~G. (2021).
\newblock Combinatorial optimization with physics-inspired graph neural networks.
\newblock {\em CoRR}, abs/2107.01188.

\bibitem[Seipp et~al., 2012]{seipp2012learning}
Seipp, J., Braun, M., Garimort, J., and Helmert, M. (2012).
\newblock Learning portfolios of automatically tuned planners.
\newblock In {\em Proceedings of the International Conference on Automated Planning and Scheduling}, volume~22, pages 368--372.

\bibitem[Selsam et~al., 2018]{selsam2018learning}
Selsam, D., Lamm, M., B{\"u}nz, B., Liang, P., de~Moura, L., and Dill, D.~L. (2018).
\newblock Learning a sat solver from single-bit supervision.
\newblock {\em arXiv preprint arXiv:1802.03685}.

\bibitem[Shen et~al., 2020]{shen2020learning}
Shen, W., Trevizan, F., and Thi{\'e}baux, S. (2020).
\newblock Learning domain-independent planning heuristics with hypergraph networks.
\newblock In {\em Proceedings of the International Conference on Automated Planning and Scheduling}, volume~30, pages 574--584.

\bibitem[Shoukry et~al., 2018]{shoukry2018smc}
Shoukry, Y., Nuzzo, P., Sangiovanni-Vincentelli, A.~L., Seshia, S.~A., Pappas, G.~J., and Tabuada, P. (2018).
\newblock Smc: Satisfiability modulo convex programming.
\newblock {\em Proceedings of the IEEE}.

\bibitem[Shperberg et~al., 2019]{shperberg2019allocating}
Shperberg, S.~S., Coles, A., Cserna, B., Karpas, E., Ruml, W., and Shimony, S.~E. (2019).
\newblock Allocating planning effort when actions expire.
\newblock In {\em Proceedings of the AAAI Conference on Artificial Intelligence}, volume~33, pages 2371--2378.

\bibitem[Shridhar et~al., 2023]{shridhar2023perceiver}
Shridhar, M., Manuelli, L., and Fox, D. (2023).
\newblock Perceiver-actor: A multi-task transformer for robotic manipulation.
\newblock In {\em Conference on Robot Learning}, pages 785--799. PMLR.

\bibitem[Siciliano et~al., 2008]{10.5555/1524151}
Siciliano, B., Sciavicco, L., Villani, L., and Oriolo, G. (2008).
\newblock {\em Robotics: Modelling, Planning and Control}.
\newblock Springer Publishing Company, Incorporated, 1st edition.

\bibitem[Silva and Sakallah, 1999]{SilvaS99Search}
Silva, J. P.~M. and Sakallah, K.~A. (1999).
\newblock {GRASP:} {A} search algorithm for propositional satisfiability.
\newblock {\em {IEEE} Trans. Computers}, 48(5):506--521.

\bibitem[Silver et~al., 2021]{silver2021planning}
Silver, T., Chitnis, R., Curtis, A., Tenenbaum, J.~B., Lozano-Perez, T., and Kaelbling, L.~P. (2021).
\newblock Planning with learned object importance in large problem instances using graph neural networks.
\newblock In {\em Proceedings of the AAAI conference on artificial intelligence}.

\bibitem[Sim{\'e}on et~al., 2004]{Simeon2004}
Sim{\'e}on, T., Laumond, J.-P., Cort{\'e}s, J., and Sahbani, A. (2004).
\newblock {Manipulation planning with probabilistic roadmaps}.
\newblock {\em IJRR}, 23(7-8):729--746.

\bibitem[Simeonov et~al., 2020]{simeonov2020long}
Simeonov, A., Du, Y., Kim, B., Hogan, F.~R., Tenenbaum, J., Agrawal, P., and Rodriguez, A. (2020).
\newblock A long horizon planning framework for manipulating rigid pointcloud objects.

\bibitem[Simeonov et~al., 2022]{simeonov2022neural}
Simeonov, A., Du, Y., Tagliasacchi, A., Tenenbaum, J.~B., Rodriguez, A., Agrawal, P., and Sitzmann, V. (2022).
\newblock Neural descriptor fields: Se (3)-equivariant object representations for manipulation.
\newblock In {\em 2022 International Conference on Robotics and Automation (ICRA)}, pages 6394--6400. IEEE.

\bibitem[Song et~al., 2021]{song2021solving}
Song, D., Fernbach, P., Flayols, T., Del~Prete, A., Mansard, N., Tonneau, S., and Kim, Y.~J. (2021).
\newblock Solving footstep planning as a feasibility problem using l1-norm minimization.
\newblock {\em IEEE Robotics and Automation Letters}, 6(3):5961--5968.

\bibitem[Srivastava et~al., 2014]{srivastava2014combined}
Srivastava, S., Fang, E., Riano, L., Chitnis, R., Russell, S.~J., and Abbeel, P. (2014).
\newblock Combined task and motion planning through an extensible planner-independent interface layer.
\newblock In {\em ICRA}.

\bibitem[Stilman et~al., 2007]{stilman2007manipulation}
Stilman, M., Schamburek, J.-U., Kuffner, J., and Asfour, T. (2007).
\newblock Manipulation planning among movable obstacles.
\newblock In {\em Proceedings 2007 IEEE international conference on robotics and automation}, pages 3327--3332. IEEE.

\bibitem[Sutanto et~al., 2020]{sutanto2020learning}
Sutanto, G., Fernández, I. M.~R., Englert, P., Ramachandran, R.~K., and Sukhatme, G.~S. (2020).
\newblock Learning equality constraints for motion planning on manifolds.

\bibitem[Tang and Hauser, 2018]{discontinuity2018tang}
Tang, G. and Hauser, K. (2018).
\newblock Discontinuity-sensitive optimal control learning by mixture of experts.
\newblock {\em CoRR}, abs/1803.02493.

\bibitem[Tate, 1977]{Tate77Generating}
Tate, A. (1977).
\newblock Generating project networks.
\newblock In {\em Proceedings of the 5th International Joint Conference on Artificial Intelligence. Cambridge, MA, USA, August 22-25, 1977}, pages 888--893. William Kaufmann.

\bibitem[Thrun et~al., 2005]{10.5555/1121596}
Thrun, S., Burgard, W., and Fox, D. (2005).
\newblock {\em Probabilistic Robotics (Intelligent Robotics and Autonomous Agents)}.
\newblock The MIT Press.

\bibitem[Todorov, 2011]{todorov2011convex}
Todorov, E. (2011).
\newblock A convex, smooth and invertible contact model for trajectory optimization.
\newblock In {\em 2011 IEEE International Conference on Robotics and Automation}, pages 1071--1076. IEEE.

\bibitem[Toenshoff et~al., 2021]{toenshoff2021graph}
Toenshoff, J., Ritzert, M., Wolf, H., and Grohe, M. (2021).
\newblock Graph neural networks for maximum constraint satisfaction.
\newblock {\em Frontiers in artificial intelligence}, 3:580607.

\bibitem[Tonneau et~al., 2018]{tonneau2018efficient}
Tonneau, S., Del~Prete, A., Pettr{\'e}, J., Park, C., Manocha, D., and Mansard, N. (2018).
\newblock An efficient acyclic contact planner for multiped robots.
\newblock {\em IEEE Transactions on Robotics}, 34(3):586--601.

\bibitem[Toussaint, 2015]{toussaint2015logic}
Toussaint, M. (2015).
\newblock Logic-geometric programming: An optimization-based approach to combined task and motion planning.
\newblock In {\em International Joint Conference on Artificial Intelligence, {IJCAI }}.

\bibitem[Toussaint et~al., 2018]{toussaint2018differentiable}
Toussaint, M., Allen, K.~R., Smith, K.~A., and Tenenbaum, J.~B. (2018).
\newblock Differentiable physics and stable modes for tool-use and manipulation planning.
\newblock In {\em Robotics: Science and Systems XIV {RSS}}.

\bibitem[Toussaint et~al., 2020]{20-toussaint-RAL}
Toussaint, M., Ha, J.-S., and Driess, D. (2020).
\newblock Describing physics for physical reasoning: Force-based sequential manipulation planning.
\newblock {\em IEEE Robotics and Automation Letters}.

\bibitem[Toussaint and Lopes, 2017]{toussaint2017multi}
Toussaint, M. and Lopes, M. (2017).
\newblock Multi-bound tree search for logic-geometric programming in cooperative manipulation domains.
\newblock In {\em Int{.} Conf{.} on Robotics and Automation, {ICRA}}.

\bibitem[Toussaint et~al., 2023]{toussaint2023effort}
Toussaint, M., Ortiz-Haro, J., Hartmann, V., Karpas, E., and Hoenig, W. (2023).
\newblock Effort level search in infinite completion trees with application to task-and-motion planning.

\bibitem[Tuisov and Katz, 2021]{tuisov2021fewer}
Tuisov, A. and Katz, M. (2021).
\newblock The fewer the merrier: Pruning preferred operators with novelty.
\newblock In {\em International Joint Conference on Artificial Intelligence}.

\bibitem[Vaswani et~al., 2017]{vaswani2017attention}
Vaswani, A., Shazeer, N., Parmar, N., Uszkoreit, J., Jones, L., Gomez, A.~N., Kaiser, {\L}., and Polosukhin, I. (2017).
\newblock Attention is all you need.
\newblock {\em Advances in neural information processing systems}, 30.

\bibitem[Vega-Brown and Roy, 2020]{Vega2020}
Vega-Brown, W. and Roy, N. (2020).
\newblock {Asymptotically optimal planning under piecewise-analytic constraints}.
\newblock In {\em Algorithmic Foundations of Robotics XII}, pages 528--543. Springer.

\bibitem[Veli{\v{c}}kovi{\'c} et~al., 2017]{velivckovic2017graph}
Veli{\v{c}}kovi{\'c}, P., Cucurull, G., Casanova, A., Romero, A., Lio, P., and Bengio, Y. (2017).
\newblock Graph attention networks.
\newblock {\em arXiv preprint arXiv:1710.10903}.

\bibitem[Wells et~al., 2019]{wells2019learning}
Wells, A.~M., Dantam, N.~T., Shrivastava, A., and Kavraki, L.~E. (2019).
\newblock Learning feasibility for task and motion planning in tabletop environments.
\newblock {\em IEEE robotics and automation letters}, 4(2):1255--1262.

\bibitem[Williams and Ragno, 2007]{WilliamsR07Conflict}
Williams, B.~C. and Ragno, R.~J. (2007).
\newblock Conflict-directed a\({}^{\mbox{*}}\) and its role in model-based embedded systems.
\newblock {\em Discret. Appl. Math.}, 155(12):1562--1595.

\bibitem[Winkler et~al., 2018]{winkler2018gait}
Winkler, A.~W., Bellicoso, C.~D., Hutter, M., and Buchli, J. (2018).
\newblock Gait and trajectory optimization for legged systems through phase-based end-effector parameterization.
\newblock {\em IEEE Robotics and Automation Letters}, 3(3):1560--1567.

\bibitem[Xie et~al., 2019]{xieimprovisation}
Xie, A., Ebert, F., Levine, S., and Finn, C. (2019).
\newblock Improvisation through physical understanding: Using novel objects as tools with visual foresight.
\newblock In {\em Proc{.} of Robotics: Science and Systems (R:SS)}.

\bibitem[Yao et~al., 2019]{yao2019experimental}
Yao, W., Bandeira, A.~S., and Villar, S. (2019).
\newblock Experimental performance of graph neural networks on random instances of max-cut.
\newblock In {\em Wavelets and Sparsity XVIII}, volume 11138, pages 242--251. SPIE.

\bibitem[Yuan et~al., 2022]{yuan2022sornet}
Yuan, W., Paxton, C., Desingh, K., and Fox, D. (2022).
\newblock Sornet: Spatial object-centric representations for sequential manipulation.
\newblock In {\em Conference on Robot Learning}, pages 148--157. PMLR.

\bibitem[Zhang et~al., 2020]{zhang2020factor}
Zhang, Z., Wu, F., and Lee, W.~S. (2020).
\newblock Factor graph neural networks.
\newblock {\em Advances in Neural Information Processing Systems}, 33:8577--8587.

\bibitem[Zhao et~al., 2021]{zhao2021sydebo}
Zhao, Z., Zhou, Z., Park, M., and Zhao, Y. (2021).
\newblock Sydebo: Symbolic-decision-embedded bilevel optimization for long-horizon manipulation in dynamic environments.
\newblock {\em IEEE Access}.

\bibitem[Zilberstein, 2008]{zilberstein2011metareasoning}
Zilberstein, S. (2008).
\newblock Metareasoning and bounded rationality.
\newblock {\em AAAI Workshop - Technical Report}.

\bibitem[Zimmermann et~al., 2020]{zimmermann2020multi}
Zimmermann, S., Hakimifard, G., Zamora, M., Poranne, R., and Coros, S. (2020).
\newblock A multi-level optimization framework for simultaneous grasping and motion planning.
\newblock {\em IEEE Robotics and Automation Letters}, 5(2):2966--2972.

\end{thebibliography}
}

\appendix

\chapter{Complete List of Publications}\label{app:publications}

\subsection*{Journal Papers}

\begin{enumerate}
	\item \underline{Ortiz-Haro, J.}, Karpas, E., Katz, M., \theAnd Toussaint, M. (2022). A Conflict-Driven Interface Between Symbolic Planning and Nonlinear Constraint Solving. IEEE Robotics and Automation Letters, 7(4), (pp. 10518-10525).
\end{enumerate}

\subsection*{Conferences Papers}

\begin{enumerate}
	\item Hartmann, V. N., \underline{Ortiz-Haro, J.}, \theAnd Toussaint, M. (2023). Efficient Path Planning In Manipulation Planning Problems by Actively Reusing Validation Effort. IEEE/RSJ International Conference on Intelligent Robots and Systems (IROS).

	\item \underline{Ortiz-Haro, J.}, Ha, J. S., Driess, D., Karpas, E., \theAnd Toussaint, M. (2023). Learning Feasibility of Factored Nonlinear Programs in Robotic Manipulation Planning. IEEE International Conference on Robotics and Automation (ICRA) (pp. 3729-3735). 

	\item Braun, C. V., \underline{Ortiz-Haro, J.}, Toussaint, M., \theAnd Oguz, O. S. (2022). Rhh-lgp: Receding Horizon and Heuristics-Based Logic-Geometric Programming for Task and Motion Planning. IEEE/RSJ International Conference on Intelligent Robots and Systems (IROS) (pp. 13761-13768).

	\item Kamat, J., \underline{Ortiz-Haro, J.}, Toussaint, M., Pokorny, F. T., \theAnd Orthey, A. (2022). Bitkomo: Combining Sampling and Optimization for Fast Convergence in Optimal Motion Planning. IEEE/RSJ International Conference on Intelligent Robots and Systems (IROS) (pp. 4492-4497). 

	\item \underline{Ortiz-Haro, J.}, Karpas, E., Toussaint, M., \theAnd Katz, M. (2022). Conflict-Directed Diverse Planning for Logic-Geometric Programming. In Proceedings of the International Conference on Automated Planning and Scheduling (Vol. 32, pp. 279-287).

	\item Hoenig, W., \underline{Ortiz-Haro, J.}, \theAnd Toussaint, M. (2022). db-A*: Discontinuity-Bounded Search for Kinodynamic Mobile Robot Motion Planning. IEEE/RSJ International Conference on Intelligent Robots and Systems (IROS) (pp. 13540-13547). 

	\item \underline{Ortiz-Haro, J.}, Ha, J. S., Driess, D., \theAnd Toussaint, M. (2022). Structured Deep Generative Models for Sampling on Constraint Manifolds in Sequential Manipulation. In Conference on Robot Learning (pp. 213-223). PMLR.

	\item \underline{Ortiz-Haro, J.}, Hartmann, V. N., Oguz, O. S., \theAnd Toussaint, M. (2021). Learning Efficient Constraint Graph Sampling for Robotic Sequential Manipulation. IEEE International Conference on Robotics and Automation (ICRA) (pp. 4606-4612). 
\end{enumerate}

\subsection*{Preprints}

\begin{enumerate}

  \item 
\underline{Ortiz-Haro, J.}, Hoenig, W., Hartmann, V., \theAnd Toussaint, M. (2023). iDb-A*: Iterative Search and Optimization for Optimal Kinodynamic Motion Planning. Submitted to IEEE Transactions on Robotics (T-RO). 

	\item Moldagalieva, A., \underline{Ortiz-Haro, J.}, Toussaint, M., \theAnd Hoenig, W. (2023). db-CBS: Discontinuity Bounded Conflict-Based Search for Multi-Robot Kinodynamic Motion Planning. arXiv preprint arXiv:2309.16445. Submitted to ICRA.

	\item Grote, P., \underline{Ortiz-Haro, J.}, Toussaint, M., \theAnd Oguz, O. S. (2023). Neural Field Representations of Articulated Objects for Robotic Manipulation Planning. arXiv preprint arXiv:2309.07620. Submitted to ICRA. 

	\item Wahba, K., \underline{Ortiz-Haro, J.}, Toussaint, M., \theAnd Hoenig, W. (2023). Kinodynamic Motion Planning for a Team of Multirotors Transporting a Cable-Suspended Payload in Cluttered Environments. arXiv preprint arXiv:2310.03394. Submitted to ICRA.

	\item Toussaint, M., \underline{Ortiz-Haro, J.}, Hartmann, V., Karpas, E., \theAnd Hoenig, W. (2023). Effort Level Search in Infinite Completion Trees with Application to Task-and-Motion Planning. Submitted to ICRA.

	\item Levit, S., \underline{Ortiz-Haro, J.}, \theAnd Toussaint, M. (2023). Solving Sequential Manipulation Puzzles by Finding Easier Subproblems. Submitted to ICRA.
\end{enumerate}

\subsection*{Workshop Papers}

\begin{enumerate}
	\item Rehberg, W., \underline{Ortiz-Haro, J.}, Toussaint, M., \theAnd Hoenig, W. (2023). Comparison of Optimization-Based Methods for Energy-Optimal Quadrotor Motion Planning. arXiv preprint arXiv:2304.14062. Aerial Robotics Workshop ICRA.

	\item Hoenig, W., \underline{Ortiz-Haro, J.}, \theAnd Toussaint, M. (2022). Benchmarking Sampling-, Search-, and Optimization-based Approaches for Time-Optimal Kinodynamic Mobile Robot Motion Planning. Motion Planning Workshop IROS.

\end{enumerate}

\subsection*{Master's and Bachelor's Theses}

\begin{enumerate}
	\item Weingart, A. (2023) Efficient Kinodynamic Motion Planning with Reinforcement Learning Policies. Master's Thesis in Computer Science (TU-Berlin). Co-supervision: \underline{Ortiz-Haro, J.}, \theAnd Hoenig, W.

	\item Groete, P. (2023) Neural Scene Representations for Sequential Reasoning.
	      Master's Thesis in Computer Science (TU-Berlin). Co-supervision: \underline{Ortiz-Haro, J.}, \theAnd Oguz, O.

	\item Rehberg, W. (2022) SCP and k-Order Motion Optimization for Cooperative Multirotor Teams.
	      Master's Thesis in Computer Science (TU-Berlin). Co-supervision: \underline{Ortiz-Haro, J.}, \theAnd Hoenig, W.

	\item Kamat, J. (2022) Combining Sampling and Optimization for Optimal Motion Planning. Master's Thesis in Mathematics (BITS and TU-Berlin). Co-supervision: \underline{Ortiz-Haro, J.}, \theAnd Orthey, A.

	\item Oedi, P. (2022) Constrained Sampling - A Study on Methods for Sampling from Constraint Manifolds. Master's Thesis in Computer Science (TU-Berlin). Co-supervision: \underline{Ortiz-Haro, J.}, \theAnd Ha, J.

\end{enumerate}

\end{document}